\def\arxivmode{1} %
\def\showauthornotes{0}
\def\showtableofcontents{1}
\def\showkeys{0}
\def\showdraftbox{0}
\def\showcolorlinks{1}
\def\usemicrotype{1}
\def\showfixme{0}
\newtheorem{theorem}{Theorem}[section]
\newtheorem*{theorem*}{Theorem}
\newtheorem*{proposition*}{Proposition}
\newtheorem{lemma}[theorem]{Lemma}
\newtheorem*{lemma*}{Lemma}
\newtheorem{corollary}[theorem]{Corollary}
\newtheorem*{conjecture*}{Conjecture}
\newtheorem*{fact*}{Fact}
\newtheorem*{hypothesis*}{Hypothesis}
\theoremstyle{definition}
\newtheorem{definition}[theorem]{Definition}
\newtheorem{example}[theorem]{Example}
\newtheorem{problem}[theorem]{Problem}
\theoremstyle{remark}
\newtheorem*{claim*}{Claim}
\newtheorem{remark}[theorem]{Remark}
\newtheorem*{remark*}{Remark}
\newtheorem*{observation*}{Observation}
\let\mathbb\varmathbb
\newcommand{\savehyperref}[2]{\texorpdfstring{\hyperref[#1]{#2}}{#2}}
\newcommand{\Sref}[1]{\hyperref[#1]{\S\ref*{#1}}}
\newcommand{\Authornote}[2]{{\sffamily\small\color{red}{[#1: #2]}}}
\newcommand{\Authornotecolored}[3]{{\sffamily\small\color{#1}{[#2: #3]}}}
\newcommand{\Authorcomment}[2]{{\sffamily\small\color{gray}{[#1: #2]}}}
\newcommand{\Authorstartcomment}[1]{\sffamily\small\color{gray}[#1: }
\newcommand{\Authorfnote}[2]{\footnote{\color{red}{#1: #2}}}
\newcommand{\Authorfixme}[1]{\Authornote{#1}{\textbf{??}}}
\newcommand{\Authormarginmark}[1]{\marginpar{\textcolor{red}{\fbox{\Large #1:!}}}}
\newcommand{\Authornote}[2]{}
\newcommand{\Authornotecolored}[3]{}
\newcommand{\Authorcomment}[2]{}
\newcommand{\Authorstartcomment}[1]{}
\newcommand{\Authorfnote}[2]{}
\newcommand{\Authorfixme}[1]{}
\newcommand{\Authormarginmark}[1]{}
\newcommand{\Dnote}{\Authornote{D}}
\newcommand{\Snote}{\Authornote{S}}
\newcommand{\Paren}[1]{\left(#1\right)}
\newcommand{\Brac}[1]{\left[#1\right]}
\newcommand{\abs}[1]{\lvert#1\rvert}
\newcommand{\Norm}[1]{\left\lVert#1\right\rVert}
\newcommand{\iprod}[1]{\langle#1\rangle}
\newcommand{\Esymb}{\mathbb{E}}
\newcommand{\Psymb}{\mathbb{P}}
\DeclareMathOperator*{\E}{\Esymb}
\DeclareMathOperator*{\ProbOp}{\Psymb}
\renewcommand{\Pr}{\ProbOp}
\newcommand{\tensor}{\otimes}
\newcommand{\textparen}[1]{\text{(#1)}}
\newcommand{\because}[1]{\textparen{because #1}}
\renewcommand{\because}[1]{\textparen{because #1}}
\newcommand{\lmin}{\lambda_{\min}}
\newcommand{\vbig}{\vphantom{\bigoplus}}
\newcommand{\defeq}{\stackrel{\mathrm{def}}=}
\newcommand{\seteq}{\mathrel{\mathop:}=}
\newcommand{\from}{\colon}
\newcommand{\mper}{\,.}
\newcommand{\mcom}{\,,}
\newcommand\bdot\bullet
\DeclareMathOperator{\Tr}{Tr}
\DeclareMathOperator{\poly}{poly}
\DeclareMathOperator{\polylog}{polylog}
\newcommand\naive{na\"{\i}ve\xspace}
\newcommand\naively{na\"{\i}vely\xspace}
\newcommand{\N}{\mathbb N}
\newcommand{\R}{\mathbb R}
\newcommand{\cA}{\mathcal A}
\newcommand{\cC}{\mathcal C}
\newcommand{\cD}{\mathcal D}
\newcommand{\cK}{\mathcal K}
\newcommand{\cL}{\mathcal L}
\newcommand{\cM}{\mathcal M}
\newcommand{\cN}{\mathcal N}
\newcommand{\cO}{\mathcal O}
\newcommand{\cS}{\mathcal S}
\newcommand{\bbN}{\mathbb N}
\renewcommand{\leq}{\leqslant}
\renewcommand{\le}{\leqslant}
\renewcommand{\geq}{\geqslant}
\renewcommand{\ge}{\geqslant}
\newcommand{\draftbox}{\begin{center}
  \fbox{%
    \begin{minipage}{2in}%
      \begin{center}%
          \Large\textsc{Working Draft}\\%
        Please do not distribute%
      \end{center}%
    \end{minipage}%
  }%
\end{center}
\vspace{0.2cm}}
\newcommand{\draftbox}{}
\let\epsilon=\varepsilon
\numberwithin{equation}{section}
\newcommand\MYcurrentlabel{xxx}
\newcommand{\MYstore}[2]{%
  \global\expandafter \def \csname MYMEMORY #1 \endcsname{#2}%
}
\newcommand{\MYload}[1]{%
  \csname MYMEMORY #1 \endcsname%
}
\newcommand{\MYnewlabel}[1]{%
  \renewcommand\MYcurrentlabel{#1}%
  \MYoldlabel{#1}%
}
\newcommand{\MYdummylabel}[1]{}
\newcommand{\torestate}[1]{%
  \let\MYoldlabel\label%
  \let\label\MYnewlabel%
  #1%
  \MYstore{\MYcurrentlabel}{#1}%
  \let\label\MYoldlabel%
}
\newcommand{\restatetheorem}[1]{%
  \let\MYoldlabel\label
  \let\label\MYdummylabel
  \begin{theorem*}[Restatement of \prettyref{#1}]
    \MYload{#1}
  \end{theorem*}
  \let\label\MYoldlabel
}
\newcommand{\restatelemma}[1]{%
  \let\MYoldlabel\label
  \let\label\MYdummylabel
  \begin{lemma*}[Restatement of \prettyref{#1}]
    \MYload{#1}
  \end{lemma*}
  \let\label\MYoldlabel
}
\newcommand{\restateprop}[1]{%
  \let\MYoldlabel\label
  \let\label\MYdummylabel
  \begin{proposition*}[Restatement of \prettyref{#1}]
    \MYload{#1}
  \end{proposition*}
  \let\label\MYoldlabel
}
\newcommand{\restatefact}[1]{%
  \let\MYoldlabel\label
  \let\label\MYdummylabel
  \begin{fact*}[Restatement of \prettyref{#1}]
    \MYload{#1}
  \end{fact*}
  \let\label\MYoldlabel
}
\newcommand{\restate}[1]{%
  \let\MYoldlabel\label
  \let\label\MYdummylabel
  \MYload{#1}
  \let\label\MYoldlabel
}
\newcommand{\addreferencesection}{
  \phantomsection
  \addcontentsline{toc}{section}{References}
}
\newcommand{\e}{\epsilon}
\let\origparagraph\paragraph
\renewcommand{\paragraph}[1]{\origparagraph{#1.}}
\let\citet\cite
\theoremstyle{definition}
\newtheorem{algo}[theorem]{Algorithm}
\let\pref=\prettyref
\newcommand*{\ot}{\otimes}
\newcommand{\symId}{\Id^{\mathrm{sym}}}
\newcommand{\bT}{\mathbf{T}}
\newcommand{\bA}{\mathbf{A}}
\newcommand{\Id}{\mathop{\mathrm{Id}}\!\mathinner{}}
\newcommand{\Pu}{P_{u^{\perp}}}
\DeclareMathOperator{\pE}{\tilde {\mathbb E}}
\DeclareMathOperator{\pD}{\tilde {\mathbb D}}
\DeclareMathOperator{\tuples}{\#tuples}
\let\lmin\relax %
\DeclareMathOperator{\lmin}{\lambda_{min}}
\let\cL\relax
\DeclareMathOperator{\cL}{\mathcal L}
\newcommand{\bB}{{\mathbf B}}
\newcommand{\bS}{{\mathbf S}}
\newcommand{\bU}{{\mathbf U}}
\DeclareMathOperator{\Sym}{Sym}
\DeclareUrlCommand\email{}
\newcommand{\tO}{\tilde O}
\newcommand{\tOmega}{\tilde \Omega}
\newlength{\ninv}\settowidth{\ninv}{$n^{-1}$} %
\title{Tensor principal component analysis via sum-of-squares proofs}
\author{%
Samuel B. Hopkins\thanks{Department of Computer Science, Cornell University.
\protect\email{samhop@cs.cornell.edu}, \protect\email{jshi@cs.cornell.edu}, \protect\email{dsteurer@cs.cornell.edu}.
Please direct all communication to D.S.}
\and Jonathan Shi\footnotemark[1]
\and David Steurer\footnotemark[1]
}
\begin{document}

\maketitle
\draftbox
\thispagestyle{empty}

\begin{abstract}

We study a statistical model for the \emph{tensor principal component analysis problem} introduced by Montanari and Richard:
Given a order-$3$ tensor $T$ of the form $T = \tau \cdot v_0^{\otimes 3} + A$, where $\tau \geq 0$ is a signal-to-noise ratio, $v_0$ is a unit vector, and $A$ is a random noise tensor, the goal is to recover the planted vector $v_0$.
For the case that $A$ has iid standard Gaussian entries, we give an efficient algorithm to recover $v_0$ whenever $\tau \geq \omega(n^{3/4} \log(n)^{1/4})$, and certify that the recovered vector is close to a maximum likelihood estimator, all with high probability over the random choice of $A$.
The previous best algorithms with provable guarantees required $\tau \geq \Omega(n)$.

In the regime $\tau \leq o(n)$, natural tensor-unfolding-based spectral relaxations for the underlying optimization problem break down (in the sense that their integrality gap is large).
To go beyond this barrier, we use convex relaxations based on the sum-of-squares method.
Our recovery algorithm proceeds by rounding a degree-$4$ sum-of-squares relaxations of the maximum-likelihood-estimation problem for the statistical model.
To complement our algorithmic results, we show that degree-$4$ sum-of-squares relaxations break down for $\tau \leq O(n^{3/4}/\log(n)^{1/4})$, which demonstrates that improving our current guarantees (by more than logarithmic factors) would require new techniques or might even be intractable.

Finally, we show how to exploit additional problem structure in order to solve our sum-of-squares relaxations, up to some approximation, very efficiently.
Our fastest algorithm runs in nearly-linear time using shifted (matrix) power iteration and has similar guarantees as above.
The analysis of this algorithm also confirms a variant of a conjecture of Montanari and Richard about singular vectors of tensor unfoldings.

\end{abstract}

\bigskip
\noindent
\textbf{Keywords:}
tensors, principal component analysis, random polynomial, parameter estimation, sum-of-squares method, semidefinite programming, spectral algorithms, shifted power iteration.

\clearpage

\setcounter{tocdepth}{1}

\ifnum\showtableofcontents=1
{
\tableofcontents
\thispagestyle{empty}
 }
\fi

\clearpage

\setcounter{page}{1}

\section{Introduction}
\label{sec:introduction}

\newcommand{\pca}{\textsc{pca}\xspace}
\newcommand{\csp}{\textsc{csp}\xspace}
\newcommand{\spca}{\textsc{sparse pca}\xspace}
\newcommand{\tpca}{\textsc{tensor pca}\xspace}

\emph{Principal component analysis} (\pca), the process of identifying a direction of largest possible variance from a matrix of pairwise correlations, is among the most basic tools for data analysis in a wide range of disciplines.
In recent years, variants of \pca have been proposed that promise to give better statistical guarantees for many applications.
These variants include restricting directions to the nonnegative orthant (nonnegative matrix factorization) or to directions that are sparse linear combinations of a fixed basis (\spca).
Often we have access to not only pairwise but also higher-order correlations.
In this case, an analog of \pca is to find a direction with largest possible third moment or other higher-order moment (higher-order \pca or \tpca).

All of these variants of \pca share that the underlying optimization problem is NP-hard for general instances (often even if we allow approximation), whereas vanilla \pca boils down to an efficient eigenvector computation for the input matrix.
However, these hardness result are not predictive in statistical settings where inputs are drawn from particular families of distributions.
Here efficient algorithm can often achieve much stronger guarantees than for general instances.
Understanding the power and limitations of efficient algorithms for statistical models of NP-hard optimization problems is typically very challenging: it is not clear what kind of algorithms can exploit the additional structure afforded by statistical instances, but, at the same time, there are very few tools for reasoning about the computational complexity of statistical / average-case problems.
(See \cite{DBLP:conf/colt/BerthetR13} and \cite{DBLP:conf/innovations/BarakKS13} for discussions about the computational complexity of statistical models for \spca and random constraint satisfaction problems.)

We study a statistical model for the \emph{tensor principal component analysis problem} introduced by \citet{richard2014tensorpca} through the lens of a meta-algorithm called the sum-of-squares method, based on semidefinite programming.
This method can capture a wide range of algorithmic techniques including linear programming and spectral algorithms.
We show that this method can exploit the structure of statistical \tpca instances in non-trivial ways and achieves guarantees that improve over the previous ones.
On the other hand, we show that those guarantees are nearly tight if we restrict the complexity of the sum-of-squares meta-algorithm at a particular level.
This result rules out better guarantees for a fairly wide range of potential algorithms.
Finally, we develop techniques to turn algorithms based on the sum-of-squares meta-algorithm into algorithms that are truly efficient (and even easy to implement).

\medskip
\noindent
Montanari and Richard propose the following statistical model\footnote{Montanari and Richard use a different normalization for the signal-to-noise ratio.
Using their notation, $\beta\approx \tau/\sqrt n$.} for \tpca.

\begin{problem}[Spiked Tensor Model for \tpca, Asymmetric]
\label{prob:spiked-tensor}
  Given an input tensor $\bT = \tau \cdot v^{\tensor 3} + \bA$, where $v \in \R^n$ is an arbitrary unit vector, $\tau \geq 0$ is the signal-to-noise ratio, and $\bA$ is a random noise tensor with iid standard Gaussian entries, recover the signal $v$ approximately.
\end{problem}

Montanari and Richard show that when $\tau \leq o(\sqrt n)$ \pref{prob:spiked-tensor} becomes information-theoretically unsolvable, while for $\tau \geq \omega(\sqrt n)$ the maximum likelihood estimator (MLE) recovers $v'$ with $\iprod{v,v'} \geq 1 - o(1)$.

The maximum-likelihood-estimator (MLE) problem for \pref{prob:spiked-tensor} is an instance of the following meta-problem for $k=3$ and $f\from x \mapsto \sum_{ijk} \bT_{ijk} x_i x_j x_k$ \cite{richard2014tensorpca}.
\begin{problem}
  \label{prob:homog-poly-opt}
  Given a homogeneous, degree-$k$ function $f$ on $\R^n$, find a unit vector $v\in \R^n$ so as to maximize $f(v)$ approximately.
\end{problem}
For $k=2$, this problem is just an eigenvector computation.
Already for $k=3$, it is NP-hard.
Our algorithms proceed by relaxing \pref{prob:homog-poly-opt} to a convex problem.
The latter can be solved either exactly or approximately (as will be the case of our faster algorithms).
Under the Gaussian assumption on the noise in \pref{prob:spiked-tensor}, we show that for $\tau \geq \omega(n^{3/4} \log(n)^{1/4})$ the relaxation does not substantially change the global optimum.

\paragraph{Noise Symmetry}
Montanari and Richard actually consider two variants of this model.
The first we have already described.
In the second, the noise is symmetrized, (to match the symmetry of potential signal tensors $v^{\otimes 3}$).
\begin{problem}[Spiked Tensor Model for \tpca, Symmetric]
\label{prob:spiked-tensor-symm}
  Given an input tensor $\bT = \tau \cdot v^{\tensor 3} + \bA$, where $v \in \R^n$ is an arbitrary unit vector, $\tau \geq 0$ is the signal-to-noise ratio, and $\bA$ is a random \emph{symmetric} noise tensor---that is, $A_{ijk} = A_{\pi(i) \pi(j) \pi(k)}$ for any permutation $\pi$---with otherwise iid standard Gaussian entries, recover the signal $v$ approximately.
\end{problem}

It turns out that for our algorithms based on the sum-of-squares method, this kind of symmetrization is already built-in.
Hence there is no difference between \pref{prob:spiked-tensor} and \pref{prob:spiked-tensor-symm} for those algorithms.
For our faster algorithms, such symmetrization is not built in.
Nonetheless, we show that a variant of our nearly-linear-time algorithm for \pref{prob:spiked-tensor} also solves \pref{prob:spiked-tensor-symm} with matching guarantees.

\subsection{Results}
\label{sec:results}

\paragraph{Sum-of-squares relaxation}

We consider the degree-$4$ sum-of-squares relaxation for the MLE problem.
(See \pref{sec:techniques} for a brief discussion about sum-of-squares.
All necessary definitions are in \pref{sec:preliminaries}.
See \cite{DBLP:journals/corr/BarakS14} for more detailed discussion.)
Note that the planted vector $v$ has objective value $(1-o(1))\tau$ for the MLE problem with high probability (assuming $\tau=\Omega(\sqrt n)$ which will always be the case for us).

\begin{theorem}
  \label{thm:sdp-main-informal}
  There exists a polynomial-time algorithm based on the degree-$4$ sum-of-squares relaxation for the MLE problem that given an instance of \pref{prob:spiked-tensor} or \pref{prob:spiked-tensor-symm} with $\tau \geq n^{3/4} (\log n)^{1/4} / \epsilon$ outputs a unit vector $v'$ with $\langle  v,v' \rangle\ge 1- O(\epsilon)$ with probability $1 - O(n^{-10})$ over the randomness in the input.
  Furthermore, the algorithm works by rounding any solution to the relaxation with objective value at least $(1-o(1))\tau$.
  Finally, the algorithm also certifies that all unit vectors bounded away from $v'$ have objective value significantly smaller than $\tau$ for the MLE problem \pref{prob:homog-poly-opt}.
\end{theorem}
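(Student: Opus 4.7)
The plan is to analyze the degree-4 sum-of-squares relaxation of the homogeneous cubic optimization problem $\max \langle T, x^{\otimes 3}\rangle$ subject to $\|x\|^2 = 1$. Given any degree-4 pseudo-expectation $\pE$ attaining pseudo-objective value at least $(1-o(1))\tau$, the goal is to show that the pseudo-moment matrix $M \defeq \pE[xx^\top]$ is spectrally close to $vv^\top$, so that its top eigenvector $v'$ satisfies $\langle v, v'\rangle \geq 1 - O(\epsilon)$; the same matrix will also certify that no unit vector far from $v'$ can yield a cubic-form value close to $\tau$.

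The technical heart of the argument is a \emph{noise bound}: with probability $1 - O(n^{-10})$ over the random tensor $A$, every degree-4 pseudo-distribution on the unit sphere satisfies $|\pE \langle A, x^{\otimes 3}\rangle| \leq O(n^{3/4}(\log n)^{1/4})$. Equivalently, one must exhibit a degree-4 sum-of-squares certificate for the inequality $|\langle A, x^{\otimes 3}\rangle| \leq O(n^{3/4}(\log n)^{1/4}) \cdot \|x\|^3$. The plan is to pass to the degree-6 polynomial $\langle A, x^{\otimes 3}\rangle^2$ and represent it as a quadratic form on degree-2 monomials: construct a symmetric matrix $\bB = \bB(A)$ of size roughly $n^2 \times n^2$, obtained by contracting two copies of $A$, with the property that $\langle A, x^{\otimes 3}\rangle^2 \equiv (x^{\otimes 2})^\top \bB (x^{\otimes 2}) \pmod{\|x\|^2 - 1}$. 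A spectral bound $\|\bB\|_{\mathrm{op}} \leq O(n^{3/2}\sqrt{\log n})$ then immediately yields the SoS certificate. To establish this spectral bound I would decompose $\bB$ as a sum of independent contributions from the entries of $A$ and apply matrix Bernstein or the non-commutative Khintchine inequality, handling Gaussian-chaos tails by truncation.

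Granting the noise bound, the remainder is algebraic. Splitting $\pE\langle T, x^{\otimes 3}\rangle = \tau \pE \langle v, x\rangle^3 + \pE\langle A, x^{\otimes 3}\rangle$ and using $\tau \geq n^{3/4}(\log n)^{1/4}/\epsilon$ yields $\pE\langle v, x\rangle^3 \geq 1 - O(\epsilon)$. Applying the degree-4 SoS Cauchy--Schwarz inequality to the factoring $\langle v, x\rangle^3 = \langle v, x\rangle \cdot \langle v, x\rangle^2$, together with the SoS identity $\pE \langle v, x\rangle^4 \leq \pE\|x\|^4 = 1$ (from $\|v\|=1$ and the sphere constraint), gives $\pE\langle v, x\rangle^2 \geq 1 - O(\epsilon)$, i.e.\ $v^\top M v \geq 1 - O(\epsilon)$. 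Since $\Tr M = 1$, the matrix $M$ has a dominant eigenvalue near $1$ with eigenvector near $v$, and standard spectral perturbation gives $\langle v, v'\rangle \geq 1 - O(\epsilon)$ for $v'$ the top eigenvector of $M$. Certification is then immediate: the noise bound applies to every pseudo-distribution, hence to the Dirac measure at any alternative unit vector $u$, so $\langle T, u^{\otimes 3}\rangle \leq \tau \langle v, u\rangle^3 + O(n^{3/4}(\log n)^{1/4}) \ll \tau$ whenever $\langle v, u\rangle$ is small.

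The main obstacle is the noise bound itself. The naive tensor-unfolding estimate $\|A_{\mathrm{flat}}\|_{\mathrm{op}} = O(n)$ is too weak by a factor of $n^{1/4}$ and would recover only the regime $\tau \gtrsim n$ already handled by prior work; one must genuinely exploit the degree-4 SoS hierarchy through an $n^2 \times n^2$ matrix that is a \emph{quadratic} function of the Gaussian entries of $A$, rather than a mere linear flattening. Designing $\bB$ so that it both realizes $\langle A, x^{\otimes 3}\rangle^2$ modulo the sphere constraint and concentrates at the improved scale $n^{3/2}\sqrt{\log n}$ requires careful symmetrization of indices and control of a Gaussian chaos of order two; this is where the bulk of the technical work will lie.
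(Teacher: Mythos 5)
Your high-level plan matches the paper's: certify an SoS bound on the random cubic via an $n^2 \times n^2$ matrix quadratic in the entries of $A$, bound its spectrum with matrix Bernstein, then round the degree-$4$ pseudo-moments. But the central step of your noise bound fails as stated. You ask for a matrix $\bB = \bB(A)$ of size $n^2 \times n^2$ satisfying $\langle A, x^{\otimes 3}\rangle^2 \equiv (x^{\otimes 2})^T\bB\, x^{\otimes 2} \pmod{\|x\|^2 - 1}$. No such $\bB$ exists for generic $A$: restricted to the unit sphere, $\langle A, x^{\otimes 3}\rangle$ is an arbitrary degree-$3$ spherical-harmonic combination and its square carries nonvanishing degree-$5$ and degree-$6$ harmonic components, whereas any degree-$4$ polynomial on the sphere has harmonics only up to degree $4$. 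The correct move (and the paper's) is to settle for a one-sided inequality rather than an identity: write $\bA(x) = \sum_i x_i\langle x, A_i x\rangle$ using the slices $A_i$, and apply Cauchy--Schwarz to get $\bA(x)^2 \leq \|x\|^2 \cdot \langle x^{\otimes 2}, (\sum_i A_i\otimes A_i)\, x^{\otimes 2}\rangle$, which \emph{is} a valid degree-$4$ SoS inequality. Then $\sum_i A_i \otimes A_i$ has mean $n\cdot\Id_{n^2}$ and concentrates around it at scale $n^{3/2}(\log n)^{1/2}$ by matrix Bernstein, giving precisely the $\lambda$-boundedness (with $\lambda = O(n^{3/4}(\log n)^{1/4})$) that drives the argument. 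Your intuition about the right $n^2 \times n^2$ object and the right concentration rate is correct; it is the exact-equivalence claim that is false and must be replaced by this inequality.

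The rounding step also has a gap: the top eigenvector of $M = \pE[xx^T]$ is determined only up to sign, so your argument yields $|\langle v,v'\rangle| \geq 1 - O(\epsilon)$ rather than the stated signed inequality $\langle v,v'\rangle \geq 1 - O(\epsilon)$. The odd power in the signal $\tau\langle v,x\rangle^3$ breaks the $v\mapsto -v$ symmetry, and the correct sign is visible in the degree-$1$ and degree-$3$ pseudo-moments, but you discard it once you pass to the second-moment matrix $M$. The paper instead outputs $\pE x/\|\pE x\|$ and shows $\pE\langle v,x\rangle \geq 1 - O(\epsilon)$ directly by combining $\pE\langle v,x\rangle^3 \geq 1 - O(\epsilon)$ with a univariate SoS certificate for $1 - 2u^3 + u \geq 0$ on $[-1,1]$, which fixes the sign automatically. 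Your route is repairable---for instance, choose the sign of $v'$ maximizing $\bT(v')$, or consult $\pE x$---but as written it does not establish the stated conclusion.
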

We complement the above algorithmic result by the following lower bound.
\begin{theorem}[Informal Version]
  \label{thm:lb-intro-informal}
  There is $\tau : \N \rightarrow \R$ with $\tau \leq O(n^{3/4}/ \log(n)^{1/4})$
so that when
  $\bT$ is an instance of \pref{prob:spiked-tensor} with signal-to-noise ratio $\tau$,
  with probability $1 - O(n^{-10})$, there exists a solution to the degree-$4$ sum-of-squares relaxation for the MLE problem with objective value at least $\tau$ that does not depend on the planted vector $v$.
  In particular, no algorithm can reliably recover from this solution a vector $v'$ that is significantly correlated with $v$.
\end{theorem}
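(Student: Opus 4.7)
The plan is to exhibit, for any fixed unit vector $v$, a degree-$4$ pseudo-distribution $\pE = \pE(\bA)$ that is a function of the noise $\bA$ alone (hence independent of $v$), satisfies the unit-sphere constraints, and achieves $\pE\,\langle \bT,\, x^{\otimes 3}\rangle \geq \tau(1-o(1))$ with probability $1 - O(n^{-10})$ over $\bA$. The $v$-independence automatically precludes recovery: because the construction is rotationally equivariant and the Gaussian noise is invariant in distribution under rotations, the rounded output has inner product $O(\sqrt{(\log n)/n}) = o(1)$ with any fixed $v$ with high probability over $\bA$.

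Construction. Match the uniform distribution on the sphere at degrees $\leq 2$: set $\pE[1] = 1$, $\pE[x_i] = 0$, and $\pE[x_i x_j] = \delta_{ij}/n$. Tilt the third moment toward the noise by setting
$$
\pE[x_i x_j x_k] \;=\; \alpha \cdot \tilde\bA_{ijk},
$$
where $\tilde\bA$ is the symmetrized, traceless part of $\bA$ (the trace correction enforces $\sum_k \pE[x_i x_k^2] = \pE[x_i] = 0$), and $\alpha \asymp \tau/\|\bA\|_F^2 \asymp \tau/n^3$. For the fourth moments, start from the uniform-on-sphere moments and add a correction of the form $\alpha^2 \cdot (\text{pairwise contraction of }\tilde\bA\otimes\tilde\bA)$, designed to cancel, up to lower-order residuals, the off-diagonal contribution that appears in the PSD check below.

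The main step is verifying PSDness of the $O(n^2)\times O(n^2)$ pseudo-moment matrix $M$. After eliminating the degree-$\leq 1$ block via Schur complement, positivity reduces to an inequality on the degree-$(2,2)$ block of the form
$$
M_{22}^{\mathrm{unif}} + (\text{correction}) \;\succeq\; n\alpha^2 \cdot \tilde\bA^\top\tilde\bA + (\text{other cross terms}),
$$
where $\tilde\bA$ is viewed as an $n\times n^2$ matrix. The $\alpha^2\,\tilde\bA\otimes\tilde\bA$ correction is chosen to match $\E[n\alpha^2\, \tilde\bA^\top\tilde\bA] \approx n^2\alpha^2\,\Id$ up to a residual, which is then controlled by random-matrix concentration. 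The key estimates are: (i) the operator norm of the $n\times n^2$ unfolding of $\bA$ is $\tilde O(n)$ with probability $1 - n^{-10}$; (ii) the spectral deviation $\|\tilde\bA^\top\tilde\bA - \E \tilde\bA^\top\tilde\bA\|_{op}$ is $\tilde O(n)$; and (iii) analogous sharp bounds on higher contractions. Combined with the fact that $M_{22}^{\mathrm{unif}}$ has minimum eigenvalue $\Theta(1/n^2)$ on the relevant invariant subspace, PSDness holds throughout a range of $\alpha$ that, after the substitution $\alpha \asymp \tau/n^3$, yields the threshold $\tau \leq O(n^{3/4}/(\log n)^{1/4})$; the $(\log n)^{1/4}$ factor is precisely the polylogarithmic slack in the random-matrix estimates.

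Given positivity, the value computation is routine: $\pE\,\langle\bA, x^{\otimes 3}\rangle = \alpha \|\tilde\bA\|_F^2 \geq (1-o(1))\alpha n^3 \geq (1-o(1))\tau$ by concentration of $\|\bA\|_F^2$ around $n^3$, while the planted contribution $\tau\pE\,\langle v, x\rangle^3 = \tau\alpha\,\langle\tilde\bA, v^{\otimes 3}\rangle$ is a Gaussian of standard deviation $\tau\alpha = o(1)$, hence negligible. The hard part is the PSDness step: a naive construction with uniform-spherical fourth moments and no $\tilde\bA$-dependent correction fails Schur complement already at $\tau = \Theta(\sqrt n)$, merely recovering the Montanari--Richard spectral threshold. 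Pushing the lower bound all the way to $n^{3/4}/(\log n)^{1/4}$ requires both the carefully chosen $\tilde\bA\otimes\tilde\bA$ correction to $M_{22}$ and the sharp operator-norm estimates for its residuals after Schur complement.
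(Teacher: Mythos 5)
Your construction is in essence the paper's: match the uniform spherical measure at degree $\leq 2$, tilt the degree-$3$ moments toward the symmetrized noise tensor, build degree-$4$ moments from a Schur complement mixed with the uniform measure, and pay for positivity with random-matrix concentration. Two details deserve sharpening, one conceptual and one quantitative.

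First, the step where all the difficulty lives. The Schur complement $CB^{-1}C^T$, which is $\propto AA^T$ plus a rank-one term, is PSD by construction but is \emph{not} a valid degree-$4$ pseudo-moment block: it fails the symmetry constraint $M[(i,j),(k,\ell)] = M[(i,k),(j,\ell)]$ (and its siblings) that every pseudo-expectation matrix must satisfy. Repairing this by averaging over $\cS_4$ brings in the non-PSD permutations $\sigma\cdot AA^T = \left(\sum_i A_i\tensor A_i\right)P$ and $\sigma^2\cdot AA^T = \sum_i A_i\tensor A_i^T$ (\pref{lem:deg-4-sym-AAT}), whose most-negative eigenvalues are $-\tilde\Theta(n^{3/2})$ (\pref{thm:concentration}). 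Your phrase ``pairwise contraction of $\tilde\bA\tensor\tilde\bA$'' gestures at this family of matrices, but leaves the $\cS_4$-symmetrization---the source of the negative eigenvalues, and hence of the threshold---implicit; it is the crux, not a routine residual.

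Second, your estimate (ii) is too strong as written. For the $n^2\times n$ Gaussian unfolding $A$ one has $\|A^TA - \E A^TA\| = \tilde\Theta(n^{3/2})$ (\pref{lem:mr-concentration-general} with $k=3$), and for $\sum_i A_i\tensor A_i - \E\sum_i A_i\tensor A_i$ the deviation is likewise $\tilde\Theta(n^{3/2})$ (\pref{thm:concentration}); neither is $\tO(n)$. In the Schur-complement budget $n\alpha^2\cdot(\text{deviation}) = O(n^{-2})$, the deviation $\tilde\Theta(n^{3/2})$ forces $\alpha = O\!\left(n^{-9/4}/\polylog n\right)$ and hence $\tau = \alpha\cdot\Theta(n^3) = O\!\left(n^{3/4}/\polylog n\right)$. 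A $\tO(n)$ deviation would instead permit $\alpha = O(n^{-2}/\polylog n)$ and $\tau = O(n/\polylog n)$, which directly contradicts \pref{cor:sos-cert-3} and \pref{thm:sos-tpca}: degree-$4$ SoS certifies $\max_{\|x\|=1}\bA(x)\leq \tO(n^{3/4})$ w.h.p. So your final threshold is correct, but it cannot be derived from estimate (ii) as you have stated it; the $n^{3/2}$ (with its logarithmic slack) is exactly what produces the $n^{3/4}/(\log n)^{1/4}$.
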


\paragraph{Faster algorithms}

We interpret a tensor-unfolding algorithm studied by Montanari and Richard as a spectral relaxation of the degree-4 sum-of-squares program for the MLE problem.
This interpretation leads to an analysis that gives better guarantees in terms of signal-to-noise ratio $\tau$ and also informs a more efficient implementation based on shifted matrix power iteration.\Snote{}

\begin{theorem}
  \label{thm:linear-intro}
  There exists an algorithm with running time $\tO(n^3)$, which is linear in the size of the input, that given an instance of \pref{prob:spiked-tensor} or \pref{prob:spiked-tensor-symm} with $\tau \geq  n^{3/4} / \epsilon$ outputs with probability $1-O(n^{-10})$ a unit vector $v'$ with $\langle  v,v' \rangle\ge 1-O(\epsilon)$.
\end{theorem}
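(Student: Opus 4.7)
The plan is to reduce \pref{thm:linear-intro} to a matrix-eigenvector problem via tensor unfolding, and then compute the relevant eigenvector in nearly linear time via shifted matrix power iteration. Let $M \in \R^{n \times n^2}$ be the mode-$1$ unfolding of $\bT$, that is, $M_{i,(j,k)} = \bT_{ijk}$, and write $G$ for the analogous unfolding of $\bA$. Using $\|v \tensor v\| = 1$,
\[
  M M^\dagger \;=\; \tau^2\, v v^\dagger \;+\; \tau\bigl(v w^\dagger + w v^\dagger\bigr) \;+\; G G^\dagger,
  \qquad w \defeq G(v \tensor v) \in \R^n.
\]

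I would first establish two random-matrix estimates holding with probability $1 - O(n^{-10})$: \textbf{(i)} $\|GG^\dagger - n^2 \Id\| \leq O(n^{3/2})$, a standard bound on the extreme singular values of an $n \times n^2$ \iid Gaussian matrix; and \textbf{(ii)} $\|w\| \leq (1+o(1))\sqrt{n}$, because in the asymmetric model $w$ is a standard Gaussian vector in $\R^n$. Combined, these give $\|MM^\dagger - n^2 \Id - \tau^2 vv^\dagger\| \leq O(n^{3/2} + \tau\sqrt n)$. Setting $N \defeq MM^\dagger - n^2 \Id$, the leading eigenvalue of $N$ is $(1 \pm o(1))\tau^2$ while the rest of its spectrum has magnitude $O(n^{3/2} + \tau\sqrt n)$. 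Under the hypothesis $\tau \geq n^{3/4}/\eps$ the spectral gap is $\Omega(\tau^2)$ and the Davis--Kahan $\sin\theta$ theorem supplies a top eigenvector $v'$ with $\sin^2\angle(v,v') \leq O(\eps^2)$, so $\iprod{v,v'} \geq 1 - O(\eps)$ as required.

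To attain the claimed $\tO(n^3)$ runtime, I would compute the top eigenvector of $N$ by shifted matrix power iteration: starting from a random Gaussian $x_0$, iterate $x_{t+1} = (N x_t)/\|N x_t\|$, where each application of $N$ reduces to $Nx = M(M^\dagger x) - n^2 x$ and costs $O(n^3)$. The shift by $n^2 \Id$ is essential. Without it, the eigenvalues of $MM^\dagger$ all cluster near $n^2$, the multiplicative gap between the leading and second eigenvalues is only $1 + \Theta(\tau^2/n^2)$, and reaching $\eps$-accuracy would require $\Omega(n^{1/2}/\eps^2)$ iterations; after the shift the multiplicative gap becomes $\Omega(1)$ and $O(\log n)$ iterations suffice.

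The main obstacle is the symmetric model \pref{prob:spiked-tensor-symm}, where the entries of $G$ are no longer independent and $\E[GG^\dagger]$ deviates from $n^2 \Id$ by an $O(n)$ off-diagonal correction coming from pairs of tensor entries forced to be equal by the symmetry $A_{ijk} = A_{\pi(i)\pi(j)\pi(k)}$. I would handle this by writing $\bA_{\text{sym}}$ as an explicit symmetrization of an underlying \iid Gaussian tensor, bounding the spectral norm of each of the resulting six unfolded summands via estimate~(i), and absorbing the $O(n)$ correction into a slightly modified diagonal shift. The overall $O(n^{3/2})$ scaling of the noise and the $\Omega(\tau^2)$ spectral gap are preserved, so the rest of the argument transfers with only constant-factor changes.
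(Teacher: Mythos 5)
Your treatment of the asymmetric model \pref{prob:spiked-tensor} is correct and essentially matches the paper's (\pref{alg:fast-recover-subopt} together with \pref{lem:mr-success}, \pref{lem:mr-concentration}, and \pref{lem:mr-runtime}): unfold, show that $MM^\dagger$ is a rank-one spike of magnitude $\tau^2$ above a noise floor of $O(n^{3/2})$ around $n^2 \Id$, and power-iterate after subtracting $n^2 \Id$ to open the spectral gap.

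Your handling of the symmetric model \pref{prob:spiked-tensor-symm} has a genuine gap, and your diagnosis of the obstacle is also off. The expectation $\E\bigl[G_{\mathrm{sym}}G_{\mathrm{sym}}^\dagger\bigr]$ is in fact still a scalar multiple of the identity: for $i \neq i'$, the covariance $\E\bigl[(\bA_{\mathrm{sym}})_{ijk}(\bA_{\mathrm{sym}})_{i'jk}\bigr]$ vanishes because the multisets $\{i,j,k\}$ and $\{i',j,k\}$ can never coincide when $i \neq i'$, so there is no off-diagonal correction to the mean to ``absorb into a shift.'' The real difficulty is \emph{concentration}, not the mean. The columns of $G_{\mathrm{sym}}$ are no longer independent --- the column indexed by $(j,k)$ equals the one indexed by $(k,j)$, and columns with repeated indices have a different distribution --- so the covariance-concentration bound behind your estimate (i) does not apply directly. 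Your proposed fix of bounding the six unfolded summands separately is also too lossy as stated: each $G^\pi(G^\sigma)^\dagger$ has operator norm $\Theta(n^2)$, whereas you need the deviation of the whole sum from its mean to be $O(n^{3/2})$, and you never address concentration of the cross terms $G^\pi(G^\sigma)^\dagger$ for $\pi \neq \sigma$. The paper sidesteps the dependence entirely via \pref{alg:fast-recover-sym}: apply a random rotation $R$, partition $[n]$ into $X, Y, Z$, and run the unfolding argument on the restricted block $P_X U^\dagger (P_Y \otimes P_Z) U P_X$, whose noise part becomes a genuinely independent Gaussian matrix once the three index slots are forced into disjoint coordinate sets; the rotation ensures $Rv_0$ spreads its mass across all three parts even if $v_0$ is sparse. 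The paper's Conclusion explicitly leaves making direct tensor unfolding work under symmetric noise --- precisely what you propose --- as an open question, so this part of your argument cannot be waved through.
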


We remark that unlike the previous polynomial-time algorithm this linear time algorithm does not come with a certification guarantee.
In \pref{sec:semi-random-tensor}, we show that small adversarial perturbations can cause this algorithm to fail, whereas the previous algorithm is robust against such perturbations.
We also devise an algorithm with the certification property and running time $\tilde O(n^{4})$ (which is subquadratic in the size $ n^3$ of the input).

\begin{theorem}
\label{cor:tpca-main}
   \label{thm:optimize-certify-fast}
  There exists an algorithm with running time $\tilde O(n^4)$ (for inputs of size $n^3$) that given an instance of \pref{prob:spiked-tensor} with $\tau \geq  n^{3/4} (\log n)^{1/4} / \epsilon$ for some $\epsilon$, outputs with probability $1-O(n^{-10})$ a unit vector $v'$ with $\langle  v,v' \rangle\ge 1-O(\e)$ and certifies that all vectors bounded away from $v'$ have MLE objective value significantly less than $\tau$.
\end{theorem}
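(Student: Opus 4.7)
The plan is to extract from the degree-$4$ sum-of-squares analysis a \emph{spectral} certificate that (a) suffices to witness the conclusion of \pref{thm:sdp-main-informal} and (b) can be computed in $\tilde O(n^4)$ time by a small number of matrix-vector multiplications on an $n^2 \times n^2$ matrix, thereby avoiding the cost of solving the full semidefinite program.

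More concretely, I would first recover a candidate vector $v'$ using the nearly-linear-time shifted matrix power iteration of \pref{thm:linear-intro} applied to the tensor unfolding $M_T$ of $\bT$, which already guarantees $\iprod{v,v'} \ge 1 - O(\epsilon)$ whenever $\tau \ge n^{3/4}/\epsilon$. So the remaining task is the \emph{certification}: producing, in $\tilde O(n^4)$ time, an object that proves every unit vector $u$ with $\iprod{u,v'}^2 \le 1 - \Omega(\epsilon)$ satisfies $\iprod{\bT, u^{\otimes 3}} \ll \tau$. For this, I would argue that the degree-$4$ SoS relaxation admits a \emph{dual} certificate of the form ``$\lambda \cdot \Id - \Phi(\bT, v')$ is PSD on an appropriate $n^2$-dimensional subspace,'' where $\Phi(\bT, v')$ is a linear combination of the unfolding $M_T$ and rank-one shifts built from $v'$, and $\lambda = (1+o(1))\tau$. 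The SoS analysis underlying \pref{thm:sdp-main-informal} already shows such a matrix inequality holds with high probability over $\bA$; the point is to express this witness using only $M_T$ and $v'$ rather than a full $n^2\times n^2$ pseudomoment matrix returned by an SDP solver.

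Given such a candidate certificate matrix $M \in \R^{n^2\times n^2}$, checking its PSDness up to the required slack reduces to certifying a spectral bound $\lambda_{\max}(M) \le \lambda$, which I would do by $\tilde O(1)$ iterations of (shifted) matrix power iteration on $M$. Each iteration is a single matrix-vector product, costing $O(n^4)$ time; here it is crucial that $M$ is represented implicitly as $M_T$ plus $O(1)$ rank-one terms, so products can be computed without ever forming $M$ explicitly. Together with the $\tilde O(n^3)$ recovery step, the total cost is $\tilde O(n^4)$. The $(\log n)^{1/4}$ factor in the signal-to-noise ratio enters exactly as in \pref{thm:sdp-main-informal}: it is what is needed to control the spectral norm of the noise contribution to $M$ and to give enough margin to absorb the error in power iteration.

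The main obstacle is showing that the SoS argument from \pref{thm:sdp-main-informal} can actually be compiled into a certificate of this structurally restricted form --- i.e., that the dual witness is really a low-rank perturbation of the tensor unfolding and not some more complicated pseudomoment operator. I would attack this by opening up the proof of \pref{thm:sdp-main-informal} and tracking that the only operations used on $\bT$ are the unfolding plus projections onto $v'$ and its orthogonal complement; the required PSD bound then reduces to a spectral norm bound on $\bA$ unfolded and a scalar calculation depending only on $\iprod{v,v'}$. Concentration for the spectral norm of the $n\times n^2$ unfolding of the Gaussian tensor $\bA$ (which is $O(n)$ with the correct logarithmic factors) supplies the remaining ingredient and drives the $n^{3/4}(\log n)^{1/4}$ threshold.
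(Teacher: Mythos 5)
Your plan has a sound high-level structure (fast recovery via tensor unfolding, then a separate spectral certificate checked by power iteration), but the proposed certificate is the wrong object, and this is not a cosmetic choice --- it is the crux of why the theorem achieves $n^{3/4}$ rather than $n$.

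You propose a certificate of the form ``$\lambda\cdot\Id - \Phi(\bT,v')$ is PSD on an $n^2$-dimensional subspace, where $\Phi(\bT,v')$ is a linear combination of the unfolding $M_T$ and rank-one shifts built from $v'$,'' and you assert that concentration for the spectral norm of the $n\times n^2$ unfolding of $\bA$ (which is $O(n)$) ``drives the $n^{3/4}(\log n)^{1/4}$ threshold.'' This is exactly backwards. The $O(n)$ spectral norm of the rectangular unfolding is what yields the \emph{weaker} $\tau \gtrsim n$ threshold of earlier work; the whole reason degree-$4$ SoS beats tensor unfolding here is that the unfolding is an $n^2\times n$ rectangle whose top singular value is dominated by its longer dimension and cannot certify anything below $\tilde\Omega(n)$ (see the discussion opening \pref{sec:sos-proofs}). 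No amount of subtracting $O(1)$ rank-one corrections built from $v'$ can change this: the unfolding of the pure noise tensor already has an entire bulk of singular values at scale $\Theta(n)$, so a low-rank perturbation of $M_T$ has spectral norm $\Omega(n)$, not $O(n^{3/4}\log^{1/4}n)$.

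The actual certificate in the paper (\pref{alg:fast-certify-opt}) is a \emph{quadratic} object in $\bT$: letting $\bS=\bT-v^{\tensor 3}$ with $S_i$ its slices, one checks that $\bigl\|\sum_i S_i\tensor S_i - \E\sum_i A_i\tensor A_i\bigr\| \le O(n^{3/2}\log^{1/2}n)$. This is an $n^2\times n^2$ matrix that is not expressible as ``unfolding plus low rank''; its small operator norm reflects the degree-$4$ SoS certificate for a degree-$3$ polynomial via the $\lambda$-boundedness mechanism (\pref{thm:lambda-bddness}): the Cauchy--Schwarz step $f(x)^2 \le \|x\|^2\|\nabla f(x)\|^2$ converts the degree-$3$ problem into bounding $\sum_i A_i\tensor A_i$, a \emph{square} random matrix whose fluctuation is $O(n^{3/2}\sqrt{\log n})$ by matrix Bernstein (\pref{thm:concentration-3}), which is where the $n^{3/4}$ comes from. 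Once you replace your proposed $\Phi$ with $\sum_i S_i\tensor S_i - \E\sum_i A_i\tensor A_i$, the rest of your plan (matrix-vector products via $w\mapsto\sum_i \mathrm{vec}(S_iW S_i^T)$ at $O(n^4)$ per iteration, $\tilde O(1)$ power-iteration steps, and the $\tilde O(n^3)$ tensor-unfolding recovery as a preprocessing step) does line up with the paper's \pref{thm:nonlinear-unfolding}. Note also that the threshold the paper checks against is $O(n^{3/2}\log^{1/2}n)$, not the $(1+o(1))\tau$ you wrote; the $\tau$-level bound on the MLE objective is then \emph{derived} from that spectral bound via \pref{thm:lambda-bddness}, not verified directly.
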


\paragraph{Higher-order tensors}

Our algorithmic results also extend in a straightforward way to tensors of order higher than $3$.
(See \pref{sec:higher-order-tensors} for some details.)
For simplicity we give some of these results only for the higher-order analogue of \pref{prob:spiked-tensor};
we conjecture however that all our results for \pref{prob:spiked-tensor-symm} generalize in similar fashion.

\begin{theorem}
\label{thm:recovery-certify-general}
Let $k$ be an odd integer, $v_0 \in \R^n$ a unit vector, $\tau \geq n^{k/4} \log(n)^{1/4} / \epsilon$,
and $\bA$ an order-$k$ tensor with independent unit Gaussian entries.
Let $\bT(x) = \tau \cdot \iprod{v_0,x}^k + \bA(x)$.

\begin{enumerate}
\item
There is a polynomial-time algorithm, based on semidefinite programming,
which on input $\bT(x) = \tau \cdot \iprod{v_0,x}^k + \bA(x)$
returns a unit vector $v$ with $\iprod{v_0,v} \geq 1 - O(\epsilon)$
with probability $1 - O(n^{-10})$ over random choice of $\bA$.

\item
There is a polynomial-time algorithm, based on semidefinite programming,
which on input $\bT(x) = \tau \cdot \iprod{v_0,x}^k + \bA(x)$
certifies that $\bT(x) \leq \tau \cdot \iprod{v,x}^k + O(n^{k/4} \log(n)^{1/4})$
for some unit $v$
with probability $1 - O(n^{-10})$ over random choice of $\bA$.
This guarantees in particular that $v$ is close to a maximum likelihood estimator for the problem of recovering the signal $v_0$ from the input $\tau \cdot v_0^{\tensor k} + \bA$.

\item
By solving the semidefinite relaxation approximately,
both algorithms can be implemented in time $\tilde O(m^{1 + 1/k})$, where $m = n^k$ is the input size.
\end{enumerate}
For even $k$, the above all hold, except now we recover $v$ with $\iprod{v_0,v}^2 \geq 1 - O(\epsilon)$,
and the algorithms can be implemented in nearly linear time.
\end{theorem}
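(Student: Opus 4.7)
The plan is to lift the argument of \pref{thm:sdp-main-informal}, \pref{thm:linear-intro}, and \pref{thm:optimize-certify-fast} from $k=3$ to general $k$. The central ingredient is a sum-of-squares certifiable upper bound on $\bA(x)$ for a random order-$k$ Gaussian tensor $\bA$ and unit $x$; once this is in hand, both the SDP rounding and the spectral-unfolding steps go through essentially uniformly in $k$.

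First I would set up the degree-$d$ sum-of-squares relaxation of $\max \bT(x)$ subject to $\|x\|^2 = 1$, taking $d$ to be the smallest even integer $\geq k$. Let $\pE$ be any pseudo-expectation of degree $d$ feasible for this relaxation; since the planted vector $v_0$ achieves value $(1-o(1))\tau$ with high probability, the optimum satisfies $\pE \bT(x) \geq (1-o(1))\tau$. The key lemma is an SOS-certified norm bound: with probability $1 - O(n^{-10})$, the inequality
\[
\bA(x) \leq O(n^{k/4}(\log n)^{1/4}) \cdot \|x\|^k
\]
admits a degree-$d$ SOS proof. I would prove this by flattening $\bA$ into a matrix $\bA_{\mathrm{unf}} \in \R^{n^{\lceil k/2\rceil} \times n^{\lfloor k/2 \rfloor}}$, using standard Gaussian matrix concentration to get $\|\bA_{\mathrm{unf}}\| \leq O(n^{k/4}(\log n)^{1/4})$, and expressing $\bA(x) = \iprod{x^{\otimes \lceil k/2 \rceil},\, \bA_{\mathrm{unf}}\, x^{\otimes \lfloor k/2 \rfloor}} \leq \|\bA_{\mathrm{unf}}\| \cdot \|x\|^k$ as a Cauchy--Schwarz SOS identity of degree $k$.

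Combining the two bounds yields $\tau \cdot \pE \iprod{v_0,x}^k \geq \tau - o(\tau) - O(n^{k/4}(\log n)^{1/4}) \geq (1 - O(\epsilon))\tau$ under the hypothesis on $\tau$. For odd $k$, a rounding step extracting $v$ from the pseudo-moments of $\pE$ (for instance, the top eigenvector of $\pE[xx^\top]$, with sign chosen so that $\bT(v) > 0$) yields $\iprod{v_0, v} \geq 1 - O(\epsilon)$. For even $k$ the objective is invariant under $x \mapsto -x$, so we can only hope to recover $v_0$ up to sign; outputting the top eigenvector of $\pE[xx^\top]$ gives $\iprod{v_0, v}^2 \geq 1 - O(\epsilon)$. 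The certification claim (part~2) is immediate from the SOS-certified bound on $\bA$: combined with the extracted direction $v$, it certifies $\bT(x) \leq \tau \iprod{v,x}^k + O(n^{k/4}(\log n)^{1/4}) \cdot \|x\|^k$.

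For the nearly-linear-time implementations (part~3), I would replicate the spectral-unfolding plus shifted power iteration strategy of \pref{thm:linear-intro} and \pref{thm:optimize-certify-fast}. Flatten $\bT$ into $\bT_{\mathrm{unf}} \in \R^{n^{\lceil k/2 \rceil} \times n^{\lfloor k/2 \rfloor}}$; the signal contributes a rank-one component of operator norm $\tau$, while the noise part has spectral norm $O(n^{k/4}(\log n)^{1/4})$ by the concentration above. For even $k$ the unfolding is symmetric $n^{k/2} \times n^{k/2}$, and a single power-method eigenvector computation recovers $v_0^{\otimes k/2}$ up to $O(\e)$ in Frobenius norm, from which $v_0$ (up to sign) is read off in total time $\tO(n^k) = \tO(m)$. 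For odd $k$, one performs shifted matrix power iteration on $\bT_{\mathrm{unf}}$ following the $k=3$ analysis of \pref{thm:linear-intro}, at total cost $\tO(n^{k+1}) = \tO(m^{1+1/k})$, dominated by $O(\log n)$ matrix--vector products against $\bT_{\mathrm{unf}}$. The main obstacle is the SOS step: one must verify that the matrix concentration bound on $\bA_{\mathrm{unf}}$ translates into a degree-$d$ SOS proof for every $k$, and the odd-$k$ shifted-power-iteration analysis requires re-running the combinatorial trace-moment estimates from $k=3$ with index tuples of size $\Theta(k)$, although the structure of both arguments is unchanged.
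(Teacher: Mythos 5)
There is a genuine gap in the odd-$k$ case: your central lemma, that the flattening $\bA_{\mathrm{unf}} \in \R^{n^{\lceil k/2\rceil} \times n^{\lfloor k/2\rfloor}}$ of a random Gaussian $k$-tensor satisfies $\|\bA_{\mathrm{unf}}\| \leq O(n^{k/4}(\log n)^{1/4})$, is false when $k$ is odd. The operator norm of a rectangular $n^{\lceil k/2\rceil} \times n^{\lfloor k/2\rfloor}$ matrix with iid Gaussian entries is governed by the longer dimension, so $\|\bA_{\mathrm{unf}}\| = \Theta(n^{\lceil k/2\rceil/2}) = \Theta(n^{(k+1)/4})$ — a factor of $n^{1/4}$ worse than you need. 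For $k=3$ this is exactly the $n$ versus $n^{3/4}$ gap that the paper emphasizes in Section~\ref{sec:sos-proofs}: the naive spectral certificate from the $n^2 \times n$ unfolding only proves $\bA(x) \leq \tilde O(n)$, and the whole point of the $\lambda$-boundedness machinery is to beat it. The paper's degree-$4$ certificate instead uses the polynomial identity $f = \frac13\langle x, \nabla f(x)\rangle$, Cauchy--Schwarz, and the \emph{square} $n^2 \times n^2$ matrix $\sum_i A_i \otimes A_i$, which by matrix Bernstein concentrates around (a matrix representation of) $n\|x\|^4$ with fluctuation $O(n^{3/2}\log^{1/2}n)$; that is how $n^{3/4}\log^{1/4}n$ is extracted. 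For general odd $k$ the paper then reduces to $k=3$ via the substitution $y_\beta = x^\beta$ (after splitting signal from noise), so the $k=3$ $\lambda$-boundedness result does all the work. Your plan, as written, would only recover the Montanari--Richard threshold $\tau \gtrsim n^{(k+1)/4}$, not the theorem's $n^{k/4}$.

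Your even-$k$ argument is fine and matches the paper's remark that the square unfolding already achieves the right threshold and hence SoS reduces to an eigenvector computation. Two smaller issues: (i) rounding via the top eigenvector of $\pE[xx^\top]$ only pins down $v_0$ up to sign, which for odd $k$ you must resolve using the cubic pseudo-moment $\pE\langle v_0,x\rangle^k$ being positive, as in the paper's rounding via $\pE x$ together with \pref{lem:deg-3-to-1-correlation}; (ii) your running-time accounting for odd $k$ is internally inconsistent — $O(\log n)$ matrix--vector products against $\bT_{\mathrm{unf}}$ cost $\tilde O(n^k) = \tilde O(m)$, not $\tilde O(n^{k+1})$. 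The $\tilde O(m^{1+1/k})$ bound in the theorem comes from power iteration on the larger $\sum_i T_i \otimes T_i$-type matrix (the generalization of \pref{alg:fast-recovery-opt}), which is also what makes the certification in part~2 work; the direct unfolding algorithm is the nearly-linear-time one of \pref{thm:faster-general}, which does not certify.
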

\begin{remark}
When $\bA$ is a \emph{symmetric} noise tensor (the higher-order analogue of \pref{prob:spiked-tensor-symm}), (1--2) above hold.
We conjecture that (3) does as well.
\end{remark}

The last theorem, the higher-order generalization of \pref{thm:linear-intro}, almost completely resolves a conjecture of Montanari and Richard regarding tensor unfolding algorithms for odd $k$.
We are able to prove their conjectured signal-to-noise ratio $\tau$ for an algorithm that works mainly by using an unfolding of the input tensor, but our algorithm includes an extra random-rotation step to handle sparse signals.
We conjecture but cannot prove that the necessity of this step is an artifact of the analysis.

\begin{theorem}
\label{thm:faster-general}
Let $k$ be an odd integer, $v_0 \in \R^n$ a unit vector, $\tau \geq n^{k/4} / \epsilon$,
and $\bA$ an order-$k$ tensor with independent unit Gaussian entries.
There is a nearly-linear-time algorithm, based on tensor unfolding, which,
with probability $1 - O(n^{-10})$ over random choice of $\bA$,
recovers a vector $v$ with $\iprod{v,v_0}^2 \geq 1 - O(\epsilon)$.
This continues to hold when $\bA$ is replaced by a \emph{symmetric} noise tensor (the higher-order analogue of \pref{prob:spiked-tensor-symm}).
\end{theorem}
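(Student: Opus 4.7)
The plan is to extend the nearly-linear-time algorithm of \pref{thm:linear-intro} to order-$k$ tensors by the same recipe: unfold $\bT$ into an $n \times n^{k-1}$ matrix and run shifted matrix power iteration using an implicit representation. Write $\bT = \tau v_0^{\tensor k} + \bA$ and let $M \in \R^{n \times n^{k-1}}$ be the mode-$1$ unfolding of $\bT$, so that $M = \tau v_0 w^T + \bA^{(1)}$ with $w = v_0^{\tensor (k-1)}$ and $\bA^{(1)}$ the matching unfolding of $\bA$. Then
\[
MM^T \;=\; \tau^2 v_0 v_0^T \;+\; \tau v_0 (\bA^{(1)}w)^T \;+\; \tau (\bA^{(1)}w) v_0^T \;+\; \bA^{(1)}(\bA^{(1)})^T\mper
\]
Because $\bA^{(1)}$ has iid standard Gaussian entries, $\ex{\bA^{(1)}(\bA^{(1)})^T} = n^{k-1}\Id$, and standard matrix concentration gives $\norm{\bA^{(1)}(\bA^{(1)})^T - n^{k-1}\Id}_{\mathrm{op}} = \tO(n^{k/2})$. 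Also, since $\norm{w}_2 = 1$, the vector $\bA^{(1)}w$ is standard Gaussian in $\R^n$, so the two cross terms contribute combined operator norm $O(\tau\sqrt n)$. Setting $\bH := MM^T - n^{k-1}\Id$, we therefore have $\bH = \tau^2 v_0 v_0^T + E$ with $\norm{E}_{\mathrm{op}} = \tO(n^{k/2} + \tau \sqrt n)$. For $\tau \geq n^{k/4}/\epsilon$ and $k \geq 3$ this error is at most $O(\epsilon^2 \tau^2)$, so by Davis--Kahan the top eigenvector of $\bH$ has squared inner product at least $1 - O(\epsilon)$ with $v_0$.

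For nearly-linear running time we never form $\bH$ explicitly. Each step of power iteration computes $v \mapsto M(M^T v) - n^{k-1} v$ by streaming over the $n^k$ entries of $\bT$ in $O(n^k)$ time, and $O(\log n)$ iterations converge to the top eigenvector. Before unfolding, we precondition $\bT$ by applying a fast random orthogonal transformation---say a randomized Hadamard $HD$ with $D$ a diagonal of random $\pm 1$---independently to each of its $k$ modes. Each such mode-wise application is a batched $n$-dimensional transform over $n^{k-1}$ fibers and can be computed in $\tO(n^k)$ time via the fast Walsh--Hadamard transform. This preprocessing ensures $\norm{Rv_0}_\infty = O(\sqrt{\log n / n})$ with high probability, while by orthogonal invariance of the iid Gaussian law the rotated noise has the same distribution as $\bA$; after recovering $v' \approx R v_0$ we return $R^T v'$.

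The main obstacles I expect are threefold. First, the sharp matrix concentration $\norm{\bA^{(1)}(\bA^{(1)})^T - n^{k-1}\Id}_{\mathrm{op}} = \tO(n^{k/2})$ drives the threshold $\tau \geq n^{k/4}$ and demands a careful noncommutative Khintchine bound or an $\epsilon$-net combined with moment estimates on quadratic forms in $\bA^{(1)}$. Second, the random-rotation step is needed to rule out adversarial sparsity patterns in $v_0$ (e.g.\ $v_0 = e_1$) that could interact badly with localized deviations in the noise; this is precisely the step we conjecture but cannot prove is merely an artifact of our analysis. Third, the symmetric-noise case destroys the independence among entries of $\bA^{(1)}$. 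The plan there is to decompose $\bA^{\mathrm{sym}} = \Sym(\bA')$ as the symmetrization of an iid Gaussian tensor $\bA'$, apply the asymmetric analysis to $\bA'$, and bound the spectral contribution of the symmetrization-induced correlations separately---an argument again aided by incoherence of $R v_0$, since many of the coupled off-diagonal terms then concentrate tightly.
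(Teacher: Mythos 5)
Your asymmetric analysis is essentially correct but takes a slightly different unfolding than the paper's. You diagonalize $MM^T \in \R^{n\times n}$ where $M$ is the mode-$1$ unfolding, so the top eigenvector directly approximates $v_0$. The paper (\pref{lem:mr-concentration-general}) instead uses the squarest unfolding $A \in \R^{n^{(k+1)/2}\times n^{(k-1)/2}}$ and diagonalizes $A^T A$, whose top eigenvector approximates $v_0^{\ot (k-1)/2}$; extracting $v_0$ then needs one more folding/SVD step in the spirit of \pref{lem:nonlinear-unfolding-recursion}. Both choices give a Gram-matrix deviation of order $\tO(n^{k/2})$---the bound scales as the geometric mean of the two matrix dimensions in either split---so the $\tau \geq n^{k/4}/\epsilon$ threshold is the same, and your route is arguably cleaner. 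Your shifted-power-iteration running-time account is also fine.

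Two issues, though. First, a misattribution: the random-rotation preconditioning is \emph{not} needed in the asymmetric case. Since iid Gaussian noise is rotationally invariant, rotating $\bT$ is exactly equivalent to replacing $v_0$ by $Rv_0$ while leaving the noise distribution untouched, and the bounds you invoke ($\norm{\bA^{(1)}(\bA^{(1)})^T - n^{k-1}\Id}$ in operator norm, and $\norm{\bA^{(1)}w}$ for a fixed unit $w$) are already uniform over, or indifferent to, $v_0$, sparse or not. Indeed \pref{alg:fast-recover-subopt} applies no rotation. Your worry about $v_0 = e_1$ interacting with localized noise belongs to the symmetric case, not this one.

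Second, and more substantively, the symmetric-noise case is a genuine gap. The plan to write $\bA^{\mathrm{sym}}$ as a symmetrization of an iid tensor and "bound the spectral contribution of the symmetrization-induced correlations separately" is not an argument yet, and it is not clear it closes: the entries of an unfolding of a symmetrized tensor are correlated in a combinatorial pattern that $\ell_\infty$-incoherence of $Rv_0$ does not by itself control. The paper's actual mechanism (\pref{alg:fast-recover-sym} for $k=3$) is a block-restriction trick, and it is where the random rotation really earns its keep: randomly partition $[n]$ into sets $X_1,\dots,X_k$, rotate each mode by $R$, and restrict attention to the block of $R^{\ot k}\bT$ indexed by $X_1\times\cdots\times X_k$. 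Because the $X_i$ are disjoint, no two ordered $k$-tuples inside that block are permutations of one another, so the symmetrized Gaussian entries there are mutually independent, and the asymmetric analysis applies verbatim to that block. The rotation's role is to guarantee $\norm{P_{X_i}Rv_0}^2 \approx 1/k$ for each $i$ so the restricted signal stays strong, and the output is assembled from the per-block singular vectors. Without this (or some equivalent) device, the symmetric-noise claim in the theorem remains unproved.
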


\subsection{Techniques}
\label{sec:techniques}

We arrive at our results via an analysis of \pref{prob:homog-poly-opt} for the function $f(x)=\sum_{ijk} \bT_{ijk}x_i x_j x_k$, where $\bT$ is an instance of \pref{prob:spiked-tensor}.
The function $f$ decomposes as $f = g + h$ for a signal $g(x) = \tau \cdot \iprod{v,x}^3$ and noise $h(x) = \sum_{ijk} a_{ijk} x_i x_j x_j$ where $\{a_{ijk}\}$ are iid standard Gaussians.
The signal $g$ is maximized at $x = v$, where it takes the value $\tau$.
The noise part, $h$, is with high probability at most $\tilde O(\sqrt n)$ over the unit sphere.
We have insisted that $\tau$ be much greater than $\sqrt n$, so $f$ has a unique global maximum, dominated by the signal $g$.
The main problem is to find it.

To maximize $g$, we apply the \emph{Sum-of-Squares meta-algorithm} (SoS).
SoS provides a hierarchy of strong convex relaxations of \pref{prob:homog-poly-opt}.
Using convex duality, we can recast the optimization problem as one of efficiently certifying the upper bound on $h$ which shows that optima of $g$ are dominated by the signal.
SoS efficiently finds boundedness certificates for $h$ of the form
\begin{align*}
c - h(x) = s_1(x)^2 + \cdots + s_k(x)^2
\end{align*}
where ``$=$'' denotes equality in the ring $\R[x]/(\|x\|^2 - 1)$ and where $s_1,\ldots,s_k$ have bounded degree,
when such certificates exist.
(The polynomials $\{s_i\}$ and $\{t_j\}$ certify that $h(x) \leq c$. Otherwise $c - h(x)$ would be negative, but this is impossible by the nonnegativity of squared polynomials.)

Our main technical contribution is an almost-complete characterization of certificates like these for such degree-$3$ random polynomials $h$ when the polynomials $\{ s_i \}$ have degree at most four.
In particular, we show that with high probability in the random case a degree-$4$ certificate exists for $c = \tilde O(n^{3/4})$,
and that with high probability, no significantly better degree-four certificate exists.

\paragraph{Algorithms} We apply this characterization in three ways to obtain three different algorithms.
The first application is a polynomial-time based on semidefinite programming algorithm that maximizes $f$ when $\tau \geq \tilde \Omega(n^{3/4})$
(and thus solves TPCA in the spiked tensor model for $\tau \geq \tilde \Omega(n^{3/4})$.)
This first algorithm involves solving a large semidefinite program associated to the SoS relaxation.
As a second application of this characterization, we avoid solving the semidefinite program.
Instead, we give an algorithm running in time $\tilde O(n^4)$ which quickly constructs only a small portion of an almost-optimal SoS boundedness certificate;
in the random case this turns out to be enough to find the signal $v$ and certify the boundedness of $g$.
(Note that this running time is only a factor of $n \polylog n$ greater than the input size $n^3$.)

Finally, we analyze a third algorithm for TPCA
which simply computes the highest singular vector of a matrix unfolding of the input tensor.
This algorithm was considered in depth by Montanari and Richard, who fully characterized its behavior in the case of even-order tensors (corresponding to $k = 4, 6, 8, \ldots$ in \pref{prob:homog-poly-opt}).
They conjectured that this algorithm successfully recovers the signal $v$ at the signal-to-noise ratio $\tau$ of \pref{thm:optimize-certify-fast} for \pref{prob:spiked-tensor} and \pref{prob:spiked-tensor-symm}.
Up to an extra random rotations step before the tensor unfolding in the case that the input comes from \pref{prob:spiked-tensor-symm} (and up to logarithmic factors in $\tau$) we confirm their conjecture.
We observe that their algorithm can be viewed as a method of rounding a non-optimal solution to the SoS relaxation to find the signal.
We show, also, that for $k = 4$, the degree-$4$ SoS relaxation does no better than the simpler tensor unfolding algorithm as far as signal-to-noise ratio is concerned.
However, for odd-order tensors this unfolding algorithm does not certify its own success in the way our other algorithms do.

\paragraph{Lower Bounds}
In \pref{thm:lb-intro-informal}, we show that degree-$4$ SoS cannot certify that the noise polynomial $\bA(x) = \sum_{ijk} a_{ijk} x_i x_j x_k$ for $a_{ijk}$ iid standard Gaussians satisfies $\bA(x) \leq o(n^{3/4})$.

To show that SoS certificates do \emph{not} exist we construct a corresponding dual object.
Here the dual object is a degree-$4$ \emph{pseudo-expectation:} a linear map $\pE : \R[x]_{\leq 4} \rightarrow \R$ pretending to give the expected value of polynomials of degree at most $4$ under some distribution on the unit sphere.
``Pretending'' here means that, just like an actual distribution, $\pE p(x)^2 \geq 0$ for any $p$ of degree at most $4$.
In other words, $\pE$ is positive semidefinite on degree $4$ polynomials.
While for any \emph{actual} distribution over the unit sphere $\E \bA(x) \leq \tO(\sqrt n)$, we will give $\pE$ so that $\pE \bA(x) \geq \tOmega(n^{3/4})$.

To ensure that $\pE \bA(x) \geq \tOmega(n^{3/4})$, for monomials $x_i x_j x_k$ of degree $3$ we take $\pE x_i x_j x_k \approx \tfrac{n^{3/4}}{\iprod{\bA, \bA}} a_{ijk}$.
For polynomials $p$ of degree at most $2$ it turns out to be enough to set $\pE p(x) \approx \E^\mu p(x)$ where $\E^\mu$ denotes the expectation under the uniform distribution on the unit sphere.

Having guessed these degree $1,2$ and $3$ pseudo-moments, we need to define $\pE x_i x_j x_k x_\ell$ so that $\pE$ is PSD.
Representing $\pE$ as a large block matrix, the Schur complement criterion for PSDness can be viewed as a method for turning candidate degree $1$--$3$ moments (which here lie on upper-left and off-diagonal blocks) into a candidate matrix $M \in \R^{n^2 \times n^2}$ of degree-$4$ pseudo-expectation values which, if used to fill out the degee-$4$ part of $\pE$, would make it PSD.

We would thus like to set $\pE x_i x_j x_k x_\ell = M[(i,j),(k,l)]$.
Unfortunately, these candidate degree-$4$ moments $M[(i,j),(k,l)]$ do not satisfy commutativity; that is, we might have $M[(i,j),(k,l)] \neq M[(i,k),(j,\ell)]$ (for example).
But a valid pseudo-expectation must satisfy $\pE x_i x_j x_k x_\ell = \pE x_i x_k x_j x_\ell$.
To fix this, we average out the noncommutativity by setting $\pD x_i x_j x_k x_\ell = \tfrac{1}{|\cS_4|} \sum_{\pi \in \cS_4} M[(\pi(i), \pi(j)),(\pi(k),\pi(\ell))]$, where $\cS_4$ is the symmetric group on $4$ elements.

This ensures that the candidate degree-$4$ pseudo-expectation $\pD$ satisfies commutativity, but it introduces a new problem.
While the matrix $M$ from the Schur complement was guaranteed to be PSD and even to make $\pE$ PSD when used as its degree-$4$ part, some of the permutations $\pi \cdot M$ given by $(\pi \cdot M)[(i,j),(k,\ell)] = M[(\pi(i), \pi(j)),(\pi(k),\pi(\ell))]$ need not even be PSD themselves.
This means that, while $\pD$ avoids having large negative eigenvalues (since it is correlated with $M$ from Schur complement), it will have some small negative eigenvalues; i.e. $\pD p(x)^2 < 0$ for some $p$.

For each permutation $\pi \cdot M$ we track the most negative eigenvalue $\lambda_{min} (\pi \cdot M)$ using matrix concentration inequalities.
After averaging the permutations together to form $\pD$ and adding this to $\pE$ to give a linear functional $\pE + \pD$ on polynomials of degree at most $4$, our final task is to remove these small negative eigenvalues.
For this we mix $\pE + \pD$ with $\mu$, the uniform distribution on the unit sphere.
Since $\E^\mu$ has eigenvalues bounded away from zero, our final pseudo-expectation
\[
  \pE'p(x) \defeq \underbrace{\epsilon \cdot \pE p(x)}_{\text{degree $1$-$3$ pseudo-expectations}} + \underbrace{\epsilon \cdot \pD p(x)}_{\text{degree $4$ pseudo-expectations}} + \underbrace{(1 - \epsilon) \cdot \E\nolimits^\mu p(x)}_{\text{fix negative eigenvalues}}
  \]
 is PSD for $\epsilon$ small enough.
Having tracked the magnitude of the negative eigenvalues of $\pD$, we are able to show that $\epsilon$ here can be taken \emph{large} enough to get $\pE'\bA(x) = \tOmega(n^{3/4})$, which will prove \pref{thm:lb-intro-informal}.

\subsection{Related Work}
There is a vast literature on tensor analogues of linear algebra problems---too vast to attempt any survey here.
Tensor methods for machine learning, in particular for learning latent variable models, have garnered recent attention, e.g., with works of Anandkumar et al. \cite{DBLP:journals/jmlr/AnandkumarGHKT14, DBLP:journals/corr/abs-1302-2684}.
These approaches generally involve decomposing a tensor which captures some aggregate statistics of input data into rank-one components.
A recent series of papers analyzes the tensor power method, a direct analogue of the matrix power method, as a way to find rank-one components of random-case tensors \cite{DBLP:journals/corr/AnandkumarGJ14, DBLP:journals/corr/AnandkumarGJ14b}.

Another recent line of work applies the Sum of Squares (a.k.a. Lasserre or Lasserre/Parrilo) hierarchy of convex relaxations to learning problems.
See the survey of Barak and Steurer for references and discussion of these relaxations \cite{DBLP:journals/corr/BarakS14}.
Barak, Kelner, and Steurer show how to use SoS to efficiently find sparse vectors planted in random linear subspaces, and the same authors give an algorithm for dictionary learning with strong provable statistical guarantees \cite{DBLP:conf/stoc/BarakKS14, DBLP:journals/corr/BarakKS14}.
These algorithms, too, proceed by decomposition of an underlying random tensor; they exploit the strong (in many cases, the strongest-known) algorithmic guarantees offered by SoS for this problem in a variety of average-case settings.

Concurrently and independently of us, and also inspired by the recently-discovered applicability of tensor and sum-of-squares methods to machine learning, Barak and Moitra use SoS techniques formally related to ours to address the \emph{tensor prediction} problem:
given a low-rank tensor (perhaps measured with noise) only a subset of whose entries are revealed, predict the rest of the tensor entries \cite{DBLP:journals/corr/BarakM15}.
They work with worst-case noise and study the number of revealed entries necessary for the SoS hierarchy to successfully predict the tensor.
By constrast, in our setting, the entire tensor is revealed, and we study the signal-to-noise threshold necessary for SoS to recover its principal component under distributional assumptions on the noise that allow us to avoid worst-case hardness behavior.

Since Barak and Moitra work in a setting where few tensor entries are revealed, they are able to use algorithmic techniques and lower bounds from the study of sparse random constraint satisfaction problems (CSPs), in particular random 3XOR \cite{DBLP:conf/stacs/GoerdtK01, DBLP:journals/siamcomp/FriedmanGK05, DBLP:journals/toc/FeigeO07, DBLP:conf/focs/FeigeKO06}.
The tensors we study are much denser.
In spite of the density (and even though our setting is real-valued), our algorithmic techniques are related to the same spectral refutations of random CSPs.
However our lower bound techniques do not seem to be related to the proof-complexity techniques that go into sum-of-squares lower bound results for random CSPs.

The analysis of tractable tensor decomposition in the rank one plus noise model that we consider here (the \emph{spiked tensor model}) was initiated by Montanari and Richard, whose work inspired the current paper \cite{richard2014tensorpca}.
They analyze a number of natural algorithms and find that tensor unfolding algorithms, which use the spectrum of a matrix unfolding of the input tensor, are most robust to noise.
Here we consider more powerful convex relaxations, and in the process we tighten Montanari and Richard's analysis of tensor unfolding in the case of odd-order tensors.
In concurrent and independent work, Zheng and Tomioka also give a tight analysis of tensor unfolding for the asymmetric version of the spiked model of \tpca (\pref{prob:spiked-tensor}) \cite[Theorem 1]{2015arXiv150305479Z}.

Related to our lower bound, Montanari, Reichman, and Zeitouni (MRZ) prove strong impossibility results for the problem of detecting rank-one perturbations of Gaussian matrices and tensors using \emph{any} eigenvalue of the matrix or unfolded tensor; they are able to characterize the precise threshold below which the entire spectrum of a perturbed noise matrix or unfolded tensor becomes indistinguishable from pure noise \cite{montanari2014limitation}.
This lower bound is incomparable to our lower bound for the degree-4 SoS relaxation.
The MRZ lower bound considers fine-grained information about the spectrum of a single matrix associated with the detection problem.
Our lower bound considers coarser information (just the top eigenvalue) but it applies to a wide range of matrices associated with the problem (all matrices generated via the degree-4 sum-of-squares proof system).

\section{Preliminaries}
\label{sec:preliminaries}

\subsection{Notation}
We use $x=(x_1,\ldots,x_n)$ to denote a vector of indeterminates.
The letters $u,v,w$ are generally reserved for real vectors.
The letters $\alpha,\beta$ are reserved for multi-indices; that is, for tuples $(i_1,\ldots,i_k)$ of indices.
For $f,g : \N \rightarrow \R$ we write $f \precsim g$ for $f = O(g)$ and $f \succsim g$ for $f = \Omega(g)$.
We write $f = \tilde O(g)$ if $f(n) \leq g(n) \cdot \polylog n$, and $f = \tilde \Omega(g)$ if $f \geq g(n) / \polylog n$.

We employ the usual Loewner (a.k.a. positive semi-definite) ordering $\succeq$ on Hermitian matrices.

We will be heavily concerned with tensors and matrix flattenings thereof.
In general, boldface capital letters $\bT$ denote tensors and ordinary capital letters denote matrices $A$.
We adopt the convention that unless otherwise noted for a tensor $\bT$ the matrix $T$ is the squarest-possible unfolding of $\bT$.
If $\bT$ has even order $k$ then $T$ has dimensions $n^{k/2} \times n^{k/2}$.
For odd $k$ it has dimensions $n^{\lfloor k/2 \rfloor} \times n^{\lceil k/2 \rceil}$.
All tensors, matrices, vectors, and scalars in this paper are real.

We use $\iprod{\cdot,\cdot}$ to denote the usual entrywise inner product of vectors, matrices, and tensors.
For a vector $v$, we use $\| v \|$ to denote its $\ell_2$ norm. For a matrix $A$, we use $\| A \|$ to denote its operator norm (also known as the spectral or $\ell_2$-to-$\ell_2$ norm).

For a $k$-tensor $\bT$, we write ${\bT}(v)$ for $\iprod{v^{\tensor k},\bT}$.
Thus, $\bT(x)$ is a homogeneous real polynomial of degree $k$.

We use $\cS_k$ to denote the symmetric group on $k$ elements.
For a $k$-tensor $\bT$ and $\pi \in \cS_k$, we denote by ${\bT}^\pi$ the $k$-tensor with indices permuted according to $\pi$, so that ${\bT}^\pi_{\alpha} = {\bT}_{\pi^{-1}(\alpha)}$.
A tensor $\bT$ is symmetric if for all $\pi \in \cS_k$ it is the case that ${\bT}^\pi = \bT$.
(Such tensors are sometimes called ``supersymmetric.'')

For clarity, most of our presentation focuses on $3$-tensors.
For an $n \times n$ $3$-tensor $\bT$, we use $T_i$ to denote its $n \times n$ matrix slices along the first mode, i.e., $(T_i)_{j,k} = \bT_{i,j,k}$.

We often say that an sequence $\{ E_n \}_{n \in \bbN}$ of events occurs with high probability, which for us means that $\Pr(E_n \mbox{ fails}) = O(n^{-10})$.
(Any other $n^{-c}$ would do, with appropriate modifications of constants elsewhere.)

\subsection{Polynomials and Matrices}
\label{sec:matrix-prelims}
Let $\R[x]_{\leq d}$ be the vector space of polynomials with real coefficients in variables $x = (x_1,\ldots,x_n)$, of degree at most $d$.
We can represent a homogeneous even-degree polynomial $p \in \R[x]_{d}$ by an $n^{d/2} \times n^{d/2}$ matrix:
a matrix $M$ is a \emph{matrix representation} for $p$ if $p(x)=\iprod{x^{\tensor d/2}, M x^{\tensor d/2}}$.
If $p$ has a matrix representation $M \succeq 0$, then $p = \sum_i p_i(x)^2$ for some polynomials $p_i$.

\subsection{The Sum of Squares (SoS) Algorithm}

\begin{definition}
  Let $\cL: \R[x]_{\leq d} \rightarrow \R$ be a linear functional on polynomials of degree at most $d$ for some $d$ even.
  Suppose that
  \begin{itemize}
  \item $\cL 1 = 1$.
  \item $\cL p(x)^2 \geq 0$ for all $p \in \R[x]_{\leq d/2}$.
  \end{itemize}
  Then $\cL$ is a degree-$d$ pseudo-expectation.
  We often use the suggestive notation $\pE$ for such a functional, and think of $\pE p(x)$ as giving the expectation of the polynomial $p(x)$ under a \emph{pseudo-distribution} over $\{ x \}$.

For $p \in \R[x]_{\leq d}$ we say that the pseudo-distribution $\{ x \}$ (or, equivalently, the functional $\pE$) satisfies $\{ p(x) = 0 \}$ if $\pE p(x)q(x) = 0$ for all $q(x)$ such that $p(x)q(x) \in \R[x]_{\leq d}$.
\end{definition}

Pseudo-distributions were first introduced in \cite{DBLP:conf/stoc/BarakBHKSZ12} and are surveyed in \cite{DBLP:journals/corr/BarakS14}.

We employ the standard result that, up to negligible issues of numerical accuracy, if there exists a degree-$d$ pseudo-distribution satisfying constraints $\{ p_0(x) = 0, \ldots, p_m(x) = 0 \}$, then it can be found in time $n^{O(d)}$ by solving a semidefinite program of size $n^{O(d)}$.
(See \cite{DBLP:journals/corr/BarakS14} for references.)

\section{Certifying Bounds on Random Polynomials}
\label{sec:sos-proofs}

Let $f \in \R[x]_d$ be a homogeneous degree-$d$ polynomial.
When $d$ is even, $f$ has square matrix representations of dimension
$n^{d/2} \times n^{d/2}$. The maximal eigenvalue of a matrix representation
$M$ of $f$ provides a natural certifiable upper bound on $\max_{\|v\| = 1} f(v)$,
as
\[
  f(v) = \iprod{v^{\tensor d/2}, Mv^{\tensor d/2}} \leq \max_{w \in \R^{n^{d/2}}}
          \frac{\iprod{w,Mw}}{\iprod{w,w}} = \|M\|\mper
\]
When $f(x) = \bA(x)$ for an even-order tensor $\bA$ with independent random entries,
the quality of this certificate is well characterized by random matrix theory.
In the case where the entries of $\bA$ are standard Gaussians, for instance,
$\|M\| = \|A + A^T\| \le \tilde O(n^{d/4})$ with high probability, thus certifying that
$\max_{\|v\| = 1} f(v) \le \tilde O(n^{d/4})$.

A similar story applies to $f$ of odd degree with random coefficients, but with a catch: the certificates are not as good.
For example, we expect a degree-$3$ random polynomial to be a smaller and simpler object than one of degree-$4$,
and so we should be able to certify a tighter upper bound on $\max_{\|v\| = 1} f(v)$.
The matrix representations of $f$ are now rectangular $n^2 \times n$ matrices whose top singular values are certifiable upper bounds on $\max_{\|v\| = 1} f(v)$.
But in random matrix theory, this maximum singular value depends (to a first approximation) only on the longer dimension $n^2$, which is the same here as in the degree-4 case.
Again when $f(x) = \bA(x)$, this time where $\bA$ is an order-3 tensor of independent standard Gaussian entries, $\|M\| = \sqrt{\|AA^T\|} \ge \tilde \Omega(n)$, so that this method cannot certify better than $\max_{\|v\| = 1} f(v) \le \tilde O(n)$.
Thus, the natural spectral certificates are unable to exploit the decrease in degree from $4$ to $3$ to improve the certified bounds.

To better exploit the benefits of square matrices, we bound the maxima of degree-$3$ homogeneous $f$ by a degree-$4$ polynomial.
In the case that $f$ is multi-linear, we have the polynomial identity $f(x) = \frac 1 3 \iprod{x,\nabla f(x)}$.
Using Cauchy-Schwarz, we then get $f(x) \leq \frac 1 3 \|x\| \|\nabla f(x)\|$.
This inequality suggests using the degree-$4$ polynomial $\|\nabla f(x)\|^2$ as a bound on $f$.
Note that local optima of $f$ on the sphere occur where $\nabla f(v) \propto v$, and so this bound is tight at local maxima.
Given a random homogeneous $f$, we will associate a degree-$4$ polynomial related to $\|\nabla f\|^2$ and show that this polynomial yields the best possible degree-$4$ SoS-certifiable bound on $\max_{\|v\| = 1} f(v)$.

\begin{definition}
  Let $f\in\R[x]_3$ be a homogeneous degree-$3$ polynomial with indeterminates $x=(x_1,\ldots,x_n)$.
  Suppose $A_1,\ldots,A_n$ are matrices such that $f=\sum_i x_i \langle x, A_ix \rangle$. We say that $f$ is
  $\lambda$-bounded if there are matrices $A_1,\ldots,A_n$ as above and a matrix representation $M$ of $\|x\|^4$ so that $\sum_i A_i\otimes A_i \preceq \lambda^2 \cdot M$.
\end{definition}

We observe that for $f$ multi-linear in the coordinates $x_i$ of $x$, up to a constant factor we may take the matrices $A_i$ to be matrix representations of $\partial_i f$, so that $\sum_i A_i \tensor A_i$ is a matrix representation of the polynomial $\|\nabla f\|^2$.
This choice of $A_i$ may not, however, yield the optimal spectral bound $\lambda^2$.

The following theorem is the reason for our definition of $\lambda$-boundedness.
\begin{theorem}
  \label{thm:lambda-bddness}
  Let $f \in \R[x]_3$ be $\lambda$-bounded.
  Then $\max_{\|v\| = 1} f(v) \leq \lambda$, and the degree-$4$ SoS algorithm certifies this.
  In particular, every degree-$4$ pseudo-distribution $\{x\}$ over $\R^n$ satisfies
  \begin{displaymath}
    \pE f  \le \lambda \cdot \Paren{\pE \lVert  x \rVert^4}^{3/4}\mper
  \end{displaymath}
\end{theorem}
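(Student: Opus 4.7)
The plan is to reduce everything to two applications of (pseudo-expectation) Cauchy--Schwarz, using $\lambda$-boundedness only to control the second-moment factor. The first (deterministic) part of the statement, $\max_{\|v\|=1} f(v) \le \lambda$, will follow immediately from the pseudo-distribution version by specializing to the Dirac pseudo-distribution supported on any unit vector $v$, so I focus on the pseudo-distribution inequality.

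Write $f(x)=\sum_i x_i q_i(x)$ with $q_i(x):=\iprod{x,A_ix}$. First I would unpack what $\lambda$-boundedness gives as a sum-of-squares fact. The matrix $\sum_i q_i(x)^2$ has matrix representation $\sum_i A_i\tensor A_i$ on $x^{\tensor 2}$, since $q_i(x)^2 = \iprod{x^{\tensor 2},(A_i\tensor A_i)x^{\tensor 2}}$. The hypothesis $\sum_i A_i\tensor A_i \sleq \lambda^2 M$ with $M$ a matrix representation of $\|x\|^4$ then says that the $n^2\times n^2$ matrix $\lambda^2 M-\sum_i A_i\tensor A_i$ is PSD, and hence a sum of rank-one PSD matrices $v_jv_j^{\!\top}$. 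Applied to $x^{\tensor 2}$ this yields
\begin{equation*}
\lambda^2\|x\|^4-\sum_i q_i(x)^2 \;=\; \sum_j \iprod{x^{\tensor 2},v_j}^2,
\end{equation*}
an honest sum of squares of degree-$2$ polynomials. So for any degree-$4$ pseudo-expectation $\pE$, $\pE\sum_i q_i(x)^2 \le \lambda^2\,\pE\|x\|^4$.

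Next I would bound $\pE f$ by pseudo-expectation Cauchy--Schwarz applied termwise. For each $i$, the polynomials $x_i$ and $q_i(x)$ have degrees $1$ and $2$, so both $x_i^2$ and $q_i(x)^2$ have degree at most $4$ and we may apply the standard Cauchy--Schwarz for degree-$4$ pseudo-expectations (which follows from PSDness of the bilinear form $(p,q)\mapsto \pE(pq)$):
\begin{equation*}
\pE(x_i\,q_i(x)) \;\le\; \sqrt{\pE x_i^2}\,\sqrt{\pE q_i(x)^2}.
\end{equation*}
Summing in $i$ and applying classical Cauchy--Schwarz in $\R^n$ to the resulting two sequences $\bigl(\sqrt{\pE x_i^2}\bigr)_i$ and $\bigl(\sqrt{\pE q_i^2}\bigr)_i$ gives
\begin{equation*}
\pE f \;\le\; \sqrt{\textstyle\sum_i \pE x_i^2}\,\sqrt{\textstyle\sum_i \pE q_i^2} \;=\; \sqrt{\pE\|x\|^2}\,\sqrt{\pE\textstyle\sum_i q_i(x)^2}.
\end{equation*}
Finally I would use $\lambda$-boundedness on the second factor and one more Cauchy--Schwarz on the first: $\pE\|x\|^2 = \pE(1\cdot\|x\|^2) \le \sqrt{\pE 1}\,\sqrt{\pE\|x\|^4} = (\pE\|x\|^4)^{1/2}$. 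Combining the three bounds yields
\begin{equation*}
\pE f \;\le\; (\pE\|x\|^4)^{1/4}\cdot \lambda\,(\pE\|x\|^4)^{1/2} \;=\; \lambda\cdot(\pE\|x\|^4)^{3/4},
\end{equation*}
which is the desired inequality.

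There is no serious obstacle here---every step is a Cauchy--Schwarz or a direct translation of a PSD matrix inequality into an SoS polynomial inequality. The only point that requires care is bookkeeping the degrees: $f$ is degree $3$ so $f^2$ is degree $6$ and cannot be handled by a single Cauchy--Schwarz against $\pE 1$. Splitting $f = \sum_i x_i\,q_i(x)$ into a degree-$1$ and a degree-$2$ factor is precisely what keeps the argument inside degree $4$, and is what makes $\lambda$-boundedness (a PSD condition on $\sum_i A_i\tensor A_i$, i.e.\ on the matrix representation of $\sum_i q_i^2$) the natural certificate for odd-degree polynomials.
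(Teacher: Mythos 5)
Your proof is correct and follows essentially the same route as the paper: decompose $f = \sum_i x_i\,\iprod{x,A_ix}$, apply Cauchy--Schwarz to split off $\sqrt{\pE\|x\|^2}$, invoke $\lambda$-boundedness to bound $\pE\sum_i\iprod{x,A_ix}^2$, and finish with $\pE\|x\|^2\le(\pE\|x\|^4)^{1/2}$. The only cosmetic difference is that you apply scalar pseudo-Cauchy--Schwarz termwise and then a classical Cauchy--Schwarz over $\R^n$, whereas the paper invokes the vector-valued pseudo-Cauchy--Schwarz in a single step; the two are equivalent.
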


\begin{proof}
  By Cauchy--Schwarz for pseudo-expectations, the pseudo-distribution satisfies
  $\Paren{\pE \lVert  x \rVert^2}^2 \le \pE \lVert  x \rVert^4$ and $\Paren{\pE \sum_i x_i \langle x,A_i x \rangle}^2 \le \Paren{\pE \sum_i x_i^2}\cdot \Paren{\sum_i \langle  x,A_i x \rangle^2}$.
  Therefore,
  \begin{align*}
    \pE f
    & = \pE \sum\nolimits_i x_i \cdot \langle  x, A_i x \rangle \\
    & \le \Paren{\pE \sum\nolimits_i x_i^2}^{1/2} \cdot \Paren{\pE \sum\nolimits_i \langle  x,A_i x \rangle^2}^{1/2} \\
    & = \Paren{\pE \lVert  x \rVert^2}^{1/2} \cdot \Paren{\pE \langle  x^{\otimes 2}, \Paren{\sum\nolimits_i A_i\otimes A_i }x^{\otimes 2}  \rangle}^{1/2} \\
    & \le  \Paren{\pE \lVert  x \rVert^4}^{1/4} \cdot \Paren{\pE \langle  x^{\otimes 2}, \lambda^2 \cdot M x^{\otimes 2}  \rangle}^{1/2} \\
    & = \lambda \cdot \Paren{\pE \lVert  x \rVert^4}^{3/4}\mper
  \end{align*}
  The last inequality also uses the premise $\Paren{\sum\nolimits_i A_i\otimes A_i }\preceq \lambda^2 \cdot M$ for some matrix representation $M$ of $\|x\|^4$, in the following way.
  Since $M' \seteq \lambda^2 \cdot M - \Paren{\sum\nolimits_i A_i\otimes A_i }\succeq 0$, the polynomial $\iprod{x^{\tensor 2}, M' x^{\tensor 2}}$ is a sum of squared polynomials.
  Thus, $\pE \iprod{x^{\tensor 2}, M' x^{\tensor 2}} \geq 0$ and the desired inequality follows.
\end{proof}

We now state the degree-$3$ case of a general $\lambda$-boundedness fact for homogeneous polynomials with random coefficients.
The SoS-certifiable bound for a random degree-$3$ polynomial this provides is the backbone of our SoS algorithm for tensor PCA in the spiked tensor model.
\begin{theorem}
  \label{thm:concentration-3}
  Let $\bA$ be a $3$-tensor with independent entries from $\cN(0,1)$.
  Then $\bA(x)$ is $\lambda$-bounded with $\lambda = O(n^{3/4}\log(n)^{1/4})$, with high probability.
\end{theorem}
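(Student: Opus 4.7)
The plan is to construct explicit matrices $A_1,\dots,A_n$ and a matrix representation $M$ of $\|x\|^4$ satisfying $\bA(x)=\sum_i x_i\iprod{x,A_i x}$ and $\sum_i A_i\otimes A_i\preceq \lambda^2 M$ with $\lambda=O(n^{3/4}\log(n)^{1/4})$. The natural choice is $A_i:=\tfrac12(T_i+T_i^T)$, where $T_i$ is the $i$-th mode-$1$ slice of $\bA$. Each $A_i$ is symmetric, and $\bA(x)=\sum_i x_i\iprod{x,A_ix}$ holds as a polynomial identity because $x_jx_k=x_kx_j$. Standard Gaussian matrix concentration yields $\max_i \|A_i\|=O(\sqrt n)$ with high probability.

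I would split $S:=\sum_i A_i\otimes A_i$ as $\E S+(S-\E S)$ and analyze the two terms separately. A direct second-moment calculation using independence of the entries of $\bA$ gives
\[
  \E(A_i)_{r_1c_1}(A_i)_{r_2c_2}=\tfrac12\bigl(\delta_{r_1r_2}\delta_{c_1c_2}+\delta_{r_1c_2}\delta_{r_2c_1}\bigr),
\]
so that $\E S=\tfrac n2(vv^T+\Pi)$, where $v=\sum_i e_i\otimes e_i$ and $\Pi$ is the swap operator on $\R^n\otimes\R^n$. Although $\|\E S\|=\Theta(n^2)$, the large eigenvalue is concentrated in the direction $v$, and this spike is harmless because $vv^T$ is itself a PSD matrix representation of $\|x\|^4$ (indeed $\iprod{x^{\otimes 2},v}=\|x\|^2$, so $\iprod{x^{\otimes 2},vv^T x^{\otimes 2}}=\|x\|^4$). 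Accordingly I would take
\[
  M:=\tfrac12\Pi_{\mathrm{sym}}+\tfrac12 vv^T+\Pi_{\mathrm{anti}},
\]
where $\Pi_{\mathrm{sym}}$ and $\Pi_{\mathrm{anti}}$ are the projectors onto the symmetric and antisymmetric $2$-tensors, and check that $\iprod{x^{\otimes 2},Mx^{\otimes 2}}=\|x\|^4$ (the $\Pi_{\mathrm{anti}}$ component annihilates $x^{\otimes 2}$) and $M\succeq 0$.

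The heart of the proof is then the concentration bound $\|S-\E S\|=O(n^{3/2}\sqrt{\log n})$ with high probability, via matrix Bernstein applied to the independent centered summands $X_i:=A_i\otimes A_i-\E[A_i\otimes A_i]$ in $\R^{n^2\times n^2}$. The two required estimates are an almost-sure norm bound $R:=\max_i\|X_i\|\le 2\|A_i\|^2=O(n)$ (from the high-probability bound on $\|A_i\|$ combined with a truncation step) and a matrix variance bound $\sigma^2:=\|\sum_i\E X_i^2\|=O(n^3)$, obtained via $\|\E(A_i\otimes A_i)^2\|\le \E\|A_i\|^4=O(n^2)$ and summing. Matrix Bernstein in dimension $n^2$ then yields $\|S-\E S\|=O\bigl(\sqrt{\sigma^2\log n}+R\log n\bigr)=O(n^{3/2}\sqrt{\log n})$. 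Since $A_i\otimes A_i$ commutes with $\Pi$, the analysis splits along the symmetric and antisymmetric subspaces: on the symmetric part $S\preceq \tfrac n2 vv^T+O(n^{3/2}\sqrt{\log n})\Pi_{\mathrm{sym}}$, and on the antisymmetric part (where $\E S=-\tfrac n2\Pi_{\mathrm{anti}}$) one gets $S\preceq O(n^{3/2}\sqrt{\log n})\Pi_{\mathrm{anti}}$, and a direct eigenvalue check delivers $S\preceq\lambda^2 M$ with $\lambda^2=O(n^{3/2}\sqrt{\log n})$, i.e.\ $\lambda=O(n^{3/4}\log(n)^{1/4})$.

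The main obstacle is the matrix concentration step. Because each $X_i$ is quadratic rather than linear in the Gaussian entries of $\bA$, one cannot invoke a matrix Gaussian series bound directly; one must either pair matrix Bernstein with an explicit truncation to enforce the almost-sure norm bound $R$, or use a decoupling reduction to a matrix-valued Gaussian chaos of degree two. The variance estimate $\sigma^2=O(n^3)$ is tight, and it is precisely this estimate together with the $\sqrt{\log n}$ from the matrix Bernstein tail that pins down the exponent $\tfrac34$ in $\lambda$.
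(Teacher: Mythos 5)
Your proof is correct, and it rests on the same two ingredients as the paper's: the matrix-Bernstein-with-truncation estimate $\|S - \E S\| = O(n^{3/2}\sqrt{\log n})$ with variance proxy $O(n^3)$ and range $O(n)$, and the observation that $vv^T$ (with $v = \sum_i e_i\otimes e_i$) is itself a matrix representation of $\|x\|^4$, so the $\Theta(n^2)$ spike of $\E S$ in the $v$-direction can be absorbed into $M$ rather than inflating $\lambda^2$. Where you diverge from the paper is in your choice of $A_i$: you symmetrize, taking $A_i = \tfrac12(T_i + T_i^T)$, while the paper keeps the raw slices $T_i$. With raw slices the expectation is cleaner---$\E[T_i\otimes T_i] = vv^T$ exactly, which is \emph{already} a matrix representation of $\|x\|^4$---so the paper writes $\sum_i T_i\otimes T_i \preceq n\,\E[T\otimes T] + \lambda^2\Id$ and reads off $M$ as a nonnegative combination of two matrix representations of $\|x\|^4$, with no eigendecomposition needed at all. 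Your symmetrization gives $\E S = \tfrac n2(vv^T + \Pi)$ instead, and the extra $\tfrac n2\Pi$ (swap operator) term is what forces you into the symmetric/antisymmetric block decomposition and the hand-built $M = \tfrac12\Pi_{\mathrm{sym}} + \tfrac12 vv^T + \Pi_{\mathrm{anti}}$. Both routes yield $\lambda = O(n^{3/4}\log(n)^{1/4})$; the paper's choice simply sidesteps the bookkeeping you do. One minor technical point to flag: the paper's per-slice norm bound (its Lemma B.8) is stated and proved for slices with fully independent entries, so to invoke the concentration machinery verbatim on your symmetrized $A_i$ (whose entries obey $(A_i)_{jk} = (A_i)_{kj}$) you would need to observe that the argument only uses independence across $i$ together with a per-slice operator-norm tail bound, and then supply the analogous $\|A_i\| \le O(\sqrt n)$ tail for Wigner-type slices---which is standard, but comes from a different citation than the one the paper uses.
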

The full statement and proof of this theorem, generalized to arbitrary-degree homogeneous polynomials,
may be found as \pref{thm:concentration}; we prove the statement above as a corollary in \pref{sec:concentration-bounds}.
Here provide a proof sketch.
\begin{proof}[Proof sketch]
  We first note that the matrix slices $A_i$ of $\bA$ satisfy $\bA(x) = \sum_i x_i \iprod{ x,A_i x}$.
  Using the matrix Bernstein inequality, we show that
  $\sum_i A_i \tensor A_i - \E \sum_i A_i \tensor A_i \preceq O(n^{3/2} (\log n)^{1/2})\cdot \Id$
  with high probability.
  At the same time, a straightforward computation shows that $\tfrac 1 n \E \sum_i A_i \tensor A_i$ is a matrix representation of $\|x\|^4$.
  Since $\Id$ is as well, we get that $\sum_i A_i \tensor A_i \preceq \lambda^2 \cdot M\mcom$
  where $M$ is some matrix representation of $\|x\|^4$ which combines $\Id$ and $\E \sum_i A_i \tensor A_i$,
  and $\lambda = O(n^{3/4} (\log n)^{1/4})$.
\end{proof}

\begin{corollary}
  \label{cor:sos-cert-3}
  Let $\bA$ be a $3$-tensor with independent entries from $\cN(0,1)$.
  Then, with high probability, the degree-$4$ SoS algorithm certifies that $\max_{\|v\| = 1} \bA(v) \leq O(n^{3/4} (\log n)^{1/4})$.
  Furthermore, also with high probability, every pseudo-distribution $\{x\}$ over $\R^n$ satisfies
  \begin{displaymath}
    \pE \bA(x) \leq O(n^{3/4} (\log n)^{1/4}) (\pE \|x\|^4)^{3/4}\mper
  \end{displaymath}
\end{corollary}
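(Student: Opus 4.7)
The plan is to derive the corollary as an essentially immediate composition of the two preceding results, namely \pref{thm:lambda-bddness} (the deterministic fact that $\lambda$-boundedness implies an SoS-certified degree-4 bound) and \pref{thm:concentration-3} (the random-matrix fact that a Gaussian cubic polynomial is $\lambda$-bounded for $\lambda = O(n^{3/4}\log(n)^{1/4})$). First I would invoke \pref{thm:concentration-3} on the random tensor $\bA$: with probability $1 - O(n^{-10})$ there exist matrices $A_1,\dots,A_n$ with $\bA(x) = \sum_i x_i \iprod{x, A_i x}$ and a matrix representation $M$ of $\lVert x \rVert^4$ such that $\sum_i A_i \otimes A_i \sleq \lambda^2 \cdot M$ for $\lambda = O(n^{3/4}\log(n)^{1/4})$. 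Condition on this event for the rest of the argument.

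Next I would obtain the pseudo-expectation bound by directly plugging this $\lambda$ and this choice of $A_i$, $M$ into \pref{thm:lambda-bddness}, giving $\pE \bA(x) \le \lambda \cdot (\pE \lVert x \rVert^4)^{3/4}$ for every degree-$4$ pseudo-distribution $\{x\}$. This is the second displayed conclusion of the corollary. For the first conclusion (certification of $\max_{\lVert v \rVert = 1} \bA(v) \le \lambda$), I would specialize the above bound to any pseudo-distribution satisfying the constraint $\lVert x \rVert^2 = 1$: by the definition of ``satisfies'' in the preliminaries, such a pseudo-distribution has $\pE \lVert x \rVert^4 = \pE \lVert x \rVert^2 \cdot \lVert x \rVert^2 = \pE \lVert x \rVert^2 = 1$, so the bound collapses to $\pE \bA(x) \le \lambda$. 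Taking the delta pseudo-distribution at any unit vector $v$ gives $\bA(v) \le \lambda$ pointwise.

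To see that this is actually a \emph{degree-$4$ SoS certificate} rather than merely a bound on pseudo-expectations, I would observe that the chain of inequalities in the proof of \pref{thm:lambda-bddness} is built entirely out of (i) pseudo-expectation Cauchy--Schwarz and (ii) the PSD relation $\lambda^2 M - \sum_i A_i \otimes A_i \sgeq 0$; both are witnessed by explicit sums of squares, and by standard SoS-SDP duality they unwind to an identity of the form $\lambda - \bA(x) = \sum_j s_j(x)^2 + q(x) \cdot (\lVert x \rVert^2 - 1)$ with $\deg s_j \le 2$. Hence the degree-$4$ SoS relaxation certifies the upper bound, completing the corollary. There is no real obstacle: the only mild subtlety is the routine translation between the pseudo-expectation inequality provided by \pref{thm:lambda-bddness} and the SoS-certificate language of the corollary's statement, and the arithmetic $\pE \lVert x \rVert^4 = 1$ under the constraint $\lVert x \rVert^2 = 1$.
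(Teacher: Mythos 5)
Your proof is correct and follows the same route as the paper's proof, which is literally one line: ``Immediate by combining \pref{thm:concentration-3} with \pref{thm:lambda-bddness}.'' The extra work you do in the final paragraph to unwind the SoS certificate is already folded into the statement of \pref{thm:lambda-bddness} (which asserts both $\max_{\|v\|=1} f(v) \le \lambda$ and that degree-$4$ SoS certifies this), so your argument is a more verbose but faithful expansion of the paper's.
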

\begin{proof}
  Immediate by combining \pref{thm:concentration-3} with \pref{thm:lambda-bddness}.
\end{proof}

\section{Polynomial-Time Recovery via Sum of Squares}
\label{sec:sos-algorithm}
Here we give our first algorithm for tensor PCA: we analyze the quality of the natural SoS relaxation of tensor PCA using our previous discussion of boundedness certificates for random polynomials,
and we show how to round this relaxation.
We discuss also the robustness of the SoS-based algorithm to some amount of additional \emph{worst-case} noise in the input.
For now, to obtain a solution to the SoS relaxation we will solve a large semidefinite program.
Thus, the algorithm discussed here is not yet enough to prove \pref{thm:optimize-certify-fast} and \pref{cor:tpca-main}:
the running time, while still polynomial, is somewhat greater than $\tilde O(n^4)$.
\Dnote{}

\begin{center}
\fbox{\begin{minipage}{\textwidth}
\begin{center}\textbf{Tensor PCA with Semidefinite Programming}\end{center}
\noindent\textbf{Input:} $\bT = \tau \cdot v_0^{\otimes 3} + \bA$, where $v \in \R^n$ and $\bA$ is some order-$3$ tensor. \\
\noindent\textbf{Goal:} Find $v \in \R^n$ with $\abs{\iprod{v,v_0}} \geq 1 - o(1)$.
\begin{algo}[Recovery]
\label{alg:sdp-recovery}
Using semidefinite programming, find the degree-$4$ pseudo-distribution $\{ x \}$ satisfying $\{ \|x\|^2 = 1 \}$ which maximizes $\pE \bT(x)$.
Output $\pE x / \| \pE x \|$.
\end{algo}
\begin{algo}[Certification]
\label{alg:sdp-certify}
Run \pref{alg:sdp-recovery} to obtain $v$.
Using semidefinite programming, find the degree-$4$ pseudo-distribution $\{ x \}$ satisfying $\{ \|x\| = 1 \}$ which maximizes $\pE \bT(x) - \tau \cdot \iprod{v,x}^3$.
If $\pE \bT(x) - \tau \cdot \iprod{v,x}^3 \leq O(n^{3/4} \log(n)^{1/4})$, output {\scshape certify}.
Otherwise, output {\scshape fail}.
\end{algo}
\end{minipage}}
\end{center}
The following theorem characterizes the success of \pref{alg:sdp-recovery} and \pref{alg:sdp-certify}
\begin{theorem}[Formal version of \pref{thm:sdp-main-informal}]
  \label{thm:sos-tpca}
  Let $\bT = \tau \cdot v_0^{\tensor 3} + \bA$, where $v_0 \in \R^n$ and $\bA$ has independent entries from $\cN(0,1)$. Let $\tau \succsim n^{3/4} \log(n)^{1/4} / \epsilon$.
  Then with high probability over random choice of $\bA$, on input $\bT$ or $\bT' \seteq \tau \cdot v_0^{\tensor 3} + \frac 1 {|\cS_3|} \sum_{\pi \in \cS_3} \bA^\pi$,
  \pref{alg:sdp-recovery} outputs a vector $v$ with $\iprod{v,v_0} \geq 1 - O(\epsilon)$.
  In other words, for this $\tau$, \pref{alg:sdp-recovery} solves both \pref{prob:spiked-tensor} and \pref{prob:spiked-tensor-symm}.

  For any unit $v_0 \in \R^n$ and $\bA$, if \pref{alg:sdp-certify} outputs \textup{\scshape certify} then $\bT(x) \leq \tau \cdot \iprod{v,x}^3 + O(n^{3/4} \log(n)^{1/4})$.
  For $\bA$ as described in either \pref{prob:spiked-tensor} or \pref{prob:spiked-tensor-symm} and $\tau \succsim n^{3/4} \log(n)^{1/4}/\epsilon$,
  \pref{alg:sdp-certify} outputs \textup{\scshape certify} with high probability.
\end{theorem}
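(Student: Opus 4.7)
The plan is to leverage \pref{cor:sos-cert-3}, which bounds $\pE \bA(x) \leq \lambda$ for $\lambda \defeq O(n^{3/4}\log(n)^{1/4})$ uniformly over degree-$4$ pseudo-distributions on $\{\|x\|^2 = 1\}$, combined with standard pseudo-expectation manipulations. A preliminary observation is that $\bT$ and its symmetrization $\bT'$ define the same polynomial (since for any permutation $\pi$ we have $x_{\pi(i)}x_{\pi(j)}x_{\pi(k)} = x_i x_j x_k$), so the SDPs coincide in \pref{prob:spiked-tensor} and \pref{prob:spiked-tensor-symm} and it suffices to analyze one case.

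For the recovery guarantee I would first show that every maximizing pseudo-distribution satisfies $\pE \iprod{v_0, x}^3 \geq 1 - O(\epsilon)$. The point mass at $v_0$ is a feasible pseudo-distribution achieving objective $\tau + \bA(v_0)$; since $\bA(v_0)$ is a Gaussian of variance $O(1)$, it has magnitude $O(\sqrt{\log n})$ with high probability, so $\pE \bT(x) \geq \tau - O(\sqrt{\log n})$. The constraint $\|x\|^2 = 1$ forces $\pE\|x\|^4 = \pE \|x\|^2 = 1$, so \pref{cor:sos-cert-3} gives $\pE \bA(x) \leq \lambda$. Writing $\pE \bT(x) = \tau \pE\iprod{v_0,x}^3 + \pE \bA(x)$ and using the hypothesis $\tau \geq \lambda/\epsilon$ yields the bound.

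The rounding step converts this third-moment bound into first-moment information about $\pE x$. The Lagrange identity realizes $\|x\|^2 - \iprod{v_0,x}^2 = \|x - \iprod{v_0,x} v_0\|^2$ as a sum of squares, so modulo $\|x\|^2 = 1$ we have $\iprod{v_0,x}^2 \leq 1$ SoS; multiplying by $\iprod{v_0,x}^2$ then shows $\iprod{v_0,x}^4 \leq \iprod{v_0,x}^2$ SoS. Pseudo-expectation Cauchy--Schwarz $(\pE \iprod{v_0,x}^3)^2 \leq \pE \iprod{v_0,x}^2 \cdot \pE \iprod{v_0,x}^4$ therefore gives $\pE \iprod{v_0,x}^2 \geq 1 - O(\epsilon)$. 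Next I would exploit the identities $1 \mp \iprod{v_0,x} = \tfrac12 \|x \mp v_0\|^2$ (modulo $\|x\|^2 = 1$, using $\|v_0\| = 1$), which display both $1 \pm \iprod{v_0,x}$ as sums of squares. Their product $(1 - \iprod{v_0,x})(1 + \iprod{v_0,x})^2 = 1 + \iprod{v_0,x} - \iprod{v_0,x}^2 - \iprod{v_0,x}^3$ is thus SoS, and taking pseudo-expectations gives
\[
\pE \iprod{v_0, x} \;\geq\; \pE \iprod{v_0,x}^2 + \pE\iprod{v_0,x}^3 - 1 \;\geq\; 1 - O(\epsilon)\mper
\]
Finally, pseudo-expectation Cauchy--Schwarz gives $\|\pE x\|^2 \leq \pE\|x\|^2 = 1$, so the rounded vector $v = \pE x/\|\pE x\|$ satisfies $\iprod{v_0, v} \geq \iprod{v_0, \pE x} \geq 1 - O(\epsilon)$.

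For the certification statement, the implication ``$\textsc{certify}$ output $\Rightarrow$ polynomial inequality'' is immediate, because every point mass on a unit vector is a valid degree-$4$ pseudo-distribution, so the SDP value upper-bounds the pointwise maximum of $\bT(x) - \tau \iprod{v,x}^3$. For the converse, I would decompose $\bT(x) - \tau \iprod{v,x}^3 = \bA(x) + \tau(\iprod{v_0, x}^3 - \iprod{v,x}^3)$; the noise part is bounded uniformly by $\lambda$ via \pref{cor:sos-cert-3}. For the error term, writing $v_0 = v + u$ with $\|u\|^2 = 2(1 - \iprod{v_0,v}) = O(\epsilon)$ and expanding $(a+b)^3 - a^3 = 3a^2 b + 3 a b^2 + b^3$ with $a = \iprod{v,x}, b = \iprod{u,x}$, one controls each summand via SoS Cauchy--Schwarz using $\pE b^2 \leq \|u\|^2 \cdot \pE \|x\|^2 = O(\epsilon)$ together with the SoS bounds $|a|^{2k} \leq 1$. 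Reconciling the resulting error against the $O(n^{3/4}\log(n)^{1/4})$ certify threshold is the main technical subtlety: the naive pseudo-Cauchy--Schwarz bound on $\tau \pE(\iprod{v_0,x}^3 - \iprod{v,x}^3)$ scales like $\tau \cdot O(\sqrt{\epsilon})$, and careful bookkeeping (or a slight strengthening of the SoS inequality exploiting the structure of $u$) is required to absorb this into $\lambda$; this is the step I expect to be most delicate.
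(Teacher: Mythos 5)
Your recovery argument tracks the paper's structure (plant a point-mass lower bound on the SDP value, subtract the $\lambda$-bounded noise via \pref{cor:sos-cert-3} to show $\pE\iprod{v_0,x}^3 \geq 1-O(\epsilon)$, then round $\pE x$), but you substitute a different and more self-contained mechanism for the degree-$3$-to-degree-$1$ conversion. The paper invokes \pref{lem:deg-3-to-1-correlation}, which rests on the classical fact that the univariate polynomial $1 - 2u^3 + u$, nonnegative on $[-1,1]$, admits a Positivstellensatz representation $s_0 + s_1(1+u) + s_2(1-u)$; that representation is then lifted to pseudo-expectations. You instead show directly that $(1-\iprod{v_0,x})(1+\iprod{v_0,x})^2$ is a degree-$4$ SoS modulo $\|x\|^2-1$ by writing $1 \mp \iprod{v_0,x}$ as half a squared norm, and you supply the additional intermediate bound $\pE\iprod{v_0,x}^2 \geq 1-O(\epsilon)$ via $\iprod{v_0,x}^4 \leq \iprod{v_0,x}^2$ and pseudo-Cauchy--Schwarz. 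This is correct, and arguably cleaner since it never appeals to a univariate representation theorem. Your observation that $\bT$ and $\bT'$ determine identical polynomials (hence identical SDPs) is also correct and makes the paper's union bound over $\cS_3$ unnecessary.

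On certification: the first implication is right. For the high-probability success of \textsc{certify}, the paper disposes of this with a one-line assertion; your decomposition is the natural one, and you are correct that the naive pseudo-Cauchy--Schwarz bound on $\tau\,\pE\big(\iprod{v_0,x}^3 - \iprod{v,x}^3\big)$ comes out as $O(\tau\sqrt{\epsilon})$, which one cannot in general absorb into $O(n^{3/4}\log(n)^{1/4}) = O(\epsilon\tau)$ when $\epsilon$ is subconstant. In fact the true supremum of $\iprod{v_0,x}^3 - \iprod{v,x}^3$ over the sphere is $\Theta(\|v_0-v\|) = \Theta(\sqrt{\epsilon})$, so this is not merely looseness in SoS: the SDP value genuinely scales like $\tau\sqrt{\epsilon}$. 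The paper's claim is therefore only clean when $\epsilon$ is treated as a constant (so that the $O(\cdot)$ threshold may absorb $\epsilon$-dependent constants), or if the algorithm's threshold is read as hiding a factor of $1/\sqrt{\epsilon}$. You were right to flag this as the delicate point rather than to paper over it.
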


The analysis has two parts.
We show that
\begin{enumerate}
  \item \label{item:if-bound-exists} if there exists a sufficiently good upper bound on $\bA(x)$ (or in the case of the symmetric noise input,
    on $\bA^\pi(x)$ for every $\pi \in \cS_3$) which is degree-4 SoS certifiable, then the vector recovered by the algorithm will be very close to $v$, and that
   \item \label{item:bound-exists} in the case of $\bA$ with independent entries from $\cN(0,1)$, such a bound exists with high probability.
\end{enumerate}
Conveniently, \pref{item:bound-exists} is precisely the content of \pref{cor:sos-cert-3}.
The following lemma expresses \pref{item:if-bound-exists}.

\begin{lemma}
\label{lem:sos-master-analysis}
  Suppose $\bA(x) \in \R[x]_3$ is such that $| \pE \bA(x)| \leq \epsilon \tau \cdot (\pE \| x \|^4)^{3/4}$ for any degree-$4$ pseudo-distribution $\{ x \}$.
  Then on input $\tau \cdot v_0^{\tensor 3} + \bA$, \pref{alg:sdp-recovery} outputs a unit vector $v$ with $\iprod{v,v_0} \geq 1 - O(\epsilon)$.
\end{lemma}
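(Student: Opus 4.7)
The plan is to combine optimality of the SDP solution with an SoS-identity argument that promotes a degree-$3$ lower bound on $\pE \iprod{v_0,x}^3$ to a degree-$1$ lower bound on $\pE \iprod{v_0, x}$. Write $Y = \iprod{v_0, x}$ and let $\pE$ denote the optimal degree-$4$ pseudo-expectation returned by \pref{alg:sdp-recovery}. First I would lower-bound the SDP objective by evaluating at the genuine point-mass distribution at $v_0$, which gives $\pE \bT(x) \geq \bT(v_0) = \tau + \bA(v_0)$, and then invoke the hypothesis on that same point-mass pseudo-distribution (for which $(\pE \|x\|^4)^{3/4} = 1$) to obtain $|\bA(v_0)| \leq \epsilon \tau$. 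Decomposing $\pE \bT(x) = \tau \pE Y^3 + \pE \bA(x)$ and invoking the hypothesis once more on the optimal pseudo-distribution (which also satisfies $\pE \|x\|^4 = 1$, because the constraint $\|x\|^2 = 1$ forces $\pE \|x\|^4 = \pE \|x\|^2 = 1$) yields $|\pE \bA(x)| \leq \epsilon \tau$, and combining these bounds gives $\pE Y^3 \geq 1 - 2\epsilon$.

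The hard part is the rounding step: extracting degree-$1$ information from a degree-$3$ bound, while respecting the degree-$4$ ceiling on the pseudo-distribution. The plan here is to exploit the unit-vector identities $1 \pm Y = \tfrac{1}{2}\|v_0 \pm x\|^2$, valid modulo $\|x\|^2 - 1$, which make both $1 - Y$ and $1 + Y$ into sums of squares modulo the sphere constraint. The product $(1-Y)(1+Y)^2 = 1 + Y - Y^2 - Y^3$ is then SoS modulo $\|x\|^2 - 1$, and similarly $Y^2(1-Y) = Y^2 - Y^3$ is SoS. Taking pseudo-expectations of these two polynomials gives $\pE Y^2 \geq \pE Y^3 \geq 1 - 2\epsilon$ and $\pE Y \geq \pE Y^2 + \pE Y^3 - 1 \geq 1 - 4\epsilon$.

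To conclude, set $u = \pE x$, so that $\iprod{v_0, u} = \pE Y \geq 1 - 4\epsilon$. Pseudo-Cauchy--Schwarz gives $\|u\| \leq \sqrt{\pE \|x\|^2} = 1$, while the unit-ness of $v_0$ gives $\|u\| \geq \iprod{v_0, u} \geq 1 - 4\epsilon$. Normalizing therefore yields $\iprod{v_0, u/\|u\|} \geq (1-4\epsilon)/1 = 1 - O(\epsilon)$, which is exactly the output guarantee claimed.
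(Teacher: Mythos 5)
Your proof is correct and mirrors the paper's in its overall structure: evaluate at the point mass on $v_0$ to lower-bound the SDP value, split $\pE\bT(x)$ into signal and noise, apply the hypothesis (on the sphere, $\pE\|x\|^4 = 1$) to get $\pE\iprod{v_0,x}^3 \geq 1-2\epsilon$, and close with pseudo-Cauchy--Schwarz after promoting that to $\pE\iprod{v_0,x}\geq 1-O(\epsilon)$. The one refinement worth noting is the degree-$3$-to-degree-$1$ step: where the paper's \pref{lem:deg-3-to-1-correlation} invokes a classical representation theorem for nonnegative univariate polynomials on $[-1,1]$ to decompose $1 + u - 2u^3$, you give the explicit certificate $1 + Y - 2Y^3 = (1-Y)\bigl[(1+Y)^2 + Y^2\bigr]$ with $Y = \iprod{v_0,x}$, which becomes a sum of squares of degree-$2$ polynomials once you substitute $1 - Y \equiv \tfrac{1}{2}\|v_0 - x\|^2 \pmod{\|x\|^2-1}$; that makes the rounding step entirely self-contained and is a clean alternative to the black-box citation.
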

\begin{proof}
  \pref{alg:sdp-recovery} outputs $v = \pE x / \|\pE x\|$ for the pseudo-distribution that it finds,
so we'd like to show $\iprod{v_0, \pE x/\|\pE x\|} \geq 1 - O(\epsilon)$.
By pseudo-Cauchy-Schwarz (\pref{lem:pseudo-cs}), $\|\pE x\|^2 \le \pE \|x\|^2  = 1$, so it will suffice to prove just that $\iprod{v_0, \pE x } \geq 1 - O(\epsilon)$.

If $\pE \iprod{v_0,x}^3 \geq 1 - O(\epsilon)$, then by \pref{lem:deg-3-to-1-correlation} (and linearity of pseudo-expectation) we would have
\[
   \iprod{v_0, \pE x }
    \;=\; \pE \iprod{v_0, x}
    \;\ge\; 1 - O(2\epsilon) = 1 - O(\epsilon)
\]
So it suffices to show that $\pE \iprod{v_0, x}^3$ is close to $1$.

Recall that \pref{alg:sdp-recovery} finds a pseudo-distribution that maximizes $\pE \bT(x)$.
We split $\pE \bT(x)$ into the signal $\pE \iprod{v_0^{\tensor 3},x^{\tensor 3}}$ and noise $\pE\bA(x)$ components and use our hypothesized SoS upper bound on the noise.
  \begin{align*}
    \pE \bT(x)
     \;=\; \tau \cdot (\pE \iprod{v_0^{\tensor 3}, x^{\tensor 3}} ) + \pE \bA(x)
     \;\leq\; \tau \cdot ( \pE \iprod{v_0^{\tensor 3}, x^{\tensor 3}}) + \epsilon \tau \mper
  \end{align*}
Rewriting $\iprod{v_0^{\tensor 3}, x^{\tensor 3}}$ as $\iprod{v_0,x}^3$, we obtain
\begin{displaymath}
  \pE \iprod{v_0, x}^3  \;\geq\; \frac 1 \tau \cdot \pE \bT(x) - \epsilon\mper
\end{displaymath}

Finally, there exists a pseudo-distribution that achieves $\pE \bT(x) \ge \tau - \epsilon\tau$.
Indeed, the trivial distribution giving probability $1$ to $v_0$ is such a pseudo-distribution:
\[
  \bT(v_0) \;=\; \tau + \bA(v_0) \;\geq\; \tau - \epsilon \tau.
\]
Putting it together,
\[
  \pE \iprod{v_0, x}^3
  \;\ge\; \frac{1}{\tau} \cdot \pE \bT(x)  - \epsilon
  \; \geq \; \frac {(1 - \epsilon) \tau}{\tau} - \epsilon = 1 - O(\epsilon)\mper \qedhere
\]
\end{proof}

\begin{proof}[Proof of \pref{thm:sos-tpca}]
  We first address \pref{alg:sdp-recovery}. Let $\tau, \bT, \bT'$ be as in the theorem statement.
  By \pref{lem:sos-master-analysis}, it will be enough to show that with high probability
  every degree-$4$ pseudo-distribution $\{ x \}$ has $\pE \bA(x) \leq \epsilon' \tau \cdot (\pE \|x\|^4)^{3/4}$ and $\frac 1 {\cS_3} \pE \bA^\pi(x) \leq \epsilon' \tau \cdot (\pE \|x\|^4)^{3/4}$ for some $\epsilon' = \Theta(\epsilon)$.
  By \pref{cor:sos-cert-3} and our assumptions on $\tau$ this happens for each permutation $\bA^\pi$ individually with high probability,
  so a union bound over $\bA^\pi$ for $\pi \in \cS_3$ completes the proof.

  Turning to \pref{alg:sdp-certify}, the simple fact that SoS only certifies true upper bounds implies that the algorithm is never wrong when it outputs {\scshape certify}.
  It is not hard to see that whenever \pref{alg:sdp-recovery} has succeeded in recovering $v$ because $\pE \bA(x)$ is bounded, which as above happens with high probability, \pref{alg:sdp-certify} will output {\scshape certify}.
\end{proof}

\subsection{Semi-Random Tensor PCA}
\label{sec:semi-random-tensor}
We discuss here a modified TPCA model, which will illustrate the qualitative differences between the new tensor PCA algorithms we propose in this paper and previously-known algorithms.
The model is semi-random and semi-adversarial.
Such models are often used in average-case complexity theory to distinguish between algorithms which work by solving robust maximum-likelihood-style problems and those which work by exploiting some more fragile property of a particular choice of input distribution.

\begin{problem}[Tensor PCA in the Semi-Random Model]
  \label{prob:tpca-semirandom}
  Let $\bT = \tau \cdot v_0^{\tensor 3} + \bA$, where $v_0 \in \R^n$ and $\bA$ has independent entries from $\cN(0,1)$.
  Let $Q \in \R^{n \times n}$ with $\|\Id - Q \| \leq O(n^{-1/4})$, chosen adversarially depending on $\bT$.
  Let $\bT'$ be the $3$-tensor whose $n^2 \times n$ matrix flattening is $TQ$.
  (That is, each row of $T$ has been multiplied by a matrix which is close to identity.)
  On input $\bT'$, recover $v$.
\end{problem}

Here we show that \pref{alg:sdp-recovery} succeeds in recovering $v$ in the semi-random model.
\begin{theorem}
  Let $\bT'$ be the semi-random-model tensor PCA input, with $\tau \geq n^{3/4} \log(n)^{1/4}/\epsilon$.
  With high probability over randomness in $\bT'$, \pref{alg:sdp-recovery} outputs a vector $v$ with $\iprod{v,v_0} \geq 1 - O(\epsilon)$.
\end{theorem}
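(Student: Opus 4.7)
The plan is to adapt the proof of \pref{thm:sos-tpca} via \pref{lem:sos-master-analysis} to the semi-random input. First, I would decompose $\bT'$ into signal and noise: a direct computation from the definition of the flattening $T' = TQ$ gives
\begin{align*}
\bT'(x) = \tau \iprod{v_0, x}^2 \iprod{w, x} + \bA'(x),
\end{align*}
where $w \seteq Q^T v_0$ and $\bA'(x) \seteq \sum_{ijk} \bA_{ijk} x_i x_j (Qx)_k$ is the noise polynomial obtained by multiplying the third mode of $\bA$ by $Q$. The assumption $\|\Id - Q\| \leq O(n^{-1/4})$ immediately yields $\|w - v_0\| \leq O(n^{-1/4}) \leq O(\epsilon)$ and $\|Q\| = O(1)$.

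The central new step is to show that $\pE \bA'(x) \leq O(\epsilon \tau)$ for every degree-$4$ pseudo-distribution $\{x\}$ satisfying $\{\|x\|^2 = 1\}$. Letting $A_k$ be the slices of $\bA$ along its third mode (so $\bA(x) = \sum_k x_k \iprod{x, A_k x}$), we have $\bA'(x) = \sum_k (Qx)_k \iprod{x, A_k x}$. The same pseudo-distribution Cauchy-Schwarz step used in the proof of \pref{thm:lambda-bddness} applies to this sum (both $(Qx)_k$ of degree $1$ and $\iprod{x, A_k x}$ of degree $2$ fit the degree budget), yielding
\begin{align*}
\pE \bA'(x) \;\leq\; \Paren{\pE \|Qx\|^2}^{1/2} \cdot \Paren{\pE \sum\nolimits_k \iprod{x, A_k x}^2}^{1/2}.
\end{align*}
The first factor is $O(1)$ because $\|Q\|^2 \Id - Q^T Q \succeq 0$ and $\pE \|x\|^2 = 1$. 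The second factor is bounded by $\lambda = O(n^{3/4} \log(n)^{1/4})$ using exactly the $\lambda$-boundedness of $\bA$ from \pref{thm:concentration-3} together with $\pE \|x\|^4 = 1$. Combining yields the desired bound, uniformly over adversarial $Q$.

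For recovery, I would test the trivial pseudo-distribution concentrated on $v_0$ to lower-bound the optimum: $\bT'(v_0) = \tau \cdot v_0^T Q v_0 + \bA(v_0, v_0, Q v_0) \geq \tau(1 - \|Q - \Id\|) - \|\bA(v_0, v_0, \cdot)\| \cdot \|Q v_0\|$, where $\|\bA(v_0, v_0, \cdot)\| = O(\sqrt n)$ with high probability by $\chi^2$ concentration (and this bound is independent of the adversary's choice of $Q$). Since $\sqrt n \ll \tau$, the pseudo-distribution $\{x\}$ returned by \pref{alg:sdp-recovery} satisfies $\pE \bT'(x) \geq \tau (1 - O(\epsilon))$, and combining with the noise bound gives $\pE \iprod{v_0, x}^2 \iprod{w, x} \geq 1 - O(\epsilon)$. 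Splitting $\iprod{w, x} = \iprod{v_0, x} + \iprod{w - v_0, x}$ and controlling the cross term by pseudo-Cauchy-Schwarz (\pref{lem:pseudo-cs}) using $\|w - v_0\| = O(\epsilon)$ gives $\pE \iprod{v_0, x}^3 \geq 1 - O(\epsilon)$, and the degree-$3$-to-$1$ correlation argument already used in \pref{lem:sos-master-analysis} concludes $\iprod{v_0, \pE x / \|\pE x\|} \geq 1 - O(\epsilon)$.

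The main obstacle is the noise bound: the asymmetric action of $Q$ along one mode destroys the iid Gaussian structure exploited in the proof of \pref{thm:concentration-3}, so one cannot directly apply \pref{cor:sos-cert-3} to $\bA'$. The saving grace is the modular pseudo-Cauchy-Schwarz structure of \pref{thm:lambda-bddness}: the $Q$-dependence is cleanly isolated into the factor $\Paren{\pE \|Qx\|^2}^{1/2}$, for which only the spectral bound $\|Q\| = O(1)$ is required. This is why the original threshold $\tau \gtrsim n^{3/4} \log(n)^{1/4}/\epsilon$ is preserved under semi-random perturbations, demonstrating that the SoS-based algorithm is genuinely robust (unlike the faster, more fragile spectral algorithms of later sections).
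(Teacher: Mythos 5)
Your proof is correct, but it takes a genuinely different route from the paper's. The paper treats \emph{everything} other than the clean signal $\tau v_0^{\otimes 3}$ as noise: it sets $\bB = \bT' - \tau v_0^{\otimes 3}$, rewrites its flattening as $(A + \tau v_0(v_0\otimes v_0)^T)(Q - \Id) + A$, bounds the first summand by a one-shot operator-norm estimate $\|(A + \tau v_0(v_0\otimes v_0)^T)(Q-\Id)\| \leq (\|A\| + \tau)\|Q-\Id\| \leq O(n^{3/4})$ (using $\|A\| = O(n)$ from \pref{lem:rectangular-concentration} and $\|Q - \Id\| \leq O(n^{-1/4})$, together with \pref{lem:sos-bound-from-op-norm}), handles the second summand via \pref{cor:sos-cert-3}, and then invokes \pref{lem:sos-master-analysis} verbatim. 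You instead decompose $\bT'$ into a \emph{twisted} signal $\tau\iprod{v_0,x}^2\iprod{w,x}$ (with $w = Q^T v_0$) plus the pure $Q$-deformed noise $\bA'(x) = \sum_k (Qx)_k\iprod{x,A_k x}$, push $\bA'$ through the pseudo-Cauchy--Schwarz step underlying \pref{thm:lambda-bddness}---which cleanly isolates the $Q$-dependence into the factor $(\pE\|Qx\|^2)^{1/2} = O(1)$---and then have to do extra work at the end, untwisting $\iprod{w,x}$ back to $\iprod{v_0,x}$ via $\|w - v_0\| \leq O(n^{-1/4})$ and a degree-$3$ pseudo-Cauchy--Schwarz before invoking \pref{lem:deg-3-to-1-correlation}. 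Both proofs implicitly require $\epsilon \gtrsim n^{-1/4}$ for the stated conclusion, and both go through. The paper's version is more economical because it reuses the master lemma as a black box; yours exposes a nicer structural point---that the $\lambda$-boundedness certificate already has a modular factorization into a degree-$1$ and a degree-$2$ factor, and a bounded linear map applied to one mode simply rescales the degree-$1$ factor---at the modest cost of re-deriving the tail of the recovery argument to handle $w \neq v_0$.
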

\begin{proof}
  By \pref{lem:sos-master-analysis}, it will suffice to show that $\bB \seteq (\bT' - \tau \cdot v_0^{\tensor 3})$ has  $\pE \bB(x) \leq \epsilon' \tau \cdot (\pE \|x\|^4)^{3/4}$ for any degree-$4$ pseudo-distribution $\{ x \}$, for some $\epsilon' = \Theta(\epsilon)$.
  We rewrite $\bB$ as
  \[
    \bB = (A + \tau \cdot v_0 (v_0 \tensor v_0)^T)(Q - \Id)+ A
  \]
  where $\bA$ has independent entries from $\cN(0,1)$.
  Let $\{ x \}$ be a degree-$4$ pseudo-distribution.
  Let $f(x) = \iprod{x^{\tensor 2}, (A + \tau \cdot v_0 (v_0 \tensor v_0)^T)(Q - \Id) x}$.
  By \pref{cor:sos-cert-3}, $\pE \bB(x) = \pE f(x) + O(n^{3/4} \log(n)^{1/4})(\pE \|x\|^4)^{3/4}$ with high probability.
  By triangle inequality and sub-multiplicativity of the operator norm, we get that with high probability
  \[
    \| (A + \tau \cdot v_0 (v_0 \tensor v_0))(Q - \Id)\| \leq (\|A \| + \tau) \|Q - \Id \| \leq O(n^{3/4})\mcom
  \]
  where we have also used \pref{lem:rectangular-concentration} to bound $\|A\| \leq O(n)$ with high probability and our assumptions on $\tau$ and $\|Q - \Id\|$.
  By an argument similar to that in the proof of \pref{thm:lambda-bddness} (which may be found in \pref{lem:sos-bound-from-op-norm}),
  this yields $\pE f(x) \leq O(n^{3/4})(\pE \|x\|^4)^{3/4}$ as desired.
\end{proof}

\section{Linear Time Recovery via Further Relaxation}
We now attack the problem of speeding up the algorithm from the preceding section.
We would like to avoid solving a large semidefinite program to optimality:
our goal is to instead use much faster linear-algebraic computations---in particular, we will recover the tensor PCA signal vector by performing a single singular vector computation on a relatively small matrix.
This will complete the proofs of \pref{thm:optimize-certify-fast} and \pref{thm:linear-intro}, yielding the desired running time.

Our SoS algorithm in the preceding section turned on the existence of the $\lambda$-boundedness certificate $\sum_i A_i \tensor A_i$,
where $A_i$ are the slices of a random tensor $\bA$.
Let $\bT = \tau \cdot v_0^{\tensor 3} + \bA$ be the spiked-tensor input to tensor PCA.
We could look at the matrix $\sum_i T_i \tensor T_i$ as a candidate $\lambda$-boundedness certificate for $\bT(x)$.
The spectrum of this matrix must not admit the spectral bound that $\sum_i A_i \tensor A_i$ does, because $\bT(x)$ is not globally bounded:
it has a large global maximum near the signal $v$.
This maximum plants a single large singular value in the spectrum of $\sum_i T_i \tensor T_i$.
The associated singular vector is readily decoded to recover the signal.

Before stating and analyzing this fast linear-algebraic algorithm, we situate it more firmly in the SoS framework.
In the following, we discuss \emph{spectral SoS}, a convex relaxation of \pref{prob:homog-poly-opt} obtained by weakening the full-power SoS relaxation.
We show that the spectrum of the aforementioned $\sum_i T_i  \tensor T_i$ can be viewed as approximately solving the spectral SoS relaxation.
This gives the fast, certifying algorithm of \pref{thm:optimize-certify-fast}.
We also interpret the tensor unfolding algorithm given by
Montanari and Richard for TPCA in the spiked tensor model as giving a more subtle approximate solution to the spectral SoS relaxation.
We prove a conjecture by those authors that the algorithm successfully recovers the TPCA signal at the same signal-to-noise ratio as our other algorithms, up to a small pre-processing step in the algorithm; this proves \pref{thm:linear-intro} \cite{richard2014tensorpca}.
This last algorithm, however, succeeds for somewhat different reasons than the others,
and we will show that it consequently fails to certify its own success and that it is not robust to a certain kind of semi-adversarial choice of noise.

\subsection{The Spectral SoS Relaxation}
\subsubsection{The SoS Algorithm: Matrix View}
\label{sec:sos-matrix-view}
To obtain spectral SoS, the convex relaxation of \pref{prob:homog-poly-opt} which we will be able to (approximately) solve quickly in the random case,
we first need to return to the full-strength SoS relaxation and examine it from a more linear-algebraic standpoint.

We have seen in \pref{sec:matrix-prelims} that a homogeneous $p \in \R[x]_{2d}$ may be represented as an $n^d \times n^d$ matrix whose entries correspond to coefficients of $p$.
A similar fact is true for non-homogeneous $p$.
Let $\tuples(d) = 1 + n + n^2 + \cdots + n^{d/2}$. Let $x^{\tensor \leq d/2} \seteq (x^{\tensor 0},x,x^{\tensor 2},\ldots,x^{\tensor d/2})$.
Then $p \in \R[x]_{\leq d}$ can be represented as an ${\tuples(d)} \times {\tuples(d)}$ matrix;
we say a matrix $M$ of these dimensions is a matrix representation of $p$ if $\iprod{x^{\leq \tensor d/2}, M x^{\leq \tensor d/2}} = p(x)$.
For this section, we let $\cM_p$ denote the set of all such matrix representation of $p$.

A degree-$d$ pseudo-distribution $\{x\}$ can similarly be represented as an
$\R^{\tuples(d) \times \tuples(d)}$ matrix.
We say that $M$ is a matrix representation for $\{ x \}$ if $M[\alpha,\beta] = \pE x^{\alpha}x^{\beta}$ whenever$\alpha$ and $\beta$ are multi-indices with $|\alpha|, |\beta| \leq d$.

Formulated this way, if $M_{ \{x\}}$ is the matrix representation of $\{ x \}$
and $M_p \in \cM_p$ for some $p \in \R[x]_{\leq 2d}$,
then $\pE p(x) = \iprod{ M_{\{x\}}, M_p}$.
In this sense, pseudo-distributions and polynomials, each represented as matrices, are dual under the trace inner product on matrices.

We are interested in optimization of polynomials over the sphere, and we have been looking at pseudo-distribution $\{ x \}$ satisfying $\{ \|x\|^2 - 1 = 0 \}$.
From this matrix point of view, the polynomial $\|x\|^2 - 1$ corresponds to a vector $w \in \R^{\tuples(d)}$
(in particular, the vector $w$ so that $ww^T$ is a matrix representation of $(\|x\|^2 - 1)^2$),
and a degree-4 pseudo-distribution $\{x\}$ satisfies $\{ \|x\|^2 - 1 = 0\}$ if and only if $w \in \ker M_{\{x\}}$.

A polynomial may have many matrix representations, but a pseudo-distribution
has just one: a matrix representation of a pseudo-distribution must obey strong
symmetry conditions in order to assign the same pseudo-expectation to every
representation of the same polynomial.
We will have much more to say about constructing matrices satisfying these symmetry conditions when we state and prove our lower bounds,
but here we will in fact profit from relaxing these symmetry constraints.

Let $p \in \R[x]_{\leq 2d}$.
In the matrix view, the SoS relaxation of the problem $\max_{\|x\|^2 = 1} p(x)$ is the following convex program.
\begin{align}
  \label{eq:sos-matrix}
  \max_{\substack{M : w \in \ker M \\ M \succeq 0 \\ \iprod{M, \cM_{1}} = 1}} \min_{M_p \in \cM_p} \iprod{M, M_p}\mper
\end{align}
It may not be immediately obvious why this program optimizes only over $M$ which are matrix representations of pseudo-distributions.
If, however, some $M$ does not obey the requisite symmetries, then $\min_{M_p \in \cM_p} \iprod{M, M_p}= -\infty$,
since the asymmetry may be exploited by careful choice of $M_p \in \cM_p$.
Thus, at optimality this program yields $M$ which is the matrix representation of a pseudo-distribution $\{ x\}$ satisfying $\{\|x\|^2 - 1 = 0 \}$.

\subsubsection{Relaxing to the Degree-$4$ Dual}
We now formulate spectral SoS.
In our analysis of full-power SoS for tensor PCA we have primarily considered pseudo-expectations of homogeneous degree-$4$ polynomials;
our first step in further relaxing SoS is to project from $\R[x]_{\leq 4}$ to $\R[x]_4$.
Thus, now our matrices $M,M'$ will be in $\R^{n^2 \times n^2}$ rather than $\R^{\tuples(2) \times \tuples(2)}$.
The projection of the constraint on the kernel in the non-homogeneous case implies $\Tr M = 1$ in the homogeneous case.
The projected program is
\begin{align*}
  \max_{\substack{\Tr M = 1 \\ M \succeq 0}} \min_{M_p \in \cM_p} \iprod{M, M'}\mper
\end{align*}
We modify this a bit to make explicit that the relaxation is allowed to add and subtract arbitrary matrix representations of the zero polynomial; in particular $M_{\|x\|^4} - \Id$ for any $M_{\|x\|^4} \in \cM_{\|x\|^4}$.
This program is the same as the one which precedes it.
\begin{align}
  \label{eq:sos-matrix-deg-4}
  \max_{\substack{\Tr M = 1 \\ M \succeq 0}} \min_{\substack{M_p \in \cM_p \\ M_{\|x\|^4} \in \cM_{\|x\|^4} \\ c \in \R}} \iprod{M, M_p - c\cdot M_{\|x\|^4}} + c\mper
\end{align}

By weak duality, we can interchange the $\min$ and the $\max$ in \pref{eq:sos-matrix-deg-4} to obtain the dual program:
\begin{align}
  \label{eq:sos-matrix-dual}
  \max_{\substack{\Tr M = 1 \\ M \succeq 0}} \min_{\substack{M_p \in \cM_p \\ M_{\|x\|^4} \in \cM_{\|x\|^4} \\ c \in \R}} \iprod{M, M_p - c \cdot M_{\|x\|^4}}
  & \leq \min_{\substack{M_p \in \cM_p \\ M_{\|x\|^4} \in \cM_{\|x\|^4} \\ c  \in \R}} \max_{\substack{\Tr M = 1 \\ M \succeq 0}} \iprod{M, M_p - c \cdot M_{\|x\|^4}} + c\\
  & = \min_{\substack{M_p \in \cM_p \\ M_{\|x\|^4} \in \cM_{\|x\|^4} \\ c \in \R}} \max_{\|v\| = 1} \iprod{vv^T, M_p - c \cdot M_{\|x\|^4}} + c
\end{align}
We call this dual program the spectral SoS relaxation of $\max_{\|x\| = 1} p(x)$.
If $p = \sum_i \iprod{x,A_i x}$ for $\bA$ with independent entries from $\cN(0,1)$,
the spectral SoS relaxation achieves the same bound as our analysis of the full-strength SoS relaxation:
for such $p$, the spectral SoS relaxation is at most $O(n^{3/2} \log(n)^{1/2})$ with high probability.
The reason is exactly the same as in our analysis of the full-strength SoS relaxation:
the matrix $\sum_i A_i \tensor A_i$, whose spectrum we used before to bound the full-strength SoS relaxation, is still a feasible dual solution.

\subsection{Recovery via the $\sum_i T_i \tensor T_i$ Spectral SoS Solution}
Let $\bT = \tau \cdot v_0^{\tensor 3} + \bA$ be the spiked-tensor input to tensor PCA.
We know from our initial characterization of SoS proofs of boundedness for degree-$3$ polynomials that the polynomial $\bT'(x) \seteq (x \tensor x)^T(\sum_i T_i \tensor T_i)(x \tensor x)$ gives SoS-certifiable upper bounds on $\bT(x)$ on the unit sphere.
We consider the spectral SoS relaxation of $\max_{\|x\| = 1} \bT'(x)$,
\begin{displaymath}
  \min_{\substack{M_{\bT(x)} \in \cM_{\bT(x)} \\ M_{\|x\|^4} \in \cM_{\|x\|^4} \\ c \in \R}} \|M_{\bT(x)} - c\cdot M_{\|x\|^4} \| + c\mper
\end{displaymath}
Our goal now is to guess a good $M' \in \cM_{\bT(x)}$.
We will take as our dual-feasible solution the top singular vector of $\sum_i T_i \tensor T_i - \E \sum_i A_i \tensor A_i$.
This is dual feasible with $c = n$, since routine calculation gives $\iprod{x^{\tensor 2}, (\E \sum_i A_i \tensor A_i) x^{\tensor 2}} = \|x\|^4$.
This top singular vector, which differentiates the spectrum of $\sum_i T_i \tensor T_i$ from that of $\sum_i A_i \tensor A_i$,
is exactly the manifestation of the signal $v_0$ which differentiates $\bT(x)$ from $\bA(x)$.
The following algorithm and analysis captures this.

\begin{center}
\fbox{\begin{minipage}{\textwidth}
\begin{center}\textbf{Recovery and Certification with $\sum_i T_i \tensor T_i$}\end{center}
\noindent\textbf{Input:} $\bT = \tau \cdot v_0^{\otimes 3} + \bA$, where $v_0 \in \R^n$ and $\bA$ is a $3$-tensor. \\
\noindent\textbf{Goal:} Find $v \in \R^n$ with $\abs{\iprod{v,v_0}} \geq 1 - o(1)$.
\begin{algo}[Recovery]
\label{alg:fast-recovery-opt}
Compute the top (left or right) singular vector $v'$ of $M \seteq \sum_i T_i \tensor T_i - \E \sum_i A_i \tensor A_i$.
Reshape $v'$ into an $n \times n$ matrix $V'$.
Compute the top singular vector $v$ of $V'$.
Output $v/\|v\|$.
\end{algo}
\begin{algo}[Certification]
\label{alg:fast-certify-opt}
Run \pref{alg:fast-recovery-opt} to obtain $v$.
Let $\bS \seteq \bT - v^{\tensor 3}$.
Compute the top singular value $\lambda$ of
\[
  \sum_i S_i \tensor S_i - \E \sum_i A_i \tensor A_i\mper
\]
If $\lambda \leq O(n^{3/2} \log(n)^{1/2})$, output {\scshape certify}.
Otherwise, output {\scshape fail}.
\end{algo}
\end{minipage}}
\end{center}
The following theorem describes the behavior of \pref{alg:fast-recovery-opt} and \pref{alg:fast-certify-opt} and gives a proof of \pref{thm:optimize-certify-fast} and \pref{cor:tpca-main}.
\begin{theorem}[Formal version of \pref{thm:optimize-certify-fast}]
  \label{thm:nonlinear-unfolding}
  Let $\bT = \tau \cdot v_0^{\tensor 3} + \bA$, where $v_0 \in \R^n$ and $\bA$ has independent entries from $\cN(0,1)$.
  In other words, we are given an instance of \pref{prob:spiked-tensor}.
  Let $\tau \geq n^{3/4} \log(n)^{1/4} / \epsilon$.
  Then:
  \begin{itemize}[\quad---]
    \item With high probability, \pref{alg:fast-recovery-opt} returns $v$ with $\iprod{v,v_0}^2 \geq 1 - O(\epsilon)$.
    \item If \pref{alg:fast-certify-opt} outputs \textup{\scshape certify} then $\bT(x) \leq \tau \cdot \iprod{v,x}^3 + O(n^{3/4} \log(n)^{1/4})$ (regardless of the distribution of $\bA$). If $\bA$ is distributed as above, then \pref{alg:fast-certify-opt} outputs \textup{\scshape certify} with high probability.
    \item Both \pref{alg:fast-recovery-opt} and \pref{alg:fast-certify-opt} can be implemented in time $O(n^4 \log(1/\epsilon))$.
  \end{itemize}
\end{theorem}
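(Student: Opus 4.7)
The plan is to write $M \seteq \sum_i T_i \tensor T_i - \E \sum_i A_i \tensor A_i$ as a rank-one signal plus small perturbation, apply a Wedin-type singular-vector perturbation bound to extract a vector close to $v_0 \tensor v_0$, and then recover $v_0$ by a second singular-vector computation on the reshape. Substituting $T_i = \tau (v_0)_i v_0 v_0^T + A_i$ and expanding $\sum_i T_i \tensor T_i$ gives
\begin{align*}
M \;=\; \underbrace{\tau^2 \paren{v_0 \tensor v_0}\paren{v_0 \tensor v_0}^T}_{\text{signal}} \;+\; \underbrace{\tau \Brac{(v_0 v_0^T) \tensor B + B \tensor (v_0 v_0^T)}}_{\text{cross}} \;+\; \underbrace{\Brac{\sum\nolimits_i A_i \tensor A_i - \E \sum\nolimits_i A_i \tensor A_i}}_{\text{noise}} \mcom
\end{align*}
where $B \seteq \sum_i (v_0)_i A_i$ is an $n \times n$ matrix with i.i.d.\ $\cN(0,1)$ entries (since $\|v_0\|^2 = 1$ and the $A_i$ have independent entries), so $\|B\| \leq O(\sqrt n)$ with high probability. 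The signal contributes singular value $\tau^2$ with singular vector $v_0 \tensor v_0$; the cross term has operator norm at most $2\tau \|B\| = O(\tau \sqrt n)$ by Kronecker-product sub-multiplicativity; and the noise has operator norm $\tilde O(n^{3/2})$ by exactly the matrix-Bernstein estimate that drives the proof of \pref{thm:concentration-3}. For $\tau \geq n^{3/4}(\log n)^{1/4}/\epsilon$, both perturbations are much smaller than $\tau^2$.

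\textbf{Rounding.} Wedin's theorem applied to $M$ then yields $\sin \angle(v', v_0 \tensor v_0) = O(\epsilon)$ for the top singular vector $v'$. Reshaping $v'$ to an $n \times n$ matrix $V'$ is a Frobenius-norm isometry, so $\|V' - v_0 v_0^T\|_F = O(\epsilon)$; since the unperturbed matrix $v_0 v_0^T$ has rank one and singular-value gap $1$, a second Wedin bound on $V'$ gives $\iprod{v, v_0}^2 \geq 1 - O(\epsilon)$, completing recovery.

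\textbf{Certification and running time.} Setting $\bS \seteq \bT - \tau v^{\tensor 3}$ and running the same decomposition on $\sum_i S_i \tensor S_i - \E \sum_i A_i \tensor A_i$, the residual signal and cross contributions are of smaller order than the noise once $\iprod{v,v_0}^2 \geq 1 - O(\epsilon)$, so with high probability the operator norm in question is $\tilde O(n^{3/2})$ and the certifier outputs {\scshape certify}. Conversely, whenever this operator norm is at most $O(n^{3/2} \log(n)^{1/2})$, combining the argument inside the proof of \pref{thm:concentration-3} with \pref{thm:lambda-bddness} shows $\bS(x) \leq O(n^{3/4} \log(n)^{1/4})$ on the unit sphere unconditionally on $\bA$, which is precisely the stated bound $\bT(x) \leq \tau \iprod{v,x}^3 + O(n^{3/4}\log(n)^{1/4})$. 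For the running time, $M$ is never formed: a matrix-vector product $Mw$ reshapes $w$ into $W \in \R^{n \times n}$, computes $\sum_i T_i W T_i^T$ in $O(n^4)$ arithmetic operations, and subtracts the closed-form $\E \sum_i A_i \tensor A_i$; power iteration (equivalently on $M^\top M$) converges to $\epsilon$-accuracy in $O(\log(1/\epsilon))$ iterations given the $\Theta(1)$ relative spectral gap, for total time $O(n^4 \log(1/\epsilon))$.

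\textbf{Main obstacle.} I expect the principal technical point to be the treatment of the cross term: because one Kronecker factor is rank-one and the other is a dense random matrix, a Frobenius-norm bound would give $\Theta(n)$ and force $\tau \gg n$, whereas the sharper operator-norm identity $\|(v_0 v_0^T) \tensor B\| = \|v_0 v_0^T\|\,\|B\| = O(\sqrt n)$ is exactly what enables the $n^{3/4}$ threshold. Chaining the two Wedin bounds through the reshape while tracking these norms carefully, and verifying that the residual decomposition for $\bS$ behaves the same way after plugging in the approximate recovery $v$ rather than $v_0$, is where most of the bookkeeping lies.
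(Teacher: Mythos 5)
Your decomposition of $M$ into signal, cross, and noise terms, the identification that $B = \sum_i (v_0)_i A_i$ is a Gaussian matrix with $\|B\| = O(\sqrt n)$, the appeal to \pref{thm:concentration} for the noise term, and the two-stage extraction (top singular vector of $M$, then reshape and second SVD) reproduce the paper's proof via Lemmas~\ref{lem:nonlinear-unfolding-general}, \ref{lem:nonlinear-unfolding-noise}, and \ref{lem:nonlinear-unfolding-recursion} almost exactly, as do the power-iteration runtime argument and the spectral-SoS dual-feasibility reading of the certifier. The only difference is cosmetic: you invoke Wedin's theorem twice where the paper argues directly through Rayleigh quotients and Frobenius-norm accounting, which amounts to the same perturbation estimate.
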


The argument that \pref{alg:fast-recovery-opt} recovers a good vector in the spiked tensor model comes in three parts:
we show that under appropriate regularity conditions on the noise $\bA$ that $\sum_i T_i \tensor T_i - \E A_i \tensor A_i$ has a good singular vector,
then that with high probability in the spiked tensor model those regularity conditions hold,
and finally that the good singular vector can be used to recover the signal.

\begin{lemma}
  \label{lem:nonlinear-unfolding-general}
  Let $\bT = \tau \cdot v_0^{\tensor 3} + \bA$ be an input tensor.
  Suppose $\| \sum_i A_i \tensor A_i - \E \sum_i A_i \tensor A_i \| \leq \epsilon \tau^2$ and that $\| \sum_i v_0(i) A_i \| \leq \epsilon \tau$.
  Then the top (left or right) singular vector $v'$ of $M$ has $\iprod{v',v_0 \tensor v_0}^2 \geq 1 - O(\epsilon)$.
\end{lemma}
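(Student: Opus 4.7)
The plan is to decompose $M$ into a rank-one signal part whose top (left and right) singular vector is exactly $v_0 \tensor v_0$, plus a perturbation whose operator norm is controlled by the two hypotheses, and then invoke the standard Wedin $\sin\Theta$ theorem for singular vectors.

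First, I would expand each slice as $T_i = \tau v_0(i) v_0 v_0^T + A_i$ and compute
\[
\sum_i T_i \tensor T_i \;=\; \tau^2 \Paren{\sum_i v_0(i)^2}(v_0 v_0^T)^{\tensor 2} + \tau\Paren{B \tensor v_0 v_0^T + v_0 v_0^T \tensor B} + \sum_i A_i \tensor A_i,
\]
where $B \seteq \sum_i v_0(i) A_i$. Using $\|v_0\| = 1$ to kill $\sum_i v_0(i)^2 = 1$, and identifying $(v_0 v_0^T)^{\tensor 2} = (v_0 \tensor v_0)(v_0 \tensor v_0)^T$, we may write
\[
M \;=\; \tau^2 (v_0 \tensor v_0)(v_0 \tensor v_0)^T + E,
\]
where $E$ collects the two cross terms together with the centered noise-squared term $\sum_i A_i \tensor A_i - \E \sum_i A_i \tensor A_i$.

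Second, I would bound $\|E\|$ using the hypotheses of the lemma. The centered noise-squared term has norm at most $\epsilon \tau^2$ by direct assumption. For each cross term, operator norm is multiplicative under Kronecker product and $\|v_0 v_0^T\| = 1$, so $\|B \tensor v_0 v_0^T\| = \|v_0 v_0^T \tensor B\| = \|B\| \leq \epsilon \tau$. Hence $\|E\| \leq 3\epsilon\tau^2 = O(\epsilon \tau^2)$, by the triangle inequality.

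Finally, the signal component $\tau^2 (v_0 \tensor v_0)(v_0 \tensor v_0)^T$ is rank one with top singular value $\tau^2$ and second singular value $0$, so its spectral gap is $\tau^2$. For $\epsilon$ smaller than a sufficiently small absolute constant, $\|E\|$ is well below this gap and Wedin's $\sin\Theta$ theorem applies to both left and right top singular vectors of $M$, giving
\[
\sin^2 \Theta(v', v_0 \tensor v_0) \;\leq\; O\Paren{\|E\|^2 / \tau^4} \;\leq\; O(\epsilon^2),
\]
equivalently $\iprod{v', v_0 \tensor v_0}^2 \geq 1 - O(\epsilon^2) \geq 1 - O(\epsilon)$. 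The only mild subtlety is that when $\bA$ is asymmetric, $M$ need not be symmetric, so one must apply Wedin in the singular-vector (rather than eigenvector) form; otherwise the argument is routine bookkeeping and there is no serious obstacle.
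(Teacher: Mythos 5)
Your decomposition of $M$ and the operator-norm bound $\|E\| \leq O(\epsilon\tau^2)$ are exactly what the paper does; the two arguments diverge only in the final step. Where you invoke Wedin's $\sin\Theta$ theorem, the paper gives a bare-hands variational argument: it lower-bounds $\|M\|$ by evaluating the quadratic form at the test vectors $v_0 \tensor v_0$ on each side, getting $\|M\| \geq (v_0\tensor v_0)^T M (v_0\tensor v_0) \geq \tau^2 - O(\epsilon\tau^2)$, and then expands $u^T M w = \tau^2\iprod{u,v_0\tensor v_0}\iprod{w,v_0\tensor v_0} + O(\epsilon\tau^2)$ for the top singular pair $(u,w)$, equating this to $\|M\|$ and rearranging to get $\iprod{u,v_0\tensor v_0}\iprod{w,v_0\tensor v_0}\geq 1 - O(\epsilon)$, from which the claim follows since both factors are at most $1$. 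Your route is perfectly correct (the rank-one signal, the $\Omega(\tau^2)$ spectral gap, and the asymmetric/SVD form of Wedin are all exactly as needed), and it even yields the quantitatively stronger estimate $\iprod{v',v_0\tensor v_0}^2 \geq 1-O(\epsilon^2)$ rather than $1-O(\epsilon)$; the tradeoff is that the paper's version is self-contained (it proves what it needs from the variational characterization of $\|\cdot\|$ in two lines rather than pulling in a named perturbation theorem with its own gap hypotheses) while yours is more modular and immediately generalizes to higher-rank signals.
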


\begin{lemma}
  \label{lem:nonlinear-unfolding-noise}
  Let $\bT = \tau \cdot v_0^{\tensor 3} + \bA$.
  Suppose $\bA$ has independent entries from $\cN(0,1)$.
  Then with high probability we have $\| \sum_i A_i \tensor A_i - \E \sum_i A_i \tensor A_i \| \leq O(n^{3/2} \log(n)^{1/2})$ and $\| \sum_i v_0(i) A_i \| \leq O(\sqrt n)$.
\end{lemma}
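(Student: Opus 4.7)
The plan is to prove the two operator-norm bounds separately, with the second following almost immediately from Gaussian rotation invariance and the first being a matrix Bernstein computation of the kind already sketched for \pref{thm:concentration-3}.

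The second bound is the easier of the two. I would observe that $M \seteq \sum_i v_0(i) A_i$ is an $n \times n$ random matrix whose $(j,k)$ entry equals $\sum_i v_0(i) a_{ijk}$, a Gaussian linear combination of the $n$ independent standard Gaussians $\{a_{ijk}\}_i$ with variance $\sum_i v_0(i)^2 = \|v_0\|^2 = 1$. Since distinct entries of $M$ depend on disjoint subsets of the $a_{ijk}$'s, the entries of $M$ are iid $\cN(0,1)$. Standard non-asymptotic bounds on the operator norm of an $n \times n$ iid Gaussian matrix (such as \pref{lem:rectangular-concentration}, already invoked elsewhere in the paper) yield $\|M\| \leq O(\sqrt n)$ with high probability.

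For the first bound I would set $X_i \seteq A_i \tensor A_i - \E[A_i \tensor A_i]$. The slices $A_1,\ldots,A_n$ are independent (they involve disjoint Gaussian entries of $\bA$), so the $X_i$ are independent, centered, $n^2 \times n^2$ random matrices. I would then apply the matrix Bernstein inequality to $\sum_i X_i$, in a sub-exponential form since the $X_i$ are not uniformly bounded. The two required inputs are a scale $R$ for $\|X_i\|$ and a bound $\sigma^2$ on the matrix variance $\max\bigl(\|\sum_i \E X_i X_i^T\|,\, \|\sum_i \E X_i^T X_i\|\bigr)$. For $R$: since $\|A_i \tensor A_i\| = \|A_i\|^2$ and an $n \times n$ standard Gaussian matrix has operator norm $O(\sqrt n)$ with sub-Gaussian tail, $\|X_i\|$ has a sub-exponential tail at scale $R = O(n)$. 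For $\sigma^2$: a direct Wick calculation shows that $\E(A_i A_i^T \tensor A_i A_i^T) = n^2 \Id + n \cdot ww^T + n \cdot S$, where $w = \mathrm{vec}(\Id_n) \in \R^{n^2}$ and $S$ is the swap map on $\R^n \tensor \R^n$, while $(\E[A_i \tensor A_i])(\E[A_i \tensor A_i])^T = n \cdot ww^T$ is subtracted off, leaving $\|\sum_i \E X_i X_i^T\| = O(n^3)$; the bound on $\|\sum_i \E X_i^T X_i\|$ is analogous. Plugging $\sigma = O(n^{3/2})$ and $R = O(n)$ into matrix Bernstein at ambient dimension $n^2$ yields $\|\sum_i X_i\| \leq O\bigl(\sigma \sqrt{\log n} + R \log n\bigr) = O(n^{3/2}\sqrt{\log n})$, as required.

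The main technical obstacle is the fourth-moment bookkeeping for the variance computation: one has to enumerate the three Wick pairings of $\E[a_{ij\ell} a_{ij'\ell} a_{ikm} a_{ik'm}]$ and verify that the dominant pairing gives an $n^2 \Id$ contribution while the two subleading pairings produce only rank-$O(n)$ corrections (which is exactly what makes $\sigma^2$ be $O(n^3)$ rather than $O(n^4)$). The unboundedness of Gaussian operator norms is a standard nuisance, resolved either by truncating on the high-probability event $\{\|A_i\| \leq C\sqrt n\}$ before applying the bounded-variant matrix Bernstein, or by directly invoking a sub-exponential matrix concentration inequality in the style of Tropp.
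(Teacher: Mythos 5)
Your proof is correct and matches the paper's approach: the first bound is exactly the $c=1$, $d=1$ case of \pref{thm:concentration} (truncation plus matrix Bernstein), and the second follows from observing that $\sum_i v_0(i) A_i$ has iid $\cN(0,1)$ entries by rotation invariance. One small slip: the $n\times n$ Gaussian operator-norm bound $O(\sqrt n)$ is \pref{lem:square-concentration}, not \pref{lem:rectangular-concentration} (which is the $n^2\times n$ case giving $O(n)$); also note that your Wick computation of $\sigma^2=O(n^3)$ is correct but more than is needed, since the crude bound $\sigma^2 \le n\,R^2 = O(n^3)$ used inside \pref{lem:apply-bernstein} already suffices.
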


\begin{lemma}
  \label{lem:nonlinear-unfolding-recursion}
  Let $v_0 \in \R^n$ and $v' \in \R^{n^2}$ be unit vectors so that
  $\iprod{v',v_0 \tensor v_0} \geq 1 - O(\epsilon)$.
  Then the top right singular vector $v$ of the $n \times n$ matrix folding
  $V'$ of $v'$ satisfies $\iprod{v,v_0} \geq 1 - O(\epsilon)$.
\end{lemma}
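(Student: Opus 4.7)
The plan is to translate the condition on $v'$ into a statement about the matrix $V'$, then use a standard spectral perturbation argument (Wedin / Davis--Kahan) to pass from closeness to $v_0\otimes v_0$ in Frobenius norm to closeness of the top singular vector to $v_0$.

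First I would unpack the folding operation. By definition of the reshaping, $\iprod{V', v_0 v_0^T}_F = \iprod{v', v_0 \tensor v_0} \geq 1 - O(\epsilon)$, and $\|V'\|_F = \|v'\| = 1$. Set $\alpha \defeq \iprod{V', v_0 v_0^T}_F$ and write the orthogonal decomposition $V' = \alpha\, v_0 v_0^T + W$ with $\iprod{W, v_0 v_0^T}_F = 0$. Pythagoras in Frobenius norm gives $\|W\|_F^2 = 1 - \alpha^2 \leq O(\epsilon)$ and $\alpha \geq 1 - O(\epsilon)$.

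Next I would control the singular values. Testing with $v_0$ shows $\sigma_1(V') \geq v_0^T V' v_0 = \alpha \geq 1 - O(\epsilon)$, while $\sigma_1(V') \leq \|V'\|_F = 1$. Summing squared singular values, $\sigma_2(V')^2 \leq \|V'\|_F^2 - \sigma_1(V')^2 \leq O(\epsilon)$, so there is an $\Omega(1)$ gap between the top two singular values of $V'$. View $V'$ as a perturbation of $v_0 v_0^T$ with error $E \defeq V' - v_0 v_0^T = (\alpha-1) v_0 v_0^T + W$; then $\|E\|_{op} \leq \|E\|_F \leq |\alpha - 1| + \|W\|_F \leq O(\sqrt{\epsilon})$.

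Finally I would invoke Wedin's theorem (or a direct two-line argument writing $v = \beta v_0 + v^\perp$ and expanding $\sigma_1^2 = v^T V'^T V' v$): since the unperturbed matrix $v_0 v_0^T$ has a singular-value gap of $1$ between its top and second singular values, the top right singular vector $v$ of $V'$ satisfies $\sin^2\theta(v, v_0) \leq O(\|E\|_{op}^2) \leq O(\epsilon)$. Hence $\iprod{v, v_0}^2 \geq 1 - O(\epsilon)$, and choosing the sign of $v$ appropriately gives $\iprod{v, v_0} \geq 1 - O(\epsilon)$.

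This argument is essentially routine linear algebra; the only subtle point to watch is the $\epsilon$ versus $\sqrt\epsilon$ bookkeeping when passing between Frobenius-norm closeness of $V'$ to $v_0 v_0^T$ and operator-norm bounds on the perturbation, but because the final bound is stated in terms of $\iprod{v,v_0}^2$ all the square roots square back up and no loss occurs. The only alternative to Wedin is a self-contained calculation using the variational characterization of $\sigma_1$, but the perturbation-theory route is cleanest.
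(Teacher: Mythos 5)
Your proposal is correct, and up to the last step it gathers exactly the same facts as the paper: $\sigma_1(V') \geq v_0^T V' v_0 \geq 1 - O(\epsilon)$, $\|V'\|_F = \|v'\| = 1$, hence $\sum_{i\geq 2}\sigma_i(V')^2 \leq O(\epsilon)$. Where you diverge is the final step: you appeal to Wedin's $\sin\theta$ theorem, treating $V'$ as an $O(\sqrt\epsilon)$-operator-norm perturbation of the rank-one matrix $v_0 v_0^T$, whereas the paper avoids any blackbox perturbation theorem. The paper instead expands $v_0$ in the right singular basis $\{u_i\}$ of $V'$ as $v_0 = \sum_i c_i u_i$, notes that $\|V' v_0\|^2 = \sum_i c_i^2 \sigma_i^2 \geq (v_0^T V' v_0)^2 \geq 1 - O(\epsilon)$ by Cauchy--Schwarz, and then uses $\sigma_1 \leq 1$ and $\sigma_i^2 \leq O(\epsilon)$ ($i \geq 2$) to force $c_1^2 \geq 1 - O(\epsilon)$. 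Both routes are valid; the paper's is self-contained and arguably shorter, while yours offloads the final step to a standard theorem at the cost of having to check the gap hypothesis (which holds here since Weyl gives $\tilde\sigma_1 \geq 1-O(\sqrt\epsilon)$ and $\tilde\sigma_2 \leq O(\sqrt\epsilon)$, so the relevant gap is $\Omega(1)$). Your observation that there is no $\sqrt\epsilon$ loss in the end because the conclusion is in terms of $\iprod{v,v_0}^2$ is exactly the right bookkeeping point.
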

A similar fact to \pref{lem:nonlinear-unfolding-recursion} appears in
\cite{richard2014tensorpca}.

The proofs of \pref{lem:nonlinear-unfolding-general} and \pref{lem:nonlinear-unfolding-recursion} follow here.
The proof of \pref{lem:nonlinear-unfolding-noise} uses only standard concentration of measure arguments; we defer it to \pref{sec:concentration-bounds}.

\begin{proof}[Proof of \pref{lem:nonlinear-unfolding-general}]
  We expand $M$ as follows.
  \begin{align*}
    M & = \sum_i \tau^2 \cdot (v_0^{\tensor 3})_i \tensor (v_0^{\tensor 3})_i + \tau \cdot ((v_0^{\tensor 3})_i \tensor A_i + A_i \tensor (v_0^{\tensor 3})_i) + A_i \tensor A_i - \E A_i \tensor A_i\\
      & = \tau^2 \cdot (v_0 \tensor v_0)(v_0 \tensor v_0)^T + \tau \cdot v_0 v_0^T \tensor \sum_i v_0(i) A_i + \tau \cdot \sum_i v_0(i) A_i \tensor v_0 v_0^T + A_i \tensor A_i - \E A_i \tensor A_i\mper
  \end{align*}
  By assumption, the noise term is bounded in operator norm:
  we have $\| \sum_i A_i \tensor A_i - \E \sum_i A_i \tensor A_i \| \leq\epsilon \tau^2$.
  Similarly, by assumption the cross-term has $\| \tau \cdot v_0 v_0^T \tensor \sum_i v_0(i) A_i \| \leq \epsilon \tau^2$.
  \begin{displaymath}
    \tau \cdot \sum_i \Pu((v_0^{\tensor 3})_i \tensor A_i + A_i \tensor (v_0^{\tensor 3})_i)\Pu
      = \tau \cdot \sum_i v_0(i) \Pu (v_0 v_0^T \tensor A_i + A_i \tensor v_0 v_0^T)\Pu \mper
  \end{displaymath}
  All in all, by triangle inequality,
  \begin{displaymath}
      \left \| \tau \cdot v_0 v_0^T \tensor \sum_i v_0(i) A_i + \tau \cdot \sum_i v_0(i) A_i \tensor v_0 v_0^T + A_i \tensor A_i - \E A_i \tensor A_i \right \| \leq O(\epsilon \tau^2)\mper
  \end{displaymath}
  Again by triangle inequality,
  \begin{align*}
    \| M \| \geq (v_0 \tensor v_0)^T M (v_0 \tensor v_0) & = \tau^2 - O(\epsilon \tau^2)\mper
  \end{align*}
  Let $u, w$ be the top left and right singular vectors of $M$. We have
  \[
    u^T M w = \tau^2 \cdot \iprod{u,v_0 \tensor v_0} \iprod{w, v_0 \tensor v_0} + O(\epsilon \tau^2) \geq \tau^2 - O(\epsilon \tau^2)\mcom
  \]
  so rearranging gives the result.
\end{proof}

\begin{proof}[Proof of \pref{lem:nonlinear-unfolding-recursion}]
  Let $v_0, v', V', v$, be as in the lemma statement.
  We know $v$ is the maximizer of $\max_{\| w \|, \|w' \| = 1} w^T V' w'$.
  By assumption,
  \begin{displaymath}
    v_0^T V' v_0 = \iprod{v', v_0 \tensor v_0} \geq 1 - O(\epsilon).
  \end{displaymath}
  Thus, the top singular value of $V'$ is at least $1 - O(\epsilon)$,
  and since $\| v' \|$ is a unit vector,
  the Frobenius norm of $V'$ is $1$ and so all the rest of the singular values
  are $O(\epsilon)$.
  Expressing $v_0$ in the right singular basis of $V'$ and examining the norm
  of $V' v_0$ completes the proof.
\end{proof}

\begin{proof}[Proof of \pref{thm:nonlinear-unfolding}]
  The first claim, that \pref{alg:fast-recovery-opt} returns a good vector, follows from the previous three lemmas,
  \pref{lem:nonlinear-unfolding-general}, \pref{lem:nonlinear-unfolding-noise}, \pref{lem:nonlinear-unfolding-recursion}.
  The next, for \pref{alg:fast-certify-opt}, follows from noting that $\sum_i S_i \tensor S_i - \E \sum_i A_i \tensor A_i$ is a feasible solution to the spectral SoS dual.
  For the claimed runtime, since we are working with matrices of size $n^4$, it will be enough to show that the top singular vector of $M$ and the top singular value of $\sum_i S_i \tensor S_i - \E \sum_i A_i \tensor A_i$ can be recovered with $O(\poly \log (n))$ matrix-vector multiplies.

  In the first case, we start by observing that it is enough to find a vector $w$ which has $\iprod{w,v'} \geq 1 - \epsilon$, where $v'$ is a top singular vector of $M$.
  Let $\lambda_1, \lambda_2$ be the top two singular values of $M$.
  The analysis of the algorithm already showed that $\lambda_1 / \lambda_2 \geq \Omega(1/\epsilon)$.
  Standard analysis of the matrix power method now yields that $O(\log(1/\epsilon))$ iterations will suffice.

  We finally turn to the top singular value of $\sum_i S_i \tensor S_i - \E \sum_i A_i \tensor A_i$.
  Here the matrix may not have a spectral gap, but all we need to do is ensure that the top singular value is no more than $O(n^{3/2} \log(n)^{1/2})$.
  We may assume that some singular value is greater than $O(n^{3/2} \log (n)^{1/2})$.
  If all of them are, then a single matrix-vector multiply initialized with a random vector will discover this.
  Otherwise, there is a constant spectral gap,
  so a standard analysis of matrix power method says that within $O(\log n)$ iterations a singular value greater than $O(n^{3/2} \log(n)^{1/2})$ will be found, if it exists.
\end{proof}

\subsection{Nearly-Linear-Time Recovery via Tensor Unfolding and Spectral SoS}
On input $\bT = \tau \cdot v_0^{\tensor 3} + \bA$, where as usual $v_0 \in \R^n$ and $\bA$ has independent entries from $\cN(0,1)$, Montanari and Richard's Tensor Unfolding algorithm computes the top singular vector $u$ of the squarest-possible flattening of $T$ into a matrix.
It then extracts $v$ with $\iprod{v,v_0}^2 \geq 1 - o(1)$ from $u$ with a second singular vector computation.
\begin{center}
\fbox{\begin{minipage}{\textwidth}
\begin{center}\textbf{Recovery with $TT^T$, a.k.a. Tensor Unfolding}\end{center}
\noindent\textbf{Input:} $\bT = \tau \cdot v_0^{\otimes 3} + \bA$, where $v_0 \in \R^n$ and $\bA$ is a $3$-tensor. \\
\noindent\textbf{Goal:} Find $v \in \R^n$ with $\abs{\iprod{v,v_0}} \geq 1 - o(1)$.
\begin{algo}[Recovery]
\label{alg:fast-recover-subopt}
Compute the top eigenvector $v$ of $M \seteq T^T T$.
Output $v$.
\end{algo}
\end{minipage}}
\end{center}

We show that this algorithm successfully recovers a vector $v$ with $\iprod{v,v_0}^2 \geq 1 - O(\epsilon)$ when $\tau \geq n^{3/4}/\epsilon$.
Montanari and Richard conjectured this but were only able to show it when $\tau \geq n$.
We also show how to implement the algorithm in time $\tO(n^3)$, that is to say, in time nearly-linear in the input size.

Despite its a priori simplicity, the analysis of \pref{alg:fast-recover-subopt} is more subtle than for any of our other algorithms.
This would not be true for even-order tensors, for which the square matrix unfolding tensor has one singular value asymptotically larger than all the rest,
and indeed the corresponding singular vector is well-correlated with $v_0$.
However, in the case of odd-order tensors the unfolding has no spectral gap.
Instead, the signal $v_0$ has some second-order effect on the spectrum of the matrix unfolding, which is enough to recover it.

We first situate this algorithm in the SoS framework.
In the previous section we examined the feasible solution $\sum_i T_i \tensor T_i - \E \sum_i A_i \tensor A_i$ to the spectral SoS relaxation of $\max_{\|x\| = 1} \bT(x)$.
The tensor unfolding algorithm works by examining the top singular vector of the flattening $T$ of $\bT$,
which is the top eigenvector of the $n \times n$ matrix $M = T^TT$, which in turn has the same spectrum as the $n^2 \times n^2$ matrix $TT^T$.
The latter is also a feasible dual solution to the spectral SoS relaxation of $\max_{\|x\| = 1} \bT(x)$.
However, the bound it provides on $\max_{\|x\| = 1} \bT(x)$ is much worse than that given by $\sum_i T_i \tensor T_i$.
The latter, as we saw in the preceding section, gives the bound $O(n^{3/4} \log(n)^{1/4})$.
The former, by contrast, gives only $O(n)$, which is the operator norm of a random $n^2 \times n$ matrix (see \pref{lem:rectangular-concentration}).
This $n$ versus $n^{3/4}$ is the same as the gap between Montanari and Richard's conjectured bound and what they were able to prove.

\begin{theorem}
\label{thm:mr-success-scaled}
  For an instance of \pref{prob:spiked-tensor} with $\tau \geq n^{3/4}/\epsilon$, with high probability \pref{alg:fast-recover-subopt} recovers a vector $v$ with $\iprod{v,v_0}^2 \geq 1 - O(\epsilon)$.
  Furthermore, \pref{alg:fast-recover-subopt} can be implemented in time $\tO(n^3)$.
\end{theorem}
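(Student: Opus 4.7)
The plan is to reduce the analysis to a rank-one perturbation argument on the $n \times n$ matrix $M = T^\top T$, whose top eigenvector Algorithm 5.5 extracts. Let $A \in \R^{n^2 \times n}$ be the matrix unfolding of $\bA$ (so every entry of $A$ is an iid standard Gaussian), and set $u \defeq v_0 \tensor v_0 \in \R^{n^2}$, which is a unit vector. Then $T = \tau \, u v_0^\top + A$, and expanding gives
\[
  M \;=\; \tau^2\, v_0 v_0^\top \;+\; \tau\bigl(v_0 b^\top + b v_0^\top\bigr) \;+\; A^\top A,
  \qquad b \;\defeq\; A^\top u \in \R^n.
\]
Because $\E[A^\top A] = n^2 \cdot I_n$ and scalar shifts preserve eigenvectors, the real object of interest is
\[
  N \;\defeq\; M - n^2\, I \;=\; \tau^2\, v_0 v_0^\top \;+\; R,
  \qquad R \;\defeq\; \tau\,(v_0 b^\top + b v_0^\top) + \bigl(A^\top A - n^2\, I\bigr).
\]

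The main technical step is the concentration bound $\|R\| \leq \tilde O(n^{3/2})$. The cross-term piece is easy: $b$ is a standard Gaussian vector in $\R^n$ (since $u$ is unit), so $\|b\| \leq \tilde O(\sqrt n)$ and hence $\|\tau(v_0 b^\top + b v_0^\top)\| \leq \tilde O(\tau \sqrt n) = \tilde O(n^{5/4}/\epsilon)$. The nontrivial piece is the Wishart tail bound $\|A^\top A - n^2\, I\| \leq \tilde O(n^{3/2})$, obtainable from the sharp singular-value concentration for rectangular Gaussian matrices (Davidson--Szarek), which places the singular values of $A$ in $[n - \tilde O(\sqrt n),\, n + \tilde O(\sqrt n)]$ with high probability. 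This $n^{3/2}$ rate (rather than the naive $\|A\|^2 = O(n^2)$) is precisely the improvement over the weaker $\tau \geq n$ analysis: under $\tau \geq n^{3/4}/\epsilon$ we have $\tau^2 \geq n^{3/2}/\epsilon^2$, which dominates $\|R\|$ by a factor $1/\epsilon^2$. The Davis--Kahan $\sin\theta$ theorem applied to $N = \tau^2 v_0 v_0^\top + R$ then gives $\iprod{v,v_0}^2 \geq 1 - O(\|R\|^2/\tau^4) \geq 1 - O(\epsilon)$ for the top eigenvector $v$ of $M$. I expect this concentration-plus-perturbation bundle to be the main analytical obstacle, because it is exactly here that the Montanari--Richard $\tau \geq n$ threshold breaks and the conjectured $n^{3/4}$ rate becomes reachable.

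For the nearly-linear running time, the plan is never to form $M$ explicitly. Each power-method iteration applies $w \mapsto T^\top(Tw)$, which costs $O(n^3)$ because $T$ has exactly $n^3$ entries. The subtlety is that $M$ itself has a poor multiplicative spectral gap: its top eigenvalue is $\approx n^2 + \tau^2$ while the next is $\approx n^2$, giving ratio $1 + O(1/(\sqrt n\, \epsilon^2))$, far too close to $1$ for fast convergence. The fix is shifted power iteration on $M - s I$ with $s \approx n^2$: after the shift the top eigenvalue is $\approx \tau^2$, while by the Wishart bound above all remaining eigenvalues are bounded in absolute value by $\tilde O(n^{3/2})$, yielding multiplicative gap $\Omega(1/\epsilon^2)$ and hence convergence within $O(\log(n)/\log(1/\epsilon^2))$ iterations. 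Multiplying by the per-iteration cost $O(n^3)$ gives the claimed $\tilde O(n^3)$ total running time.
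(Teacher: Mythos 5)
Your proposal is correct and takes essentially the same route as the paper: the same decomposition $M = T^\top T = \tau^2 v_0 v_0^\top + \tau\bigl(v_0 b^\top + b v_0^\top\bigr) + A^\top A$, the same Wishart-type concentration $\lVert A^\top A - n^2 I\rVert \leq \tilde O(n^{3/2})$ (the paper cites a covariance-concentration lemma from Vershynin, you cite Davidson--Szarek; same phenomenon, same rate), and the same shift by $n^2 \cdot I$ to create a constant spectral gap for power iteration. The only cosmetic difference is that the paper closes the perturbation step with a direct Rayleigh-quotient comparison ($u^\top M u \geq v_0^\top M v_0$ for the top eigenvector $u$) rather than invoking Davis--Kahan; these are interchangeable here, with Davis--Kahan in fact giving the slightly sharper $1 - O(\lVert R\rVert^2/\tau^4)$ bound while the paper's argument gives $1 - O(\lVert R\rVert/\tau^2)$.
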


\begin{lemma}
\label{lem:mr-success}
  Let $\bT = \tau \cdot v_0^{\tensor 3} + \bA$ where $v_0 \in \R^n$ is a unit vector,
  so an instance of \pref{prob:spiked-tensor}.
  Suppose $\bA$ satisfies $A^TA = C \cdot \Id_{n \times n} + E$ for some $C \geq 0$ and $E$ with $\| E \| \leq \epsilon \tau^2$ and that $\|A^T(v_0 \tensor v_0)\| \leq \epsilon \tau$.
  Let $u$ be the top left singular vector of the matrix $T$.
  Then $\iprod{v_0,u}^2 \geq 1 - O(\epsilon)$.
\end{lemma}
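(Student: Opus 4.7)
}
The plan is to analyze the matrix $T^{T}T$ directly --- this is the $n\times n$ matrix whose top eigenvector is precisely the singular vector of $T$ that the algorithm extracts (so $u$ here is interpreted accordingly). Writing $T = \tau\,(v_{0}\tensor v_{0})v_{0}^{T} + A$ and expanding the product, one obtains the decomposition
\[
 T^{T}T \;=\; \tau^{2}\,v_{0}v_{0}^{T} \;+\; \tau\,v_{0}w^{T} \;+\; \tau\,w v_{0}^{T} \;+\; A^{T}A,
\]
where $w \defeq A^{T}(v_{0}\tensor v_{0}) \in \R^{n}$. The signal piece $\tau^{2}v_{0}v_{0}^{T}$ is exactly the rank-one object whose top eigenvector is $v_{0}$; the two remaining structural pieces are the cross terms and the noise block.

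Each of the two hypotheses of the lemma is tailor-made to handle one of these remaining summands. The bound $\|w\|\leq \epsilon\tau$ lets one control the cross terms: $\|\tau\,v_{0}w^{T} + \tau\,w v_{0}^{T}\| \leq 2\tau\|v_{0}\|\|w\| \leq 2\epsilon\tau^{2}$. The bound $A^{T}A = C\cdot\Id + E$ with $\|E\|\leq \epsilon\tau^{2}$ lets one replace the noise block with a scalar multiple of the identity up to an $O(\epsilon\tau^{2})$ operator-norm error. Since subtracting $C\cdot\Id$ from a symmetric matrix does not change its eigenvectors, the top eigenvector $u$ of $T^{T}T$ coincides with the top eigenvector of
\[
 M' \;\defeq\; T^{T}T - C\cdot\Id \;=\; \tau^{2}\,v_{0}v_{0}^{T} + \Delta, \qquad \|\Delta\| \leq 3\epsilon\tau^{2}.
\]

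To finish, I would extract the correlation of $u$ with $v_{0}$ from this rank-one-plus-small-perturbation structure via a two-sided Rayleigh-quotient comparison. The lower bound $u^{T}M'u \geq v_{0}^{T}M'v_{0} \geq \tau^{2} - \|\Delta\|$ comes from $u$ being the top eigenvector, while the decomposition of $M'$ gives the upper bound $u^{T}M'u \leq \tau^{2}\iprod{u,v_{0}}^{2} + \|\Delta\|$. Subtracting yields $\tau^{2}(1 - \iprod{u,v_{0}}^{2}) \leq 2\|\Delta\| = O(\epsilon\tau^{2})$, which rearranges to $\iprod{v_{0},u}^{2} \geq 1 - O(\epsilon)$ as required.

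I do not expect any substantive obstacle: once the algebraic decomposition of $T^{T}T$ is laid out, the argument is a textbook rank-one perturbation calculation. The only thing to be careful about is matching the unfolding convention so that the dimensions in the decomposition agree with those in the two hypotheses (in particular so that $w = A^{T}(v_{0}\tensor v_{0})$ lies in $\R^{n}$ and $A^{T}A$ is the $n\times n$ matrix appearing in the hypothesis). All of the genuinely probabilistic work --- verifying that the structural hypotheses on $A$ actually hold with high probability for a Gaussian noise tensor, with $C = \Theta(n^{2})$ and $\epsilon$ as small as the statement of \pref{thm:mr-success-scaled} requires --- is a separate concentration argument and is not needed for \pref{lem:mr-success} itself.
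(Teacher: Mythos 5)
Your proposal is correct and matches the paper's own argument essentially line for line: expand $T^{T}T$, subtract $C\cdot\Id$ (which does not change the top eigenvector), bound the cross terms via $\|A^{T}(v_{0}\tensor v_{0})\|\le\epsilon\tau$ and the noise via $\|E\|\le\epsilon\tau^{2}$, and finish with the two-sided Rayleigh-quotient comparison between $u$ and $v_{0}$. You were also right to flag the unfolding convention: the lemma statement says ``left singular vector'' while the hypothesis $A^{T}A=C\cdot\Id_{n\times n}+E$ forces $A$ (hence $T$) to be $n^{2}\times n$, so one really wants the \emph{right} singular vector of $T$, i.e., the top eigenvector of the $n\times n$ matrix $T^{T}T$ — a minor inconsistency in the paper that your reading correctly resolves.
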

\begin{proof}
  The vector $u$ is the top eigenvector of the $n \times n$ matrix $T T^T$, which is also the top eigenvector of $M \seteq T T^T -  C \cdot \Id$.
  We expand:
  \begin{align*}
    u^T M u & = u^T \Brac{ \tau^2 \cdot v_0v_0^T + \tau \cdot v_0(v_0 \tensor v_0)^T A
          + \tau\cdot A^T (v_0 \tensor v_0)v_0^T + E}u\\
      & = \tau^2 \cdot \iprod{u,v_0}^2 + u^T \Brac{\tau \cdot v_0(v_0 \tensor v_0)^T A
          + \tau\cdot A^T (v_0 \tensor v_0)v_0^T + E}u\\
          & \leq \tau^2 \iprod{u,v_0}^2 + O(\epsilon \tau^2)\mper
  \end{align*}
  Again by triangle inequality, $u^T M u \geq v_0^T M v = \tau^2 - O(\epsilon \tau^2)$.
  So rearranging we get $\iprod{u,v_0}^2 \geq 1 - O(\epsilon)$ as desired.
\end{proof}

The following lemma is a consequence of standard matrix concentration inequalities;
we defer its proof to \pref{sec:concentration-bounds}, \pref{lem:mr-concentration-general}.
\begin{lemma}
\label{lem:mr-concentration}
  Let $\bA$ have independent entries from $\cN(0,1)$.
  Let $v_0 \in \R^n$ be a unit vector.
  With high probability, the matrix $A$ satisfies $A^T A = n^2 \cdot \Id + E$ for some $E$ with $\| E \| \leq O(n^{3/2})$ and $\| A^T (v_0 \tensor v_0) \| \leq O(\sqrt{ n \log n})$.
\end{lemma}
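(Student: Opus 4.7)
The plan is to treat the two claims separately, since both reduce to standard Gaussian matrix concentration once the right random object is identified. Throughout, I will view $A$ as an $n^2 \times n$ matrix whose entries are iid $\cN(0,1)$ (this is the convention under which $\E A^T A = n^2 \cdot \Id_n$, matching the statement).

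For the bound $\|A^T A - n^2 \cdot \Id\| \leq O(n^{3/2})$, I would write $A^T A = \sum_{i=1}^{n^2} a_i a_i^T$, where $a_1,\ldots,a_{n^2} \in \R^n$ are the rows of $A$, and hence independent standard Gaussian vectors in $\R^n$. Then $E = \sum_i (a_i a_i^T - \Id_n)$ is a sum of independent mean-zero symmetric random matrices. The cleanest route is to invoke the Davidson--Szarek-style singular value concentration for tall Gaussian matrices: the singular values of $A$ lie in $[n - O(\sqrt n \cdot \sqrt{\log n}),\, n + O(\sqrt n \cdot \sqrt{\log n})]$ with probability $1 - n^{-10}$, which upon squaring gives $\|A^T A - n^2 \cdot \Id\| \leq O(n^{3/2} \sqrt{\log n})$. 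An equivalent route uses a truncated matrix Bernstein inequality applied directly to $\sum_i (a_ia_i^T - \Id_n)$; the variance parameter is $O(n^3)$ and the effective bound on each summand is $\tilde O(n)$, again yielding the desired $O(n^{3/2})$ operator norm bound up to logarithmic factors. (A slightly more careful application of the Hanson--Wright inequality or the standard Gaussian net argument removes the $\sqrt{\log n}$ factor if one insists on the exact form stated.)

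For the bound $\|A^T (v_0 \tensor v_0)\| \leq O(\sqrt{n \log n})$, set $w = v_0 \tensor v_0 \in \R^{n^2}$, so that $w$ is a fixed unit vector independent of $A$. The key observation is that the coordinates of $A^T w$ are
\[
  (A^T w)_j \;=\; \sum_{i=1}^{n^2} A_{i,j} \, w_i, \qquad j = 1,\ldots,n,
\]
and since distinct columns of $A$ are independent and $\|w\| = 1$, the vector $A^T w$ is distributed as $\cN(0, \Id_n)$. Hence $\|A^T w\|^2$ is a chi-square random variable with $n$ degrees of freedom, and the standard tail bound $\Pr[\chi^2_n \geq n + 2\sqrt{nt} + 2t] \leq e^{-t}$ with $t = 10 \log n$ gives $\|A^T w\| \leq O(\sqrt{n \log n})$ with probability $1 - O(n^{-10})$.

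The main obstacle, if any, is the bookkeeping in the first part: getting the operator norm bound on $A^T A - n^2 \Id$ to be exactly $O(n^{3/2})$ (rather than $O(n^{3/2} \sqrt{\log n})$) requires either invoking a sharp form of Gaussian singular value concentration or appealing to the deferred general concentration statement already used elsewhere in \pref{sec:concentration-bounds} (for instance, as an instance of \pref{lem:rectangular-concentration}). The second part is essentially immediate from the rotational invariance of the Gaussian measure. Once both tail estimates are in hand, a union bound yields the lemma with the claimed high probability guarantee.
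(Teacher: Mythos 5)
Your proof is correct, and the second half (bounding $\|A^T(v_0 \tensor v_0)\|$ via rotational invariance and a $\chi^2_n$ tail bound) is essentially the same argument the paper gives. For the first half the paper takes a slightly different route: it writes $A^T A = \sum_{\alpha} a_\alpha a_\alpha^T$ over the $n^2$ rows of $A$ and applies \pref{lem:covariance-concentration} (Vershynin's subgaussian empirical-covariance concentration, Cor.\ 5.50) with $N = n^2$, $m = n$, $\delta = O(1/\sqrt{n})$, which gives $\|A^TA - n^2\Id\| \leq O(n^{3/2})$ directly with no logarithmic loss. Your singular-value route also works, but your stated form is a bit more pessimistic than necessary: the sharp Davidson--Szarek/Gordon bound places the singular values of an $n^2 \times n$ Gaussian matrix in $[n - (\sqrt{n} + t),\, n + (\sqrt{n} + t)]$ with failure probability $2e^{-t^2/2}$; taking $t = \Theta(\sqrt{\log n})$, the $\sqrt{n}$ term already dominates $t$, so the deviation is $O(\sqrt n)$ (not $O(\sqrt{n}\sqrt{\log n})$), and squaring gives $O(n^{3/2})$ cleanly. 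So the worry about the $\sqrt{\log n}$ factor evaporates once you use the additive (rather than multiplicative) form of the singular-value concentration. One small correction: the pointer to \pref{lem:rectangular-concentration} as a way to recover the clean bound is misplaced — that lemma only controls $\|A\| \leq O(n)$, i.e.\ the top singular value, and says nothing about the fluctuation of $A^TA$ around its mean; the relevant deferred statement in the paper is \pref{lem:covariance-concentration}.
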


The final component of a proof of \pref{thm:mr-success-scaled} is to show how it can be implemented in time $\tO(n^3)$.
Since $M$ factors as $T^T T$, a matrix-vector multiply by $M$ can be implemented in time $O(n^3)$.
Unfortunately, $M$ does not have an adequate eigenvalue gap to make matrix power method efficient.
As we know from \pref{lem:mr-concentration}, suppressing $\epsilon$s and constants, $M$ has eigenvalues in the range $n^2 \pm n^{3/2}$.
Thus, the eigenvalue gap of $M$ is at most $g = O(1 + 1/\sqrt n)$.
For any number $k$ of matrix-vector multiplies with $k \leq n^{1/2 - \delta}$, the eigenvalue gap will become at most $(1 + 1/\sqrt n)^{n^{1/2} - \delta}$, which is subconstant.
To get around this problem, we employ a standard trick to improve spectral gaps of matrices close to $C \cdot \Id$: remove $C \cdot \Id$.

\begin{lemma}
\label{lem:mr-runtime}
  Under the assumptions of \pref{thm:mr-success-scaled}, \pref{alg:fast-recover-subopt} can be implemented in time $\tO(n^3)$ (which is linear in the input size, $n^3$).
\end{lemma}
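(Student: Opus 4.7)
The plan is to apply shifted matrix power iteration to $M$. As the text preceding the lemma observes, naive power iteration on $M$ itself fails: by \pref{lem:mr-concentration}, the spectrum of $M$ is concentrated around $n^2$ with deviations of only $O(n^{3/2})$, so its multiplicative spectral gap is at most $1 + O(1/\sqrt n)$ and $\Omega(\sqrt n)$ iterations would be needed, blowing the budget. To create a genuine spectral gap, I would run power iteration on the shifted matrix $M' \defeq M - n^2 \cdot \Id$.

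The spectral analysis of $M'$ is immediate from the proof of \pref{lem:mr-success}. Expanding $M = \tau^2 \cdot v_0 v_0^T + \tau \cdot v_0 (v_0 \tensor v_0)^T A + \tau \cdot A^T (v_0 \tensor v_0) v_0^T + A^T A$ and combining the bounds $\|A^T A - n^2 \Id\| \leq O(n^{3/2})$ and $\|A^T(v_0 \tensor v_0)\| \leq O(\sqrt{n \log n})$ from \pref{lem:mr-concentration}, we obtain $\|M' - \tau^2 \cdot v_0 v_0^T\| \leq O(n^{3/2})$. Hence $M'$ has a single distinguished top eigenvalue of magnitude $\Theta(\tau^2)$ whose eigenvector $u$ satisfies $\iprod{u,v_0}^2 \geq 1 - O(\epsilon)$ (by \pref{lem:mr-success}), while every other eigenvalue lies in $[-O(n^{3/2}), O(n^{3/2})]$. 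Since $\tau^2 \geq n^{3/2}/\epsilon^2$, the multiplicative spectral ratio of $M'$ is $\Omega(1/\epsilon^2)$, which is a constant (or larger) gap.

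Each matrix-vector multiplication $v \mapsto M' v$ decomposes as $T^T(T v) - n^2 \cdot v$ and costs $O(n^3)$ time, dominated by the two multiplications by $T$ and $T^T$, each of which touches all $n^3$ entries of the input tensor once. With a $\Omega(1/\epsilon^2)$ multiplicative gap, $O(\log n / \log(1/\epsilon))$ iterations of power iteration starting from a random unit vector produce, with high probability, a vector $\tilde u$ with $\iprod{\tilde u, u}^2 \geq 1 - 1/\poly(n)$. The main obstacle to check is that an approximate rather than exact eigenvector still suffices for the guarantee of \pref{thm:mr-success-scaled}; but combining $\iprod{\tilde u, u}^2 \geq 1 - 1/\poly(n)$ with $\iprod{u, v_0}^2 \geq 1 - O(\epsilon)$ via the triangle inequality for angles yields $\iprod{\tilde u, v_0}^2 \geq 1 - O(\epsilon)$, so this loses nothing. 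The total running time is $O(n^3 \log n \cdot \log(1/\epsilon)) = \tilde O(n^3)$, which is linear in the input size up to polylogarithmic factors.
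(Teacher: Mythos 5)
Your proof is correct and follows the same approach as the paper: shift by $n^2 \cdot \Id$ to expose a genuine spectral gap in $M - n^2 \cdot \Id$, observe that each matrix-vector multiply factors through $T$ and $T^T$ and so costs $O(n^3)$, and conclude via standard power-iteration convergence. The paper states the gap as $\Omega(1/\epsilon)$ while you use the slightly tighter $\Omega(1/\epsilon^2)$, and you spell out the convergence and approximate-eigenvector details more explicitly, but the core argument is identical.
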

\begin{proof}
  Note that the top eigenvector of $M$ is the same as that of $M - n^2 \cdot \Id$. The latter matrix, by the same analysis as in \pref{lem:mr-success}, is given by
  \[
    M - n^2 \cdot \Id = \tau^2 \cdot v_0 v_0^T + M'
  \]
  where $\|M'\| = O(\epsilon \tau^2)$.
  Note also that a matrix-vector multiply by $M - n^2 \cdot \Id$ can still be done in time $O(n^3)$.
  Thus, $M - n^2 \cdot \Id$ has eigenvalue gap $\Omega(1/\epsilon)$, which is enough so that the whole algorithm runs in time $\tO(n^3)$.
\end{proof}

\begin{proof}[Proof of \pref{thm:mr-success-scaled}]
  Immediate from \pref{lem:mr-success}, \pref{lem:mr-concentration}, and \pref{lem:mr-runtime}.
\end{proof}

\subsection{Fast Recovery in the Semi-Random Model}
There is a qualitative difference between the aggregate matrix statistics needed by our certifying algorithms (\pref{alg:sdp-recovery}, \pref{alg:sdp-certify}, \pref{alg:fast-recovery-opt}, \pref{alg:fast-certify-opt}) and those needed by rounding the tensor unfolding solution spectral SoS \pref{alg:fast-recover-subopt}. In a precise sense, the needs of the latter are greater.
The former algorithms rely only on first-order behavior of the spectra of a tensor unfolding, while the latter relies on second-order spectral behavior.
Since it uses second-order properties of the randomness, \pref{alg:fast-recover-subopt} fails in the semi-random model.

\begin{theorem}
  Let $\bT = \tau \cdot v_0^{\tensor 3} + \bA$, where $v_0 \in \R^n$ is a unit
  vector and $\bA$ has independent entries from $\cN(0,1)$.
  There is $\tau = \Omega(n^{7/8})$ so that with high probability there is an
  adversarial choice of $Q$ with $\|Q - \Id\| \leq O(n^{-1/4})$ so that the
  matrix $(TQ)^T TQ = n^2 \cdot \Id$.
  In particular, for such $\tau$, \pref{alg:fast-recover-subopt} cannot recover
  the signal $v_0$.
\end{theorem}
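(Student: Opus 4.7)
The plan is to exhibit $Q$ explicitly as a symmetric ``whitening'' matrix that renders $(TQ)^T(TQ)$ perfectly isotropic. The key identity: if we set $Q \seteq n \cdot (T^T T)^{-1/2}$ (the symmetric square root), then $(TQ)^T(TQ) = Q\, T^T T\, Q = n^2 \cdot \Id$ automatically. So every unit vector becomes a top eigenvector of $M \seteq (TQ)^T(TQ)$, and an adversary can choose one orthogonal to $v_0$. What remains is to check that this $Q$ actually satisfies $\|Q - \Id\| \leq O(n^{-1/4})$ at the claimed signal-to-noise ratio, and that $T^T T$ is invertible so that $Q$ is well-defined.

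To carry this out, first decompose $T = A + \tau \cdot (v_0 \tensor v_0) v_0^T$ and expand
\[
T^T T \;=\; A^T A \;+\; \tau\, A^T(v_0 \tensor v_0)\, v_0^T \;+\; \tau\, v_0(v_0 \tensor v_0)^T A \;+\; \tau^2 \cdot v_0 v_0^T.
\]
By \pref{lem:mr-concentration}, with high probability $A^T A = n^2 \cdot \Id + E$ with $\|E\| \leq O(n^{3/2})$ and $\|A^T(v_0 \tensor v_0)\| \leq O(\sqrt{n \log n})$. Writing $T^T T = n^2 \cdot \Id + F$, the triangle inequality gives
\[
\|F\| \;\leq\; \|E\| + 2\tau \cdot \|A^T(v_0 \tensor v_0)\| + \tau^2 \;\leq\; O\bigl(n^{3/2} + \tau \sqrt{n \log n} + \tau^2\bigr).
\]
For $\tau = \Theta(n^{7/8})$, the dominant term is $\tau^2 = \Theta(n^{7/4})$, so $\|F/n^2\| \leq O(n^{-1/4})$. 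In particular $T^T T \succeq (n^2 - \|F\|)\cdot\Id \succ 0$, so the symmetric square root exists.

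Now $Q = n \cdot (T^T T)^{-1/2} = (\Id + F/n^2)^{-1/2}$. A standard perturbation bound for matrix functions on the positive cone gives $\|(\Id + X)^{-1/2} - \Id\| \leq O(\|X\|)$ whenever $\|X\| \leq 1/2$; applied to $X = F/n^2$ this yields $\|Q - \Id\| \leq O(\|F\|/n^2) \leq O(n^{-1/4})$, as required. By construction $(TQ)^T(TQ) = n^2 \cdot \Id$, and since this matrix has no preferred eigendirection whatsoever, the top eigenvector returned by \pref{alg:fast-recover-subopt} can be chosen (by the adversary's choice of any orthonormal basis for the top eigenspace) to have $\iprod{v,v_0} = 0$.

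The main ``obstacle'' is really just verifying that the cross-term and noise-term contributions to $\|F\|$ stay below the $\tau^2$ term at $\tau = n^{7/8}$ — this is the quantitative reason we cannot push the attack to arbitrarily large $\tau$. Everything else is formal: the construction $Q = n(T^T T)^{-1/2}$ is forced by the requirement $Q T^T T Q = n^2 \cdot \Id$, and it is precisely the second-order spectral structure of $T^T T$ (namely the deviation $F$ from $n^2 \cdot \Id$) that the unfolding algorithm relies on but that a small adversarial right-multiplication can completely erase.
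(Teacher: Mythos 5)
Your proof is correct and follows essentially the same route as the paper's: both define $Q = n \cdot (T^T T)^{-1/2}$, expand $T^T T$ via \pref{lem:mr-concentration}, observe that at $\tau = \Theta(n^{7/8})$ the deviation from $n^2 \cdot \Id$ has operator norm $O(n^{7/4})$ so that $T^T T$ is invertible and close to $n^2 \cdot \Id$, and then conclude $\|Q - \Id\| \leq O(n^{-1/4})$. The only cosmetic difference is that you invoke a Lipschitz-type perturbation bound for $x \mapsto (1+x)^{-1/2}$ while the paper tracks the eigenvalue intervals directly — these are the same calculation.
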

\begin{proof}
  Let $M$ be the $n \times n$ matrix $M \seteq T^T T$.
  Let $Q = n \cdot M^{-1/2}$.
  It is clear that $(TQ)^T TQ = n^2 \Id$.
  It suffices to show that $\|Q - \Id\| \leq n^{1/4}$ with high probability.
  We expand the matrix $M$ as
  \[
    M = \tau^2 \cdot v_0 v_0^T + \tau \cdot v_0(v_0 \tensor v_0)^T A + \tau \cdot A^T (v_0 \tensor v_0) v_0^T + A^T A\mper
  \]
  By \pref{lem:mr-concentration}, $A^T A = n^2 \cdot \Id + E$ for some $E$ with $\|E\| \leq O(n^{3/2})$
  and $\|A^T (v_0 \tensor v_0)\| \leq O(\sqrt{n \log n})$, both with high probability.
  Thus, the eigenvalues of $M$ all lie in the range $n^2 \pm n^{1 + 3/4}$.
  The eigenvalues of $Q$ in turn lie in the range
  \[
    \frac n {(n^2 \pm O(n^{1 + 3/4}))^{1/2}} = \frac 1 {(1 \pm O(n^{-1/4}))^{1/2}} = \frac 1 {1 \pm O(n^{1/4})}\mper
  \]
  Finally, the eigenvalues of $Q - \Id$ lie in the range $\frac 1 {1 \pm O(n^{1/4})} - 1 = \pm O(n^{-1/4})$, so we are done.
\end{proof}

The argument that that \pref{alg:fast-recovery-opt} and \pref{alg:fast-certify-opt} still succeed in the semi-random model is routine; for completeness we discuss here the necessary changes to the proof of \pref{thm:nonlinear-unfolding}.
The non-probabilistic certification claims made in \pref{thm:nonlinear-unfolding} are independent of the input model, so we show that \pref{alg:fast-recovery-opt} still finds the signal with high probability and that \pref{alg:fast-certify-opt} still fails only with only a small probability.

\begin{theorem}
  \label{thm:fast-matrix-semirandom-success}
  In the semi-random model, $\epsilon \geq n^{-1/4}$ and $\tau \geq n^{3/4} \log(n)^{1/4} / \epsilon$, with high probability, \pref{alg:fast-recovery-opt} returns $v$ with $\iprod{v,v_0}^2 \geq 1 - O(\epsilon)$ and \pref{alg:fast-certify-opt} outputs \textup{\scshape certify}.
\end{theorem}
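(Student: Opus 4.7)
The theorem has two probabilistic parts: (a) that \pref{alg:fast-recovery-opt} recovers a vector close to $v_0$, and (b) that \pref{alg:fast-certify-opt} outputs \textsc{certify}. The worst-case guarantee that outputting \textsc{certify} implies a valid upper bound on $\bT'(x)$ is already supplied by \pref{thm:nonlinear-unfolding} and does not depend on the input distribution. My plan is to re-run the proofs of \pref{thm:nonlinear-unfolding} and \pref{lem:nonlinear-unfolding-general} while carefully tracking the adversarial rotation $Q$. The central observation is that if $Q$ acts on the last mode of $\bT$, then the first-mode slices of the adversarial input satisfy $T'_i = T_i Q$, so the spectral-SoS matrix factors as $M' \defeq \sum_i T'_i \tensor T'_i = M \cdot (Q \tensor Q)$, where $M = \sum_i T_i \tensor T_i$.

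Letting $E \defeq \E \sum_i A_i \tensor A_i = n \cdot \Id$, I decompose
\begin{align*}
M' - E \;=\; (M - E)(Q \tensor Q) + E(Q \tensor Q - \Id)\mper
\end{align*}
By \pref{lem:nonlinear-unfolding-noise}, the first summand equals $[\tau^2(v_0 \tensor v_0)(v_0 \tensor v_0)^T + N](Q \tensor Q)$ with $\|N\| \leq O(\epsilon \tau^2)$ with high probability, while $\|E(Q \tensor Q - \Id)\| \leq n \cdot O(n^{-1/4}) = O(n^{3/4})$ is absorbed into $O(\epsilon \tau^2)$ for $\tau \geq n^{3/4}\log(n)^{1/4}/\epsilon$ and $\epsilon \geq n^{-1/4}$. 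Hence $M' - E = \tau^2 (v_0 \tensor v_0)(w \tensor w)^T + N'$, where $w \defeq Q^T v_0$ satisfies $\|w - v_0\| \leq O(n^{-1/4}) = O(\epsilon)$ and $\|N'\| \leq O(\epsilon \tau^2)$. Repeating the Rayleigh-quotient argument from the proof of \pref{lem:nonlinear-unfolding-general} (using $(w \tensor w)/\|w\|^2$ as the right-hand test vector in place of $v_0 \tensor v_0$) then shows that the top left singular vector $v'$ of $M' - E$ satisfies $\iprod{v', v_0 \tensor v_0}^2 \geq 1 - O(\epsilon)$, and \pref{lem:nonlinear-unfolding-recursion} converts this to $\iprod{v, v_0}^2 \geq 1 - O(\epsilon)$, establishing (a).

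For (b), the same template controls $\sum_i S_i \tensor S_i - E$ with $\bS$ defined as in \pref{alg:fast-certify-opt}. The only additional contribution beyond the non-adversarial analysis is the residual rank-one signal $\tau \cdot v_0 \tensor v_0 \tensor (w - v_0)$ that survives because $\tau v_0^{\tensor 3} Q \ne \tau v_0^{\tensor 3}$; this residual contributes operator-norm error at most $\tau^2 \|w - v_0\|^2 + O(\tau \|Q - \Id\|) = O(\tau^2 n^{-1/2})$, which remains below the certification threshold $O(n^{3/2} \log(n)^{1/2})$ in the stated range of $\tau$, and combining with the high-probability bound from \pref{lem:nonlinear-unfolding-noise} completes (b).

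The main obstacle is pure bookkeeping: each $Q$-dependent cross-term must be shown to be controlled by $\|Q - \Id\| \leq O(n^{-1/4})$ without being amplified enough to overwhelm either the $\tau^2$-scale leading signal or the certification threshold. All such estimates reduce to the submultiplicative bounds $\|Q\|, \|Q \tensor Q\| \leq 1 + O(n^{-1/4})$ combined with \pref{lem:nonlinear-unfolding-noise}, so no new technical ingredients beyond those already developed in the proof of \pref{thm:nonlinear-unfolding} are required.
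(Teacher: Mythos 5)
Your approach mirrors the paper's---both rerun the proof of \pref{thm:nonlinear-unfolding} while tracking the adversarial map $Q$---and your algebraic split $M' - E = (M-E)(Q\tensor Q) + E(Q\tensor Q - \Id)$ is a clean, explicit version of what the paper compresses into ``the probabilistic bounds carry over.'' The gap lies in your identification of $E$. You write $E = \E\sum_i A_i\tensor A_i = n\cdot\Id$. In fact $\E A_i\tensor A_i$ is the \emph{rank-one} matrix $vv^T$ where $v$ is the vector flattening of $\Id_{n\times n}$ (this is precisely what makes $\tfrac1n\E\sum_i A_i\tensor A_i$ a matrix representation of $\|x\|^4$, as recorded in the proof of \pref{thm:concentration-3}); hence $E = n\,vv^T$, with $\|E\| = n\|v\|^2 = n^2$, not $n$. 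You may be conflating $E$ with the centering $\E A^TA = n^2\Id$ from the analysis of \pref{alg:fast-recover-subopt}.

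The difference is not cosmetic. Since $E = n\,vv^T$ is rank one,
\[
\|E(Q\tensor Q - \Id)\| \;=\; n\|v\|\cdot\big\|(Q^T\tensor Q^T - \Id)v\big\| \;=\; n^{3/2}\,\|Q^TQ - \Id\|_F\mcom
\]
because $(Q^T\tensor Q^T)v$ is the flattening of $Q^TQ$. For the legal adversarial choice $Q = (1 - n^{-1/4})\Id$ one has $\|Q^TQ - \Id\|_F = \Theta(n^{1/4})$, so $\|E(Q\tensor Q - \Id)\| = \Theta(n^{7/4})$, a factor of $n$ larger than the $O(n^{3/4})$ you claim. This exceeds $\epsilon\tau^2 = n^{3/2}\log^{1/2}(n)/\epsilon$ unless $\epsilon \leq O(n^{-1/4}\log^{1/2}n)$, so the absorption step you invoke fails over most of the stated range $\epsilon \geq n^{-1/4}$. (The paper's own one-line proof does not visibly account for this centering shift either, so the issue is at least latent there as well; but your rendering makes it concrete by substituting the wrong value of $E$.) Any repair would need a structural argument---that the rank-one term $n\,vv^T(Q\tensor Q - \Id)$, despite its large operator norm, is oriented so that it does not pull the top singular vector of $M'$ away from $v_0\tensor v_0$---rather than the crude operator-norm bound your proposal relies on.
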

\begin{proof}
  We discuss the necessary modifications to the proof of \pref{thm:nonlinear-unfolding}.
  Since $\epsilon \geq n^{-1/4}$, we have that $\|(Q - \Id)v_0\| \leq O(\epsilon)$.
  It suffices then to show that the probabilistic bounds in \pref{lem:nonlinear-unfolding-noise} hold with $A$ replaced by $AQ$.
  Note that this means each $A_i$ becomes $A_i Q$.
  By assumption, $\|Q \tensor Q - \Id \tensor \Id\| \leq O(\epsilon)$, so the probabilistic bound on $\|\sum_i A_i \tensor A_i = \E \sum_i A_i \tensor A_i \|$ carries over to the semi-random setting.
  A similar argument holds for $\sum_i v_0(i) A_i Q$, which is enough to complete the proof.
\end{proof}

\subsection{Fast Recovery with Symmetric Noise}
We suppose now that $\bA$ is a symmetric Gaussian noise tensor;
that is, that $\bA$ is the average of $\bA_0^{\pi}$ over all $\pi \in \cS_3$,
for some order-$3$ tensor $\bA_0$ with iid standard Gaussian entries.

It was conjectured by Montanari and Richard \cite{richard2014tensorpca} that the
tensor unfolding technique can recover the signal vector $v_0$ in the
single-spike model $\bT = \tau v_0^{\ot 3} + \bA$ with signal-to-noise ratio
$\tau \ge \tilde\Omega(n^{3/4})$ under both asymmetric and symmetric noise.

Our previous techniques fail in this symmetric noise scenario due to lack of
independence between the entries of the noise tensor.
However, we sidestep that issue here by restricting our attention to an
asymmetric block of the input tensor.

The resulting algorithm is not precisely identical to the tensor unfolding
algorithm investigated by Montanari and Richard,
but is based on tensor unfolding with only superficial modifications.

\begin{center}
\fbox{\begin{minipage}{\textwidth}
\begin{center}\textbf{Fast Recovery under Symmetric Noise}\end{center}
\noindent\textbf{Input:} $\bT = \tau \cdot v_0^{\otimes 3} + \bA$, where $v_0 \in \R^n$ and $\bA$ is a $3$-tensor. \\
\noindent\textbf{Goal:} Find $v \in \R^n$ with $\abs{\iprod{v,v_0}} \geq 1 - o(1)$.
\begin{algo}[Recovery]
\label{alg:fast-recover-sym}
Take $X,Y,Z$ a random partition of $[n]$, and $R$ a random rotation of $\R^n$.
Let $P_X$, $P_Y$, and $P_Z$ be the diagonal projectors onto the coordinates indicated by $X$, $Y$, and $Z$.
Let $\bU \seteq R^{\ot 3} \bT$, so that we have the matrix unfolding $U \seteq (R \ot R)TR^T$
Using the matrix power method, compute the top singular vectors $v_X$, $v_Y$, and $v_Z$ respectively of the matrices
\begin{align*}
  M_X &\seteq P_XU^T(P_Y \ot P_Z)UP_X - n^2/9 \cdot \Id
\\M_Y &\seteq P_YU^T(P_Z \ot P_X)UP_Y - n^2/9 \cdot \Id
\\M_Z &\seteq P_ZU^T(P_X \ot P_Y)UP_Z - n^2/9 \cdot \Id \mper
\end{align*}
Output the normalization of $R^{-1}(v_X + v_Y + v_Z)$.
\end{algo}
\end{minipage}}
\end{center}

\begin{remark}[Implementation of \pref{alg:fast-recover-sym} in nearly-linear time.]
It is possible to implement each iteration of the matrix power method in
\pref{alg:fast-recover-sym} in linear time.
We focus on multiplying a vector by $M_X$ in linear time; the other cases follow similarly.

We can expand $M_X = P_X R T^T (R \ot R)^T (P_Y \ot P_Z) (R \ot R) T R^T P_X - n^2/9 \cdot \Id$.
It is simple enough to multiply an $n$-dimensional vector by $P_X$, $R$, $R^T$, $T$, and $\Id$
in linear time.
Furthermore multiplying an $n^2$-dimensional vector by $T^T$ is also a simple linear time operation.
The trickier part lies in multiplying an $n^2$-dimensional vector, say $v$, by
the $n^2$-by-$n^2$ matrix $(R \ot R)^T (P_Y \ot P_Z) (R \ot R)$.

To accomplish this, we simply reflatten our tensors.
Let $V$ be the $n$-by-$n$ matrix flattening of $v$.
Then we compute the matrix $R^TP_YR \cdot V \cdot R^T{P_Z}^TR$,
and return its flattening back into an $n^2$-dimensional vector,
and this will be equal to $(R \ot R)^T (P_Y \ot P_Z) (R \ot R)\,v$.
This equivalence follows by taking the singular value decomposition
$V = \sum_i \lambda_i u_iw_i^T$,
and noting that $v = \sum_i \lambda_i u_i \ot w_i$.
\end{remark}

\begin{lemma}
\label{lem:random-projection}
Given a unit vector $u \in \R^n$, a random rotation $R$ over $\R^n$,
and a projection $P$ to an $m$-dimensional subspace,
with high probability
\[ \Big| \|PRu\|^2 - m/n \Big| \le O(\sqrt{m/n^2}\log m) \mper \]
\end{lemma}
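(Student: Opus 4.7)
The plan is to reduce the statement to a concentration estimate for chi-squared random variables using rotational invariance, and then to do careful bookkeeping on the ratio.

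First, since $R$ is a Haar-random rotation and $u$ is fixed, $Ru$ is distributed uniformly on $S^{n-1}$. I will realize this in the standard way: write $Ru \stackrel{d}{=} g/\|g\|$ for $g \sim \mathcal{N}(0, I_n)$, so that
\[
  \|PRu\|^2 \stackrel{d}{=} \frac{\|Pg\|^2}{\|g\|^2}\mper
\]
By the rotational invariance of the standard Gaussian, the joint distribution of $(\|Pg\|^2, \|g\|^2)$ depends only on $m = \rank(P)$, so I may assume $P$ projects onto the first $m$ coordinates. Writing $X = \sum_{i=1}^m g_i^2 \sim \chi^2_m$ and $Y = \sum_{i=1}^n g_i^2 \sim \chi^2_n$, the task is to bound $|X/Y - m/n|$ with high probability.

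Next, I will invoke standard chi-squared concentration (e.g.\ Laurent--Massart): for every $t > 0$ and $k \in \{m, n\}$, a $\chi^2_k$ variable deviates from $k$ by more than $O(\sqrt{kt} + t)$ with probability at most $2e^{-t}$. Taking $t = \Theta(\log n)$ and union-bounding, I get, simultaneously with probability $1 - O(n^{-10})$,
\[
  |X - m| \le O\bigl(\sqrt{m\log n} + \log n\bigr) \quad \text{and} \quad |Y - n| \le O\bigl(\sqrt{n \log n}\bigr)\mper
\]
In particular $Y = n \cdot (1 \pm o(1))$ so $1/Y = O(1/n)$ on this event.

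Finally, I combine the two estimates through the identity
\[
  \frac{X}{Y} - \frac{m}{n} \;=\; \frac{X - m}{Y} \;-\; \frac{m}{n} \cdot \frac{Y - n}{Y}\mper
\]
The first term contributes $O(\sqrt{m\log n}/n + \log n/n)$ and the second contributes $O((m/n) \cdot \sqrt{\log n/n})$. Since $m \le n$, both are bounded by $O(\sqrt{m/n^2} \cdot \sqrt{\log n})$, which is in turn $O(\sqrt{m/n^2}\,\log m)$ in the regime of interest (for very small $m$ one handles the degenerate case directly, noting that the weaker absolute bound $|X/Y - m/n| \le O(\log n / n)$ always holds on the concentration event).

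The only real obstacle is bookkeeping: all three steps are standard, but one has to be a little careful about which error term dominates and about the mild $m$ vs.\ $\log n$ interaction in the final comparison. There is nothing mathematically deep here beyond chi-squared concentration plus a first-order Taylor-type expansion of the ratio $X/Y$.
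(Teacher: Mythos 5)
Your proof is correct and takes essentially the same route as the paper's: both realize $Ru$ as a normalized standard Gaussian vector, apply Bernstein/chi-squared concentration separately to $\|Pg\|^2$ and $\|g\|^2$, and combine via a first-order expansion of the ratio. The paper packages this slightly differently (scaling the Gaussian entries by $1/\sqrt n$ so that $\gamma^2 \approx 1$) and leaves the final ratio comparison implicit, but the underlying argument is identical.
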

\begin{proof}
Let $\gamma$ be a random variable distributed as the norm of a vector in $\R^n$
with entries independently drawn from $\cN(0, 1/n)$.
Then because Gaussian vectors are rotationally invariant
and $Ru$ is a random unit vector,
the coordinates of $\gamma Ru$ are independent and Gaussian in any orthogonal basis.

So $\gamma^2\|PRu\|^2$ is the sum of the squares of $m$ independent variables
drawn from $\cN(0, 1/n)$.
By a Bernstein inequality, $\big|\gamma^2\|PRu\|^2 - m/n\big| \le O(\sqrt{m/n^2}\log m)$
with high probability.
Also by a Bernstein inequality, $\gamma^2 - 1 < O(\sqrt{1/n} \log n)$ with high probability.
\end{proof}

\begin{theorem}
For $\tau \geq n^{3/4}/\epsilon$, with high probability, \pref{alg:fast-recover-sym}
recovers a vector $v$ with $\iprod{v,v_0} \ge 1 - O(\epsilon)$ when $\bA$ is a
symmetric Gaussian noise tensor (as in \pref{prob:spiked-tensor-symm})
and $\epsilon \geq \log(n) / \sqrt{n}$.

Furthermore the matrix power iteration steps in \pref{alg:fast-recover-sym}
each converge within $\tO(-\log(\epsilon))$ steps, so that the algorithm
overall runs in almost linear time $\tO(n^3\log(1/\epsilon))$.
\end{theorem}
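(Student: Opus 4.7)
The plan is to reduce the symmetric-noise case to the asymmetric-noise analysis of \pref{alg:fast-recover-subopt} by restricting attention to a triple-indexed off-diagonal block where the noise remains effectively independent, and then stitching together three block-wise recoveries. Write $w \seteq Rv_0$ and $\mathbf C \seteq R^{\otimes 3}\bA$, so that $\bU = \tau \cdot w^{\otimes 3} + \mathbf C$. The random rotation plays two roles. First, by \pref{lem:random-projection}, $\|P_Xw\|^2, \|P_Yw\|^2, \|P_Zw\|^2 = 1/3 \pm O(\log(n)/\sqrt n)$ with high probability, so the signal is spread approximately evenly across the three coordinate groups. Second, by rotational invariance of the (suitably normalized) symmetric Gaussian ensemble, $\mathbf C$ has the same distribution as $\bA$; since $X, Y, Z$ are pairwise disjoint, the entries $\{\mathbf C_{ijk} : i \in X, j \in Y, k \in Z\}$ lie in distinct orbits of the symmetrization action and hence form an iid family of standard Gaussians.

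Next, analyze $M_X$ in direct analogy with \pref{lem:mr-success}. Expanding $P_XU^T(P_Y \otimes P_Z)UP_X$ produces a rank-one signal term $\tau^2\|P_Yw\|^2\|P_Zw\|^2 \cdot (P_Xw)(P_Xw)^T$ whose top eigenvalue is $\approx \tau^2/27$ in direction $P_Xw/\|P_Xw\|$; two cross terms; and a pure noise term $P_XC^T(P_Y \otimes P_Z)CP_X$, where $C$ is the matrix unfolding of $\mathbf C$. Because $(P_Y \otimes P_Z)CP_X$, restricted to its nontrivial support, is an $(|Y|\cdot|Z|) \times |X|$ matrix of iid standard Gaussians, a direct analogue of \pref{lem:mr-concentration} gives $P_XC^T(P_Y \otimes P_Z)CP_X = (n^2/9)\cdot P_X + E$ with $\|E\| \leq O(n^{3/2})$, while Gaussian concentration applied to the random vector $C^T(P_Yw \otimes P_Zw)$ bounds the cross-term operator norm by $O(\tau\sqrt{n \log n})$. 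For $\tau \geq n^{3/4}/\epsilon$ the signal dominates both the centered noise and the cross terms, and the argument of \pref{lem:mr-success} then implies that (restricted to the $X$-subspace) the top singular vector $v_X$ of $M_X$ satisfies $\iprod{v_X, P_Xw/\|P_Xw\|}^2 \geq 1 - O(\epsilon)$ up to sign; symmetric reasoning gives analogous guarantees for $v_Y$ and $v_Z$.

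To complete the recovery we must align the signs of $v_X, v_Y, v_Z$, since each singular vector is determined only up to $\pm 1$. Evaluating $\bT(\sigma_X v_X + \sigma_Y v_Y + \sigma_Z v_Z)$ for the eight sign choices and selecting the maximizer forces $\sigma_X = \sigma_Y = \sigma_Z$ up to an overall sign (which is irrelevant for $|\iprod{v,v_0}|$). With consistent signs, the sum $v_X+v_Y+v_Z$ (supported on the disjoint coordinate groups $X,Y,Z$) is close in direction to $w$, so normalizing $R^{-1}(v_X+v_Y+v_Z)$ yields a unit vector $v$ with $\iprod{v,v_0} \geq 1 - O(\epsilon)$.

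For the running time, each $M_X$ has a relative spectral gap of $\Omega(1/\epsilon^2)$ between its signal eigenvalue $\tau^2/27$ and its bulk of magnitude $O(n^{3/2})$, so matrix power method converges to sufficient accuracy in $\tO(\log(1/\epsilon))$ iterations; combined with the $O(n^3)$ per-iteration cost from the reshaping trick in the remark immediately preceding the theorem, the total running time is $\tO(n^3\log(1/\epsilon))$. The main obstacle I anticipate is the rotational-invariance step: one must verify that the normalization of the symmetric Gaussian ensemble aligns with the $(n^2/9)\cdot\Id$ shift so that the block-wise concentration bound delivers the claimed constants, and one must check that any spurious eigendirections of $M_X$ outside the $X$-subspace (arising from using the full $\Id$ rather than $P_X$ in the shift) can be suppressed either by restricting the power iteration to $\Span\{e_i : i \in X\}$ or by a small adjustment of the subtracted identity term.
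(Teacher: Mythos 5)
Your proposal tracks the paper's own proof quite closely: same random rotation and block partition, same appeal to \pref{lem:random-projection} for signal spreading, same reduction of the off-diagonal block $(P_Y \otimes P_Z)UP_X$ to an iid Gaussian matrix, and the same signal/cross/noise decomposition of $M_X$ reusing \pref{lem:mr-success}, \pref{lem:mr-concentration}, and \pref{lem:mr-concentration-general}. The two "obstacles" you flag at the end, though, are not artifacts of your argument --- both are genuine gaps in the paper's own proof, and your proposed fixes are correct and necessary.

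On sign alignment: each of $v_X, v_Y, v_Z$ is returned by power iteration only up to $\pm 1$, and nothing in \pref{alg:fast-recover-sym} chooses the signs coherently across blocks. The paper's derivation silently assumes the favorable sign, writing $\iprod{v_X, P_XRv_0} = 1/\sqrt{3} \pm \cdots$ when its quadratic-form argument only pins down $\iprod{v_X, P_XRv_0}^2$. With wrong relative signs, $\iprod{v_X+v_Y+v_Z, Rv_0}$ can be near $1/\sqrt 3$ or even negative, so the theorem's signed guarantee $\iprod{v,v_0} \ge 1 - O(\epsilon)$ fails. Your fix of enumerating the $2^3$ sign patterns and taking the one maximizing $\bT(\,\cdot\,)$ is sound: the signal term $\tau \iprod{v_0, \cdot}^3$ dominates the $\tO(\sqrt n)$ noise term on the unit sphere, so the maximizer also fixes the global sign, which matters here because $k=3$ is odd.

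On the spurious $\ker P_X$ directions: the algorithm defines $M_X$ with the full shift $-n^2/9 \cdot \Id$, which makes all of $\ker P_X$ (dimension $\approx 2n/3$) an exact eigenspace of $M_X$ with eigenvalue $-n^2/9$. The paper's step ``$\|M_X - S\| \le O(\epsilon\tau^2)$'' is false as written, since $S$ is supported on the $X$-coordinates and so $\|M_X - S\| \ge n^2/9$; the preceding invocation of \pref{lem:mr-concentration} implicitly treats $N$ and $\Id$ as $|X|\times|X|$ objects, i.e.\ implicitly replaces $\Id$ by $P_X$. This is not cosmetic: when $\tau$ is in the low end of the allowed range --- say $\tau = n^{7/8}$ with $\epsilon = n^{-1/8}$, both consistent with $\tau \ge n^{3/4}/\epsilon$ and $\epsilon \ge \log(n)/\sqrt n$ --- one has $\tau^2/27 \ll n^2/9$, so the signal eigenvalue of $M_X$ has \emph{smaller} magnitude than the $-n^2/9$ eigenvalues on $\ker P_X$, and power iteration from a generic start converges into $\ker P_X$. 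Either of your fixes (shift by $n^2/9 \cdot P_X$ instead of $n^2/9\cdot\Id$, or restrict iteration to $\Span\{e_i : i \in X\}$, which $M_X$ preserves) restores the intended picture: on the $X$-subspace the signal eigenvalue $\approx \tau^2/27$ dominates the $O(\epsilon\tau^2)$ bulk, giving a relative gap of $\Omega(1/\epsilon)$, after which the rest of the paper's argument and the $\tO(\log(1/\epsilon))$ iteration count go through.
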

\begin{proof}
Name the projections $U_X \seteq (P_Y \ot P_Z)UP_X$,
$U_Y \seteq (P_Z \ot P_X)UP_Y$, and $U_Z \seteq (P_X \ot P_Y)UP_Z$.

First off, $\bU = \tau(Rv_0)^{\ot 3} + \bA'$ where $\bA'$ is a symmetric
Gaussian tensor (distributed identically to $\bA$).
This follows by noting that multiplication by $R^{\ot 3}$ commutes with permutation of indices,
so that $(R^{\ot 3}\bB)^{\pi} = R^{\ot 3}\bB^{\pi}$,
where we let $\bB$ be the asymmetric Gaussian tensor
so that $\bA = \sum_{\pi \in \cS_3} \bB^{\pi}$.
Then $\bA' = R^{\ot 3}\sum_{\pi \in \cS_3}\bB^{\pi} = \sum_{\pi \in \cS_3}(R^{\ot 3}\bB)^{\pi}$.
This is identically distributed with $\bA$, as follows from the rotational symmetry of $\bB$.

Thus $U_X = \tau (P_Y \ot P_Z)(R \ot R)(v_0 \ot v_0)(P_XRv_0)^T + (P_Y \ot P_Z)A'P_X$, and
\begin{align}
M_X + n^2/9 \cdot \Id &= U_X^T U_X  \nonumber
\\&= \tau^2 \|P_YRv_0\|^2\|P_ZRv_0\|^2(P_XRv_0)(P_XRv_0)^T
\label{ex:sym-signal}
\\&\hphantom{=}+ \tau (P_XRv_0) (v_0 \ot v_0)^T(R \ot R)^T(P_Y \ot P_Z)A'P_X
\label{ex:sym-cross-1}
\\&\hphantom{=}+ \tau P_X{A'}^T(P_Y \ot P_Z)(R \ot R)(v_0 \ot v_0)(P_XRv_0)^T
\label{ex:sym-cross-2}
\\&\hphantom{=}+ P_X{A'}^T(P_Y \ot P_Z)A'P_X \mper
\label{ex:sym-noise}
\end{align}

Let $S$ refer to Expression \ref{ex:sym-signal}.
By \pref{lem:random-projection}, $\big| \|PRv_0\|^2 - \tfrac{1}{3} \big| < O(\sqrt{1/n}\log n)$
with high probability for $P \in \{P_X, P_Y, P_Z\}$.
Hence $S = (\tfrac{1}{9} \pm O(\sqrt{1/n}\log n))\tau^2(P_XRv_0)(P_XRv_0)^T$
and $\|S\| = (\tfrac{1}{27} \pm O(\sqrt{1/n}\log n))\tau^2$.

Let $C$ refer to Expression \ref{ex:sym-cross-1} so that Expression \ref{ex:sym-cross-2} is $C^T$.
Let also $A'' = (P_Y \ot P_Z)A'P_X$.
Note that, once the identically-zero rows and columns of $A''$ are removed,
$A''$ is a matrix of iid standard Gaussian entries.
Finally, let $v'' = P_YRv_0 \ot P_ZRv_0$.
By some substitution and by noting that $\|P_XR\| \le 1$, we have that
$\|C\| \le \tau\, \|v_0{v''}^TA''\|$.
Hence by \pref{lem:mr-concentration-general}, $\|C\| \le O(\epsilon\tau^2)$.

Let $N$ refer to Expression \ref{ex:sym-noise}.
Note that $N = {A''}^TA''$.
Therefore by \pref{lem:mr-concentration}, $\|N - n^2/9 \cdot \Id\| \le O(n^{3/2})$.

Thus $M_X = S + C + (N - n^2/9 \cdot \Id)$, so that $\|M_X - S\| \le O(\epsilon\tau^2)$.
Since $S$ is rank-one and has $\|S\| \ge \Omega(\tau^2)$,
we conclude that matrix power iteration converges in $\tO(-\log \epsilon)$ steps.

The recovered eigenvector $v_X$ satisfies $\iprod{v_X, M_X v_X} \ge \Omega(\tau^2)$
and $\iprod{v_X, (M_X - S)v_X} \le O(\epsilon\tau^2)$ and therefore
$\iprod{v_X, Sv_X} = (\tfrac{1}{27}\pm O(\epsilon + \sqrt{1/n}\log n))\tau^2$.
Substituting in the expression for $S$,
we conclude that $\iprod{P_XRv_0, v_X} = (\tfrac{1}{\sqrt{3}}\pm O(\epsilon + \sqrt{1/n}\log n))$.

The analyses for $v_Y$ and $v_Z$ follow in the same way.
Hence
\begin{align*}
  \iprod{v_X + v_Y + v_Z, Rv_0} %
  &= \iprod{v_X, P_XRv_0} + \iprod{v_Y, P_YRv_0} + \iprod{v_Z, P_ZRv_0}
 \\&\ge \sqrt{3} - O(\epsilon + \sqrt{1/n}\log n) \mper
\end{align*}
At the same time, since $v_X$, $v_Y$, and $v_Z$ are each orthogonal to each other,
$\|v_X + v_Y + v_Z\| = \sqrt{3}$.
Hence with the output vector being $v \seteq R^{-1}(v_X+v_Y+v_Z)/\|v_X+v_Y+v_Z\|$,
we have
\[ \iprod{v, v_0} = \iprod{Rv,Rv_0} = \tfrac{1}{\sqrt{3}}\iprod{v_X+v_Y+v_Z,Rv_0}
  \ge 1 - O(\epsilon + \sqrt{1/n}\log n) \mper\]
\end{proof}

\subsection{Numerical Simulations}

\begin{figure}[!htb]
  \includegraphics[scale=0.45,trim=0in 0.3in 0in 0.2in,clip]{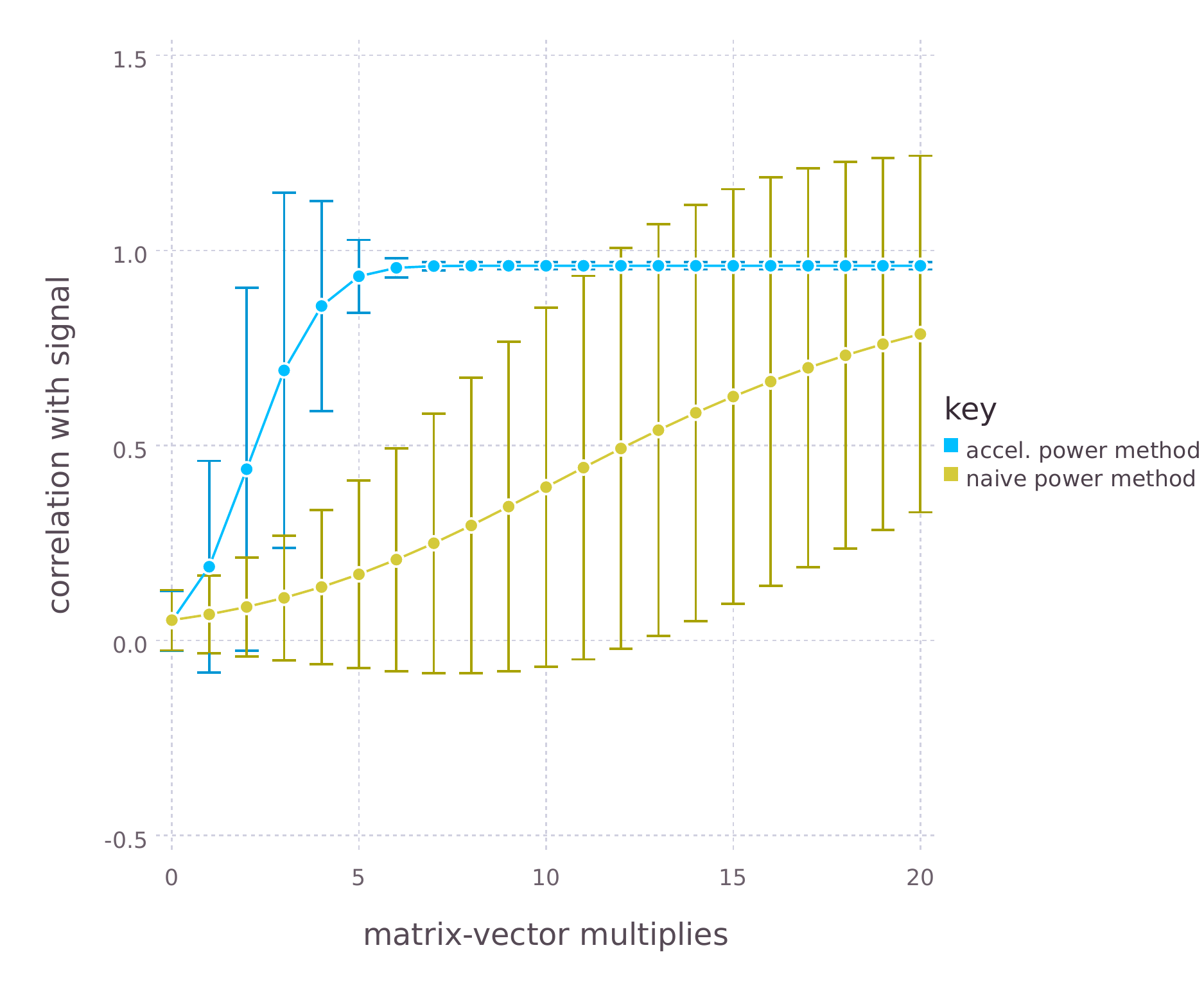}
  \includegraphics[scale=0.45,trim=0in 0.3in 0in 0.2in,clip]{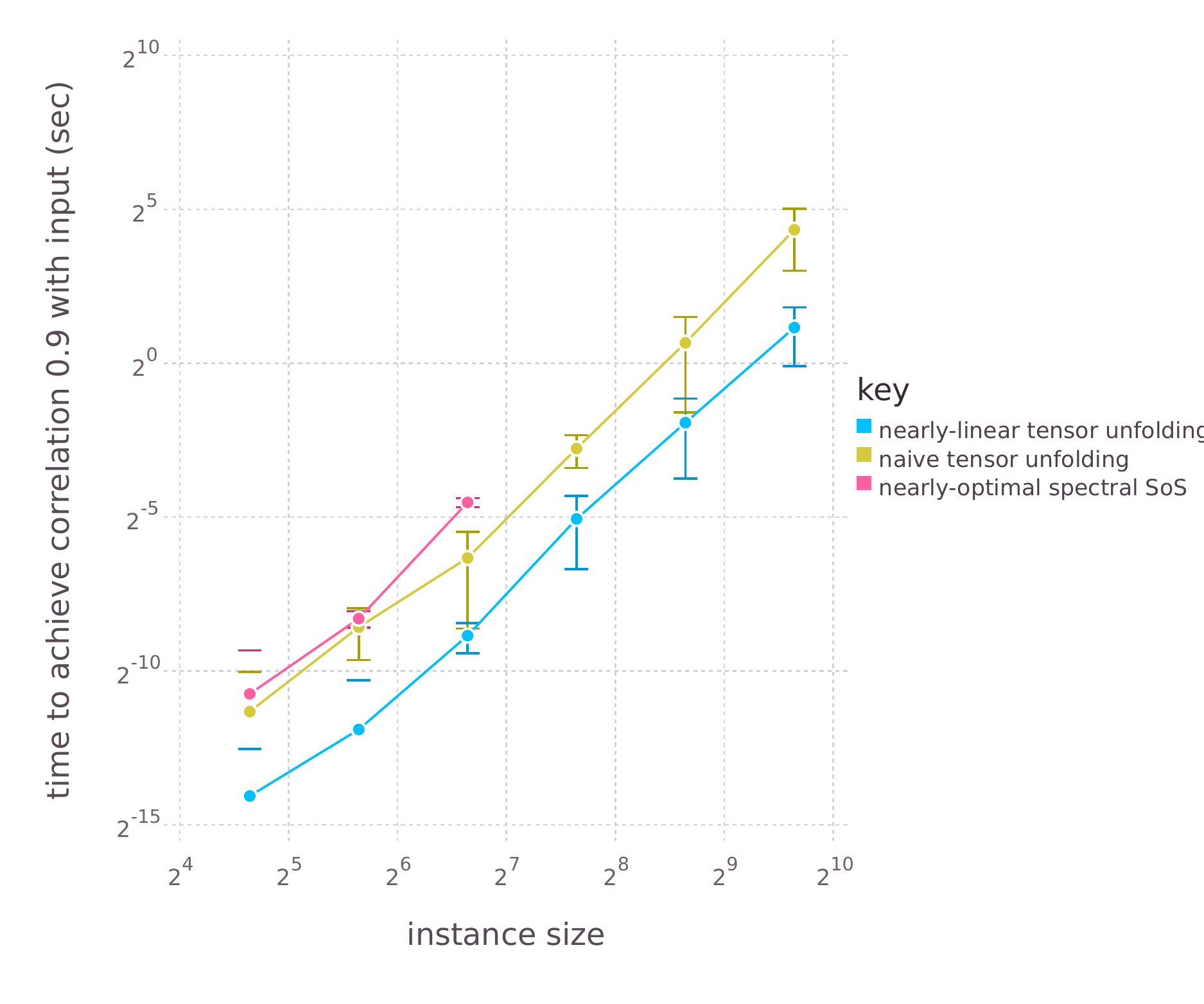}
  \caption{Numerical simulation of \pref{alg:fast-recovery-opt}
  (``Nearly-optimal spectral SoS'' implemented with matrix power method),
  and two implementations of \pref{alg:fast-recover-subopt}
  (``Accelerated power method''/``Nearly-linear tensor unfolding''
  and ``Naive power method''/``Naive tensor unfolding''.
  Simulations were run in Julia on a Dell Optiplex 7010 running Ubuntu 12.04
  with two Intel Core i7 3770 processors at 3.40 ghz and 16GB of RAM.
  Plots created with Gadfly.
  Error bars denote $95\%$ confidence intervals.
  Matrix-vector multiply experiments were conducted with $n = 200$.
  Reported matrix-vector multiply counts are the average of 50 independent trials.
  Reported times are in cpu-seconds and are the average of 10 independent trials.
  Note that both axes in the right-hand plot are log scaled.
  \label{fig:simulations}
  }
\end{figure}

We report now the results of some basic numerical simulations of the algorithms from this section.
In particular, we show that the asymptotic running time differences among
\pref{alg:fast-recovery-opt}, \pref{alg:fast-recover-subopt} implemented
\naively, and the linear-time implementation of
\pref{alg:fast-recover-subopt} are apparent at reasonable values of $n$,
e.g. $n = 200$.

Specifics of our experiments are given in \pref{fig:simulations}.
We find pronounced differences between all three algorithms.
The \naive implementation of \pref{alg:fast-recover-subopt} is markedly slower than the linear implementation, as measured either by number of matrix-vector multiplies or processor time.
\pref{alg:fast-recovery-opt} suffers greatly from the need to construct an $n^2 \times n^2$ matrix; although we do not count the time to construct this matrix against its reported running time, the memory requirements are so punishing that we were unable to collect data beyond $n = 100$ for this algorithm.

\section{Lower Bounds}
\label{sec:lower-bounds}
We will now prove lower bounds on the performance of degree-$4$ SoS on random
instances of the degree-$4$ and degree-$3$ homogeneous polynomial maximization problems.
As an application, we show that our analysis of degree-$4$ for Tensor PCA is tight up to a small logarithmic factor in the signal-to-noise ratio.

\begin{theorem}[Part one of formal version of \pref{thm:lb-intro-informal}]
  \label{thm:lower-bound-tensor-pca-4}
  There is $\tau = \Omega(n)$ and a function $\eta : \bA \mapsto \{x\}$ mapping $4$-tensors to degree-$4$ pseudo-distributions satisfying $\{ \|x\|^2  = 1\}$
  so that for every unit vector $v_0$,
  if $\bA$ has unit Gaussian entries,
  then, with high probability over random choice of $\bA$,
  the pseudo-expectation $\pE_{x \sim \eta(\bA)} \tau \cdot \iprod{v_0,x}^4 + \bA(x)$
  is maximal up to constant factors among $\pE \tau \cdot \iprod{v_0,y}^4 + \bA(y)$ over all degree-4 pseudo-distributions $\{ y \}$ satisfying $\{ \|y\|^2 = 1 \}$.
\end{theorem}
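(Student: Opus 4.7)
The plan is to construct, given a random $4$-tensor $\bA$ with unit Gaussian entries, a degree-$4$ pseudo-expectation $\pE = \eta(\bA)$ satisfying $\{\|x\|^2 = 1\}$ for which $\pE \bA(x) \succsim n$ with high probability.  Since $\pE \iprod{v_0,x}^4 \ge 0$ for every pseudo-expectation and every unit $v_0$ (the quantity is the pseudo-expectation of a square), choosing $\tau = n$ then yields
\[
  \pE_{\eta(\bA)}[\tau \iprod{v_0,x}^4 + \bA(x)] \;\ge\; \pE_{\eta(\bA)} \bA(x) \;\succsim\; n \;\succsim\; \tau.
\]
The true optimum over degree-$4$ pseudo-distributions lies between $\tau$ (attained by the actual point mass at $v_0$) and $\tau + O(n)$ (by the degree-$4$ analogue of \pref{thm:lambda-bddness} applied to $\bA$), so for $\tau = \Theta(n)$ the value achieved by $\eta(\bA)$ is optimal up to a constant factor.

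I take $\pE$ to be a perturbation of the uniform measure $\mu$ on $S^{n-1}$. On monomials of degree at most $3$, set $\pE p(x) = \E^\mu p(x)$. On degree-$4$ monomials,
\[
  \pE x_{i_1} x_{i_2} x_{i_3} x_{i_4} \;=\; \E^\mu x_{i_1} x_{i_2} x_{i_3} x_{i_4} + \alpha \cdot \widetilde a_{i_1 i_2 i_3 i_4},
\]
where $\widetilde \bA$ is obtained from $\bA$ by first averaging $\bA^\pi$ over $\pi \in \cS_4$ (so that the pseudo-moments respect index commutativity) and then orthogonally projecting onto the subspace $V \subseteq \Sym((\R^n)^{\otimes 4})$ cut out by the partial-trace relations $\sum_k C_{kkij} = 0$ for all $i, j$.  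These relations are exactly what is needed to enforce $\pE[(\|x\|^2 - 1) q(x)] = 0$ for all $q \in \R[x]_{\le 2}$; the constraints for $q$ of degree $\le 1$ are automatic from the uniform part.  Both symmetrization and projection onto $V$ are Euclidean contractions, and $V$ has codimension $O(n^2)$ inside $\Sym((\R^n)^{\otimes 4})$ of dimension $\Theta(n^4)$, so $\|\widetilde \bA\|_F^2 = \Theta(n^4)$ with high probability.

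The main verification is PSDness of the moment matrix $M_{\pE} = M_\mu + \alpha M_{\widetilde \bA}$.  The unperturbed matrix $M_\mu$ is block-diagonal in the parity of the monomial degree, and its minimum nonzero eigenvalue is $\Theta(1/n^2)$ on the degree-$2$ block quotiented by the sphere relation; this follows from the Wick formula $\E^\mu x_i x_j x_k x_l = \tfrac{1}{n(n+2)}(\delta_{ij}\delta_{kl} + \delta_{ik}\delta_{jl} + \delta_{il}\delta_{jk})$ together with Schur's lemma applied to the decomposition of degree-$2$ polynomials on $S^{n-1}$ into trivial and degree-$2$ harmonic $O(n)$-representations.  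The perturbation is supported on the degree-$4$ block, and its operator norm is at most $\alpha$ times that of the $n^2 \times n^2$ matrix flattening of $\widetilde \bA$; by the contractivity of symmetrization and projection onto $V$ this is in turn bounded by the operator norm of the flattening of $\bA$ itself, which is an $n^2 \times n^2$ matrix of iid unit Gaussians and hence has operator norm $O(n)$ with high probability.  By construction $\alpha M_{\widetilde \bA}$ sends $\ker M_\mu$ to zero.  Choosing $\alpha = c/n^3$ for small enough constant $c$ makes $\|\alpha M_{\widetilde \bA}\| \le \tfrac{1}{2}\cdot \Theta(1/n^2)$ on $(\ker M_\mu)^\perp$, and hence $M_{\pE} \succeq 0$.

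Finally, $\pE \bA(x) = \E^\mu \bA(x) + \alpha \iprod{\bA, \widetilde \bA}$.  The first summand unpacks via the Wick formula to $\tfrac{1}{n(n+2)}$ times three partial-contraction sums of $\bA$, each a sum of $n^2$ iid unit Gaussians and hence $O(n)$ in magnitude, giving $\E^\mu \bA(x) = O(1/n)$.  The second equals $\alpha \|\widetilde \bA\|_F^2 = \Theta(\alpha n^4) = \Theta(n)$, since $\widetilde \bA$ is the orthogonal projection of $\bA$ onto $V$.  Thus $\pE \bA(x) \succsim n$, as required.  The main obstacle is the tight PSDness calibration: the $\Theta(1/n^2)$ eigengap of $M_\mu$ on the sphere-orthogonal subspace versus the $\Theta(n)$ operator-norm upper bound on $M_{\widetilde \bA}$ together force $\alpha \le O(1/n^3)$ and cap $\pE \bA(x)$ at $\Theta(n)$, which is exactly the $\tau \asymp n$ threshold at which degree-$4$ SoS on this problem begins to succeed, matching the algorithmic guarantee up to constants.
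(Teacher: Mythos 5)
Your construction is correct and it reaches the claimed conclusion, but it goes by a genuinely different route than the paper's. The paper proves this via the general-purpose machinery of \pref{thm:lower-bound-4}: it first builds the perturbed pseudo-expectation $\E^\mu + \lambda^{-2}\cL$ (with $\cL$ the symmetrization of $\bA$ scaled by $1/n^2$) \emph{without} enforcing the sphere constraint, then corrects onto the sphere via \pref{lem:deg-one-two} while tracking the induced damage to $\lmin$, and finally repairs PSDness by remixing with $\E^\mu$ via \pref{lem:mix-with-uniform}. You instead build in the sphere constraint from the start: by projecting the symmetrized tensor onto the partial-trace-free subspace $V$, the perturbation $\alpha M_{\widetilde\bA}$ automatically annihilates both $\ker M_\mu$ (the $\|x\|^2-1$ direction) and the antisymmetric directions, so the PSDness check reduces to a single eigenvalue comparison ($\Theta(1/n^2)$ spectral floor of $M_\mu$ on $(\ker M_\mu)^\perp$ versus the $O(\alpha n)$ operator norm of the perturbation). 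This avoids the two fix-up lemmas entirely and is cleaner for the degree-$4$ case; the tradeoff is that the paper's route generalizes more uniformly to odd degree, where the $3$rd-moment block cannot be handled by a single trace-free projection and one really needs the Schur-complement and remixing machinery.

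There is one step whose justification, as stated, is wrong, although the conclusion it is used for still holds. You claim that the operator norm of the $n^2\times n^2$ flattening of $\widetilde\bA$ is bounded by that of the flattening of $\bA$ ``by the contractivity of symmetrization and projection onto $V$.'' Both of those maps are orthogonal projections in the \emph{Frobenius} inner product on tensors, but orthogonal projection in Frobenius norm is emphatically not a contraction of the flattened operator norm (indeed, for a generic $4$-tensor the different flattenings $A^\pi$ have wildly differing spectra, and a $\delta\otimes P$ trace correction can have a rank-one flattening of norm $\|w\|\cdot\|P\|_F \gg \|P\|_{\mathrm{op}}$). The desired bound $\|\widetilde A\|_{\mathrm{op}} = O(n)$ is nevertheless true, but for different reasons which you should spell out: for the symmetrization, use the triangle inequality $\|\tfrac1{24}\sum_\pi A^\pi\|\leq \max_\pi\|A^\pi\|$ together with the observation that each $A^\pi$ is again an iid Gaussian $n^2\times n^2$ matrix, hence has operator norm $O(n)$ by a union bound; for the trace-free correction, note that its Frobenius norm is only $O(n)$ (it is the projection of a standard Gaussian onto the $O(n^2)$-dimensional subspace $V^\perp \cap \Sym$), and operator norm is dominated by Frobenius norm. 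With this repair, your argument goes through and matches the paper's bound $\tau = \Omega(n)$.
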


\begin{theorem}[Part two of formal version of \pref{thm:lb-intro-informal}]
  \label{thm:lower-bound-tensor-pca}
  There is $\tau = \Omega(n^{3/4}/(\log n)^{1/4})$ and
  a function $\eta : \bA \mapsto \{x\}$ mapping $3$-tensors to degree-$4$ pseudo-distributions satisfying $\{ \|x\|^2  = 1\}$
  so that for every unit vector $v_0$,
  if $\bA$ has unit Gaussian entries,
  then, with high probability over random choice of $\bA$,
  the pseudo-expectation $\pE_{x \sim \eta(\bA)} \tau \cdot \iprod{v_0,x}^3 + \bA(x)$
  is maximal up to logarithmic factors among $\pE \tau \cdot \iprod{v_0,y}^3 + \bA(y)$ over all degree-4 pseudo-distributions $\{ y \}$ satisfying $\{ \|y\|^2 = 1 \}$.
\end{theorem}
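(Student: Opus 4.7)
The proof will proceed by constructing an explicit function $\eta$ that is oblivious to $v_0$ and yet, on a typical random noise tensor $\bA$, produces a degree-$4$ pseudo-distribution with $\pE \bA(x) = \tilde\Omega(n^{3/4})$. Since the true maximum over degree-$4$ pseudo-distributions of $\tau \iprod{v_0,x}^3 + \bA(x)$ is at most $\tau + O(n^{3/4}(\log n)^{1/4})$ (by \pref{cor:sos-cert-3} applied to the planted delta-distribution on $v_0$), and since $\pE \iprod{v_0,x}^3$ will be roughly $0$ for our construction, matching $\tau = \Omega(n^{3/4}/(\log n)^{1/4})$ will suffice to pin down the pseudo-expectation up to a polylogarithmic factor.

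The construction follows the outline in \pref{sec:techniques}. First I specify the low-degree pseudo-moments: set $\pE 1 = 1$, $\pE x_i = 0$, and $\pE x_i x_j = \E^\mu x_i x_j = \Ind[i = j]/n$, where $\E^\mu$ denotes expectation under the uniform distribution on the unit sphere. The crucial degree-$3$ moments are $\pE x_i x_j x_k = \alpha \cdot \overline{a}_{ijk}$, where $\overline{a}_{ijk}$ is the symmetrization of $a_{ijk}$ under $\cS_3$ and $\alpha = \Theta(n^{3/4}/\|\bA\|_F^2) = \Theta(n^{-3/4}\log(n)^{0})$ is chosen so that $\pE \bA(x) = \alpha \langle \bA, \overline{\bA}\rangle \asymp n^{3/4}$. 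To extend these to degree $4$, view the pseudo-moment table as a block matrix with known degree-$\le 3$ blocks, and let $M \in \R^{n^2 \times n^2}$ be the degree-$4$ block produced by the Schur complement criterion for positive semidefiniteness; this $M$ is itself PSD and is a valid candidate for $\pE x^{\otimes 2}(x^{\otimes 2})^\top$ except that it need not respect the symmetry $x_i x_j x_k x_\ell = x_{\pi(i)} x_{\pi(j)} x_{\pi(k)} x_{\pi(\ell)}$. Symmetrize by setting
\[
\pD x_i x_j x_k x_\ell \;=\; \frac{1}{|\cS_4|} \sum_{\pi \in \cS_4} M[(\pi(i),\pi(j)),(\pi(k),\pi(\ell))],
\]
and finally define $\pE' = \epsilon (\pE + \pD) + (1-\epsilon)\E^\mu$ for a small constant $\epsilon$ to be chosen.

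Two things must be verified: that $\pE'$ is positive semidefinite as a linear functional on polynomials of degree at most $4$, and that $\pE' \bA(x)$ is still $\tilde\Omega(n^{3/4})$. By construction $\E^\mu$ restricted to degree-$\le 4$ polynomials has its least eigenvalue bounded below by some constant $c > 0$ (on the subspace modulo the relation $\|x\|^2 = 1$). The off-diagonal (degree $1$-$3$) blocks of $\pE + \pD$ are harmonized by the Schur complement, so PSDness will fail only if the permuted matrices $\pi \cdot M$ have large negative eigenvalues. I would bound $\lambda_{\min}(\pi \cdot M)$ for each $\pi \in \cS_4$ using matrix Bernstein/concentration on the appropriate flattenings of $\bA$: each $\pi \cdot M$ is (up to a Schur-complement correction whose spectral norm is controlled by $\|\bA\|_{\mathrm{op}}$ along the relevant unfolding) a random matrix of the form $\alpha^2 \sum \overline{a}_{\cdot}\overline{a}_{\cdot}$ with typical spectral radius $\tilde O(n^{-3/2}) \cdot \tilde O(n^{3/2}) = \tilde O(1)$ against the uniform-distribution baseline. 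This lets the mixing weight $\epsilon$ be taken as large as $\tilde\Omega(1)$ (polylogarithmically small) while still having $\pE' \succeq 0$ on degree-$\le 4$ polynomials, so $\pE' \bA(x) = \epsilon \alpha \langle \bA, \overline{\bA}\rangle = \tilde\Omega(n^{3/4})$.

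The main obstacle is the simultaneous control of all $|\cS_4|$ permuted matrices $\pi \cdot M$: the Schur-complement matrix $M$ has a rather rigid structure coming from $\bA$, and the permutations break this structure in ways that require careful bookkeeping of which indices are being paired and contracted. The key technical lemma will be a matrix-Bernstein estimate giving $\lambda_{\min}(\pi \cdot M) \geq -\tilde O(\alpha^2 \cdot n^{3/2}) = -\tilde O(1)$ with high probability for every $\pi$, which is just tight enough that mixing with $(1-\epsilon)\E^\mu$ restores positive semidefiniteness. Once this is established, $\pE \iprod{v_0,x}^3 = \epsilon \alpha \iprod{v_0^{\otimes 3}, \overline{\bA}} = \tilde O(n^{-3/4}) \cdot \tilde O(\sqrt{n}) = o(1)$ uniformly in $v_0$ (by standard Gaussian tail bounds combined with a union bound, or more carefully by noting that $\iprod{v_0^{\otimes 3}, \overline{\bA}}$ has variance $O(1)$ and applying a high-probability bound), so that $\pE' [\tau \iprod{v_0,x}^3 + \bA(x)] = \tilde\Omega(n^{3/4})$, matching the maximum over pseudo-distributions up to logarithmic factors as claimed.
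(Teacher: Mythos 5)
Your construction mirrors the paper's own outline in \pref{sec:techniques} and the formal proof of \pref{thm:lower-bound}: pick degree-$\le 3$ pseudo-moments with the degree-3 moments proportional to the symmetrized $\bA$, extend to degree 4 via Schur complement, symmetrize the candidate degree-4 block over $\cS_4$, and mix with $\E^\mu$ to repair eigenvalues. The ``key technical lemma'' you single out --- matrix-concentration control of $\lambda_{\min}(\pi\cdot M)$ for the permuted Schur complement --- is indeed the heart of the paper's argument (it is the spectral hypothesis in \pref{thm:lower-bound}, verified in \pref{lem:concentration-lb}).

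However, your sketch passes over a step the paper spends real effort on: making the final object satisfy $\{\|x\|^2 = 1\}$ \emph{as a pseudo-distribution constraint}, meaning $\pE\, p(x)(\|x\|^2 - 1) = 0$ for every $p\in\R[x]_{\leq 2}$. Setting $\pE\|x\|^2=1$ at degree 2 is not enough: one must also have $\pE\|x\|^2 x_i = \pE x_i$, $\pE\|x\|^2 x_i x_j = \pE x_i x_j$, and $\pE\|x\|^4 = 1$, which couple the Schur-complement degree-4 block to the lower-degree blocks. Schur complement plus symmetrization does not enforce this --- in the paper's construction one computes $\pE^0\|x\|^4 = 1 + O(1) \neq 1$ --- so an extra projection onto the sphere is needed (\pref{lem:deg-one-two}), which introduces further controlled negative eigenvalues that \pref{lem:mix-with-uniform} then absorbs. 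As written, your $\pE'$ would be PSD (granting the key lemma) but would not satisfy $\{\|x\|^2 = 1\}$, so it is not a legal candidate for the theorem's maximization. Two smaller points: $\alpha = n^{3/4}/\|\bA\|_F^2 \approx n^{-9/4}$, not $n^{-3/4}$ (the slip propagates into the exponents in your estimate of $\lambda_{\min}(\pi\cdot M)$, which should be bounded on the $\sim n^{-2}$ eigenvalue scale of $\E^\mu$ rather than against an unscaled identity); and the paper Schur-complements each $A^\pi$ separately (\pref{lem:deg-3-sym}) rather than the symmetrized $\sum_\pi A^\pi$ directly --- this decomposition is what lets one control the resulting degree-4 block in terms of the $\cS_4$-permutations of $AA^T$, and is not merely cosmetic.
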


The existence of the maps $\eta$ depending only on the random part $\bA$ of the tensor PCA input $v_0^{\tensor 3} + \bA$ formalizes the claim from \pref{thm:lb-intro-informal} that no algorithm can reliably recover $v_0$ from the pseudo-distribution $\eta(\bA)$.

Additionally, the lower-bound construction holds for the symmetric noise model also:
the input tensor $\bA$ is symmetrized wherever it occurs in the construction, so it
does not matter if it had already been symmetrized beforehand.

The rest of this section is devoted to proving these theorems, which we eventually accomplish in \pref{sec:formal}.

\subsubsection{Discussion and Outline of Proof}
Given a random $3$-tensor $\bA$, we will take the degree-$3$ pseudo-moments of our $\eta(\bA)$ to be $\epsilon \bA$, for some small $\epsilon$, so that $\pE_{x \sim \eta(\bA)} \bA(x)$ is large.
The main question is how to give degree-$4$ pseudo-moments to go with this.
We will construct these from $AA^T$ and its permutations as a $4$-tensor under the action of $\cS_4$.

We have already seen that a spectral \emph{upper bound} on one of these permutations,
$\sum_i A_i \tensor A_i$, provides a performance guarantee for degree-$4$ SoS
optimization of degree-$3$ polynomials.
It is not a coincidence that this SoS lower bound depends on the negative
eigenvalues of the permutations of $AA^T$.
Running the argument for the upper bound in reverse, a pseudo-distribution
$\{ x \}$ satisfying $\{ \|x\|_2^2 = 1 \}$ and with $\pE \bA(x)$ large must
(by pseudo-Cauchy-Schwarz) also have
$\pE \iprod{x^{\tensor 2}, \Paren{\sum_i A_i \tensor A_i} x^{\tensor 2}}$ large.
The permutations of $AA^T$ are all matrix representations of that same polynomial,
$\iprod{x^{\tensor 2}, \Paren{\sum_i A_i \tensor A_i} x^{\tensor 2}}$.
Hence $\pE \bA(x)$ will be large only if the matrix
representation of the pseudo-distribution $\{ x \}$ is well correlated with
the permutations of $AA^T$.
Since this matrix representation will also need to be positive-semidefinite, control on the spectra of permutations of $AA^T$ is therefore the key to our approach.

The general outline of the proof will be as follows:
\begin{enumerate}
\item Construct a pseudo-distribution that is well correlated with the
  permutations of $AA^T$ and gives a large value to $\pE \bA(x)$, but
  which is not on the unit sphere.
\item Use a procedure modifying the first and second degree moments of
  the pseudo-distribution to force it onto a sphere,
  at the cost of violating the condition that $\pE p(X)^2 \ge 0$ for all
  $p \in \R[x]_{\le 2}$, then rescale so it lives on the unit sphere.
  Thus, we end up with an object that is no longer a valid pseudo-distribution
  but a more general linear functional $\cL$ on polynomials.
\item Quantitatively bound the failure of $\cL$ to be a
  pseudo-distribution, and repair it by statistically mixing the
  almost-pseudo-distribution with a small amount of the uniform distribution
  over the sphere. Show that $\pE \bA(x)$ is still large for this new
  pseudo-distribution over the unit sphere.
\end{enumerate}

But before we can state a formal version of our theorem, we will need a few
facts about polynomials, pseudo-distributions, matrices, vectors,
and how they are related by symmetries under actions of permutation groups.

\subsection{Polynomials, Vectors, Matrices, and Symmetries, Redux}

Here we further develop the matrix view of SoS presented in
\pref{sec:sos-matrix-view}.

We will need to use general linear functionals $\cL : \R[x]_{\leq 4}\rightarrow \R$ on polynomials
as an intermediate step between matrices and pseudo-distributions.
Like pseudo-distributions, each such linear-functional $\cL$ has a unique matrix representation $M_{\cL}$ satisfying certain maximal symmetry constraints.
The matrix $M_{\cL}$ is positive-semidefinite if and only if $\cL p(x)^2 \geq 0$ for every $p$.
If $\cL$ satisfies this and $\cL 1 = 1$, then $\cL$ is a pseudo-expectation, and $M_{\cL}$ is the matrix representation of the corresponding pseudo-distribution.

\subsubsection{Matrices for Linear Functionals and Maximal Symmetry}
Let $\cL : \R[x]_{\leq d} \rightarrow \R$.
$\cL$ can be represented as an $n^{\tuples(d)} \times n^{\tuples(d)}$
matrix indexed by all $d'$-tuples over $[n]$ with $d' \leq d/2$.
For tuples $\alpha, \beta$, this matrix $M_{\cL}$ is given by
\begin{displaymath}
  M_{\cL}[\alpha, \beta] \defeq \cL x^\alpha x^\beta\mper
\end{displaymath}
For a linear functional $\cL : \R[x]_{\leq d} \rightarrow \R$, a polynomial
$p(x) \in \R[x]_{\leq d}$, and a matrix representation $M_p$ for $p$ we
thus have $\iprod{M_{\cL}, M_p} = \cL p(x)$.

A polynomial in $\R[x]_{\leq d}$ may have many matrix representations, while
for us, a linear functional $\cL$ has just one: the matrix $M_{\cL}$.
This is because in our definition we have required that $M_{\cL}$ obey the
constraints
\begin{displaymath}
  M_{\cL}[\alpha,\beta] = M_{\cL}[\alpha', \beta']
  \qquad \mbox{ when } \qquad x^\alpha x^\beta = x^{\alpha'} x^{\beta'}\mper
\end{displaymath}
in order that they assign consistent values to each representation of the same
polynomial.
We call such matrices \emph{maximally symmetric}
(following Doherty and Wehner \cite{doherty2012convergence}).

We have particular interest in the maximally-symmetric version of the identity
matrix. The degree-$d$ symmetrized identity matrix $\symId$ is the unique
maximally symmetric matrix so that
\begin{equation}\label{eq:symId}
  \iprod{x^{\tensor d/2}, \symId x^{\tensor d/2}} = \| x \|_2^d.
\end{equation}
The degree $d$ will always be clear from context.

In addition to being a matrix representation of the polynomial $\|x\|_2^d$,
the maximally symmetric matrix $\symId$ also serves a dual purpose as a linear functional.
We will often be concerned with the expectation operator $\E^{\mu}$
for the uniform distribution over the $n$-sphere, and indeed
for every polynomial $p(x)$ with matrix representation $M_p$,
\[ \E\nolimits^{\mu}p(x) = \frac{1}{n^2 +2n}\iprod{\symId, M_p}\mcom \]
and so $\symId/(n^2+2n)$ is the unique matrix representation of
$\E^{\mu}$.

\subsubsection{The Monomial-Indexed (i.e. Symmetric) Subspace}
We will also require vector representations of polynomials.
We note that $\R[x]_{\leq d/2}$ has a canonical embedding into $\R^{\tuples(d)}$
as the subspace given by the following family of constraints,
expressed in the basis of $d'$-tuples for $d' \leq d/2$:
\begin{displaymath}
  \R[x]_{\leq d/2} \simeq \{ p \in \R^{\tuples(d)}
    \mbox{ such that } p_\alpha = p_{\alpha'}
    \mbox{ if $\alpha'$ is a permutation of $\alpha$ } \}\mper
\end{displaymath}
We let $\Pi$ be the projector to this subspace.
For any maximally-symmetric $M$ we have $\Pi M \Pi = M$,
but the reverse implication is not true
(for readers familiar with quantum information: any $M$ which has
$M = \Pi M \Pi$ is \emph{Bose-symmetric}, but may not be \emph{PPT-symmetric};
maximally symmetric matrices are both. See \cite{doherty2012convergence} for further discussion.)

If we restrict attention to the embedding this induces of $\R[x]_{d/2}$
(i.e. the homogeneous degree-$d/2$ polynomials) into $\R^{n^{d/2}}$,
the resulting subspace is sometimes called the \emph{symmetric subspace}
and in other works is denoted by $\vee^{d/2} \R^n$.
We sometimes abuse notation and let $\Pi$ be the projector from $\R^{n^{d/2}}$
to the canonical embedding of $\R[x]_{d/2}$.

\subsubsection{Maximally-Symmetric Matrices from Tensors}
The group $\cS_d$ acts on the set of $d$-tensors (canonically flattened to
matrices $\R^{n^{\lfloor d/2 \rfloor} \times n^{\lceil d/2 \rceil}}$)
by permutation of indices.
To any such flattened
$M \in \R^{n^{\lfloor d/2 \rfloor} \times n^{\lceil d/2 \rceil}}$,
we associate a family of maximally-symmetric matrices $\Sym M$ given by
\begin{displaymath}
  \Sym M \defeq \left \{ t \sum_{\pi \in \cS_d} \pi \cdot M
      \mbox{ for all $t \geq 0$} \right \}\mper
\end{displaymath}
That is, $\Sym M$ represents all scaled averages of $M$ over different possible
flattenings of its corresponding $d$-tensor.
The following conditions on a matrix $M$ are thus equivalent:
(1) $M \in \Sym M$,
(2) $M$ is maximally symmetric,
(3) a tensor that flattens to $M$ is invariant under the index-permutation
action of $\cS_d$, and
(4) $M$ may be considered as a linear functional on the space of homogeneous
polynomials $\R[x]_d$.
When we construct maximally-symmetric matrices from un-symmetric ones,
the choice of $t$ is somewhat subtle and will be important in not being too
wasteful in intermediate steps of our construction.

There is a more complex group action characterizing maximally-symmetric
matrices in $\R^{\tuples(d) \times \tuples(d)}$, which projects to the action
of $\cS_{d'}$ under the projection of $\R^{\tuples(d) \times \tuples(d)}$
to $\R^{n^{d'/2} \times n^{d'/2}}$.
We will never have to work explicitly with this full symmetry group;
instead we will be able to construct linear functionals on $\R[x]_{\leq d}$
(i.e. maximally symmetric matrices in $\R^{\tuples(d) \times \tuples(d)}$)
by symmetrizing each degree (i.e. each $d' \leq d$) more or less separately.

\subsection{Formal Statement of the Lower Bound}
\label{sec:formal}
We will warm up with the degree-$4$ lower bound, which is conceptually somewhat simpler.
\begin{theorem}[Degree-$4$ Lower Bound, General Version]
  \label{thm:lower-bound-4}
  Let $\bA$ be a $4$-tensor and let $\lambda > 0$ be a function of $n$.
  Suppose the following conditions hold:
  \begin{itemize}[\quad---]
  \item\emph{$\bA$ is significantly correlated with $\sum_{\pi \in \cS_4} \bA^\pi$.}\\
      $\iprod{ \bA, \sum_{\pi \in \cS_4} \bA^\pi } \ge \Omega(n^4)$.
  \item\emph{Permutations have lower-bounded spectrum.}\\
      For every $\pi \in \cS_4$, the Hermitian $n^2 \times n^2$ unfolding $\frac 1 2 (A^\pi + (A^\pi)^T)$ of
      $\bA^\pi$ has no eigenvalues smaller than $-\lambda^2$.
  \item\emph{Using $\bA$ as $4$th pseudo-moments does not imply that $\|x\|^4$ is too large.}\\
      For every $\pi \in \cS_4$, we have
      $\iprod{\symId, A^\pi} \le O(\lambda^2 n^{3/2})$
  \item\emph{Using $\bA$ for $4$th pseudo-moments does not imply first and second degree moments are
  too large.} \\
      Let $\cL : \R[x]_4 \rightarrow \R$ be the linear functional given by the
      matrix representation $M_{\cL} \seteq \frac 1 {\lambda^2 n^2} \sum_{\pi \in \cS_4} A^\pi$.
      Let
      \begin{align*}
        \delta_2 & \defeq \max_{i \neq j} \left | \cL \|x\|_2^2x_ix_j \right |\\
        \delta_2' & \defeq \max_{i} \left | \cL \|x\|_2^2x_i^2 \right |
      \end{align*}
      Then $n^{3/2}\delta_2' + n^2 \delta_2 \le O(1)$.
  \end{itemize}
  Then there is a degree-$4$ pseudo-distribution $\{ x \}$ satisfying
  $\{ \|x\|_2^2 = 1 \}$ so that
  $\pE \bA(x) \ge \Omega(n^2/\lambda^2) + \Theta(\E^\mu \bA(x))$.
\end{theorem}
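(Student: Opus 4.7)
The plan is to construct the required pseudo-expectation as a convex combination
\[ \pE' \;\defeq\; \epsilon \cdot \cL_{\mathrm{sig}} \;+\; (1-\epsilon) \cdot \E\nolimits^\mu \]
of a ``signal'' linear functional $\cL_{\mathrm{sig}}$ built directly from the tensor $\bA$ and the expectation $\E^\mu$ with respect to the uniform distribution on the unit sphere, for a suitable constant $\epsilon > 0$. The functional $\E^\mu$ is PSD and already satisfies $\{\|x\|^2 = 1\}$, so the construction has the right shape; what must be verified is that $\cL_{\mathrm{sig}}$ (which by itself will fail to be PSD) can be defined to give large value on $\bA(x)$ and can be chastened by a $(1-\epsilon)$ dose of $\E^\mu$ without sacrificing this value.

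First I would define $\cL_{\mathrm{sig}}$ by stipulating its degree-$4$ part to be exactly the maximally-symmetric matrix $M_\cL = \tfrac{1}{\lambda^2 n^2} \sum_{\pi \in \cS_4} A^\pi$ appearing in condition $4$, and then extending to lower degrees by forcing the constraint $\{\|x\|^2-1=0\}$. Explicitly: set $\cL_{\mathrm{sig}} x_i x_j \defeq \cL(\|x\|^2 x_i x_j)$, $\cL_{\mathrm{sig}} x_i \defeq 0$, and $\cL_{\mathrm{sig}} 1 \defeq \cL_{\mathrm{sig}} \|x\|^2 = \iprod{\symId, M_\cL}$, which by condition $3$ is of order $n^{-1/2}$. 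The correlation assumption in condition $1$ then yields
\[ \cL_{\mathrm{sig}}\bA(x) \;=\; \iprod{M_\cL, M_\bA} \;\geq\; \Omega(n^2/\lambda^2)\mper \]
By construction the only nontrivial $\|x\|^2$-consistency relations involve degree-$2$ and degree-$4$ moments, and condition $4$ is exactly the promise needed to keep those compatible up to the scales we can afford.

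Second, I would quantify how badly $\cL_{\mathrm{sig}}$ fails to be PSD. Condition $2$ bounds $\tfrac{1}{2}(A^\pi + (A^\pi)^T) \succeq -\lambda^2 \cdot \Id$ for each $\pi \in \cS_4$; averaging and rescaling by $\tfrac{1}{\lambda^2 n^2 \lvert \cS_4\rvert}$, the degree-$4$ block of $M_{\cL_{\mathrm{sig}}}$ has smallest eigenvalue at worst $-O(1/n^2)$ on the symmetric subspace $\Pi$. The lower-degree blocks are controlled by the scalars $\delta_2, \delta_2'$ from condition $4$: the degree-$2$ moment matrix has entries of size $O(\delta_2')$ on the diagonal and $O(\delta_2)$ off-diagonal, so by Gershgorin its most-negative eigenvalue is at least $-O(\delta_2' + n\delta_2)$. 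Combined with condition $3$ controlling $\cL_{\mathrm{sig}}\|x\|^4$, these bounds show that the total negative part of $M_{\cL_{\mathrm{sig}}}$ is of size $O(1/n^2)$ at degree $4$ and comparably small at degrees $\leq 2$.

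Third, I would verify that mixing with $\E^\mu$ restores positivity without killing $\cL_{\mathrm{sig}}\bA(x)$. Since $\E^\mu$ corresponds to $\symId/(n^2+2n)$, on the symmetric subspace its degree-$4$ block has eigenvalue $\Theta(1/n^2)$, and the lower-degree blocks are comparably well-behaved. For a constant $\epsilon$ small enough, the $(1-\epsilon)\E^\mu$ contribution dominates every negative eigenvalue introduced by $\epsilon \cdot \cL_{\mathrm{sig}}$ block-by-block, making $\pE'$ PSD; for that same $\epsilon$ taken as large as our estimates allow, we retain
\[ \pE'\bA(x) \;=\; \epsilon\cdot\cL_{\mathrm{sig}}\bA(x) + (1-\epsilon)\E\nolimits^\mu\bA(x) \;\geq\; \Omega(n^2/\lambda^2) + \Theta(\E\nolimits^\mu\bA(x))\mper \]
The main obstacle is the bookkeeping in this third step: the degree-$4$ block of $\pE'$ must remain PSD on $\Pi$, which requires the $O(1/n^2)$ negative tail of $\epsilon \cdot M_\cL$ to be beaten by the $\Theta((1-\epsilon)/n^2)$ positive baseline from $\E^\mu$, and simultaneously the Schur-complement interactions between the degree-$2$ and degree-$4$ blocks (mediated by the $\|x\|^2=1$ constraint) must not open up new negative directions. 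The four hypotheses of the theorem are calibrated to make the bookkeeping go through with $\epsilon = \Theta(1)$; once this is checked, $\pE'$ is a valid degree-$4$ pseudo-expectation satisfying $\{\|x\|^2=1\}$ with the claimed lower bound.
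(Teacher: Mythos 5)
Your overall plan — mix a signal functional $\cL_{\mathrm{sig}}$ biased toward $\bA$ with a dose of $\E^\mu$ and tune the mixing parameter — is the same as the paper's (compare \pref{lem:mix-with-uniform} and the final step of the proof). The ordering is slightly different: you bake the sphere constraint into $\cL_{\mathrm{sig}}$ from the start by setting $\cL_{\mathrm{sig}} x_i x_j \defeq \cL(\|x\|^2 x_i x_j)$, whereas the paper first builds a bona fide PSD object $\pE^0 = \E^\mu + \epsilon\cL$ off the sphere and only then pushes it onto the sphere via \pref{lem:deg-one-two}. This re-ordering is a genuine structural variation, not a mistake, and either order can in principle be made to work.

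The gap is in the step where you quantify how negative $\cL_{\mathrm{sig}}$ can be. You apply Gershgorin to the $n\times n$ degree-$2$ moment block and conclude its most negative eigenvalue is $-O(\delta_2' + n\delta_2)$. That bound is correct as a statement about that block, but it is not the quantity that controls PSDness of the full $\tuples(4)\times\tuples(4)$ matrix. PSDness of $\pE' = \epsilon\cL_{\mathrm{sig}} + (1-\epsilon)\E^\mu$ is equivalent to $\pE' p(x)^2 \geq 0$ for all $p \in \R[x]_{\leq 2}$, and when you expand $p(x)^2$ for $p = p_0 + \sum_i p_ix_i + \sum_{ij}p_{ij}x_ix_j$ the dominant dangerous contribution is the \emph{cross-degree} term $2p_0\sum_{ij}p_{ij}\,\cL_{\mathrm{sig}} x_ix_j$, which lives in the off-diagonal block between the constant row and the degree-$2$ row, not in the diagonal degree-$2$ block. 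Relative to the normalization $\E^\mu p(x)^2 = 1$, one can have $p_0 = \Theta(1)$ and $\sum_{ij}p_{ij}^2 = \Theta(n^2)$ simultaneously, so by Cauchy--Schwarz $\sum_{i\neq j}|p_{ij}|$ can be as large as $\Theta(n^2)$; this makes the cross term as large as $\Theta(n^2\delta_2)$, and it is exactly to tame it that hypothesis 4 imposes $n^2\delta_2 \leq O(1)$ rather than the weaker $n\delta_2 \leq O(1)$ that Gershgorin on the diagonal block would suggest. The paper's \pref{lem:deg-one-two} is precisely this bookkeeping: it bounds $\lmin\cL$ (defined relative to $\E^\mu p(x)^2$, not relative to $\|\cdot\|$) by expanding $p(x)^2$ term by term and applying Cauchy--Schwarz to each cross-degree piece, arriving at $-O(1) - O(n^{3/2})\delta_2' - O(n^2)\delta_2$. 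Without an argument of this shape your claim that ``$\epsilon = \Theta(1)$ suffices'' is not established; a block-by-block Gershgorin analysis will not produce it. A secondary, fixable issue: as written, $\pE' 1 = \epsilon\,\iprod{\symId, M_\cL} + (1-\epsilon) \approx 1-\epsilon \neq 1$, so you must renormalize (dividing by this constant factor changes nothing asymptotically, but the pseudo-expectation is not valid until you do).
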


The degree-$3$ version of our lower bound requires bounds on the spectra of the flattenings
not just of the $3$-tensor $\bA$ itself but also of the flattenings of an associated $4$-tensor,
which represents the polynomial $\iprod{x^{\tensor 2}, (\sum_i A_i \tensor A_i) x^{\tensor 2}}$.

\begin{theorem}[Degree-$3$ Lower Bound, General Version]
  \label{thm:lower-bound}
  Let $\bA$ be a $3$-tensor and let $\lambda > 0$ be a function of $n$.
  Suppose the following conditions hold:
  \begin{itemize}[\quad---]
  \item\emph{$\bA$ is significantly correlated with $\sum_{\pi \in \cS_3} \bA^\pi$.} \\
      $\iprod{ \bA, \sum_{\pi \in \cS_3} \bA^\pi } \ge \Omega(n^3)$.
  \item\emph{Permutations have lower-bounded spectrum.} \\
      For every $\pi \in \cS_3$, we have
      \[
        -2\lambda^2 \cdot \Pi \Id \Pi \preceq
        \frac 1 2 \Pi(\sigma \cdot A^\pi (A^\pi)^T + \sigma^2 \cdot A^\pi (A^\pi)^T)\Pi
        + \frac 1 2 \Pi(\sigma \cdot A^\pi (A^\pi)^T + \sigma^2 \cdot A^\pi (A^\pi)^T)^T\Pi\mper
      \]
  \item\emph{Using $AA^T$ for $4$th moments does not imply $\|x\|^4$ is too large.} \\
      For every $\pi \in \cS_3$, we have
      $\iprod{ \symId, A^\pi (A^\pi)^T } \le O(\lambda^2 n^{2})$
  \item\emph{Using $A$ and $AA^T$ for $3$rd and $4$th moments do not imply first and second degree moments are
  too large.} \\
      Let $\pi \in \cS_3$.
      Let $\cL : \R[x]_{4} \rightarrow \R$ be the linear functional given by
      the matrix representation
      $M_{\cL} \seteq \frac 1 {\lambda^2 n^2}\sum_{\pi'\in\cS_4}\pi'\cdot AA^T$.
      Let
      \begin{align*}
        \delta_1 & \defeq \max_i \left | \frac 1 {\lambda n^{3/2}}
            \iprod{\Id_{n \times n}, A^\pi_i} \right |\\
        \delta_2 & \defeq \max_{i \neq j} \left | \cL \|x\|_2^2x_ix_j \right |\\
        \delta_2' & \defeq \max_{i} \left |\cL \|x\|_2^2x_i^2 - \tfrac{1}{n}\cL\|x\|_2^4 \right |
      \end{align*}
      Then $n\delta_1 + n^{3/2}\delta_2' + n^2 \delta_2 \le O(1)$.
  \end{itemize}
  Then there is a degree-$4$ pseudo-distribution $\{ x \}$
  satisfying $\{ \| x \|_2^2 = 1 \}$ so that
  \begin{displaymath}
    \pE \bA(x) \ge
      \Omega \Paren{ \frac{n^{3/2}}{\lambda}}
      + \Theta(\E\nolimits^\mu \bA(x))
    \mper\qedhere
  \end{displaymath}
\end{theorem}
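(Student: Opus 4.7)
The plan is to build a linear functional $\cL \colon \R[x]_{\le 4}\to\R$ in three stages, following the outline at the start of \pref{sec:lower-bounds}. In the first stage I would assemble candidate moments. Set $\cL 1 = 1$, $\cL x_i = 0$, and $\cL x_ix_j = \delta_{ij}/n$, matching the uniform distribution $\mu$ on the unit sphere through degree two. For the degree-$3$ block take
\[
\cL x_ix_jx_k \;=\; c_3 \cdot (\tsum_{\pi\in\cS_3}\bA^\pi)_{ijk}
\]
with $c_3 = \Theta(1/(\lambda n^{3/2}))$; the correlation hypothesis $\iprod{\bA,\sum_\pi \bA^\pi}\ge \Omega(n^3)$ then yields $\cL\bA(x) \ge \Omega(n^{3/2}/\lambda)$. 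For the degree-$4$ block, use the matrix representation $M_{\cL}^{(4)} \seteq \tfrac{1}{\lambda^2 n^2}\sum_{\pi\in\cS_4}\pi\cdot AA^T$ singled out by the fourth hypothesis. Morally, $AA^T$ plays the role of the Schur complement of the degree-$1$-through-$3$ block relative to a hypothetical PSD extension, and averaging over $\cS_4$ is what forces $M_\cL^{(4)}$ to represent an honest linear functional on polynomials rather than a non-commutative ``moment''.

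In the second stage I would enforce $\cL(\|x\|^2 - 1)q(x) = 0$ for every $q\in\R[x]_{\le 2}$ with $(\|x\|^2-1)q(x)\in\R[x]_{\le 4}$. The functional built above only satisfies this approximately, and the errors are precisely what the parameters $\delta_1,\delta_2,\delta_2'$ measure: $\cL(\|x\|^2 - 1)x_i$ reduces (after symmetrization) to a multiple of $\tfrac{1}{\lambda n^{3/2}}\iprod{\Id, A^\pi_i}$ and is bounded by $\delta_1$, while $\cL(\|x\|^2 - 1)x_ix_j$ splits into off-diagonal and renormalized-diagonal pieces controlled by $\delta_2$ and $\delta_2'$. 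I would then subtract small corrections from the degree-$1$ and degree-$2$ moments of $\cL$ to kill these errors exactly, producing a linear functional $\cL'$ that vanishes on the ideal generated by $\|x\|^2-1$. The total corrective mass is $O(n\delta_1 + n^{3/2}\delta_2' + n^2\delta_2) = O(1)$ by the last hypothesis, so $\cL'\bA(x)$ remains $\Omega(n^{3/2}/\lambda)$.

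In the third stage I would repair positivity. Each individual $A^\pi (A^\pi)^T$ is PSD, but the symmetrized sum need not be. The spectral hypothesis bounds the Hermitization of $\sigma \cdot A^\pi(A^\pi)^T + \sigma^2 \cdot A^\pi(A^\pi)^T$ on the symmetric subspace by $-2\lambda^2\Pi\Id\Pi$, so after dividing by $\lambda^2 n^2$ the degree-$4$ block $M_{\cL'}^{(4)}$ has eigenvalues no smaller than $-O(1/n^2)$ on the symmetric subspace. The trace hypothesis simultaneously keeps $\iprod{\symId, M_{\cL'}^{(4)}}$ from being too large. Since $\E^\mu$ has matrix representation $\tfrac{1}{n^2+2n}\symId$, whose restriction to the symmetric subspace is bounded below by $\Theta(1/n^2)\cdot\Pi\Id\Pi$, the mixture
\[
\pE \;\seteq\; \epsilon\cdot\cL' + (1-\epsilon)\cdot\E\nolimits^\mu
\]
is a valid degree-$4$ pseudo-expectation satisfying $\{\|x\|^2 = 1\}$ for a sufficiently small but still constant $\epsilon>0$. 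Linearity then delivers $\pE\bA(x) \ge \Omega(n^{3/2}/\lambda) + \Theta(\E^\mu\bA(x))$, as required.

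The delicate step is the quantitative balancing across the three stages: the signal of size $n^{3/2}/\lambda$ planted in the degree-$3$ moments must survive both the subtraction carried out in stage two and the convex combination in stage three. This forces $\epsilon$ to be an absolute constant, which is affordable only because the permutation-spectra hypothesis gives the sharp lower bound $-2\lambda^2\Pi\Id\Pi$ uniformly over $\cS_4$; the prefactor $1/(\lambda^2 n^2)$ in $M_\cL^{(4)}$ then yields negative eigenvalues of exactly the order $1/n^2$ that the uniform-distribution piece absorbs at constant mixing weight. Verifying that the permutation-averaging used to restore commutativity does not blow up the negative eigenvalues past this threshold is the main technical obstacle, and it is also what makes the analogous degree-$4$ statement (\pref{thm:lower-bound-4}) conceptually simpler: in that setting, no auxiliary $AA^T$ enters, and the spectral hypothesis needs only to lower-bound the permuted tensor itself.
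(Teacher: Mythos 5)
Your stage-3 positivity argument has a genuine gap, and it is exactly the place where the paper's proof does its hardest work. You claim that the mixture $\epsilon\cL' + (1-\epsilon)\E^\mu$ is a valid pseudo-expectation because the degree-$4$ block of $\cL'$ has eigenvalues $\geq -O(1/n^2)$ on the symmetric subspace while $\E^\mu$'s degree-$4$ block is $\geq \Theta(1/n^2)\cdot\Pi\Id\Pi$. But positive-semidefiniteness of a degree-$4$ pseudo-expectation is a condition on the full $\tuples(2)\times\tuples(2)$ moment matrix, not just on its bottom-right $n^2\times n^2$ corner. For $p(x) = p_0 + \sum_i p_i x_i + \sum_{ij} p_{ij}x_ix_j$, the quantity $\cL'p(x)^2$ contains a cross term $2\sum_{i,j,k}p_ip_{jk}\,\cL' x_ix_jx_k$ coupling the degree-$1$ coefficients to the degree-$3$ moments, and the degree-$3$ moments are precisely what you made large (of order $1/(\lambda n^{3/2})$ times the symmetrized entries of $\bA$) to plant the signal. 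Your stage 1 never verifies that your chosen degree-$4$ block dominates these cross terms in the Schur-complement sense, and mixing with $\E^\mu$ — which has zero odd moments — does not help with the off-diagonal blocks. You cannot conclude validity of the mixture from bounds on the degree-$4$ blocks alone.

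The paper handles this by never building a non-PSD functional in the first place: it takes only the degree-$\leq3$ moments as data, and uses the symmetric Schur complement (Definition 6.15, Lemma 6.17) to generate degree-$4$ moments $D$ that by construction satisfy $D \succeq CB^{-1}C^T$, making the full moment matrix of the intermediate $\pE^0$ PSD. The second hypothesis (permutations have lower-bounded spectrum) is used at exactly this point — to show that the symmetrization of $CB^{-1}C^T$ at scaling $t = 1$ still dominates $CB^{-1}C^T$ — not, as in your proposal, to bound the eigenvalues of a pre-committed degree-$4$ block. Because $\pE^0$ is a bona fide pseudo-expectation, the paper can then apply Lemma 6.22 to pass to the sphere with an explicitly tracked penalty $\lmin\cL^1 \geq -O(1)$, a bound that does account for the degree-$3$ cross terms (they enter via $\delta_1$). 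Only then does the constant-weight mixing with $\E^\mu$ (Lemma 6.23) close the argument. Note also a concrete discrepancy downstream of this gap: the paper's $\pE^0|_4$ comes out to $(1+\tfrac{1}{n})\E^\mu|_4 + \cL$, i.e.\ your $\frac{1}{\lambda^2 n^2}\sum_{\pi\in\cS_4}\pi\cdot AA^T$ plus a $\symId$ correction that the Schur complement produces automatically and that your direct construction omits; without it, the $\sigma\cdot\Id$ direction of $CB^{-1}C^T$ is left undominated. To repair your proposal you would need to either insert the Schur-complement step explicitly, or directly prove a quantitative $\lmin$ bound for your stage-2 functional $\cL'$ that includes the degree-$1$/degree-$3$ coupling — which would reproduce most of the paper's Lemma 6.22 calculation.
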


\subsubsection{Proof of \pref{thm:lower-bound-tensor-pca}}
We prove the degree-$3$ corollary;
the degree-$4$ case is almost identical using \pref{thm:lower-bound-4}
and \pref{lem:concentration-lb-4} in place of their degree-$3$ counterparts.
\begin{proof}
  Let $\bA$ be a $3$-tensor.
  If $\bA$ satisfies the conditions of \pref{thm:lower-bound} with $\lambda = O(n^{3/4}\log(n)^{1/4})$,
  we let $\eta(\bA)$ be the pseudo-distribution described there, with
  \begin{align*}
    \pE_{x \sim \eta(\bA)} \bA(x) \geq \Omega\Paren{ \frac {n^{3/2}}{\lambda}} + \Theta(\E\nolimits^{\mu} \bA(x))
  \end{align*}
  If $\bA$ does not satisfy the regularity conditions, we let $\eta(\bA)$ be the uniform distribution on the unit sphere.
  If $\bA$ has unit Gaussian entries, then \pref{lem:concentration-lb} says that the regularity conditions are satisfied with this choice of $\lambda$ with high probability.
  The operator norm of $\bA$ is at most $O(\sqrt n)$, so $\E^\mu \bA(x) = O(\sqrt n)$ (all with high probability) \Snote{} \cite{tomioka2014spectral}.
  We have chosen $\lambda$ and $\tau$ so that when the conditions of \pref{thm:lower-bound} and the bound on $\E^\mu \bA(x)$, obtain,
  \[
    \pE_{x \sim \eta(\bA)} \tau \cdot \iprod{v_0,x}^3 + \bA(x) \geq \Omega \Paren{\frac{n^{3/4}}{\log(n)^{1/4}}}\mper
  \]
  On the other hand, our arguments on degree-$4$ SoS certificates for random polynomials say with high probability every degree-$4$ pseudo-distribution $\{ y \}$ satisfying $\{ \|y\|^2 = 1 \}$ has $\pE \tau \cdot \iprod{v,y}^3 + \bA(y) \leq O(n^{3/4}\log(n)^{1/4})$.
  Thus, $\{ x\}$ is nearly optimal and we are done.
\end{proof}

\subsection{In-depth Preliminaries for Pseudo-Expectation Symmetries}
This section gives the preliminaries we will need to construct maximally-symmetric
matrices (a.k.a. functionals $\cL : \R[x]_{\leq 4} \rightarrow \R$) in what follows.
For a non-maximally-symmetric $M \in \R^{n^2 \times n^2}$ under the action of $\cS_4$ by permutation of indices, the subgroup $\cC_3 < \cS_4$
represents all the significant permutations whose spectra may differ from
one another in a nontrivial way.
The lemmas that follow will make this more precise.
For concreteness, we take $\cC_3 = \iprod{\sigma}$ with $\sigma = (234)$,
but any other choice of $3$-cycle would lead to a merely syntactic change in the proof.
\begin{lemma}
  \label{lem:s4-decomp}
  Let $\cD_8 < \cS_4$ be given by $\cD_8 = \iprod{(12),(34),(13)(24)}$.
  Let $\cC_3 = \{ (), \sigma, \sigma^2 \} = \langle \sigma \rangle$, where $()$
  denotes the identity in $\cS_4$.
  Then $\{ gh : g \in \cD_8, h \in \cC_3 \} = \cS_4$.
\end{lemma}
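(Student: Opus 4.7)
The plan is to recognize this as a standard group-theoretic product identity: if $H,K$ are subgroups of a finite group $G$, then $|HK| = |H|\cdot|K|/|H\cap K|$. Since $\cS_4$ has order $24 = 8\cdot 3$, it suffices to verify that (a) $\cD_8$ is a subgroup of order $8$, (b) $\cC_3$ is a subgroup of order $3$, and (c) $\cD_8 \cap \cC_3 = \{()\}$. Then $|\cD_8 \cdot \cC_3| = 8\cdot 3/1 = 24 = |\cS_4|$, and since $\cD_8 \cdot \cC_3 \subseteq \cS_4$ the two sets coincide.

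Fact (b) is immediate since $\sigma=(234)$ has order $3$. For (a), I would observe that $(12)$ and $(34)$ are commuting transpositions, generating the Klein four group $V=\{(),(12),(34),(12)(34)\}$. Conjugation by $(13)(24)$ sends $(12)\mapsto(34)$ and $(34)\mapsto(12)$, so $V$ is normalized by $(13)(24)$; since $(13)(24)\notin V$, the subgroup they generate is an order-$8$ extension of $V$, namely the Sylow $2$-subgroup of $\cS_4$, which is isomorphic to the dihedral group $D_4$. Thus $|\cD_8|=8$.

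For (c), by Lagrange's theorem $|\cD_8\cap\cC_3|$ divides both $8$ and $3$, hence equals $1$, so the intersection is trivial. Combined with the product-of-subgroups counting formula, this yields the lemma. The main (and only) obstacle is confirming (a), which is a short hand-computation as sketched above; the rest is a one-line cardinality argument.
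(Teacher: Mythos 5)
Your proof is correct, but it takes a genuinely different route from the paper. You use the counting identity $|HK| = |H|\,|K|/|H\cap K|$ for subgroups $H,K$ of a finite group: once you establish that $\cD_8$ is a subgroup of order $8$, that $\cC_3$ has order $3$, and that their intersection is trivial (forced by Lagrange since $\gcd(8,3)=1$), the product set has cardinality $24$ and must equal $\cS_4$. The paper instead gives a structural factorization: it writes $\cS_4 = \cC_2 \cdot \cA_4$, $\cA_4 = \cK_4 \cdot \cC_3$ (with $\cK_4$ the normal Klein four group in $\cA_4$), and $\cD_8 = \cC_2 \cdot \cK_4$, then regroups $\cS_4 = \cC_2 \cdot \cK_4 \cdot \cC_3 = \cD_8 \cdot \cC_3$ by associativity of set products. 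Your approach is more elementary and arguably shorter once the product-cardinality formula is in hand, and it makes explicit the order-$8$ check that the paper takes largely for granted; the paper's approach makes the decomposition transparent in terms of the normal series $\cS_4 \triangleright \cA_4 \triangleright \cK_4 \triangleright 1$, at the cost of needing to justify each of the three intermediate factorizations. Both are valid and the choice is essentially a matter of taste.
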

\begin{proof}
  The proof is routine; we provide it here for completeness.
  Note that $\cC_3$ is a subgroup of order $3$ in the alternating group $\cA_4$.
  This alternating group can be decomposed as $\cA_4 = \cK_4 \cdot \cC_3$,
  where $\cK_4 = \langle (12)(34), (13)(24) \rangle$ is a normal subgroup of
  $\cA_4$.
  We can also decompose $\cS_4 = \cC_2 \cdot \cA_4$ where
  $\cC_2 = \langle(12)\rangle$ and $\cA_4$ is a normal subgroup of $\cS_4$.
  Finally, $\cD_8 = \cC_2 \cdot \cK_4$ so by associativity,
  $\cS_4 = \cC_2 \cdot \cA_4
        = \cC_2 \cdot \cK_4 \cdot \cC_3
        = \cD_8 \cdot \cC_3$.
\end{proof}
This lemma has two useful corollaries:
\begin{corollary}
  \label{cor:s4-decomp-permuted}
  For any subset $S \subseteq \cS_4$, we have $\{ ghs : g \in \cD_8, h \in \cC_3, s \in S \} = \cS_4$.
\end{corollary}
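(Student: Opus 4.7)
The plan is to reduce immediately to \pref{lem:s4-decomp} and argue that right-multiplication by any fixed $s \in \cS_4$ is a bijection on $\cS_4$, so the set $\{gh : g \in \cD_8, h \in \cC_3\}$ already exhausts $\cS_4$ even after being translated by a single element $s$. Concretely, fix any $s \in S$ (assuming $S$ nonempty, which is the intended reading; otherwise the claim is vacuous or mis-stated). Then
\begin{displaymath}
  \{ ghs : g \in \cD_8,\, h \in \cC_3 \}
  \;=\; \Bigl( \{ gh : g \in \cD_8,\, h \in \cC_3 \} \Bigr) \cdot s
  \;=\; \cS_4 \cdot s
  \;=\; \cS_4\mcom
\end{displaymath}
where the second equality uses \pref{lem:s4-decomp} and the third uses that right-multiplication by $s$ permutes $\cS_4$. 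Taking the union over all $s \in S$ can only enlarge this set, but it is already all of $\cS_4$, and of course it is contained in $\cS_4$; hence equality holds.

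There is essentially no obstacle: the content is entirely carried by \pref{lem:s4-decomp}, and the corollary just records the trivial observation that multiplying a covering family on the right by any group elements does not shrink the cover. I would write it in two lines, exactly as above, with the one caveat being the degenerate case $S = \eset$, which I would either rule out by convention or handle by noting that the corollary is only used in the sequel with nonempty $S$.
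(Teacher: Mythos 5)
Your proof is correct and matches the intended (unstated) argument: the paper gives no explicit proof for this corollary, treating it as immediate from \pref{lem:s4-decomp}, and your observation that right-multiplication by any fixed $s \in S$ is a bijection of $\cS_4$ is exactly the reason it is immediate. Your caveat about $S = \eset$ is also apt---the statement as written is false in that degenerate case, and the corollary is indeed only invoked with nonempty $S$ (in fact with $S$ a coset or singleton).
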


\begin{corollary}
  \label{cor:deg-4-sym}
  Let $M \in \R^{n^{2} \times n^{2}}$. Let the matrix $M'$ be given by
  \begin{displaymath}
    M' \defeq \frac 1 2 \Pi \Paren{ M + \sigma \cdot M + \sigma^2 \cdot M} \Pi
      + \frac 1 2 \Pi \Paren{ M + \sigma \cdot M + \sigma^2 \cdot M}^T \Pi\mper
  \end{displaymath}
  Then $M' \in \Sym M$.
\end{corollary}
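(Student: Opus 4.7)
} My plan is to interpret each operation appearing in the definition of $M'$ as group-averaging over a particular subgroup of $\cS_4$ acting on $M$ by index-permutation, and then combine this with \pref{lem:s4-decomp} to recognize $M'$ as a scalar multiple of $\sum_{\pi \in \cS_4} \pi \cdot M$.

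First I would track carefully what $\Pi$ and transposition do to the $4$-tensor $\bM$ whose canonical flattening is $M$. Since $\Pi$ is the projector onto the symmetric subspace of $\R^{n^2}$, left-multiplication by $\Pi$ averages $M$ over the swap of the two row indices, i.e.\ $\Pi M = \tfrac12(M + (12)\cdot M)$; symmetrically, $M \Pi = \tfrac12(M + (34)\cdot M)$. Similarly, transposition of an $n^2\times n^2$ matrix corresponds to swapping the row-pair with the column-pair, so $M^T = (13)(24)\cdot M$. Combining these three observations and expanding yields
\begin{equation*}
\tfrac{1}{2}\bigl(\Pi M \Pi + \Pi M^T \Pi\bigr) \;=\; \tfrac{1}{8}\sum_{g \in \cD_8} g \cdot M\mcom
\end{equation*}
since the subgroup generated by $(12)$, $(34)$, and $(13)(24)$ is exactly the order-$8$ group $\cD_8$.

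Next I would apply this identity to $N \seteq M + \sigma \cdot M + \sigma^2 \cdot M$ in place of $M$. By linearity of the maps $X \mapsto \Pi X \Pi$ and $X \mapsto \Pi X^T \Pi$,
\begin{equation*}
M' \;=\; \tfrac{1}{8} \sum_{g \in \cD_8} \sum_{h \in \cC_3} gh \cdot M\mper
\end{equation*}
Now I would invoke \pref{lem:s4-decomp}, which says $\{gh : g \in \cD_8,\, h \in \cC_3\} = \cS_4$. Because the cardinalities satisfy $|\cD_8|\cdot|\cC_3| = 8 \cdot 3 = 24 = |\cS_4|$, the map $(g,h) \mapsto gh$ is in fact a bijection from $\cD_8 \times \cC_3$ onto $\cS_4$, so each element of $\cS_4$ appears exactly once in the double sum. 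Therefore $M' = \tfrac{1}{8}\sum_{\pi \in \cS_4} \pi \cdot M$, which lies in $\Sym M$ with scalar $t = 1/8 \geq 0$.

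There is no genuine obstacle here; the proof is essentially a careful bookkeeping exercise matching operator-level identities on $n^2\times n^2$ matrices with permutation-group actions on $4$-tensors. The only thing one needs to be careful about is verifying that transposition of $M$ really does correspond to the permutation $(13)(24)$ (rather than some other element swapping the row and column blocks) and tracking normalizations so that the final coefficient in front of $\sum_{\pi\in\cS_4} \pi\cdot M$ is nonnegative, as required by the definition of $\Sym M$.
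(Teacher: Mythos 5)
Your proposal is correct and follows essentially the same route as the paper: interpret $\Pi(\cdot)\Pi$ and transposition as averaging over the subgroup $\cD_8 = \langle (12),(34),(13)(24)\rangle$, observe that $M + \sigma\cdot M + \sigma^2 \cdot M$ is the sum over $\cC_3$, and invoke \pref{lem:s4-decomp} together with $|\cD_8|\cdot|\cC_3| = |\cS_4|$ to identify the double sum with $\sum_{\pi\in\cS_4}\pi\cdot M$. You just make explicit the bijection step that the paper leaves implicit.
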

\begin{proof}
  Observe first that
  $M + \sigma \cdot M + \sigma^2 \cdot M = \sum_{\pi \in \cC_3} \pi \cdot M$.
  For arbitrary $N \in \R^{n^2 \times n^2}$, we show that
  $\frac 1 2 \Pi N \Pi + \frac 1 2 \Pi N^T \Pi
    = \frac 1 8 \sum_{\pi \in \cD_8} \pi \cdot N$.
  First, conjugation by $\Pi$ corresponds to averaging $M$ over the group
  $\langle (12),(34) \rangle$ generated by interchange of indices in row and
  column indexing pairs, individually.
  At the same time, $N + N^T$ is the average of $M$ over the matrix transposition
  permutation group $\langle (13)(24) \rangle$.
  All together,
  \begin{displaymath}
    M' = \frac 1 8 \sum_{g \in \cD_8} \sum_{h \in \cC_3} (gh) \cdot M
    = \frac 1 8\sum_{\pi \in \cS_4} \pi \cdot M
  \end{displaymath}
  and so $M' \in \Sym M$.
\end{proof}
We make an useful observation about the nontrivial permutations of $M$,
in the special case that $M = AA^T $ for some $3$-tensor $\bA$.
\begin{lemma}
  \label{lem:deg-4-sym-AAT}
  Let $\bA$ be a $3$-tensor and let $A \in \R^{n^2 \times n}$ be its flattening,
  where the first and third modes lie on the longer axis and the third mode lies
  on the shorter axis.
  Let $A_i$ be the $n \times n$ matrix slices of $\bA$ along the first mode,
  so that
  \begin{displaymath}
    A = \left( \begin{array}{c} A_1 \\ A_2 \\ \vdots \\ A_n \end{array} \right)
    \mper
  \end{displaymath}
  Let $P : \R^{n^2} \rightarrow \R^{n^2}$ be the orthogonal linear operator so
  that $[Px](i,j) = x(j,i)$.
  Then
  \begin{displaymath}
    \sigma \cdot AA^T = \Paren{\sum_i A_i \tensor A_i} P
    \qquad \mbox{ and } \qquad
    \sigma^2 \cdot AA^T = \sum_i A_i \tensor A_i^T\mper
  \end{displaymath}
\end{lemma}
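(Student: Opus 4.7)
Both identities are direct entry-by-entry calculations at the level of the canonical $4$-tensor associated with each $n^2\times n^2$ matrix. Starting from the flattening convention $A_{(i,j),k} = \bA_{i,j,k}$ and the definition of matrix multiplication, I would expand
\[
  (AA^T)_{(\alpha_1,\alpha_2),(\alpha_3,\alpha_4)} \;=\; \sum_k A_{(\alpha_1,\alpha_2),k}\,A_{(\alpha_3,\alpha_4),k} \;=\; \sum_k \bA_{\alpha_1\alpha_2 k}\,\bA_{\alpha_3\alpha_4 k},
\]
which exhibits $AA^T$ as the unflattening of a $4$-tensor $\bT$ with $\bT_{\alpha_1\alpha_2\alpha_3\alpha_4} = \sum_k \bA_{\alpha_1\alpha_2 k}\bA_{\alpha_3\alpha_4 k}$. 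Applying the paper's convention $\bT^\pi_\alpha = \bT_{\pi^{-1}(\alpha)}$ to $\sigma = (234)$ and $\sigma^2 = (243)$ simply reindexes the four outer slots, producing closed-form expressions for $(\sigma\cdot AA^T)_{(a,b),(c,d)}$ and $(\sigma^2\cdot AA^T)_{(a,b),(c,d)}$ as explicit sums of products of two $\bA$-entries.

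In parallel, I would expand the right-hand sides. Using the Kronecker-product identity $(A_i\tensor B)_{(a,b),(c,d)} = (A_i)_{a,c}\,B_{b,d}$ with $B \in \{A_i, A_i^T\}$, and the elementary fact that right-multiplication by $P$ swaps the two column indices, i.e., $(MP)_{(a,b),(c,d)} = M_{(a,b),(d,c)}$, both $(\sum_i A_i\tensor A_i)P$ and $\sum_i A_i\tensor A_i^T$ can be written as single sums $\sum_i \bA_{i,\cdot,\cdot}\bA_{i,\cdot,\cdot}$ with specific index placements.

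Matching entries on the two sides of each identity then amounts to renaming the summation variable and checking that the four outer indices $(a,b,c,d)$ sit in corresponding slots of the two $\bA$-factors on each side---which is precisely what the cycle $\sigma$ (resp.\ $\sigma^2$) accomplishes when applied to $\bT$. The main source of friction I expect is the bookkeeping of conventions: which pair of modes of $\bA$ forms the long axis of $A$, the direction of the permutation action under $\bT^\pi_\alpha = \bT_{\pi^{-1}(\alpha)}$ once one tracks it through the flattening, and the orientation of the Kronecker product. Once these are pinned down consistently, the two identities follow by direct comparison of the two closed-form expressions.
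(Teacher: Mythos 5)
Your high-level plan is exactly the paper's: compute each side entrywise, identify the underlying $4$-tensors, and match index placements. The paper's proof is precisely this direct comparison. However, there is a genuine gap in your proposal: the flattening convention you commit to at the start, $A_{(i,j),k} = \bA_{i,j,k}$, is the one suggested by the lemma's block picture but is \emph{not} the one under which the stated identities hold, and is not the one the paper's proof actually uses. Under your convention, $(AA^T)_{(a,b),(c,d)} = \sum_k \bA_{a,b,k}\bA_{c,d,k}$ is a contraction over the \emph{third} mode of $\bA$, whereas both $\bigl(\sum_i A_i\otimes A_i\bigr)P$ and $\sum_i A_i\otimes A_i^T$ expand to contractions over the \emph{first} mode, namely $\sum_i \bA_{i,a,d}\bA_{i,b,c}$ and $\sum_i \bA_{i,a,c}\bA_{i,d,b}$. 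No permutation of the four free indices $(a,b,c,d)$ can convert a contraction over mode~$3$ into a contraction over mode~$1$, so the identities fail for generic $\bA$ under your convention (e.g.\ set a single entry $\bA_{1,2,2}=1$ with all others zero and compare the $\bigl((2,2),(2,2)\bigr)$ entries).

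What the paper implicitly uses --- and what you would need --- is the convention $A_{(j,k),i} = \bA_{i,j,k}$, so that the short axis of $A$ corresponds to the slicing mode (mode~$1$). With that choice, $(AA^T)_{(a,b),(c,d)} = \sum_i \bA_{i,a,b}\bA_{i,c,d}$ is also a contraction over mode~$1$, and the permutation $\sigma=(234)$ (which fixes position~$1$, i.e.\ the contracted slot) simply reshuffles the four free indices, after which the comparison you describe goes through. You flagged ``which pair of modes forms the long axis'' as a bookkeeping issue to be ``pinned down consistently,'' but consistency alone is not enough here: the choice actually matters, and the lemma's stated block decomposition $A = (A_1;\ldots;A_n)$ points to the wrong one (it gives the short axis as mode~$3$, not mode~$1$). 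You should resolve this mismatch explicitly rather than assume any self-consistent choice will work.
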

\begin{proof}
  We observe that
  $AA^T[(j_1, j_2),(j_3,j_4)] = \sum_i A_{i j_1 j_2} A_{i j_3 j_4}$
  and that $(\sum_i A_i \tensor A_i)[(j_1, j_2),(j_3,j_4)]
  = \sum_i A_{i j_1 j_3} A_{i j_2 j_4}$.
  Multiplication by $P$ on the right has the effect of switching the order of
  the second indexing pair, so
  $[(\sum_i A_i \tensor A_i) P][(j_1, j_2),(j_3,j_4)]
      = \sum_i A_{ij_1j_4} A_{ij_2j_3}$.
  From this it is easy to see that
  $\sigma \cdot AA^T = (234) \cdot AA^T = (\sum_i A_i \tensor A_i) P$.

  Similarly, we have that
  \begin{displaymath}
    (\sigma^2 AA^T)[(j_1,j_2),(j_3,j_4)] = ((243)\cdot AA^T)[(j_1,j_2),(j_3,j_4)]
    = \sum_k A_{ij_1j_3}A_{ij_4j_2} \mcom
  \end{displaymath}
  from which we see that $\sigma^2 \cdot AA^T = \sum_i A_i \tensor A_i^T$.
\end{proof}

\paragraph{Permutations of the Identity Matrix}
The nontrivial permutations of $\Id_{n^2 \times n^2}$ are:
\begin{align*}
  \Id[(j,k),(j',k')] & = \delta(j,k)\delta(j',k')\\
  \sigma \cdot \Id[(j,k),(j',k')] & = \delta(j,j')\delta(k,k')\\
  \sigma^2 \cdot \Id[(j,k),(j',k')] & = \delta(j,k')\delta(j',k)\mper
\end{align*}
Since $(\Id + \sigma \cdot \Id + \sigma^2 \cdot \Id)$
is invariant under the action of $\cD_8$,
we have $(\Id + \sigma \cdot \Id + \sigma^2 \cdot \Id) \in \Sym M$;
up to scaling this matrix is the same as $\symId$ defined in \pref{eq:symId}.
We record the following observations:
\begin{itemize}[\quad---]
  \item $\Id$, $\sigma \cdot \Id$, and $\sigma^2 \cdot \Id$ are all symmetric
      matrices.
  \item Up to scaling, $\Id + \sigma^2 \Id$ projects to identity on the
      canonical embedding of $\R[x]_2$.
  \item The matrix $\sigma \cdot \Id$ is rank-$1$, positive-semidefinite, and
      has $\Pi (\sigma \cdot \Id) \Pi = \sigma \cdot \Id$.
  \item The scaling
    $[1/(n^2 + 2n)] (\Id + \sigma \cdot \Id + \sigma^2 \cdot \Id)$
    is equal to a linear functional $\E^\mu : \R[x]_4 \rightarrow \R$
    giving the expectation under the uniform distribution
    over the unit sphere $S^{n-1}$.
\end{itemize}

\subsection{Construction of Initial Pseudo-Distributions}
We begin by discussing how to create an initial guess at a pseudo-distribution
whose third moments are highly correlated with the polynomial $\bA(x)$.
This initial guess will be a valid pseudo-distribution, but will fail to be on
the unit sphere, and so will require some repairing later on.
For now, the method of creating this initial pseudo-distribution involves using
a combination of symmetrization techniques to ensure that the matrices we
construct are well defined as linear functionals over polynomials, and spectral
techniques to establish positive-semidefiniteness of these matrices.

\subsubsection{Extending Pseudo-Distributions to Degree Four}
In this section we discuss a construction that takes a linear functional
$\cL : \R[x]_{\leq 3} \rightarrow \R$ over degree-$3$ polynomials
and yields a degree-$4$ pseudo-distribution $\{ x \}$.
We begin by reminding the reader of the Schur complement criterion
for positive-semidefiniteness of block matrices.

\begin{theorem}
Let $M$ be the following block matrix.
\begin{displaymath}
  M \defeq \left ( \begin{array}{cc}
                      B & C^T\\
                      C & D\\
                   \end{array} \right )
\end{displaymath}
where $B \succeq 0$ and is full rank. Then $M \succeq 0$ if and only if
$D \succeq CB^{-1}C^T$.
\end{theorem}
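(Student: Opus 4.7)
The plan is to establish the Schur complement criterion by a block-triangular congruence transformation that diagonalizes $M$. Define
\[
U \defeq \begin{pmatrix} I & B^{-1} C^T \\ 0 & I \end{pmatrix},
\]
which is well-defined precisely because $B$ is full rank. A direct block-matrix multiplication shows that
\[
U^T M U = \begin{pmatrix} B & 0 \\ 0 & D - C B^{-1} C^T \end{pmatrix}.
\]
Since $U$ is upper block-triangular with identity diagonal blocks, it is invertible, so the congruence $M \mapsto U^T M U$ preserves positive-semidefiniteness in both directions. Hence $M \succeq 0$ if and only if the block-diagonal matrix on the right is PSD, which in turn holds if and only if each diagonal block is PSD. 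The first block is $B \succeq 0$ by hypothesis, leaving the condition $D - C B^{-1} C^T \succeq 0$, i.e.\ $D \succeq C B^{-1} C^T$.

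As a sanity check and alternative route, one can argue directly on the quadratic form: for $v = (x, y)^T$, completing the square in $x$ yields
\[
v^T M v = (x + B^{-1} C^T y)^T B (x + B^{-1} C^T y) + y^T \bigl(D - C B^{-1} C^T\bigr) y.
\]
The ``if'' direction is then immediate since both terms are nonnegative. For the ``only if'' direction, take any $y$ and set $x = -B^{-1} C^T y$ to make the first term vanish, which forces $y^T(D - C B^{-1} C^T) y \geq 0$.

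The only delicate point is the full-rank assumption on $B$, which is what allows us to form $B^{-1}$ in both derivations; this hypothesis is already part of the statement, so no additional work is needed. If one wished to drop it, one would replace $B^{-1}$ with the Moore--Penrose pseudoinverse and add the range condition $\operatorname{col}(C^T) \subseteq \operatorname{col}(B)$, but this generalization is not required here. Overall the proof is short and mechanical; the main ``insight,'' if any, is simply writing down the correct $U$.
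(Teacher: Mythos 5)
The paper states this Schur-complement criterion as a known fact and gives no proof of it, so there is no argument of the paper's to compare against; it suffices to check that your proof is correct.

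Your second derivation — completing the square in $x$ — is correct and by itself proves both directions of the theorem. Your first derivation, however, contains a sign error. With
\[
U = \begin{pmatrix} I & B^{-1}C^T \\ 0 & I \end{pmatrix},
\]
the product $U^T M U$ is \emph{not} block diagonal: a direct computation (using $(B^{-1}C^T)^T = CB^{-1}$, which holds since $B \succeq 0$ is symmetric) gives
\[
U^T M U = \begin{pmatrix} B & 2C^T \\ 2C & D + 3\,CB^{-1}C^T \end{pmatrix},
\]
because the off-diagonal contributions add rather than cancel. The correct congruence matrix is
\[
U = \begin{pmatrix} I & -B^{-1}C^T \\ 0 & I \end{pmatrix},
\]
which is exactly what your own completing-the-square identity forces: writing $v = Uw$ so that $w_1 = x + B^{-1}C^T y$ requires $v_1 = w_1 - B^{-1}C^T w_2$, i.e.\ a minus sign in the $(1,2)$ block of $U$. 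This is a one-token fix and does not affect the soundness of the overall proof, since the quadratic-form argument stands on its own, but as stated the displayed identity $U^T M U = \operatorname{diag}(B,\, D - CB^{-1}C^T)$ is false.
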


Suppose we are given a linear functional $\cL : \R[x]_{\leq 3} \rightarrow \R$
with $\cL 1 = 1$.
Let $\cL|_1$ be $\cL$ restricted to $\R[x]_1$ and similarly for $\cL|_2$ and
$\cL|_3$.
We define the following matrices:
\begin{itemize}[\quad---]
  \item $M_{\cL|_1} \in \R^{n \times 1}$
      is the matrix representation of $\cL|_1$.
  \item $M_{\cL|_2} \in \R^{n \times n}$
      is the matrix representation of $\cL|_2$.
  \item $M_{\cL|_3} \in \R^{n^2 \times n}$
      is the matrix representation of $\cL|_3$.
  \item $V_{\cL|_2} \in \R^{n^2 \times 1}$
      is the vector flattening of $M_{\cL|_2}$.
\end{itemize}
Consider the block matrix $M \in \R^{\tuples(2) \times \tuples(2)}$ given by
\begin{displaymath}
  M \defeq \left ( \begin{array}{ccc}
                      1 & M_{\cL|_1}^T & V_{\cL|_2}^T\\
                      M_{\cL|_1} & M_{\cL|_2} & M_{\cL|_3}^T \\
                      V_{\cL|_2} & M_{\cL|_3} & D
                   \end{array} \right )\mcom
\end{displaymath}
with $D \in \R^{n^2 \times n^2}$ yet to be chosen.
By taking
\begin{displaymath}
  B = \left (  \begin{array}{cc}
                      1 & M_{\cL|_1}^T \\
                      M_{\cL|_1} & M_{\cL|_2}
                   \end{array} \right )
    \qquad
  C = \left ( \begin{array}{cc} V_{\cL|_2} & M_{\cL|_3} \end{array} \right )\mcom
\end{displaymath}
we see by the Schur complement criterion that $M$ is positive-semidefinite
so long as $D \succeq CB^{-1}C^T$.
However, not any choice of $D$ will yield $M$ maximally symmetric, which is
necessary for $M$ to define a pseudo-expectation operator $\pE$.

We would ideally take $D$ to be the spectrally-least maximally-symmetric matrix
so that $D \succeq C B^{-1} C^T$.
But this object might not be well defined, so we instead take the following
substitute.
\begin{definition} \label{def:sym-schur}
  Let $\cL, B,C$ as be as above.
  The \emph{symmetric Schur complement} $D \in \Sym C B^{-1} C^T$ is
  $t \sum_{\pi \in \cS_4} \pi \cdot (C B^{-1} C^T)$ for the least $t$ so that
  $t \sum_{\pi \in \cS_4} \pi \cdot (C B^{-1} C^T) \succeq C B^{-1} C^T$.
  We denote by $\pE^{\cL}$ the linear functional
  $\pE^{\cL} : \R[x]_{\leq 4} \rightarrow \R$
  whose matrix representation is $M$ with this choice of $D$,
  and note that $\pE^{\cL}$ is a valid degree-$4$ pseudo-expectation.
\end{definition}

\begin{example}[Recovery of Degree-$4$ Uniform Moments from Symmetric Schur Complement]
  Let $\cL : \R[x]_{\leq 3} \rightarrow \R$ be given by
  $\cL p(x) \seteq \E^{\mu} p(x)$.
  We show that $\pE^{\cL} = \E^\mu$.
  In this case it is straightforward to compute that
  $C B^{-1} C^T = \sigma \cdot \Id / n^2$.
  Our task is to pick $t \geq 0$ minimal so that
  $\frac t {n^2} \Pi(\Id + \sigma \cdot \Id + \sigma^2 \cdot \Id)\Pi
  \succeq \frac 1 {n^2} \Pi (\sigma \cdot \Id) \Pi$.

  We know that $\Pi (\sigma \cdot \Id) \Pi = \sigma \cdot \Id$.
  Furthermore, $\Pi \Id \Pi = \Pi (\sigma^2 \cdot \Id) \Pi$,
  and both are the identity on the canonically-embedded
  subspace $\R[x]_{2}$ in $\R^{\tuples(4)}$.
  We have previously observed that $\sigma \cdot \Id$
  is rank-one and positive-semidefinite,
  so let $w \in \R^{\tuples(4)}$ be such that $ww^T = \sigma \cdot \Id$.

  We compute $w^T (\Id + \sigma \cdot \Id + \sigma^2 \cdot \Id ) w
  = 2 \| w \|_2^2 + \| w \|_2^4 = 2n + n^2$ and
  $w^T (\sigma \cdot \Id) w = \|w\|_2^4 = n^2$.
  Thus $t = n^2 / (n^2 + 2n)$ is the minimizer.
  By a previous observation, this yields $\E^\mu$.
\end{example}
To prove our lower bound, we will generalize the above example
to the case that we start with an operator $\cL : \R[x]_{\leq 3} \rightarrow \R$
which does not match $\E^\mu$ on degree-3 polynomials.

\subsubsection{Symmetries at Degree Three}
  We intend on using the symmetric Schur complement to construct a
  pseudo-distribution from some $\cL : \R[x]_{\leq 3} \rightarrow \R$
  for which $\cL \bA(x)$ is large.
  A good such $\cL$ will have $\cL x_i x_j x_k$ correlated with
  $\sum_{\pi \in \cS_3} \bA^\pi_{ijk}$ for all (or many) indices $i,j,k$.
  That is, it should be correlated with the
  coefficient of the monomial $x_i x_j x_k$ in $\bA(x)$.
  However, if we do this directly by setting $\cL x_ix_jx_k = \sum_{\pi} \bA^{\pi}_{ijk}$,
  it becomes technically inconvenient to
  control the spectrum of the resulting symmetric Schur complement.
  To this avoid, we discuss how to utilize a decomposition of
  $M_{\cL|_3}$ into nicer matrices if such a decomposition exists.

  \begin{lemma}
  \label{lem:deg-3-sym}
  Let $\cL : \R[x]_{\leq 3} \rightarrow \R$, and suppose that
  $M_{\cL|_3} = \frac 1 k (M_{\cL|_3}^1 + \cdots + M_{\cL|_3}^k)$ for some
  $M_{\cL|_3}^1,\ldots,M_{\cL|_3}^k \in \R^{n^2 \times n}$.
  Let $D_1,\ldots,D_k$ be the respective symmetric Schur complements of the
  family of matrices
  \begin{displaymath}
    \left \{ \left ( \begin{array}{ccc}
                      1 & M_{\cL|_1}^T & V_{\cL|_2}^T\\
             M_{\cL|_1} & M_{\cL|_2} & (M_{\cL|_3}^i)^T \\
                      V_{\cL|_2} & M_{\cL|_3}^i & \bullet
    \end{array} \right ) \right \}_{1 \leq i \leq k} \mper
   \end{displaymath}
   Then the matrix
   \begin{displaymath}
     M \defeq \frac 1 k \mathlarger{\mathlarger{\sum}}_{i = 1}^k
    \left ( \begin{array}{ccc}
                      1 & M_{\cL|_1}^T & V_{\cL|_2}^T\\
             M_{\cL|_1} & M_{\cL|_2} & (M_{\cL|_3}^i)^T \\
                      V_{\cL|_2} & M_{\cL|_3}^i & D_i
    \end{array} \right )
  \end{displaymath}
  is positive-semidefinite and maximally symmetric.
  Therefore it defines a valid pseudo-expectation $\pE^{\cL}$.
  (This is a slight abuse of notation, since the pseudo-expectation
  defined here in general differs from the one in \pref{def:sym-schur}.)

\end{lemma}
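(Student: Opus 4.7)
The plan is to verify that the matrix $M$ in the statement satisfies the two properties required to represent a valid degree-$4$ pseudo-expectation: positive-semidefiniteness, and maximal symmetry (so that $M$ assigns a consistent value to every matrix representation of the same polynomial of degree at most $4$). The normalization $M[\emptyset,\emptyset] = 1$ is immediate since each summand has a $1$ in the top-left entry. Note that by construction $\pE^\cL$ restricted to polynomials of degree at most $3$ recovers $\cL$, since the averaging preserves the degree $\le 2$ blocks (which do not depend on $i$) and the degree-$3$ block averages to $\frac 1 k \sum_i M_{\cL|_3}^i = M_{\cL|_3}$ by hypothesis.

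First I would dispatch positive-semidefiniteness. By \pref{def:sym-schur}, for each $i$ the matrix $D_i$ is chosen precisely so that the corresponding block matrix $M^i$ (the $i$-th summand appearing in the definition of $M$) dominates its ordinary Schur complement and is therefore PSD. Since $M = \frac 1 k \sum_i M^i$ is a convex combination of PSD matrices, the PSD cone being closed under such combinations gives $M \succeq 0$.

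The main (albeit mild) subtlety lies in the maximal symmetry claim, since the individual summands $M^i$ need not themselves be maximally symmetric when the pieces $M_{\cL|_3}^i$ are not. I would handle this block by block. The degree $\le 2$ blocks involving $M_{\cL|_1}$, $M_{\cL|_2}$, and $V_{\cL|_2}$ are independent of $i$, so they remain unchanged by averaging and inherit their maximal symmetry from the fact that $M_{\cL|_1}$ and $M_{\cL|_2}$ are the unique matrix representations of $\cL|_1$ and $\cL|_2$. The degree-$3$ off-diagonal block of $M$ equals the matrix representation $M_{\cL|_3}$ of $\cL|_3$, which is maximally symmetric for the same reason. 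Finally, the degree-$4$ diagonal block is $\frac 1 k \sum_i D_i$; by the definition of the symmetric Schur complement, each $D_i \in \Sym(C_i B^{-1} C_i^T)$ is a scalar multiple of $\sum_{\pi \in \cS_4} \pi \cdot (C_i B^{-1} C_i^T)$ and is therefore maximally symmetric, and a convex combination of maximally symmetric matrices is again maximally symmetric. Combining these observations, $M$ is PSD, maximally symmetric, and correctly normalized, so it is the matrix representation of a valid degree-$4$ pseudo-expectation $\pE^\cL$ extending $\cL$.
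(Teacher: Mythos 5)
Your proof is correct and follows essentially the same route as the paper's: PSD follows because each summand is individually PSD by the Schur complement construction, and maximal symmetry follows block by block because the degree-$\le 2$ blocks are $i$-independent, the degree-$3$ blocks average to the maximally symmetric $M_{\cL|_3}$, and each $D_i$ (hence their average) is maximally symmetric by construction. Your write-up just spells out the block-by-block reasoning more explicitly than the paper's terse three-sentence version.
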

\begin{proof}
  Each matrix in the sum defining $M$ is positive-semidefinite, so $M \succeq 0$.
  Each $D_i$ is maximally symmetric and therefore so is $\sum_{i = 1}^k D_i$.
  We know that $M_{\cL|_3} = \sum_{i = 1}^k M_{\cL|_3}^i$ is maximally-symmetric,
  so it follows that $M$ is the matrix representation of a valid
  pseudo-expectation.
\end{proof}

\subsection{Getting to the Unit Sphere}
Our next tool takes a pseudo-distribution $\pE$ that is slightly off the
unit sphere, and corrects it to give a linear functional
$\cL : \R[x]_{\leq 4} \rightarrow \R$ that lies on the unit sphere.

We will also characterize how badly the resulting linear functional
deviates from the nonnegativity condition ($\cL p(x)^2 \ge 0$ for $p \in \R[x]_{\leq 2}$) required
to be a pseudo-distribution
\begin{definition}
  Let $\cL : \R[x]_{\leq d} \rightarrow \R$. We define
  \begin{displaymath}
    \lmin \cL \defeq
      \min_{p \in \R[x]_{\leq d/2}} \frac{ \cL p(x)^2}{\E^\mu p(x)^2}
  \end{displaymath}
  where $\E^\mu p(x)^2$ is the expectation of $p(x)^2$ when $x$ is distributed
  according to the uniform distribution on the unit sphere.
\end{definition}
Since $\E^\mu p(x)^2 \geq 0$ for all $p$,
we have $\cL p(x)^2 \geq 0$ for all $p$ if and only if $\lmin \cL \geq 0$.
Thus $\cL$ on the unit sphere is a pseudo-distribution if and only if
$\cL 1 = 1$ and $\lmin \cL \ge 0$.

\begin{lemma}
  \label{lem:deg-one-two}
  Let $\pE : \R[x]_{\leq 4} \rightarrow \R$ be a valid pseudodistribution.
  Suppose that:
  \begin{enumerate}
    \item $c \seteq \pE \|x\|_2^4 \ge 1$.
    \item $\pE$ is close to lying on the sphere, in the sense that
      there are $\delta_1, \delta_2,\delta_2' \geq 0$
      so that:
      \begin{enumerate}
        \item $|\tfrac{1}{c} \pE \|x\|_2^2x_i - \cL' x_i| \leq \delta_1$ for all $i$.
        \item $|\tfrac{1}{c} \pE \|x\|_2^2 x_i x_j - \cL' x_i x_j | \leq \delta_2$
            for all $i \neq j$.
        \item $|\tfrac{1}{c} \pE \| x\|_2^2 x_i^2 -\cL' x_i^2 | \leq \delta_2'$ for all $i$.
      \end{enumerate}
  \end{enumerate}
  Let $\cL : \R[x]_{\leq 4} \rightarrow \R$ be as follows on homogeneous $p$:
  \begin{displaymath}
    \cL p(x) \defeq \begin{cases}
      \pE 1 & \mbox{ if $\deg p = 0$}\\
      \tfrac{1}{c} \pE p(x) & \mbox{ if $\deg p = 3,4$}\\
      \tfrac{1}{c} \pE p(x)\|x\|_2^2 & \mbox{ if $\deg p = 1,2$} \mper
    \end{cases}
  \end{displaymath}
  Then $\cL$ satisfies $\cL p(x) (\|x \|_2^2 - 1) = 0$
  for all $p(x) \in \R[x]_{\leq 2}$
  and has $\lmin \cL \geq -\tfrac{c-1}{c} -O(n)\delta_1 - O(n^{3/2})\delta_2' - O(n^2)\delta_2$.
\end{lemma}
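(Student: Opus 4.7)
The lemma makes two claims: (a) the identity $\cL p(x)(\|x\|^2 - 1) = 0$ for every $p \in \R[x]_{\leq 2}$, and (b) the lower bound on $\lmin \cL$. For claim (a), I would proceed by case analysis on the degree $d \in \{0,1,2\}$ of a homogeneous $p$. In each case both $p$ and $p\|x\|^2$ have degree in $\{0,1,2,3,4\}$ so $\cL$ is defined on both, and one checks directly from the piecewise definition that $\cL p = \cL p\|x\|^2$. For instance, when $d=0$ we have $\cL\|x\|^2 = \tfrac1c \pE \|x\|^4 = 1 = \cL 1$, while for $d \in \{1,2\}$ both sides reduce to $\tfrac1c \pE p\|x\|^2$ because the degree-$\leq 2$ branch of $\cL$ already multiplies by $\|x\|^2$, and the degree-$\geq 3$ branch applied to $p\|x\|^2$ does not.

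For (b), fix $p \in \R[x]_{\leq 2}$ and decompose it into homogeneous parts $p = p_0 + p_1 + p_2$. A direct accounting of the five homogeneous pieces of $p^2$ together with the piecewise definition of $\cL$ yields the identity
\begin{displaymath}
c \cdot \cL p^2 \;=\; \pE p^2 + (c-1)\,p_0^2 + \pE\, r(x)(\|x\|^2 - 1),
\end{displaymath}
where $r = 2 p_0 p_1 + 2 p_0 p_2 + p_1^2$ collects the degree-$1$ and degree-$2$ parts of $p^2$. The mechanism is that the piecewise definition of $\cL$ differs from a uniform normalization only in that low-degree pieces get an extra factor of $\|x\|^2$ inside $\pE$, so the discrepancy collapses to $\pE r(\|x\|^2 - 1)$, plus a $(c-1)p_0^2$ correction for the constant piece where $\cL 1 = 1$ rather than $1/c$.

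Now $\pE p^2 \geq 0$ because $\pE$ is a valid degree-$4$ pseudo-distribution and $p^2 \in \R[x]_{\leq 4}$, and $(c-1)p_0^2 \geq 0$ since $c \geq 1$. Dropping these nonnegative terms reduces the problem to showing
\begin{displaymath}
\tfrac{1}{c}\,|\pE\, r(x)(\|x\|^2 - 1)| \;\leq\; \Bigl(\tfrac{c-1}{c} + O(n)\delta_1 + O(n^{3/2})\delta_2' + O(n^2)\delta_2\Bigr)\cdot \E\nolimits^\mu p^2.
\end{displaymath}
To prove this, I would expand $r$ in monomials $x_i$, $x_ix_j$ ($i \neq j$), $x_i^2$ using the coefficient representations $p_1 = \sum_i a_i x_i$ and $p_2 = \sum_{i,j} b_{ij} x_i x_j$, observing that the three classes of monomials are exactly those controlled by $\delta_1$, $\delta_2$, and $\delta_2'$. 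Applying the closeness hypotheses termwise and aggregating via Cauchy--Schwarz, combined with the standard spherical identities $\E^\mu x_i = 0$, $\E^\mu x_i x_j = 0$ for $i\neq j$, $\E^\mu x_i^2 = 1/n$ (and their degree-$4$ analogues appearing in $\E^\mu p^2$), produces the stated bound. The dimension factors $\sqrt n$, $n$, $n^{3/2}$, $n^2$ arise from converting $\ell_1$-norms of coefficient vectors (which appear naturally in the termwise summation) to the $\ell_2$-norms that govern $\E^\mu p^2$.

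\textbf{Main obstacle.} The delicate step is the final monomial bookkeeping: matching the bilinear coefficient expressions for $r$ against the correct Euclidean norms of $(p_0,a,b)$ and avoiding an extra factor of $n$ in the treatment of the $x_i^2$ contributions coming from both $p_1^2$ and $2 p_0 p_2$ (jointly controlled by $\delta_2'$). The $(c-1)/c$ slack in the stated bound is exactly what accommodates the mismatch arising when the closeness hypotheses pin down $\tfrac1c \pE x^\alpha \|x\|^2$ but not $\pE x^\alpha$ itself, letting the residual $(c-1)\pE r$ fold cleanly against $\E^\mu p^2$ without needing separate control over low-degree moments of $\pE$.
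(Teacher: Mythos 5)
Your identity $c\,\cL p^2 = \pE p^2 + (c-1)p_0^2 + \pE\, r(\|x\|^2-1)$ is correct, and it is exactly the paper's decomposition of $\Delta p^2$ (the paper sets $\cL' = c\cL$, $\Delta = \cL' - \pE$, and expands $\Delta p^2$ termwise; your $r$ collects the same cross terms). Your observation that the $(c-1)p_0^2$ piece is nonnegative and can be dropped is correct and in fact slightly cleaner than the paper, which instead pessimistically absorbs it into $|\Delta p^2|$ and so pays the extra $-(c-1)/c$ from that term rather than from the residual you describe. Beyond these cosmetic differences, however, there is one genuine gap in your reduction.

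You reduce to showing $\tfrac1c\big|\pE\, r(\|x\|^2-1)\big| \leq \bigl(\tfrac{c-1}{c} + O(n)\delta_1 + O(n^{3/2})\delta_2' + O(n^2)\delta_2\bigr)\E^\mu p^2$ \emph{for every} $p \in \R[x]_{\leq 2}$. As stated this is false. Take $p = \|x\|^2 - 1$: then $p_0 = -1$, $p_1 = 0$, $p_2 = \|x\|^2$, so $r = -2\|x\|^2$ and
\[
\pE\, r(\|x\|^2-1) = -2\bigl(\pE\|x\|^4 - \pE\|x\|^2\bigr) = -2\bigl(c - \pE\|x\|^2\bigr),
\]
which is strictly negative whenever $\pE\|x\|^2 < c$ (and $\pE\|x\|^2 \leq \sqrt{c} \leq c$ always, with equality only in degenerate cases), while $\E^\mu p^2 = 0$. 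The dropped terms $\pE p^2$ and $(c-1)p_0^2$ are exactly what cancel this (indeed the full sum is $c\,\cL p^2 = 0$ at this $p$), so you cannot discard them uniformly and then ask for a bound on the remainder. The paper avoids this by first observing that $\|x\|^2 - 1$ is in the common kernel of $\cL$ and $\E^\mu$, so that $\lmin\cL$ may be computed over $p$ orthogonal (in the coefficient basis) to $\|x\|^2 - 1$, i.e. $p_0 = \sum_i p_{ii}$. This restriction is needed twice: once so that $\E^\mu p^2 > 0$ on the domain of the minimum, and again in the Cauchy--Schwarz bookkeeping to turn $\E^\mu p^2$ into a clean sum of squares with no cross term $\tfrac{2p_0}{n}\sum_i p_{ii}$. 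Your argument needs the same restriction, made explicit before the drop.

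One smaller point: your attribution of the $(c-1)/c$ slack to a residual $(c-1)\pE r$ is speculative and, under the reading the paper actually uses (namely $|\pE\|x\|^2 x^\alpha - \pE x^\alpha| \leq c\delta_\alpha$ directly, so $|\Delta x^\alpha| \leq c\delta_\alpha$ with no leftover term), there is no such residual to absorb. With that reading and with your improvement of dropping $(c-1)p_0^2$, the $(c-1)/c$ term in the stated bound is simply unused slack; the paper pays it only because it bounds $|\Delta p^2|$ rather than $\Delta p^2$ from below.
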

\begin{proof}
  It is easy to check that $\cL p(x) (\|x \|_2^2 - 1) = 0$
  for all $p \in \R[x]_{\leq 2}$ by expanding the definition of $\cL$.

  Let the linear functional $\cL': \R[x]_{\leq 4} \rightarrow \R$ be defined
  over homogeneous polynomials $p$ as
  \begin{displaymath}
    \cL' p(x) \defeq \begin{cases}
      c & \mbox{ if $\deg p = 0$}\\
      \pE p(x) & \mbox{ if $\deg p = 3,4$}\\
      \pE p(x)\|x\|_2^2 & \mbox{ if $\deg p = 1,2$} \mper
    \end{cases}
  \end{displaymath}
  Note that $\cL'p(x) = c \cL p(x)$ for all $p \in \R[x]_{\leq 4}$.
  Thus $\lmin \cL \ge \lmin \cL'/c$, and the kernel of $\cL'$ is identical to
  the kernel of $\cL$.

  In particular, since $(\|x\|_2^2 -1)$ is in the kernel of $\cL'$,
  either $\lmin \cL' = 0$ or
  \[
    \lmin \cL' = \min_{p \in \R[x]_{\leq 2}, p \perp (\|x\|_2^2 - 1)}
        \frac{\cL' p(x)^2}{\E\nolimits^\mu p(x)^2}.
  \]
  Here $p \perp (\|x\|^2 - 1)$ means that the polynomials $p$ and $\|x\|^2 - 1$ are perpendicular in the coefficient basis. That is, if $p(x) = p_0 + \sum_i p_i x_i + \sum_{ij} p_{ij} x_i x_j$, this means $\sum_{ii} p_{ii} = p_0$.
  The equality holds because any linear functional on polynomials $\cK$ with $(\|x\|^2 - 1)$ in its kernel satisfies $\cK(p(x) + \alpha (\|x\|^2 - 1))^2 = \cK p(x)^2$ for every $\alpha$. The functionals $\cL'$ and $\E^\mu$ in particular both satisfy this.

  Let $\Delta \seteq \cL' - \pE$, and note that $\Delta$ is nonzero only when
  evaluated on the degree-$1$ or -$2$ parts of polynomials.
  It will be sufficient to bound $\Delta$, since assuming $\lmin \cL' \ne 0$,
  \begin{align*}
    \lmin \cL' \nolimits
      &= \min_{p \in \R[x]_{\leq 2}, p \perp (\|x\|_2^2 - 1)}
          \frac{ \Delta p(x)^2 + \pE p(x)^2}{\E\nolimits^\mu p(x)^2}
      \\&\geq \min_{p \in \R[x]_{\leq 2}, p \perp (\|x\|_2^2 - 1)}
          \frac{ \Delta p(x)^2}{\E\nolimits^\mu p(x)^2} \mper
  \end{align*}

  Let $p \in \R[x]_{\leq 2}$.
  We expand $p$ in the monomial basis: $p(x) = p_0 + \sum_i p_i x_i + \sum_{i,j} p_{ij} x_i x_j$.
  Then
  \[
    p(x)^2 = p_0^2 + 2 p_0 \sum_i p_i x_i + 2 p_0 \sum_{ij} p_{ij} x_i x_j + \Paren{\sum_i p_i x_i}^2 + 2 \Paren{\sum_i p_i x_i}\Paren{\sum_{ij} p_{ij} x_i x_j} + \Paren{\sum_{ij} p_{ij} x_i x_j}^2\mper
  \]
  An easy calculation gives
  \begin{displaymath}
    \E\nolimits^\mu p(x)^2 = p_0^2 + \frac {2p_0} n \sum_i p_{ii} + \frac 1 n \sum_i p_i^2 + \frac 1 {n^2 + 2n} \Paren{\Paren{\sum_i p_{ii}}^2 + \sum_{ij} p_{ij}^2 + \sum_i p_{ii}^2}\mper
  \end{displaymath}
  The condition $p \perp (\|x\|_2^2 - 1)$ yields $p_0 = \sum_i p_{ii}$.
  Substituting into the above, we obtain the sum of squares
  \[
    \E\nolimits^\mu p(x)^2 = p_0^2 + \frac {2p_0^2} n + \frac 1 n \sum_i p_i^2 + \frac 1 {n^2 + 2n} \Paren{p_0^2 + \sum_{ij} p_{ij}^2 + \sum_i p_{ii}^2}\mper
  \]

  Without loss of generality we assume $\E\nolimits^\mu p(x)^2 = 1$, so now it is enough just to bound $\Delta p(x)^2$.
  We have assumed that $|\Delta x_i| \leq c\delta_1$ and $|\Delta x_i x_j| \leq c\delta_2$ for $i \neq j$ and $| \Delta x_i^2 | \leq c\delta_2'$.
  We also know $\Delta 1 = c-1$ and $\Delta p(x) = 0$ when $p$ is a homogeneous degree-$3$ or -$4$ polynomial.
  So we expand
  \begin{displaymath}
    \Delta p(x)^2 = p_0^2(c-1) + 2 p_0 \sum_i p_i \Delta x_i + 2 p_0 \sum_{ij} p_{ij} \Delta x_i x_j + \sum_{i,j} p_i p_j \Delta x_i x_j
  \end{displaymath}
  and note that this is maximized in absolute value when all the signs line up:
  \begin{displaymath}
    |\Delta p(x)^2| \leq p_0^2(c-1) + 2 c\delta_1 |p_0| \sum_i |p_i| + 2|p_0| \Paren{ c\delta_2 \sum_{i \neq j} |p_{ij}| + c\delta_2' \sum_i |p_{ii}|} + c\delta_2 \Paren{\sum_i |p_i|}^2 + c\delta_2' \sum_i p_i^2\mper
  \end{displaymath}

  We start with the second term.
  If $p_0^2 = \alpha$ for $\alpha \in [0,1]$, then
  $\sum_i p_i^2 \leq n(1 - \alpha)$ by our assumption that $\E^\mu p(x)^2 = 1$.
  This means that
  \begin{displaymath}
    2 c\delta_1 |p_0| \sum_i |p_i|
      \leq 2 c\delta_1 \sqrt{\alpha n\sum_i p_i^2}
      \leq 2 c\delta_1 n \sqrt{\alpha(1 - \alpha)}
      \leq O(n)c\delta_1 \mcom
  \end{displaymath}
  where we have used Cauchy-Schwarz and the fact
  $\max_{0 \leq \alpha \leq 1} \alpha (1 - \alpha) = (1/2)^2$.
  The other terms are all similar:
  \begin{align*}
    p_0^2(c-1) \le c-1\\
    2|p_0| c\delta_2 \sum_{i \neq j} |p_{ij}| \leq 2 c\delta_2 \sqrt{\alpha n^2 \sum_{ij} p_{ij}^2} \leq 2 c\delta_2 O(n^2) \sqrt{\alpha(1-\alpha)} \leq O(n^2) c\delta_2\\
    2|p_0| c\delta_2' \sum_i |p_{ii}| \leq 2 c\delta_2' \sqrt{ \alpha n \sum_i p_{ii}^2} \leq O(n^{3/2}) c\delta_2'\\
    c\delta_2 \Paren{\sum_i |p_i| }^2 \leq c\delta_2 n \sum_i p_i^2 \leq O(n^2) c\delta_2\\
    c\delta_2' \sum_i p_i^2 \leq O(n) c\delta_2'\mcom
  \end{align*}
  where in each case we have used Cauchy-Schwarz and our assumption $\E^\mu p(x)^2 = 1$.

  Putting it all together, we get
  \begin{displaymath}
    \lmin \Delta \geq - (c-1) - O(n)c\delta_1 - O(n^{3/2})c\delta_2' - O(n^2)c\delta_2\mper\qedhere
  \end{displaymath}
\end{proof}

\subsection{Repairing Almost-Pseudo-Distributions}
Our last tool takes a linear functional $\cL: \R[x]_{\le d}$
that is ``almost'' a pseudo-distribution over the unit sphere,
in the precise sense that all conditions for being a pseudo-distribution
over the sphere are satisfied except that $\lmin \cL = -\epsilon$.
The tool transforms it into a bona fide pseudo-distribution
at a slight cost to its evaluations at various polynomials.

\begin{lemma}
  \label{lem:mix-with-uniform}
  Let $\cL : \R[x]_{\leq d} \rightarrow \R$ and suppose that
  \begin{itemize}[\quad---]
    \item $\cL 1 = 1$
    \item $\cL p(x) (\| x \|^2 - 1) = 0$ for all $p \in \R[x]_{\leq d - 2}$.
    \item $\lmin \cL = -\epsilon$.
  \end{itemize}
  Then the operator $\pE : \R[x]_{\leq d} \rightarrow \R$ given by
  \begin{displaymath}
    \pE p(x) \defeq \frac 1 {1 + \epsilon}
        ( \cL p(x) + \epsilon \E\nolimits^\mu p(x) )
  \end{displaymath}
  is a valid pseudo-expectation satisfying $\{ \|x\|^2 = 1 \}$.
\end{lemma}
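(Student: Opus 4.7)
The plan is to verify the three defining properties of a pseudo-expectation for $\pE$: (i) normalization $\pE 1 = 1$, (ii) satisfaction of the sphere constraint $\pE p(x)(\|x\|^2 - 1) = 0$ for all $p \in \R[x]_{\leq d - 2}$, and (iii) positivity $\pE p(x)^2 \geq 0$ for all $p \in \R[x]_{\leq d/2}$. The first two are immediate bookkeeping; the real content of the lemma lies in (iii), where I need to exploit the assumption $\lmin \cL = -\epsilon$ against the positive contribution of the uniform measure.

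For normalization, I simply plug in: $\pE 1 = \tfrac{1}{1+\epsilon}(\cL 1 + \epsilon \E^\mu 1) = \tfrac{1}{1+\epsilon}(1 + \epsilon) = 1$. For the sphere constraint, I note that $\E^\mu$ is literally an expectation under a distribution supported on the unit sphere, so $\E^\mu p(x)(\|x\|^2 - 1) = 0$ for every polynomial $p$; combined with the hypothesized kernel condition on $\cL$, linearity gives $\pE p(x)(\|x\|^2 - 1) = 0$ for all $p \in \R[x]_{\leq d - 2}$.

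The main step is positivity. Fix any $p \in \R[x]_{\leq d/2}$; the definition
\[
  \lmin \cL = \min_{q \in \R[x]_{\leq d/2}} \frac{\cL q(x)^2}{\E^\mu q(x)^2} = -\epsilon
\]
gives directly the inequality $\cL p(x)^2 \geq -\epsilon \cdot \E^\mu p(x)^2$ (after handling the trivial case $\E^\mu p(x)^2 = 0$, which forces $p \equiv 0$ on the sphere and hence $\cL p(x)^2 = 0$ by the sphere kernel condition applied suitably). Rearranging and dividing by $1+\epsilon$:
\[
  \pE p(x)^2 = \frac{1}{1+\epsilon}\bigl(\cL p(x)^2 + \epsilon \cdot \E\nolimits^\mu p(x)^2\bigr) \geq 0.
\]
This establishes positivity and completes the verification that $\pE$ is a valid degree-$d$ pseudo-expectation satisfying $\{\|x\|^2 = 1\}$.

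I do not foresee any genuine obstacle: the entire construction is precisely engineered so that the ``most negative direction'' of $\cL$ is canceled by the strictly positive contribution from $\E^\mu$, with the mixing weight $\epsilon/(1+\epsilon)$ chosen to exactly match the deficit $\lmin \cL = -\epsilon$. The only subtlety worth writing carefully is the degenerate case where $\E^\mu p(x)^2 = 0$, which I handle by noting that such $p$ must vanish on the unit sphere, so $p^2$ is a multiple of $\|x\|^2 - 1$ in $\R[x]_{\leq d}$ and is therefore annihilated by $\cL$ under the kernel hypothesis.
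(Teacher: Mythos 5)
Your proof is correct and follows essentially the same route as the paper's: the paper verifies $\pE 1 = 1$ and the sphere constraint by linearity from the two constituent functionals, and establishes positivity by bounding $\pE p(x)^2 / \E^\mu p(x)^2 \geq \frac{1}{1+\epsilon}(-\epsilon + \epsilon) = 0$ directly from the definition of $\lmin$. Your extra care with the degenerate case $\E^\mu p(x)^2 = 0$ (showing that such $p$ vanishes on the sphere, hence $p^2$ is a multiple of $\|x\|^2 - 1$ of suitably low degree and is annihilated by $\cL$) is a correct refinement that the paper leaves implicit.
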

\begin{proof}
  It will suffice to check that $\lmin \pE \ge 0$ and
  that $\pE$ has $\pE (\| x \|_2^2 - 1)^2 = 0$ and $\pE 1 = 1$.
  For the first, let $p \in \R[x]_{\geq 2}$. We have
  \begin{align*}
    \frac{\pE p(x)^2}{\E^\mu p(x)^2}
    = \Paren{\frac 1 {1 + \epsilon}}
        \Paren{\frac{\E^0 p(x)^2 + \epsilon \E^\mu p(x)^2}{\E^\mu p(x)^2}}
    \geq \Paren{\frac 1 {1 + \epsilon}} ( -\epsilon + \epsilon)
    \geq 0 \mper
  \end{align*}
  Hence, $\lmin \pE \ge 0$.

  It is straightforward to check the conditions that $\pE 1 = 1$ and that
  $\pE$ satisfies $\{\|x\|^2 - 1 = 0\}$, since $\pE$ is a convex combination
  of linear functionals that already satisfy these linear constraints.
\end{proof}

\subsection{Putting Everything Together}
We are ready to prove \pref{thm:lower-bound-4} and \pref{thm:lower-bound}. The
proof of \pref{thm:lower-bound-4} is somewhat simpler and contains many of the
ideas of the proof of \pref{thm:lower-bound}, so we start there.

\subsubsection{The Degree-4 Lower Bound}
\begin{proof}[Proof of \pref{thm:lower-bound-4}]
  We begin by constructing a degree-$4$ pseudo-expectation
  $\pE^0 : \R[x]_{\leq 4} \rightarrow \R$
  whose degree-$4$ moments are biased towards $\bA(x)$
  but which does not yet satisfy $\{\|x\|_2^2 - 1 = 0\}$.

  Let $\cL : \R[x]_{\leq 4} \rightarrow \R$ be the functional whose matrix
  representation when restricted to $\cL|_4 : \R[x]_4 \rightarrow \R$
  is given by $M_{\cL|_4} = \frac 1 {|\cS_4|n^2} \sum_{\pi \in \cS_4} A^\pi$,
  and which is $0$ on polynomials of degree at most $3$.

  Let $\pE^0 \seteq \E^\mu + \epsilon \cL$, where $\epsilon$ is a parameter to
  be chosen soon so that $\pE^0 p(x)^2 \geq 0$ for all $p \in \R[x]_{\leq 2}$.
  Let $p \in \R[x]_{\leq 2}$.
  We expand $p$ in the monomial basis as
  $p(x) = p_0 + \sum_i p_i x_i + \sum_{ij} p_{ij} x_i x_j$.
  Then
  \begin{displaymath}
    \E\nolimits^\mu p(x)^2 \geq \frac 1 {n^2} \sum_{ij} p_{ij}^2\mper
  \end{displaymath}
  By our assumption on negative eigenvalues of $A^\pi$ for all $\pi \in \cS_4$,
  we know that $\cL p(x)^2 \geq \frac{-\lambda^2}{n^2} \sum_{ij} p_{ij}^2$.
  So if we choose $\epsilon \leq 1/\lambda^2$, the operator
  $\pE^0 = \pE^{\mu} + \cL/\lambda^2$ will be a valid pseudo-expectation.
  Moreover $\pE^0$ is well correlated with $A$, since it was obtained by
  maximizing the amount of $\cL$, which is simply the (maximally-symmetric) dual of $A$.
  However the calculation of $\pE^0 \|x\|_2^4$ shows that this pseudo-expectation
  is not on the unit sphere, though it is close. Let $c$ refer to
  \[ c \seteq \pE\nolimits^0 \|x\|_2^4
      = \E\nolimits^\mu \|x\|_2^4 + \frac{1}{\lambda^2} \cL \|x\|_2^4
      = 1 + \frac{1}{|\cS_4|n^2\lambda^2}
          \sum_{\pi \in \cS_4}\iprod{\symId, A^\pi}
      = 1 + O(n^{-1/2})\mper
  \]

  We would like to use \pref{lem:deg-one-two} together with $\pE^0$ to obtain
  some $\cL^1 : \R[x]_{\leq 4} \rightarrow \R$ with $\|x\|_2^2 - 1$ in its
  kernel and bounded $\lmin \cL^1$ while still maintaining a high correlation
  with $A$.
  For this we need $\xi_1, \xi_2, \xi_2'$ so that
  \begin{itemize}[\quad---]
    \item $\Big|\tfrac{1}{c}\pE^0 \|x\|_2^2x_i - \pE^0 x_i\Big| \leq \xi_1$ for all $i$.
    \item $\Big|\tfrac{1}{c}\pE^0 \|x\|_2^2 x_i x_j - \pE^0 x_i x_j\Big| \leq \xi_2$
        for all $i \neq j$.
    \item $\Big|\tfrac{1}{c}\pE^0 \| x\|_2^2 x_i^2 -\pE^0 x_i^2\Big| \leq \xi_2'$
        for all $i$.
  \end{itemize}

  Since $\pE^0 p(x) = 0$ for all homogeneous odd-degree $p$,
  we may take $\xi_1 = 0$.
  For $\xi_2$, we have that when $i \neq j$,
  \begin{displaymath}
    \Big| \tfrac{1}{c} \pE\nolimits^0 \|x\|_2^2 x_i x_j - \pE\nolimits^0 x_i x_j \Big|
      = \left|\frac{1}{c\lambda^2} \cL \|x\|_2^2 x_i x_j \right|
      \leq \delta_2 \mcom
  \end{displaymath}
  where we recall $\delta_2$ and $\delta_2'$ defined in the theorem statement.
  Finally, for $\xi_2'$, we have
  \begin{align*}
    \Big| \tfrac{1}{c} \pE\nolimits^0 \|x\|_2^2 x_i^2 - \pE\nolimits^0 x_i^2 \Big|
      \leq \left| \frac{1}{c\lambda^2} \cL \|x\|_2^2 x_i^2 \right|
        + \left | \tfrac{1}{c}\E\nolimits^\mu \|x\|_2^2x_i^2 - \E\nolimits^\mu x_i^2\right|
      \leq \delta_2' + \tfrac{c-1}{cn} \mper
  \end{align*}
  Thus, \pref{lem:deg-one-two} yields $\cL^1 : \R[x]_{\leq 4} \rightarrow \R$
  with $\|x\|_2^2 -1$ in its kernel in the sense that
  $\cL^1 p(x) (\|x\|_2^2 - 1) = 0$ for all $p \in \R[x]_{\leq 2}$.
  If we take $\xi_2 = \delta_2$ and $\xi_2' = \delta_2' + \tfrac{c-1}{cn}$,
  then $\lmin \cL^1 \geq -\tfrac{c-1}{c} - n^2\delta_2 - n^{3/2}(\delta_2' + \tfrac{c-1}{cn}) = -O(1)$.
  Furthermore, $\cL^1 \bA(x) = \frac{1}{c\lambda^2} \cL \bA(x) = \Theta(\frac{1}{\lambda^2}\cL\bA(x))$.

  So by \pref{lem:mix-with-uniform},
  there is a degree-$4$ pseudo-expectation $\pE$ satisfying
  $\{ \|x\|_2^2 = 1 \}$ so that
  \begin{align*}
    \pE \bA(x)
      &= \Theta\left(\frac{1}{\lambda^2}\cL \bA(x)\right)
        + \Theta(\E\nolimits^\mu \bA(x)) \\
      &= \Theta\left(\frac{1}{|\cS_4|n^2\lambda^2}
          \iprod{A, \sum_{\pi \in \cS_4} A^\pi} \right)
        + \Theta(\E\nolimits^\mu \bA(x)) \\
      &\ge \Omega \Paren{ \frac {n^2}{\lambda^2} }
        + \Theta(\E\nolimits^\mu \bA(x))
        \mper \qedhere
  \end{align*}
\end{proof}

\subsubsection{The Degree-3 Lower Bound}

Now we turn to the proof of \pref{thm:lower-bound}.

\begin{proof}[Proof of \pref{thm:lower-bound}]
Let $\bA$ be a $3$-tensor.
Let $\epsilon \geq 0$ be a parameter to be chosen later.
We begin with the following linear functional
$\cL : \R[x]_{\leq 3} \rightarrow \R$.
For any monomial $x^\alpha$ (where $\alpha$ is a multi-index of degree at most
$3$),
\begin{displaymath}
  \cL x^\alpha \defeq \begin{cases}
    \E\nolimits^\mu x^\alpha
        & \mbox{ if $\deg x^\alpha \leq 2$}\\
    \frac {\epsilon} {n^{3/2}} \sum_{\pi \in \cS_3} \bA^\pi_\alpha
        & \mbox{ if $\deg x^\alpha = 3$}
  \end{cases}\mper
\end{displaymath}
The functional $\cL$ contains our current best guess at the degree 1 and 2
moments of a pseudo-distribution whose degree-3 moments
are $\epsilon$-correlated with $\bA(x)$.

The next step is to use symmetric Schur complement to extend $\cL$ to
a degree-$4$ pseudo-expectation.
Note that $M_{\cL|_3}$ decomposes as
\begin{displaymath}
  M_{\cL|_3} = \sum_{\pi \in \cS_3} \Pi A^\pi
\end{displaymath}
where, as a reminder, $A^\pi$ is the $n^2 \times n$ flattening of $\bA^\pi$
and $\Pi$ is the projector to the canonical embedding of $\R[x]_{2}$ into
$\R^{n^2}$.
So, using \pref{lem:deg-3-sym}, we want to find the symmetric Schur complements
of the following family of matrices (with notation matching the statement of
\pref{lem:deg-3-sym}):
\begin{displaymath}
  \left \{ \left ( \begin{array}{ccc}
                    1 & M_{\cL|_1}^T & V_{\cL|_2}^T\\
           M_{\cL|_1} & M_{\cL|_2} & \frac \epsilon {n^{3/2}} (\Pi A^\pi)^T \\
           V_{\cL|_2} & \frac \epsilon {n^{3/2}} \Pi A^\pi & \bullet
  \end{array} \right ) \right \}_{\pi \in \cS_3} \mper
\end{displaymath}
Since we have the same assumptions on $A^\pi$ for all $\pi \in \cS_3$,
without loss of generality we analyze just the case that $\pi$ is the
identity permutation, in which case $A^\pi = A$.

Since $\cL$ matches the degree-one and degree-two moments of the uniform
distribution on the unit sphere, we have $M_{\cL|_1} = \mathbf{0}$,
the $n$-dimensional zero vector, and $M_{\cL|_2} = \frac 1 n \Id_{n \times n}$.
Let $w \in \R^{n^2}$ be the $n^2$-dimensional vector flattening of
$\Id_{n \times n}$.
We observe that $ww^T = \sigma \cdot \Id$ is one of the permutations of
$\Id_{n^2 \times n^2}$.
Taking $B$ and $C$ as follows,
\begin{displaymath}
  B = \left(\begin{array}{cc}
          1       & \mathbf{0} \\
          \mathbf{0}  & \frac 1 n \Id_{n \times n}
      \end{array}\right )
    \qquad
  C = \left ( \begin{array}{cc}
          w & \frac{\epsilon}{n^{3/2}} A
      \end{array} \right )\mcom
\end{displaymath}
we compute that
\begin{displaymath}
  CB^{-1}C^T
    = \frac 1 {n^2} (\sigma \cdot \Id)
        + \frac{\epsilon^2}{n^2} \Pi AA^T \Pi\mper
\end{displaymath}
Symmetrizing the $\Id$ portion and the $AA^T$ portion of this matrix separately,
we see that the symmetric Schur complement that we are looking for is the
spectrally-least $M \in \Sym \Paren{\frac 1 {n^2} (\sigma \cdot \Id)
+ \frac{\epsilon^2}{n^2} AA^T }$ so that
\begin{align*}
  M & = \frac t {n^2}\Brac{3\symId + \frac{\epsilon^2}{2}
    \Paren{\Pi (AA^T + \sigma \cdot AA^T + \sigma^2 \cdot AA^T) \Pi
  + \Pi (AA^T + \sigma \cdot AA^T + \sigma^2 \cdot AA^T)^T \Pi}}\\
  & \succeq \frac 1 {n^2} (\sigma \cdot \Id) + \frac {\epsilon^2}{n^2} \Pi AA^T \Pi \mper
\end{align*}
Here we have used \pref{cor:deg-4-sym} and \pref{cor:s4-decomp-permuted} to
express a general element of $\Sym (\frac{\epsilon^2}{n^2}\Pi AA^T \Pi)$
in terms of $\Pi, AA^T, \sigma \cdot AA^T$, and $\sigma^2 \cdot AA^T$.

Any spectrally small $M$ satisfying the above suffices for us.
Taking $t = 1$, canceling some terms, and making the substitution
$3\symId - \sigma \cdot \Id = 2 \Pi \Id \Pi$,
we see that it is enough to have
\begin{displaymath}
        -2\, \Pi \Id \Pi \preceq
        \frac{\epsilon^2}{2} \Pi(\sigma \cdot A A^T + \sigma^2 \cdot A A^T)\Pi
        + \frac{\epsilon^2}{2} \Pi(\sigma \cdot A A^T + \sigma^2 \cdot AA^T)^T\Pi\mcom
\end{displaymath}
which by the premises of the theorem holds for $\epsilon = 1/\lambda$.
Pushing through our symmetrized Schur complement rule with our decomposition
of $M_{\cL|_3}$ (\pref{lem:deg-3-sym}), this $\epsilon$ yields a valid
degree-$4$ pseudo-expectation $\pE^0 : \R[x]_{\leq 4} \rightarrow \R$.
From our choice of parameters, we see that $\pE^0|_4$, the degree-$4$ part of
$\pE^0$, is given by $\pE^0|_4 = \frac {n^2 + 2n}{n^2} \E^\mu + \cL$,
where $\cL : \R[x]_4 \rightarrow \R$ is as defined in the theorem statement.
Furthermore, $\pE^0 p(x) = \E^\mu p(x)$ for $p$ with $\deg p \leq 2$.

We would like to know how big $\pE^0 \| x \|_2^4$ is. We have
\begin{displaymath}
  c \seteq
  \pE^0 \| x \|_2^4
    = \Paren{1 + \frac 1 n } \E\nolimits^\mu \|x\|_2^4 + \cL \|x\|_2^4
    = 1 + \frac 1 n + \cL \|x\|_2^4\mper
\end{displaymath}
We have assumed that $\iprod{\symId, AA^T} \le  O(\lambda^2 n^2)$.
Since $\symId$ is maximally symmetric, we have
$\iprod{\symId, \sum_{\pi \in \cS_4} \pi \cdot AA^T}
= \iprod{\symId, |\cS_4|AA^T}$ and so
\begin{displaymath}
  \cL \|x\|_2^4
    = \frac 1 {\lambda^2 n^2} \iprod{\symId, M_{\cL|_4}}
    = \frac 1 {n^2 \lambda^2}
        \Theta(\iprod{\symId, \sum_{\pi \in \cS_4} \pi \cdot AA^T})
    \le O(1)\mper
\end{displaymath}

Finally, our assumptions on $\iprod{A,\sum_{\pi \in \cS_3} A^\pi}$ yield
\begin{displaymath}
  \pE^0 \bA(x)
    = \frac{\epsilon}{n^{3/2}} \iprod{A,\sum_{\pi \in \cS_3}A^\pi}
  \ge \Omega \Paren{\frac{n^{3/2}}{\lambda}}\mper
\end{displaymath}

We have established the following lemma.
\begin{lemma}
  Under the assumptions of \pref{thm:lower-bound} there is a degree-$4$
  pseudo-expectation operator $\pE^0$ so that
  \begin{itemize}[\quad---]
    \item $c \seteq \pE^0 \|x \|_2^4 = 1 + O(1)$.
    \item $\pE^0 \bA(x) \ge \Omega(n^{3/2}/\lambda)$.
    \item $\pE^0 p(x) = \E^\mu p(x)$ for all $p \in \R[x]_{\leq 2}$.
    \item $\pE^0|_4 = (1 + \frac{1}{n}) \E^\mu|_4 + \cL$.
  \qed
  \end{itemize}
\end{lemma}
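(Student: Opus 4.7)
The plan is to build $\pE^0$ in two stages, following the first bullet of the outline given in the introduction to this section. First I would pin down the degree-$\leq 3$ pseudo-moments to match $\E^\mu$ on degree $\leq 2$ and to be biased toward $\bA$ on degree $3$, and second I would extend to a valid degree-$4$ pseudo-expectation using the symmetric Schur complement machinery from \pref{lem:deg-3-sym}.

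Concretely, I would define a linear functional $\cL_0 : \R[x]_{\leq 3} \to \R$ by $\cL_0 p(x) = \E^\mu p(x)$ for $\deg p \leq 2$ and $\cL_0 x^\alpha = (\epsilon / n^{3/2}) \sum_{\pi \in \cS_3} \bA^\pi_\alpha$ on degree-$3$ monomials, for a parameter $\epsilon$ to be chosen. With this choice $\cL_0 \bA(x) = (\epsilon / n^{3/2}) \iprod{\bA, \sum_{\pi \in \cS_3} \bA^\pi} \geq \Omega(\epsilon \cdot n^{3/2})$ by the first hypothesis of \pref{thm:lower-bound}, so any degree-$4$ extension inherits the desired correlation once we show $\epsilon = 1/\lambda$ is admissible. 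The matrix $M_{\cL_0|_3}$ decomposes as $(\epsilon / n^{3/2}) \sum_{\pi \in \cS_3} \Pi A^\pi$, so \pref{lem:deg-3-sym} lets me extend each summand separately and then average.

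The main obstacle is producing a maximally symmetric $D$ that spectrally dominates the Schur complement $C B^{-1} C^T$ for each $\pi \in \cS_3$. By the uniform-spectrum assumption on permutations we may fix $\pi$ to be the identity. Since $M_{\cL_0|_1} = 0$ and $M_{\cL_0|_2} = (1/n) \Id$, direct calculation gives $C B^{-1} C^T = (1/n^2)(\sigma \cdot \Id) + (\epsilon^2/n^2) \Pi A A^T \Pi$, where $w w^T = \sigma \cdot \Id$ for $w$ the flattening of $\Id$. I would then take as candidate the maximally symmetric matrix
\[
  M \;=\; \tfrac{1}{n^2}\Bigl[3 \symId + \tfrac{\epsilon^2}{2} \Pi\bigl(\textstyle\sum_{\pi' \in \cC_3} \pi' \cdot A A^T\bigr) \Pi + \tfrac{\epsilon^2}{2} \Pi\bigl(\sum_{\pi' \in \cC_3} \pi' \cdot A A^T\bigr)^T \Pi\Bigr],
\]
which lies in $\Sym C B^{-1} C^T$ by \pref{cor:deg-4-sym} together with \pref{cor:s4-decomp-permuted}. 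Cancelling the $\pi' = ()$ term against $\sigma \cdot \Id$ via the identity $3 \symId - \sigma \cdot \Id = 2 \Pi \Id \Pi$ reduces the inequality $M \succeq C B^{-1} C^T$ exactly to the spectral lower bound hypothesized in the theorem, provided we set $\epsilon = 1/\lambda$. I expect this reduction to be the most delicate step, but it should fall out cleanly from the existing symmetry lemmas.

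The four claimed properties then follow by bookkeeping. The agreement $\pE^0 p(x) = \E^\mu p(x)$ for $p \in \R[x]_{\leq 2}$ is immediate from the construction of $\cL_0$. Averaging the resulting Schur-complement blocks over $\pi \in \cS_3$ and identifying the $\cS_4$-symmetrization of $(1/(\lambda^2 n^2)) \Pi A A^T \Pi$ with the functional $\cL$ of the theorem statement (via the decomposition $\cS_4 = \cD_8 \cdot \cC_3$) produces $\pE^0|_4 = (1 + 1/n) \E^\mu|_4 + \cL$. Consequently $c = \pE^0 \|x\|_2^4 = (1 + 1/n) + \cL \|x\|_2^4$, and since $\symId$ is $\cS_4$-invariant, $\cL \|x\|_2^4 = \Theta((1/(\lambda^2 n^2)) \iprod{\symId, A A^T}) = O(1)$ by the third hypothesis, yielding $c = 1 + O(1)$. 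The correlation bound $\pE^0 \bA(x) \geq \Omega(n^{3/2}/\lambda)$ was already secured when the degree-$3$ moments were fixed, completing the construction.
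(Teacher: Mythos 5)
Your proposal is correct and follows essentially the same path as the paper's own proof: start with $\cL_0$ matching $\E^\mu$ on degrees $\le 2$ and equal to $(\epsilon/n^{3/2})\sum_{\pi\in\cS_3}\bA^\pi$ on degree $3$, decompose $M_{\cL_0|_3}$ so \pref{lem:deg-3-sym} applies, compute $CB^{-1}C^T = \tfrac1{n^2}(\sigma\cdot\Id) + \tfrac{\epsilon^2}{n^2}\Pi AA^T\Pi$, take $t=1$ in the symmetric Schur complement, and use $3\symId - \sigma\cdot\Id = 2\Pi\Id\Pi$ to reduce positivity to the hypothesized spectral lower bound with $\epsilon = 1/\lambda$. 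The four claimed properties then follow exactly as you describe; one tiny nit is that the $t=1$ choice actually produces the coefficient $3\symId/n^2 = (1 + \tfrac2n)\E^\mu|_4$ rather than $(1+\tfrac1n)\E^\mu|_4$ (the paper itself writes $\tfrac{n^2+2n}{n^2}$ at one point and $1+\tfrac1n$ at another), but this discrepancy is immaterial since all downstream uses only need $c = 1 + O(1)$.
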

Now we would like feed $\pE^0$ into \pref{lem:deg-one-two} to get a linear
functional $\cL^1 : \R[x]_{\leq 4} \rightarrow \R$ with $\|x\|_2^2 - 1$ in its
kernel (equivalently, which satisfies $\{\|x\|_2^4 - 1 = 0\}$), but in order
to do that we need to find $\xi_1, \xi_2, \xi_2'$ so that
\begin{itemize}[\quad---]
  \item \label{item:deg-one}
      $\Big|\tfrac{1}{c}\pE^0 \|x\|_2^2x_i - \pE^0 x_i\Big| \leq \xi_1$ for all $i$.
  \item \label{item:deg-two}
      $\Big|\tfrac{1}{c}\pE^0 \|x\|_2^2 x_i x_j - \pE^0 x_i x_j\Big| \leq \xi_2$
      for all $i \neq j$.
  \item \label{item:deg-two-diag}
      $\Big|\tfrac{1}{c}\pE^0 \| x\|_2^2 x_i^2 -\pE^0 x_i^2 \Big| \leq \xi_2'$ for all $i$.
\end{itemize}
For $\xi_1$, we note that for every $i$, $\pE^0 x_i = 0$ since $\pE^0$
matches the uniform distribution on degree one and two polynomials.
Thus, $\left|\tfrac{1}{c}\pE^0 \|x\|_2^2 x_i - \pE^0 x_i \right|
= \left|\tfrac{1}{c}\pE^0 \|x\|_2^2 x_i \right|$.

We know that $M_{\pE^0|_3}$, the matrix representation of the degree-$3$ part
of $\pE^0$, is $\frac{1}{|\cS_3|n^{3/2}\lambda} A$.
Expanding $\pE^0 \|x\|_2^2 x_i$ with matrix representations, we get
\begin{displaymath}
  \Big|\tfrac{1}{c}\pE\nolimits^0 \|x\|_2^2 x_i\Big|
    = \frac 1 {|\cS_3| c n^{3/2}\lambda}
      \Big|\iprod{\Id_{n \times n}, \sum_{\pi \in \cS_3} A_i}\Big|
    \leq \delta_1\mcom
\end{displaymath}
where $\delta_1$ is as defined in the theorem statement.

Now for $\xi_2$ and $\xi_2'$.
Let $\cL$ be the operator in the theorem statement.
By the definition of $\pE^0$, we get
\begin{displaymath}
  \pE^0|_4 \le
      \Brac{ \Paren{ 1 + \frac 1 n } \E\nolimits^\mu|_4 + \cL}\mper
\end{displaymath}
In particular, for $i \neq j$,
\begin{displaymath}
  \Big|\tfrac{1}{c} \pE\nolimits^0 \|x\|_2^2 x_i x_j - \pE^0 x_ix_j \Big|
  = \Big|\tfrac{1}{c} \cL \|x\|_2^2x_i x_j\Big|
  \leq \delta_2 \mper
\end{displaymath}
For $i = j$,
\begin{align*}
  \Big| \tfrac{1}{c}\pE\nolimits^0 \|x\|_2^2 x_i^2 - \pE\nolimits^\mu x_i^2 \Big|
    &= \tfrac{1}{c}\left| \cL \|x\|_2^2 x_i^2
        + \Paren{1 + \frac 1 n } \E\nolimits^{\mu} \|x\|_2^2 x_i^2
        - c\,\E\nolimits^\mu x_i^2 \right|
    \\&= \tfrac{1}{c}\left| \cL \|x\|_2^2 x_i^2 - \tfrac{1}{n} \cL \|x\|_2^4
        + \tfrac{1}{n} \cL \|x\|_2^4
        + \frac{1}{n}\Paren{1 + \frac{1}{n}} \E\nolimits^{\mu} \|x\|_2^4
        - c\,\E\nolimits^\mu x_i^2 \right|
    \\&= \tfrac{1}{c}\left| \cL \|x\|_2^2 x_i^2 - \tfrac{1}{n} \cL \|x\|_2^4
        + \tfrac{1}{n} \pE\nolimits^0 \|x\|_2^4
        - c\,\E\nolimits^\mu x_i^2 \vphantom{\pE^0} \right|
    \\&= \tfrac{1}{c}\left|\cL \|x\|_2^2 x_i^2 - \tfrac{1}{n}\cL \|x\|_2^4
       \vphantom{\pE^0} \right|
    \\&\leq \delta_2'\mper
\end{align*}
Thus, we can take $\xi_1 = \delta_1$, $\xi_2 = \delta_2$,
$\xi_2' = \delta_2'$, and $c = \pE^0 \|x\|_2^4 = 1 + O(1)$, and apply
\pref{lem:deg-one-two} to conclude that
\begin{displaymath}
  \lmin \cL^1 \geq -\tfrac{c-1}{c} - O(n)\xi_1 - O(n^{3/2})\xi_2' - O(n^2) \xi_2 = -O(1)\mper
\end{displaymath}
The functional $\cL^1$ loses a constant factor in the value assigned to $\bA(x)$ as compared to $\pE^0$:
\begin{displaymath}
  \cL\nolimits^1 \bA(x)
    = \frac{\pE\nolimits^0 \bA(x)}{c}
    \ge \Omega\Paren{\frac{n^{3/2}}{\lambda}}\mper
\end{displaymath}

Now using \pref{lem:mix-with-uniform},
we can correct the negative eigenvalue of $\cL^1$ to get a pseudo-expectation
\begin{displaymath}
  \pE \defeq \Theta(1) \cL\nolimits^1 + \Theta(1)\E\nolimits^\mu\mper
\end{displaymath}
By \pref{lem:mix-with-uniform}, the pseudo-expectation $\pE$ satisfies $\{ \|x\|_2^2 = 1 \}$.
Finally, to complete the proof, we have:
\begin{displaymath}
  \pE \bA(x)
    = \Omega \Paren{ \frac{n^{3/2}}{\lambda}}
        + \Theta(1)\E\nolimits^\mu \bA(x)\mper\qedhere
\end{displaymath}
\end{proof}

\section{Higher-Order Tensors}
\label{sec:higher-order-tensors}

We have heretofore restricted ourselves to the case $k = 3$ in our algorithms
for the sake of readability.
In this section we state versions of our main results for general $k$ and
indicate how the proofs from the $3$-tensor case may be generalized to handle
arbitrary $k$.
Our policy is to continue to treat $k$ as constant with respect to $n$,
hiding multiplicative losses in $k$ in our asymptotic notation.

The case of general odd $k$ may be reduced to $k = 3$ by a standard trick, which we describe here for completeness.
Given $\bA$ an order-$k$ tensor, consider the polynomial $\bA(x)$ and make the
variable substitution $y_\beta = x^\beta$ for each multi-index $\beta$ with $|\beta| = (k+1)/2$.
This yields a degree-$3$ polynomial $\bA'(x,y)$ to which the analysis in \pref{sec:sos-proofs} and \pref{sec:sos-algorithm} applies almost unchanged,
now using pseudo-distributions $\{ x, y \}$ satisfying $\{ \|x\|^2 = 1, \|y\|^2 = 1 \}$.
In the analysis of tensor PCA, this change of variables should be conducted after the input is split into signal and noise parts,
in order to preserve the analysis of the second half of the rounding argument (to get from $\pE \iprod{v_0,x}^k$ to $\pE \iprod{v_0,x}$),
which then requires only syntactic modifications to \pref{lem:deg-3-to-1-correlation}.
The only other non-syntactic difference is the need to generalize the $\lambda$-boundedness results for random polynomials
to handle tensors whose dimensions are not all equal; this is already done in \pref{thm:concentration}.

For even $k$, the degree-$k$ SoS approach does not improve on the tensor unfolding algorithms of Montanari and Richard \cite{richard2014tensorpca}.
Indeed, by performing a similar variable substitution, $y_\beta = x^{\beta}$ for all $|\beta| = k/2$,
the SoS algorithm reduces exactly to the eigenvalue/eigenvector computation from tensor unfolding.
If we perform instead the substitution $y_\beta = x^{\beta}$ for $|\beta| = k/2 - 1$,
it becomes possible to extract $v_0$ directly from the degree-$2$ pseudo-moments of an (approximately) optimal degree-$4$ pseudo-distribution,
rather than performing an extra step to recover $v_0$ from $v$ well-correlated with $v_0^{\tensor k/2}$.
Either approach recovers $v_0$ only up to sign, since the input is unchanged under the transformation $v_0 \mapsto -v_0$.

We now state analogues of all our results for general $k$.
Except for the above noted differences from the $k = 3$ case,
the proofs are all easy transformations of the proofs of their degree-$3$ counterparts.

\begin{theorem}
\label{thm:recovery-certify-general-higher}
Let $k$ be an odd integer, $v_0 \in \R^n$ a unit vector, $\tau \succsim n^{k/4} \log(n)^{1/4} / \epsilon$,
and $\bA$ an order-$k$ tensor with independent unit Gaussian entries.
\begin{enumerate}
\item
There is an algorithm, based on semidefinite programming,
which on input $\bT(x) = \tau \cdot \iprod{v_0,x}^k + \bA(x)$
returns a unit vector $v$ with $\iprod{v_0,v} \geq 1 - \epsilon$
with high probability over random choice of $\bA$.

\item
There is an algorithm, based on semidefinite programming,
which on input $\bT(x) = \tau \cdot \iprod{v_0,x}^k + \bA(x)$
certifies that $\bT(x) \leq \tau \cdot \iprod{v,x}^k + O(n^{k/4} \log(n)^{1/4})$
for some unit $v$
with high probability over random choice of $\bA$.
This guarantees in particular that $v$ is close to a maximum likelihood estimator for the problem of recovering the signal $v_0$ from the input $\tau \cdot v_0^{\tensor k} + \bA$.

\item
By solving the semidefinite relaxation approximately,
both algorithms can be implemented in time $\tilde O(m^{1 + 1/k})$, where $m = n^k$ is the input size.
\end{enumerate}
For even $k$, the above all hold, except now we recover $v$ with $\iprod{v_0,v}^2 \geq 1 - \epsilon$,
and the algorithms can be implemented in nearly-linear time.
\end{theorem}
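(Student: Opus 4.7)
The plan is to reduce the higher-order case to the structures already analyzed in the preceding sections. For odd $k$ I would reduce to the degree-$3$ setting via a polynomial substitution, while for even $k$ I would reduce directly to a matrix eigenvalue computation on the squarest-possible unfolding.

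Concretely, for odd $k = 2\ell+1$, I would introduce new formal variables $y_\beta$ indexed by multi-indices $\beta$ with $|\beta| = \ell$, with the relations $y_\beta = x^\beta$. Flattening $\bA$ along one mode yields matrix slices $A_i$ of size $n^\ell \times n^\ell$, so that $\bA(x) = \sum_i x_i\langle x^{\otimes \ell}, A_i x^{\otimes \ell}\rangle$ becomes a polynomial $\bA'(x,y) = \sum_i x_i \langle y, A_i y\rangle$ of degree $3$ in $(x,y)$. Degree-$k$ pseudo-distributions on the unit sphere descend to degree-$4$ pseudo-distributions on $(x,y)$ satisfying $\{\|x\|^2 = 1\}$ together with the consistency constraints $\{y_\beta = x^\beta\}$. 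The definition of $\lambda$-boundedness and the proof of \pref{thm:lambda-bddness} apply essentially verbatim, and the higher-order analogue of \pref{thm:concentration-3} (i.e.\ \pref{thm:concentration}) yields $\lambda = O(n^{k/4}\log(n)^{1/4})$ when the entries of $\bA$ are iid standard Gaussians. This proves the certification part (2) immediately. The rounding step in part (1) follows the template of \pref{lem:sos-master-analysis}: the signal contributes $\pE \iprod{v_0,x}^k$, which is forced to be $1-O(\epsilon)$, and then an iterated Cauchy–Schwarz argument (the degree-$k$ generalization of \pref{lem:deg-3-to-1-correlation}) extracts a unit vector $v$ with $\iprod{v_0,v} \geq 1 - O(\epsilon)$ from the degree-$1$ pseudo-moments.

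For even $k$, I would substitute $y_\beta = x^\beta$ for $|\beta| = k/2$, which rewrites $\bA(x)$ as a bilinear form in $y$. The SoS relaxation then reduces to finding the top eigenvector of the $n^{k/2} \times n^{k/2}$ matrix unfolding $A$ of $\bA$. The signal becomes the rank-one matrix $\tau\cdot(v_0^{\otimes k/2})(v_0^{\otimes k/2})^T$, and standard Gaussian matrix concentration (applied to the symmetric unfolding of $\bA$) yields $\|A\| \leq O(n^{k/4})$, giving recovery of $v_0^{\otimes k/2}$ up to sign for $\tau \gg n^{k/4}$. Iterated matrix folding in the style of \pref{lem:nonlinear-unfolding-recursion} then recovers $v_0$ up to sign, explaining the $\iprod{v_0,v}^2$ guarantee in the even case. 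Since only a top-eigenvector computation on an $n^{k/2} \times n^{k/2}$ matrix is needed, and matrix-vector multiplies can be carried out in $\tilde O(n^k)$ time by exploiting the tensor structure of $A$, the running time is nearly linear.

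The fast implementation for odd $k$ in part (3) generalizes \pref{alg:fast-recovery-opt}: the relevant matrix is $\sum_i A_i \otimes A_i - \E\sum_i A_i \otimes A_i$, which has dimension $n^{k-1} \times n^{k-1}$ and admits matrix-vector multiplies in $\tilde O(n^k)$ time via its tensor factorization; shifted power iteration then converges in $\tilde O(1)$ iterations by the spectral gap argument of \pref{thm:nonlinear-unfolding}, giving total running time $\tilde O(n^{k+1}) = \tilde O(m^{1+1/k})$. The main obstacle I expect is the bookkeeping in the higher-order analogues of \pref{lem:nonlinear-unfolding-general} and \pref{lem:nonlinear-unfolding-noise}: the expansion of $\sum_i T_i \otimes T_i$ into signal, noise, and cross terms produces more cross terms than in the $k=3$ case, and controlling each of them requires separate matrix concentration arguments for contractions of Gaussian tensors along $v_0$ and its powers. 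These bounds are of the same flavor as those already assembled in the appendix on concentration, but the combinatorics of tracking which modes are contracted with signal versus noise (and ensuring the resulting cross-term operator norms remain $O(\epsilon \tau^2)$) is the one place where the argument is not purely a syntactic rewriting of the $k=3$ proof.
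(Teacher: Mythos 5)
Your approach matches the paper's Section~\ref{sec:higher-order-tensors}: for odd $k$, reduce to a degree-$3$ polynomial via the substitution $y_\beta = x^\beta$ and re-run the $\lambda$-boundedness machinery with Theorem~\ref{thm:concentration}; for even $k$, collapse to an eigenvector computation on the squarest unfolding, recovering $v_0$ only up to sign. In fact your choice of substitution degree $\ell = (k-1)/2$ is the internally consistent one: it is what produces a genuine degree-$3$ polynomial $\sum_i x_i \iprod{y, A_i y}$ with $A_i \in \R^{n^\ell \times n^\ell}$, $i \in [n]$, so that Theorem~\ref{thm:concentration} with $c = 1$, $d = (k-1)/2$ yields $\lambda = O(n^{k/4}\log(n)^{1/4})$. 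The paper's text says $|\beta| = (k+1)/2$, which appears to be a misprint, since that substitution gives a degree-$3$ polynomial only when $k=3$ or $k=5$.

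The place where your argument has a concrete error is part~(3). You assert that a matrix-vector multiply with $\sum_i A_i \otimes A_i$ (the $n^{k-1}\times n^{k-1}$ matrix) costs $\tilde O(n^k)$ and that $\tilde O(1)$ iterations therefore give $\tilde O(n^{k+1})$. Both halves are off. Applying $\sum_i A_i \otimes A_i$ to a vector $w$ means computing $\mathrm{vec}\bigl(\sum_i A_i W A_i^T\bigr)$ with $W$ the $n^{(k-1)/2}\times n^{(k-1)/2}$ reshaping of $w$; this is $n$ products of $n^{(k-1)/2}\times n^{(k-1)/2}$ matrices, which is $\Theta\bigl(n^{(3k-1)/2}\bigr)$, not $\tilde O(n^k)$. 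And regardless, $\tilde O(n^k)\cdot \tilde O(1) = \tilde O(n^k)$, not $\tilde O(n^{k+1})$. The two slips happen to cancel to the right answer only at $k=3$, where $(3k-1)/2 = k+1$. For $k>3$ the natural generalization of Algorithm~\ref{alg:fast-recovery-opt} runs in $\tilde O\bigl(n^{(3k-1)/2}\bigr) = \tilde O\bigl(m^{3/2 - 1/(2k)}\bigr)$, which exceeds the claimed $\tilde O(m^{1+1/k})$. To be fair, the paper's Section~\ref{sec:higher-order-tensors} also does not spell out how the stated exponent is achieved for $k > 3$, so you are not missing an argument the paper supplies — but your own accounting does not support the bound you write down, and you should either reproduce a contraction schedule that actually attains $\tilde O(n^{k+1})$ or state the weaker bound.
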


The next theorem partially resolves a conjecture of Montanari and Richard regarding tensor unfolding algorithms for odd $k$.
We are able to prove their conjectured signal-to-noise ratio $\tau$, but under an asymmetric noise model.
They conjecture that the following holds when $\bA$ is symmetric with unit Gaussian entries.

\begin{theorem}
\label{thm:faster-general-higher}
Let $k$ be an odd integer, $v_0 \in \R^n$ a unit vector, $\tau \succsim n^{k/4} / \epsilon$,
and $\bA$ an order-$k$ tensor with independent unit Gaussian entries.
There is a nearly-linear-time algorithm, based on tensor unfolding, which,
with high probability over random choice of $\bA$,
recovers a vector $v$ with $\iprod{v,v_0}^2 \geq 1 - \epsilon$.
\end{theorem}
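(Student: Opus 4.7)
The plan is to adapt the nearly-linear-time tensor unfolding algorithm of \pref{thm:mr-success-scaled} from the $k=3$ case to arbitrary odd $k$. Given $\bT = \tau \cdot v_0^{\otimes k} + \bA$, I would unfold $\bT$ into its squarest-possible matrix flattening $T \in \R^{n^{\lceil k/2 \rceil} \times n^{\lfloor k/2 \rfloor}}$ and compute the top eigenvector $u$ of the $n^{\lfloor k/2 \rfloor} \times n^{\lfloor k/2 \rfloor}$ matrix $M \defeq T^T T - n^{\lceil k/2 \rceil} \cdot \Id$ via matrix power iteration. The identity shift is crucial: without it $M$ has no useful spectral gap. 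A single matrix-vector multiply against $M$ costs only $O(n^k)$ time (by computing $T^T(Tw)$ without materializing $M$), so $O(\log(1/\epsilon))$ power iterations give total runtime $\tilde O(n^k)$. The recovered $u$ will be close to $v_0^{\otimes \lfloor k/2 \rfloor}$, from which $v_0$ is extracted by a sequence of reshape-and-top-singular-vector steps, as in \pref{lem:nonlinear-unfolding-recursion}.

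The analysis parallels \pref{lem:mr-success} closely. Writing $T = \tau \cdot v_0^{\otimes \lceil k/2 \rceil}(v_0^{\otimes \lfloor k/2 \rfloor})^T + A$ and multiplying out $T^T T$, the signal contributes $\tau^2 \cdot v_0^{\otimes \lfloor k/2 \rfloor}(v_0^{\otimes \lfloor k/2 \rfloor})^T$, while the cross terms and noise contribute an error $E$ controlled by two concentration inequalities generalizing \pref{lem:mr-concentration}: first, $\|A^T A - n^{\lceil k/2 \rceil} \cdot \Id\| \le \tilde O(n^{k/2})$, a Wishart concentration bound for a Gaussian matrix of dimensions $n^{\lceil k/2 \rceil} \times n^{\lfloor k/2 \rfloor}$; and second, $\|A^T v_0^{\otimes \lceil k/2 \rceil}\| \le \tilde O(n^{\lfloor k/2 \rfloor / 2})$, a Gaussian-vector norm bound. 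Under $\tau \succsim n^{k/4}/\epsilon$, both contributions to $E$ are $O(\epsilon \tau^2)$ in operator norm, whence the same trace computation as in \pref{lem:mr-success} yields $\iprod{u, v_0^{\otimes \lfloor k/2 \rfloor}}^2 \ge 1 - O(\epsilon)$ and hence a spectral gap of $\Omega(1/\epsilon)$ between the top two eigenvalues of $M$, justifying the claimed number of power iterations.

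The main obstacle is purely bookkeeping: establishing the appropriate concentration for asymmetric Wishart matrices, and tracking dimensions through the reshape-and-SVD extraction, neither of which introduces new ideas beyond the $k=3$ case. One subtle point is that the statement assumes fully independent Gaussian noise; the symmetric analogue (compare \pref{prob:spiked-tensor-symm}) would require inserting a random-rotation preprocessing step in the spirit of \pref{alg:fast-recover-sym}, whose analysis generalizes without essential difficulty to arbitrary odd $k$ but is not claimed by this theorem.
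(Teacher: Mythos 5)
Your proposal is correct and follows essentially the same route as the paper's (admittedly terse) treatment: it is the direct generalization of \pref{alg:fast-recover-subopt}, \pref{lem:mr-success}, and \pref{lem:mr-runtime} to the squarest $n^{\lceil k/2\rceil}\times n^{\lfloor k/2\rfloor}$ unfolding, using exactly the concentration bounds that appear in the paper as \pref{lem:mr-concentration-general}, and the identity shift by $n^{\lceil k/2\rceil}\cdot\Id$ for the spectral gap. The remark that only the asymmetric noise model is claimed here, with the symmetric analogue requiring a random-rotation preprocessing as in \pref{alg:fast-recover-sym}, is also consistent with the paper's discussion preceding the theorem.
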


\section{Conclusion}

\subsection*{Open Problems}
One theme in this work has been efficiently certifying upper bounds on homogeneous polynomials with random coefficients.
It is an interesting question to see whether one can (perhaps with the degree $d > 4$ SoS meta-algorithm) give an algorithm certifying a bound of $n^{3/4 - \delta}$ over the unit sphere on a degree $3$ polynomial with standard Gaussian coefficients.
Such an algorithm would likely yield improved signal-to-noise guarantees for tensor PCA, and would be of interest in its own right.

Conversely, another problem is to extend our lower bound to handle degree $d > 4$ SoS.
Together, these two problems suggest (as was independently suggested to us by Boaz Barak) the problem of characterizing the SoS degree required to certify a bound of $n^{3/4 - \delta}$ as above.

Another problem is to simplify the linear time algorithm we give for tensor PCA under symmetric noise.
Montanari and Richard's conjecture can be interpreted to say that the random rotations and decomposition into submatrices involved in our algorithm are unnecessary, and that in fact our linear time algorithm for recovery under asymmetric noise actually succeeds in the symmetric case.

\subsection*{Acknowledgments}
We thank Moses Charikar for bringing to our attention the work of Montanari and Richard.
We would like to thank Boaz Barak, Rong Ge, and Ankur Moitra for enlightening conversations.
S. B. H. acknowledges the support of an NSF Graduate Research Fellowship
under award no. 1144153.
D. S. acknowledges support from the Simons Foundation, the National Science Foundation, an Alfred P. Sloan Fellowship, and a Microsoft Research Faculty Fellowship,
A large portion of this work was completed while the authors were long-term visitors to the Simons Institute for the Theory of Computing (Berkeley) for the program on Algorithmic Spectral Graph Theory.

\addreferencesection
\bibliographystyle{amsalpha}
\bibliography{bib/mr,bib/dblp,bib/scholar,bib/tensor-pca}

\appendix

\section{Pseudo-Distribution Facts}
\label{pseudoexpectation-facts}
\begin{lemma}[Quadric Sampling]
  \label{lem:quadratic-sampling}
  Let $\{ x \}$ be a pseudo-distribution over $\R^n$ of degree $d \geq 2$.
  Then there is an actual distribution $\{ y \}$
  over $\R^n$ so that for any polynomial $p$ of degree at most $2$,
  $\E[p(y)] = \pE[p(x)]$.
  Furthermore, $\{ y \}$ can be sampled from in time $\poly n$.
\end{lemma}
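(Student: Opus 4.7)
The plan is to exhibit $\{y\}$ explicitly as a Gaussian with first and second moments matched to those of the pseudo-distribution. Concretely, I would set $\mu \in \R^n$ by $\mu_i \seteq \pE x_i$ and $M \in \R^{n \times n}$ by $M_{ij} \seteq \pE x_i x_j$, and then define $\Sigma \seteq M - \mu \mu^T$. The candidate distribution is $y \sim \cN(\mu, \Sigma)$; any polynomial $p$ of degree at most $2$ is determined by its constant, linear, and quadratic coefficients, so $\E p(y) = \pE p(x)$ reduces to checking the identities $\E 1 = 1 = \pE 1$, $\E y_i = \mu_i = \pE x_i$, and $\E y_i y_j = \Sigma_{ij} + \mu_i \mu_j = M_{ij} = \pE x_i x_j$, each of which follows by the definition of a Gaussian with covariance $\Sigma$ and mean $\mu$.

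The one non-trivial point is verifying that $\Sigma \succeq 0$, so that the Gaussian $\cN(\mu, \Sigma)$ is well-defined. Because $\{x\}$ is a degree-$d$ pseudo-distribution with $d \geq 2$, its degree-$2$ moment matrix
\[
  B \seteq \begin{pmatrix} 1 & \mu^T \\ \mu & M \end{pmatrix}
\]
represents $\pE q(x)^2$ for degree-$1$ polynomials $q$, hence $B \succeq 0$. Applying the Schur complement criterion to the $1 \times 1$ upper-left block (which is strictly positive) yields $M - \mu \mu^T \succeq 0$, i.e. $\Sigma \succeq 0$, as needed.

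For the computational claim, I would compute $\mu$ and $M$ in $\poly(n)$ time directly from $\pE$, form $\Sigma$, and then sample $y$ by taking $y = \mu + \Sigma^{1/2} z$ with $z \sim \cN(0, \Id_n)$ a standard Gaussian vector. A suitable square root $\Sigma^{1/2}$ can be computed in $\poly(n)$ time via Cholesky decomposition (after a negligible regularization $\Sigma \mapsto \Sigma + n^{-C}\Id$ if $\Sigma$ is rank-deficient, which perturbs all degree-$\leq 2$ moments by an arbitrarily small amount) or via the spectral decomposition of $\Sigma$.

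There is no substantive obstacle here; the only thing to be careful about is ensuring that $\Sigma$ is genuinely PSD rather than only PSD up to numerical/degeneracy issues, which is handled cleanly by Schur complement as above. The lemma itself is really just the statement that the degree-$2$ moments of a pseudo-distribution form a valid covariance-plus-mean pair, and that Gaussians are efficiently sampleable.
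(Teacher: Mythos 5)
The paper states this lemma without proof (it appears in the appendix as a standard fact about pseudo-distributions), so there is no in-paper argument to compare against. Your proof is correct and is the standard one: take $y$ Gaussian with mean $\mu = (\pE x_i)_i$ and covariance $\Sigma = M - \mu\mu^T$ where $M_{ij} = \pE x_i x_j$; positive semidefiniteness of $\Sigma$ follows from the Schur complement applied to the degree-$\le 2$ moment matrix $B = \bigl(\begin{smallmatrix}1 & \mu^T\\ \mu & M\end{smallmatrix}\bigr)$, which is PSD because $a^T B a = \pE\,(a_0 + \sum_i a_i x_i)^2 \ge 0$, and $B_{11} = \pE 1 = 1 > 0$. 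One small refinement: to get exact moment matching without any regularization, you should sample a possibly degenerate Gaussian by writing $\Sigma = LL^T$ with $L \in \R^{n\times r}$, $r = \rank \Sigma$, from the spectral decomposition, and set $y = \mu + Lz$ for $z \sim \cN(0,\Id_r)$; this avoids perturbing the moments at all, whereas the regularized Cholesky route only gives approximate equality and would then need a limiting argument. With that clarification your argument is complete.
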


\begin{lemma}[Pseudo-Cauchy-Schwarz, Function Version, \cite{DBLP:conf/stoc/BarakBHKSZ12}]
  \label{lem:pseudo-cs}
  Let $x, y$ be vector-valued polynomials. Then
  \begin{displaymath}
    \iprod{x,y} \preceq \frac 1 2 (\|x\|^2 + \|y \|^2).
  \end{displaymath}
\end{lemma}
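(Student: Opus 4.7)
The plan is to prove this by the standard algebraic completion-of-squares identity, which reduces the inequality to the nonnegativity of a sum of squares — and sum-of-squares inequalities are, by definition, respected by pseudo-expectations of sufficient degree. Concretely, writing $x = (x_1, \ldots, x_m)$ and $y = (y_1, \ldots, y_m)$ as tuples of polynomials in the underlying indeterminates, I would first expand
\[
\tfrac12\bigl(\|x\|^2 + \|y\|^2\bigr) - \iprod{x,y}
\;=\; \tfrac12 \sum_i \bigl(x_i^2 + y_i^2 - 2 x_i y_i\bigr)
\;=\; \tfrac12 \sum_i (x_i - y_i)^2.
\]
This is the one nontrivial step, and it is really just a polynomial identity in the ring generated by the coordinates of $x$ and $y$.

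Next I would invoke the meaning of $\preceq$ in this context: for vector-valued polynomials $p, q$, the notation $p \preceq q$ means that $q - p$ admits a sum-of-squares decomposition in the polynomial ring, i.e.\ $q - p = \sum_j r_j^2$ for some polynomials $r_j$. The identity above exhibits exactly such a decomposition, with $r_i = \frac{1}{\sqrt{2}}(x_i - y_i)$.

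Finally, I would note the intended consequence for pseudo-expectations: by the defining property of a degree-$d$ pseudo-expectation (namely $\pE\, p(z)^2 \geq 0$ for any polynomial $p$ of degree at most $d/2$), and by linearity of $\pE$, any SoS inequality of the form $q - p = \sum_j r_j^2$ with $\deg r_j \leq d/2$ yields $\pE\, p \leq \pE\, q$. Applied here, this gives $\pE \iprod{x,y} \leq \tfrac12 \pE(\|x\|^2 + \|y\|^2)$ whenever the pseudo-distribution has degree at least $2\max(\deg x_i, \deg y_i)$. There is no real obstacle here — the only thing to be careful about is the degree bookkeeping, which amounts to checking that the squared polynomials $x_i - y_i$ have degree within the range allowed by the pseudo-distribution under consideration; this is automatic in every application made in the paper.
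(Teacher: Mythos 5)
Your proposal is correct: the identity $\tfrac12(\|x\|^2 + \|y\|^2) - \iprod{x,y} = \tfrac12\sum_i (x_i - y_i)^2$ is exactly the sum-of-squares certificate the statement asserts, and your remark about degree bookkeeping when applying it under a pseudo-expectation is the right (and only) caveat. The paper does not prove this lemma itself but defers to the cited references, and the proof there is this same completion-of-squares argument, so there is nothing to add.
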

See \cite{DBLP:conf/stoc/BarakKS14} for the cleanest proof.

 \begin{lemma}[Pseudo-Cauchy-Schwarz, Powered Function Version]
   \label{lem:pseudo-cs-powered}
   Let $x, y$ be vector-valued polynomials and $d > 0$ an integer. Then
   \begin{displaymath}
     \iprod{x,y}^d \preceq \frac 1 2 (\| x \|^{2d} + \| y \|^{2d}).
   \end{displaymath}
 \end{lemma}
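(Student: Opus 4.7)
The plan is to derive this powered version directly from the function form of pseudo-Cauchy--Schwarz (\pref{lem:pseudo-cs}) by substituting tensor powers of $x$ and $y$ as the vector-valued polynomials to which that lemma is applied. If $x$ and $y$ are vector-valued polynomials, then so are the tensor powers $x^{\tensor d}$ and $y^{\tensor d}$, since each coordinate of $x^{\tensor d}$ is a product of $d$ polynomial coordinates of $x$. Two elementary polynomial identities are all that is needed: $\iprod{x^{\tensor d}, y^{\tensor d}} = \iprod{x,y}^d$, and $\|x^{\tensor d}\|^2 = \iprod{x,x}^d = \|x\|^{2d}$, together with the corresponding identity for $y$.

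With these in hand, I would simply apply \pref{lem:pseudo-cs} to the vector-valued polynomials $u \defeq x^{\tensor d}$ and $v \defeq y^{\tensor d}$. That lemma yields $\iprod{u,v} \preceq \tfrac 1 2 (\|u\|^2 + \|v\|^2)$, which upon substituting the two identities above becomes exactly $\iprod{x,y}^d \preceq \tfrac 1 2 (\|x\|^{2d} + \|y\|^{2d})$. There is no real obstacle here; the only point worth checking is that \pref{lem:pseudo-cs} is proved as a polynomial SoS identity (its standard witness $\tfrac{1}{2}\|u-v\|^2$ is manifestly a sum of squares of the polynomial entries of $u$ and $v$), so that substituting polynomial-valued $u$ and $v$ preserves its validity. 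The whole argument is a one-line reduction, and importantly it avoids the pitfall of trying to raise the $d=1$ SoS inequality to the $d$-th power directly, since in general SoS inequalities are not preserved under taking powers.
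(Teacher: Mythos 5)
Your argument is exactly the one in the paper: observe $\iprod{x,y}^d = \iprod{x^{\tensor d}, y^{\tensor d}}$ (and $\|x^{\tensor d}\|^2 = \|x\|^{2d}$) and apply \pref{lem:pseudo-cs} to the tensor powers. Correct, and essentially identical to the paper's one-line proof.
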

 \begin{proof}
   Note that $\iprod{x,y}^d = \iprod{x^{\tensor d}, y^{\tensor d}}$  and apply \pref{lem:pseudo-cs}.
 \end{proof}

Yet another version of pseudo-Cauchy-Schwarz will be useful:
\begin{lemma}[Pseudo-Cauchy-Schwarz, Multiplicative Function Version, \cite{DBLP:conf/stoc/BarakBHKSZ12}]
  \label{lem:pseudo-cs-multiplicative}
  Let $\{x,y\}$ be a degree $d$ pseudo-distribution over a pair of vectors, $d \geq 2$. Then
  \begin{displaymath}
    \pE[\iprod{x,y}] \leq \sqrt{\pE[\| x \|^2]} \sqrt{\pE[\| y \|^2]}.
  \end{displaymath}
\end{lemma}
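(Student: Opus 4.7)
The plan is to mimic the classical proof of Cauchy--Schwarz by considering the nonnegative squared polynomial $\|x - ty\|^2$ for a real parameter $t$, and then exploiting the fact that a pseudo-expectation of a square is nonnegative. Concretely, for any $t \in \R$, expand
\begin{equation*}
\|x - ty\|^2 \;=\; \|x\|^2 \;-\; 2t\,\iprod{x,y} \;+\; t^2\,\|y\|^2,
\end{equation*}
which is a polynomial in the indeterminates $(x,y)$ of total degree $2$, and is literally a sum of squares of degree-$1$ polynomials in $(x,y)$. Since $\{x,y\}$ is a pseudo-distribution of degree $d \geq 2$, applying $\pE$ is legal and preserves the inequality $\pE\|x - ty\|^2 \geq 0$.

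Thus, for every real $t$,
\begin{equation*}
q(t) \;\defeq\; \pE\|x\|^2 \;-\; 2t\,\pE\iprod{x,y} \;+\; t^2\,\pE\|y\|^2 \;\geq\; 0.
\end{equation*}
When $\pE\|y\|^2 > 0$, this is an everywhere-nonnegative univariate quadratic in $t$, so its discriminant is nonpositive:
\begin{equation*}
4\,\bigl(\pE\iprod{x,y}\bigr)^2 \;-\; 4\,\pE\|x\|^2 \cdot \pE\|y\|^2 \;\leq\; 0,
\end{equation*}
which rearranges to the desired bound (after dropping absolute values, since the stated inequality is one-sided).

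The only subtlety is the degenerate case $\pE\|y\|^2 = 0$ (and symmetrically $\pE\|x\|^2 = 0$). I would handle this directly from $q(t) \geq 0$: if $\pE\|y\|^2 = 0$, then $q(t) = \pE\|x\|^2 - 2t\,\pE\iprod{x,y}$ is linear in $t$ but nonnegative for all $t \in \R$, forcing $\pE\iprod{x,y} = 0$, which trivially satisfies the inequality. I do not expect any serious obstacle; the main thing to verify carefully is that $\|x - ty\|^2$ really does sit inside the polynomial ring on which $\pE$ acts PSD-ly, which is immediate because each coordinate of $x - ty$ is a degree-$1$ polynomial in the indeterminates of the joint pseudo-distribution, so its square is degree $2 \leq d$.
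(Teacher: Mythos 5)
Your proof is correct: expanding $\pE\|x-ty\|^2 \geq 0$ (legal because each coordinate of $x-ty$ is a degree-$1$ polynomial in the joint indeterminates, so its square has degree $2 \leq d$) and applying the discriminant argument, with the degenerate case $\pE\|y\|^2 = 0$ handled by the linearity-in-$t$ observation, is a complete and careful argument. The paper does not prove this lemma itself---it cites \cite{DBLP:conf/stoc/BarakBHKSZ12} and points to \cite{DBLP:conf/stoc/BarakKS14} for a proof---and the standard argument there is the same scalar-parameter trick (the SoS inequality $2\iprod{x,y} \preceq c\|x\|^2 + c^{-1}\|y\|^2$ with $c$ optimized), so your route is essentially the same as the one the paper relies on.
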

Again, see \cite{DBLP:conf/stoc/BarakKS14} for the cleanest proof.

We will need the following inequality relating $\pE \iprod{x,v_0}^3$ and $\pE \iprod{x,v_0}$
when $\pE \iprod{x,v_0}^3$ is large.

\begin{lemma}
  \label{lem:deg-3-to-1-correlation}
  Let $\{ x \}$ be a degree-$4$ pseudo-distribution satisfying $\{ \|x\|^2 = 1 \}$,
  and let $v_0 \in \R^n$ be a unit vector.
  Suppose that $\pE \iprod{x,v_0}^3 \geq 1 - \epsilon$ for some $\epsilon \geq 0$.
  Then $\pE \iprod{x,v_0} \geq 1 - 2\epsilon$.
\end{lemma}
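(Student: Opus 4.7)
Write $y \seteq \iprod{v_0, x}$, a linear form in $x$. The plan is to combine two auxiliary inequalities on the pseudo-moments of $y$: (i) $\pE y^2 \geq \pE y^3$, which upgrades the hypothesis to $\pE y^2 \geq 1-\epsilon$; and (ii) the ``rounding'' inequality $\pE y \geq \pE y^2 + \pE y^3 - 1$. Together these give $\pE y \geq 2(1-\epsilon) - 1 = 1-2\epsilon$, as required.

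For (i), I will first establish $\pE y^4 \leq \pE y^2$. The key observation is that $y^2 \cdot \|x - yv_0\|^2 = \sum_i \bigl(y \cdot (x_i - v_0(i) y)\bigr)^2$ is manifestly a sum of squares of degree-$2$ polynomials, and modulo the sphere constraint it equals $y^2(\|x\|^2 - y^2) \equiv y^2 - y^4$. Hence $\pE y^2 \geq \pE y^4$. Combining this with the elementary $\pE(y^2-y)^2 \geq 0$, which expands to $\pE y^4 + \pE y^2 \geq 2\pE y^3$, gives $2\pE y^2 \geq \pE y^4 + \pE y^2 \geq 2\pE y^3$, i.e., $\pE y^2 \geq \pE y^3$.

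Inequality (ii) is the heart of the argument and comes from a single explicit SoS certificate. The polynomial $(1+y)^2 \|x - v_0\|^2$ is manifestly a sum of squares: writing $\|x - v_0\|^2 = \sum_i (x_i - v_0(i))^2$, it equals $\sum_i \bigl((1+y)(x_i - v_0(i))\bigr)^2$. Using $\|x\|^2 = 1$ together with $\|v_0\| = 1$, we have $\|x - v_0\|^2 \equiv 2 - 2y$ modulo $\|x\|^2 - 1$, so
\[ 0 \;\leq\; \pE\bigl[(1+y)^2\|x-v_0\|^2\bigr] \;=\; 2\pE\bigl[(1+y)^2(1-y)\bigr] \;=\; 2\pE\bigl[1+y-y^2-y^3\bigr], \]
which rearranges to (ii). The main delicacy is degree bookkeeping: every polynomial used has degree at most $4$, and the multipliers of $\|x\|^2 - 1$ invoked to apply the sphere constraint have degree at most $2$, so all computations are valid for a degree-$4$ pseudo-expectation.
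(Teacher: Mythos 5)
Your proof is correct, and it takes a genuinely different route from the paper's. The paper argues via the univariate polynomial $p(u) = 1 + u - 2u^3$, observes $p \geq 0$ on $[-1,1]$, invokes a cited representation theorem to write $p = s_0 + s_1(1+u) + s_2(1-u)$ with $s_i$ SoS of degree $\leq 2$, and then handles the $s_1(1+u)$ and $s_2(1-u)$ terms with pseudo-Cauchy-Schwarz applied to the constraint $\{\|x\|^2 = 1\}$. You instead produce explicit multivariate SoS certificates: combining your three ingredients,
\[
\pE\bigl[1 + y - 2y^3\bigr] \;=\; \tfrac12\,\pE\bigl[(1+y)^2\|x-v_0\|^2\bigr] \;+\; \tfrac12\,\pE\bigl[y^2\|x-yv_0\|^2\bigr] \;+\; \tfrac12\,\pE\bigl[(y^2-y)^2\bigr] \;\geq\; 0
\]
modulo the ideal generated by $\|x\|^2 - 1$ (with multipliers $(1+y)^2$ and $y^2$, both degree $2$), which is exactly the inequality the paper derives abstractly. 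The degree bookkeeping you flag is correct throughout: every square has a degree-$2$ polynomial inside, and every multiplier of $\|x\|^2-1$ is degree $2$, so everything is within reach of a degree-$4$ pseudo-expectation. Your version is more self-contained (no appeal to the univariate Positivstellensatz on an interval) and makes the certificate fully explicit; the paper's version is shorter to state but relies on a black-box decomposition whose witnesses $s_0, s_1, s_2$ are never exhibited.
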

\begin{proof}
  Let $p(u)$ be the univariate polynomial $p(u) = 1 - 2u^3 + u$. It is easy to check
  that $p(u) \geq 0$ for $u \in [-1,1]$. It follows from classical results
  about univariate polynomials that $p(u)$ then can
  be written as
  \[
    p(u) = s_0(u) + s_1(u)(1 + u) + s_2(u)(1 - u)
  \]
  for some SoS polynomials $s_0, s_1, s_2$ of degrees at most $2$. (See \cite{DBLP:conf/soda/ODonnellZ13},
  fact 3.2 for a precise statement and attributions.)

  Now we consider
  \[
    \pE p(\iprod{x,v_0}) \geq \pE[s_1(\iprod{x,v_0})(1 + \iprod{x,v_0})] + \pE [s_2(\iprod{x,v_0})
    (1 - \iprod{x,v_0})]\mper
  \]
  We have by \pref{lem:pseudo-cs} that $\iprod{x,v_0} \preceq \frac 1 2 (\|x\|^2 + 1)$
  and also that $\iprod{x,v_0} \succeq -\frac 1 2 (\|x\|^2 + 1)$. Multiplying the latter SoS
  relation
  by the SoS polynomial $s_1(\iprod{x,v_0})$ and the former by $s_2(\iprod{x,v_0})$, we
  get that
  \begin{align*}
    \pE[s_1(\iprod{x,v_0})(1 + \iprod{x,v_0})] & = \pE[s_1(\iprod{x,v_0})]
    + \pE[s_1(\iprod{x,v_0}) \iprod{x,v_0}]\\
    & \geq \pE[s_1(\iprod{x,v_0})] - \frac 1 2 \pE[s_1(\iprod{x,v_0})(\|x\|^2 + 1)]\\
    & \geq  \pE[s_1(\iprod{x,v_0})] -\pE[s_1(\iprod{x,v_0})]\\
    & \geq 0\mcom
  \end{align*}
  where in the second-to-last step we have used the
  assumption that $\{ x \}$ satisfies $\{ \|x\|^2 = 1 \}$. A similar analysis yields
  \[
    \pE[s_2(\iprod{x,v_0})(1 - \iprod{x,v_0})] \geq 0\mper
  \]
  All together, this means that $\pE p(\iprod{x,v_0}) \geq 0$. Expanding, we get
  $\pE[1 - 2\iprod{x,v_0}^3 + \iprod{x,v_0}] \geq 0$. Rearranging yields
  \[
    \pE \iprod{x,v_0} \geq 2 \pE \iprod{x,v_0}^3 - 1 \geq 2(1 - \epsilon) - 1 \geq 1 - 2\epsilon\mper\qedhere
  \]
\end{proof}

We will need a bound on the pseudo-expectation of a degree-$3$ polynomial in terms of the operator norm of its coefficient matrix.

\begin{lemma}
\label{lem:sos-bound-from-op-norm}
Let $\{ x \}$ be a degree-$4$ pseudo-distribution.
Let $M \in \R^{n^2 \times n}$.
Then $\pE \iprod{x^{\tensor 2}, Mx} \leq \|M\| (\pE \|x\|^4)^{3/4}$.
\end{lemma}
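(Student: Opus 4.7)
The plan is to stack two applications of pseudo-Cauchy-Schwarz together with the standard SoS-style operator norm bound $M^TM \preceq \|M\|^2 \cdot \Id$. Specifically, I will view $x^{\tensor 2}$ and $Mx$ as vector-valued polynomials in $x$ so that the scalar quantity we care about is the inner product $\iprod{x^{\tensor 2}, Mx}$.

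First, I would apply the multiplicative form of pseudo-Cauchy-Schwarz (\pref{lem:pseudo-cs-multiplicative}) to this inner product, obtaining
\[
  \pE \iprod{x^{\tensor 2}, Mx} \;\leq\; \sqrt{\pE \|x^{\tensor 2}\|^2} \cdot \sqrt{\pE \|Mx\|^2}.
\]
The first factor simplifies via the polynomial identity $\|x^{\tensor 2}\|^2 = \|x\|^4$, giving $\sqrt{\pE \|x\|^4}$. For the second factor, I would use that $\|M\|^2 \cdot \Id_n - M^TM \succeq 0$, so $\|M\|^2 \|x\|^2 - \|Mx\|^2 = \iprod{x, (\|M\|^2 \Id - M^TM) x}$ admits an SoS decomposition and hence has nonnegative pseudo-expectation. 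This yields $\pE \|Mx\|^2 \leq \|M\|^2 \cdot \pE \|x\|^2$.

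Finally, I need to replace $\pE \|x\|^2$ by $(\pE \|x\|^4)^{1/2}$. This is again just pseudo-Cauchy-Schwarz: treating $\|x\|^2$ and the constant $1$ as scalar polynomials, $\pE \|x\|^2 \leq \sqrt{\pE 1} \cdot \sqrt{\pE \|x\|^4} = \sqrt{\pE \|x\|^4}$. Combining,
\[
  \pE \iprod{x^{\tensor 2}, Mx} \;\leq\; \sqrt{\pE \|x\|^4} \cdot \|M\| \cdot \sqrt{\pE \|x\|^2} \;\leq\; \|M\| \cdot (\pE \|x\|^4)^{1/2 + 1/4} \;=\; \|M\| \cdot (\pE \|x\|^4)^{3/4},
\]
as desired. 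There is no real obstacle here; the only mild care needed is to ensure that the degree-$4$ pseudo-distribution is rich enough to make sense of each step (the innermost object $\iprod{x, M^TMx}$ has degree $2$ and $\|x\|^4$ has degree $4$, so everything lives comfortably inside the given degree-$4$ pseudo-distribution).
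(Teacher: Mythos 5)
Your proof is correct, and it takes a closely related but genuinely dual decomposition compared to the paper's. The paper writes $\iprod{x^{\tensor 2}, Mx} = \sum_i x_i \cdot (M^T x^{\tensor 2})_i$, treating it as an $\R^n$-inner product between the degree-$1$ vector $x$ and the degree-$2$ vector $M^T x^{\tensor 2}$. Pseudo-Cauchy-Schwarz then yields $(\pE \|x\|^2)^{1/2} (\pE \|M^T x^{\tensor 2}\|^2)^{1/2}$; the paper immediately upgrades the first factor to $(\pE \|x\|^4)^{1/4}$, and then controls the second factor via the $n^2 \times n^2$ spectral bound $MM^T \preceq \|M\|^2 \Id$ acting on the degree-$4$ polynomial $\iprod{x^{\tensor 2}, MM^T x^{\tensor 2}}$. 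You instead treat the inner product in $\R^{n^2}$, pairing the degree-$2$ vector $x^{\tensor 2}$ with the degree-$1$ vector $Mx$. This hands you $(\pE \|x\|^4)^{1/2}$ directly from the identity $\|x^{\tensor 2}\|^2 = \|x\|^4$, and leaves the operator-norm step at degree $2$ using the $n \times n$ matrix $M^T M \preceq \|M\|^2 \Id$, with the $\pE\|x\|^2 \leq (\pE\|x\|^4)^{1/2}$ cleanup deferred to the end. The ingredients are the same (one pseudo-Cauchy-Schwarz, one PSD operator-norm comparison, one $\|x\|^2$-to-$\|x\|^4$ upgrade), but the order of operations and which Gram matrix of $M$ carries the spectral bound are swapped. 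Your version is marginally cleaner in that the operator-norm step lives on $\R^n$ rather than $\R^{n^2}$; the paper's version has the mild advantage that $MM^T$ is explicitly a matrix representation of a degree-$4$ polynomial, which is notationally consistent with how it handles matrix representations elsewhere. Both are fully correct.
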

\begin{proof}
  We begin by expanding in the monomial basis and using pseudo-Cauchy-Schwarz:
  \begin{align*}
  \pE \iprod{x^{\tensor 2}, Mx} & = \pE \sum_{ijk} M_{(j,k),i} x_i x_j x_k\\
  & = \pE \sum_i x_i \sum_{jk} M_{(j,k),i} x_j x_k\\
  & \leq (\pE \|x\|^2)^{1/2} \Brac{\pE \sum_i \Paren{\sum_{jk} M_{(j,k),i} x_i x_j}^2}^{1/2}\\
  & \leq (\pE \|x\|^4)^{1/4} \Brac{\pE \sum_i \Paren{\sum_{jk} M_{(j,k),i} x_i x_j}^2}^{1/2}\\
  \end{align*}
  We observe that $MM^T$ is a matrix representation of $\sum_i \Paren{\sum_{jk} M_{(j,k),i} x_i x_j}^2$.
  We know $MM^T \preceq \|M\|^2 \Id$, so
  \[
  \pE \sum_i \Paren{\sum_{jk} M_{(j,k),i} x_i x_j}^2 \leq \|M\|^2 \pE \|x\|^4\mper
  \]
  Putting it together, we get $\pE \iprod{x^{\tensor 2}, Mx} \leq \|M\| (\pE\|x\|^4)^{3/4}$ as desired.
\end{proof}

\section{Concentration bounds}
\label{sec:concentration-bounds}
\subsection{Elementary Random Matrix Review}
We will be extensively concerned with various real random matrices.
A great deal is known about natural classes of such matrices; see the excellent book of Tao \cite{tao2012matrices} and the notes by Vershynin and Tropp \cite{vershynin2011random,DBLP:journals/focm/Tropp12}.

Our presentation here follows Vershynin's \cite{vershynin2011random}.
Let $X$ be a real random variable.
The subgaussian norm $\| X \|_{\psi_2}$ of $X$ is $\sup_{p \geq 1} p^{-1/2} (\E \abs{X}^p)^{1/p}$.
Let $\{ a \}$ be a distribution on $\R^n$. The subgaussian norm $\| a \|_{\psi_2}$ of $\{ a \}$ is the maximal subgaussian norm of the one-dimensional marginals: $\| a \|_{\psi_2} = \sup_{ \|u\| = 1} \|\iprod{a,u}\|_{\psi_2}$.
A family of random variables $\{ X_n \}_{n \in \mathbb N}$ is subgaussian if $\|X_n\|_{\psi_2} = O(1)$.
The reader may easily check that an $n$-dimensional vector of independent standard Gaussians or independent $\pm 1$ variables is subgaussian.

It will be convenient to use the following standard result on the concentration of empirical covariance matrices.
This statement is borrowed from \cite{vershynin2011random}, Corollary 5.50.
\begin{lemma}
  \label{lem:covariance-concentration}
  Consider a sub-gaussian distribution $\{ a \}$ in $\R^m$ with covariance matrix $\Sigma$, and let $\delta \in (0,1), t \geq 1$.
  If $a_1,\ldots,a_N \sim \{ a \}$ with $N \geq C(t/\delta)^2 m$ then $\| \frac 1 N \sum a_i a_i^T - \Sigma \| \leq \delta$ with probability at least $1 - 2\exp(-t^2 m)$.
  Here $C = C(K)$ depends only on the sub-gaussian norm $K  = \| a \|_{\psi_2}$ of a random vector taken from this distribution.
\end{lemma}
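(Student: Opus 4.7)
The plan is to follow the standard net-plus-Bernstein route for empirical second-moment matrices. First, I would observe that since the matrix $M \defeq \tfrac{1}{N}\sum_i a_i a_i^T - \Sigma$ is symmetric, its operator norm is given by $\|M\| = \sup_{\|x\|=1} |\iprod{x, Mx}|$, and in fact by a standard approximation lemma this supremum is bounded by a constant (say $2$) times the maximum of $|\iprod{x,Mx}|$ over a $\tfrac14$-net $\cN$ of the unit sphere in $\R^m$. Such a net can be chosen with $|\cN| \leq 9^m$.

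Next, for a fixed unit vector $x \in \cN$, the quantity $\iprod{x, Mx} = \tfrac{1}{N}\sum_i (\iprod{a_i,x}^2 - \E \iprod{a_i,x}^2)$ is a centered average of i.i.d.\ random variables. By definition of the sub-gaussian norm, $\iprod{a_i,x}$ is a one-dimensional sub-gaussian variable with norm at most $K = \|a\|_{\psi_2}$, so $\iprod{a_i,x}^2 - \E\iprod{a_i,x}^2$ is a centered sub-exponential variable with sub-exponential norm $O(K^2)$. Applying Bernstein's inequality for sub-exponential variables yields, for any $\delta \in (0,1)$,
\begin{displaymath}
\Pr\Set{ |\iprod{x, Mx}| \ge \delta/2 } \;\le\; 2 \exp\!\Paren{ -c N \min\!\Paren{ \tfrac{\delta^2}{K^4},\, \tfrac{\delta}{K^2}}} \;=\; 2\exp\!\Paren{ -c' N \delta^2 / K^4 }\mcom
\end{displaymath}
where the last equality uses that $\delta \le 1$ so the quadratic term controls the minimum.

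Now I would union-bound this tail over all $x \in \cN$, picking up a factor of $9^m$. This gives
\begin{displaymath}
\Pr\Set{ \|M\| \ge \delta } \;\le\; 2 \cdot 9^m \cdot \exp(-c' N \delta^2/K^4)\mper
\end{displaymath}
Choosing $N \ge C(K) (t/\delta)^2 m$ with $C(K)$ sufficiently large (depending only on $K$) makes the exponent dominate by at least $t^2 m$, which yields the claimed failure probability $2\exp(-t^2 m)$.

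The main ``obstacle'' is really just bookkeeping: verifying that one-dimensional marginals of a sub-gaussian vector inherit the sub-gaussian norm bound (immediate from the definition $\|a\|_{\psi_2} = \sup_{\|u\|=1}\|\iprod{a,u}\|_{\psi_2}$), and checking that the constant $C(K)$ absorbs $K^4$ and the $\log 9$ coming from the net cardinality. Since a complete proof is given in Vershynin \cite{vershynin2011random} as Corollary 5.50, the most efficient presentation is simply to cite it, but the above sketch records the core ingredients for a self-contained reader.
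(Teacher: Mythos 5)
Your proposal is correct and is exactly the standard net-plus-sub-exponential-Bernstein argument behind the result the paper simply cites without proof (Vershynin, Corollary 5.50), so it matches the intended approach. The only micro-gap---replacing $\min(\delta^2/K^4,\,\delta/K^2)$ by $\delta^2/K^4$ really requires $\delta \leq K^2$ rather than $\delta \leq 1$---is harmless, since the discrepancy is absorbed into the constant $C(K)$, as you yourself note.
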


We will also need the matrix Bernstein inequality.
This statement is borrowed from Theorem 1.6.2 of Tropp \cite{DBLP:journals/focm/Tropp12}.

\begin{theorem}[Matrix Bernstein]
  Let $S_1,\ldots,S_m$ be independent square random matrices with dimension $n$.
  Assume that each matrix has bounded deviation from its mean:
  $\| S_i - \E S_i \| \leq R$ for all $i$.
  Form the sum $Z = \sum_i S_i$ and introduce a variance parameter
  \begin{displaymath}
    \sigma^2 = \max \{ \| \E (Z - \E Z)(Z - \E Z)^T \|, \|\E(Z - \E Z)^T (Z - \E Z) \| \} \mper
  \end{displaymath}
  Then
  \begin{displaymath}
    \Pr \{ \| Z - \E Z \| \geq t \} \leq 2n \exp \Paren{\frac{t^2/2}{\sigma^2 + Rt/3}} \qquad \mbox{for all } t \geq 0\mper
  \end{displaymath}
\end{theorem}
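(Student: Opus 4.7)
The plan is to prove this via the standard matrix Laplace-transform / Chernoff approach, following Tropp's presentation (this is exactly Theorem 1.6.2 of the cited reference, so the argument is well-trodden; I will outline the structure rather than redo the calculations). The first reduction is from rectangular (or arbitrary square) matrices to Hermitian ones via the Hermitian dilation $\tilde S_i = \begin{pmatrix} 0 & S_i - \E S_i \\ (S_i - \E S_i)^T & 0 \end{pmatrix}$, which is self-adjoint, has operator norm equal to $\|S_i - \E S_i\|$, and whose square $\tilde S_i^2$ is block-diagonal with blocks $(S_i - \E S_i)(S_i - \E S_i)^T$ and $(S_i - \E S_i)^T(S_i - \E S_i)$. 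Thus bounds on $\lambda_{\max}$ of $\sum_i \tilde S_i$ translate to bounds on $\|Z - \E Z\|$, and the variance proxy $\sigma^2$ is exactly $\|\sum_i \E \tilde S_i^2\|$.

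Having reduced to a sum $Y = \sum_i X_i$ of independent, centered, bounded Hermitian matrices, I would apply the matrix Chernoff bound: for any $\theta > 0$,
\begin{equation*}
\Pr[\lambda_{\max}(Y) \geq t] \;\leq\; e^{-\theta t}\,\E \tr \exp(\theta Y).
\end{equation*}
The main technical ingredient is Lieb's concavity theorem (equivalently, the subadditivity of the matrix cumulant generating function), which gives
\begin{equation*}
\E \tr \exp(\theta Y) \;\leq\; \tr \exp\!\Bigl(\,\sum_i \log \E e^{\theta X_i}\Bigr).
\end{equation*}
I view this as the black-box input and do not reprove it.

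Next is the Bernstein MGF estimate. Using $\|X_i\| \leq R$ and the scalar inequality $e^x \leq 1 + x + \tfrac{x^2/2}{1 - |x|/3}$ valid for $|x| \leq 3$, one obtains the operator inequality
\begin{equation*}
\E e^{\theta X_i} \;\preceq\; I + \frac{\theta^2/2}{1 - \theta R/3}\,\E X_i^2 \;\preceq\; \exp\!\left(\frac{\theta^2/2}{1 - \theta R/3}\,\E X_i^2\right)
\end{equation*}
for $0 < \theta < 3/R$, using that $\log(I+A) \preceq A$ for $A \succeq 0$ and that the centered $\E X_i = 0$. Plugging this into Lieb's bound and using monotonicity of $\tr \exp$ under the Loewner order gives
\begin{equation*}
\E \tr \exp(\theta Y) \;\leq\; n \cdot \exp\!\left(\frac{\theta^2/2}{1 - \theta R/3}\,\bigl\|\sum_i \E X_i^2\bigr\|\right) \;=\; n \cdot \exp\!\left(\frac{\theta^2 \sigma^2 / 2}{1 - \theta R/3}\right),
\end{equation*}
where the factor of $n$ comes from bounding the trace of the exponential by $n$ times its largest eigenvalue.

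Combining with the Chernoff bound and optimizing over $\theta$ — concretely, choosing $\theta = t/(\sigma^2 + Rt/3)$, which is the minimizer of $-\theta t + \theta^2 \sigma^2/(2(1 - \theta R/3))$ up to absorbing constants — yields the stated tail bound with the factor $2n$ (the factor of $2$ accounts for one-sided control of $\lambda_{\max}$ applied to both $Y$ and $-Y$, needed to translate from $\lambda_{\max}$ to operator norm; after dilation this is automatic since $\tilde S_i$ is symmetric about the origin, but the factor of $2$ in front of $n$ in the theorem statement reflects exactly this two-sided union bound). The main obstacle, if one wanted a fully self-contained proof rather than a black-box invocation, is Lieb's concavity theorem; everything else is essentially calculus plus operator-monotonicity manipulations.
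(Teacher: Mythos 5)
Your outline is correct, and it is exactly the argument behind the result as the paper uses it: the paper does not prove this theorem but imports it verbatim from Tropp (Theorem 1.6.2 of the cited reference), whose proof is the Hermitian dilation plus matrix Laplace transform, Lieb's concavity theorem, and the Bernstein-type MGF bound you describe, with the stated choice of $\theta$. The only cosmetic slip is in your accounting of the factor $2n$: after dilation it arises as the dimension $2n$ of the dilated matrices rather than as a separate two-sided union bound (the two accountings are equivalent), and note the paper's displayed bound is missing the minus sign in the exponent, which your derivation correctly supplies.
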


We will need bounds on the operator norm of random square rectangular matrices, both of which are special cases of Theorem 5.39 in \cite{vershynin2011random}.
\begin{lemma}
  \label{lem:square-concentration}
   Let $A$ be an $n \times n$ matrix with independent entries from $\cN(0,1)$.
   Then with probability $1 - n^{-\omega(1)}$, the operator norm $\|A\|$ satisfies $\|A\|\leq O(\sqrt n)$.
\end{lemma}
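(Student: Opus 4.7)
The plan is to invoke Theorem~5.39 of Vershynin \cite{vershynin2011random} essentially as a black box, since the excerpt has already flagged that this lemma is meant to be a direct specialization of that result. That theorem says that for an $N \times n$ matrix $A$ whose rows are independent sub-gaussian isotropic vectors in $\R^n$, one has
\[
  \|A\| \;\leq\; \sqrt{N} + C\sqrt{n} + t
\]
with probability at least $1 - 2\exp(-c t^2)$, where $C,c$ depend only on the sub-gaussian norm of the rows. I would first observe that an $n \times n$ matrix $A$ with iid $\cN(0,1)$ entries has independent rows that are standard Gaussian vectors in $\R^n$, hence isotropic with $O(1)$ sub-gaussian norm. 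So the hypotheses of Theorem~5.39 apply directly with $N = n$.

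Next I would choose the deviation parameter $t$. Setting $t = \sqrt{C' \log n}$ for any $C' \to \infty$ (for instance $t = \log n$ or $t = \sqrt n$) makes the two terms $\sqrt N + C\sqrt n = (1+C)\sqrt n$ dominate, so that the bound reads $\|A\| \leq O(\sqrt n)$, while the failure probability is at most $2\exp(-c t^2) = n^{-\omega(1)}$. This is exactly the conclusion claimed.

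The only step that requires any thought is verifying that the sub-gaussian norm of a row is $O(1)$, which is immediate for standard Gaussians (any Gaussian variable of unit variance has $\|\cdot\|_{\psi_2} = O(1)$, and isotropy of a random vector with iid $\cN(0,1)$ entries is trivial). There is no genuine obstacle here; the lemma is essentially a quotation of Vershynin's result with the parameters picked to give a high-probability tail. Should one prefer to avoid invoking Theorem~5.39, an alternative route is to apply the matrix Bernstein inequality stated above to $A A^T = \sum_i a_i a_i^T$ where $a_i$ are the rows of $A$, together with a standard truncation to obtain the requisite bounded increments; this reproduces the same $O(\sqrt n)$ bound with the stated probability but is strictly more work than the one-line citation.
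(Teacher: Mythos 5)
Your proposal matches the paper's own approach: the paper cites this lemma (together with its rectangular counterpart) as a direct special case of Theorem 5.39 of Vershynin, with no further argument, and you have simply unpacked that citation by verifying the hypotheses and choosing the tail parameter. The parameter choice $t = \sqrt{n}$ (or $t = \log n$) correctly yields the $O(\sqrt n)$ bound with failure probability $n^{-\omega(1)}$.
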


\begin{lemma}
\label{lem:rectangular-concentration}
  Let $A$ be an $n^2 \times n$ matrix with independent entries from $\cN(0,1)$.
  Then with probability $1 - n^{-\omega(1)}$, the operator norm $\|A\|$ satisfies $\|A\|\leq O(n)$.
\end{lemma}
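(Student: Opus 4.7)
The plan is the standard $\epsilon$-net argument for the operator norm of a tall Gaussian matrix, specialized to the aspect ratio $n^2 \times n$. Since $\|A\| = \sup_{\|v\|=1} \|Av\|$, it suffices to control $\|Av\|$ uniformly over the unit sphere $S^{n-1} \subset \R^n$, and the key observation is that the sphere we must cover has dimension only $n$ while $Av$ is a vector in $\R^{n^2}$ whose squared norm concentrates around $n^2$ at a chi-square rate driven by the large ambient dimension $n^2$.

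First, fix a $\tfrac{1}{2}$-net $\cN \subset S^{n-1}$ of cardinality $|\cN| \leq 6^n$, which exists by a standard volume-comparison argument. A routine net approximation yields $\|A\| \leq 2 \max_{v \in \cN} \|Av\|$, so it is enough to prove that $\max_{v \in \cN} \|Av\| \leq O(n)$ with the stated probability.

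Second, fix any unit vector $v \in S^{n-1}$. Because $A$ has iid $\cN(0,1)$ entries, the rows of $A$ are iid standard Gaussian vectors in $\R^n$, hence each coordinate $(Av)_i = \iprod{A_i^\top, v}$ is distributed as $\cN(0,1)$ independently across $i = 1,\ldots,n^2$. Therefore $\|Av\|^2$ is a chi-square random variable with $n^2$ degrees of freedom. By the standard Laurent--Massart deviation bound (or a direct Bernstein/Chernoff calculation),
\begin{displaymath}
  \Pr\Bigl[\, \|Av\|^2 \geq n^2 + 2 n\sqrt{s} + 2 s \,\Bigr] \;\leq\; e^{-s} \qquad \text{for all } s \geq 0.
\end{displaymath}
Choosing $s = C n$ for a sufficiently large constant $C$ gives $\|Av\|^2 \leq n^2 + O(n^{3/2})$, hence $\|Av\| \leq n + O(\sqrt{n})$, with failure probability at most $e^{-Cn}$ for that fixed $v$.

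Third, apply a union bound over the net: the probability that some $v \in \cN$ violates the above bound is at most $|\cN| \cdot e^{-Cn} \leq 6^n \cdot e^{-Cn} \leq e^{-(C - \log 6) n}$, which is $n^{-\omega(1)}$ for $C$ large enough. Combining with the net approximation gives $\|A\| \leq 2(n + O(\sqrt n)) = O(n)$ with probability $1 - n^{-\omega(1)}$, as claimed. The main (and only) point requiring any care is matching the net cardinality $e^{O(n)}$ against the chi-square deviation exponent $e^{-\Omega(n^2 t^2)}$; taking $t = \Theta(n^{-1/2})$ is exactly the choice that makes both sides compatible and yields the $O(n)$ bound rather than the looser $O(n^{3/2})$ one would get from a trivial operator-norm estimate.
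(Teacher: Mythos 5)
Your proof is correct. The paper does not actually argue this lemma from scratch: it invokes it (together with the square case) as a special case of Theorem 5.39 of Vershynin's notes on random matrices with independent sub-gaussian rows, which for an $N\times n$ matrix gives all singular values in $\sqrt N \pm O(\sqrt n + t)$ with failure probability $2e^{-ct^2}$; with $N=n^2$ this is exactly $\|A\|\le n+O(\sqrt n)$ up to the choice of $t$. What you have done is re-derive that result in the Gaussian special case: a $\tfrac12$-net of $S^{n-1}$ of size $\le 6^n$, the bound $\|A\|\le 2\max_{v\in\cN}\|Av\|$, the observation that $\|Av\|^2\sim\chi^2_{n^2}$ for fixed unit $v$, the Laurent--Massart tail with $s=Cn$, and a union bound. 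All steps are valid, the exponent bookkeeping ($e^{O(n)}$ net size against $e^{-\Omega(n)}$ per-vector failure) is right, and the resulting guarantee $1-e^{-\Omega(n)}$ is indeed $1-n^{-\omega(1)}$. The trade-off is the usual one: citing the sub-gaussian-rows theorem is shorter and covers the Rademacher/sub-gaussian variants the paper also uses elsewhere, while your argument is self-contained and, by using the exact chi-square tail, even gives the sharp first-order constant $\|A\|\le (1+o(1))\,2n$ (and with a finer net, $\le n+O(\sqrt n)$) rather than just $O(n)$. Your closing remark about balancing $e^{O(n)}$ against $e^{-\Omega(n^2t^2)}$ with $t=\Theta(n^{-1/2})$ is consistent with the computation you did, though it is only a restatement of the choice $s=Cn$ and not an extra step that needs care.
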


\subsection{Concentration for $\sum_i A_i \tensor A_i$ and Related Ensembles}
Our first concentration theorem provides control over the nontrivial permutations of the matrix $AA^T$ under the action of $\cS_4$ for a tensor $\bA$ with independent entries.

\begin{theorem}
\label{thm:concentration}
  Let $c \in \{ 1,2 \}$ and $d \geq 1$ an integer.
  Let $A_1,\ldots,A_{n^c}$ be iid random matrices in $\{\pm 1\}^{n^d \times n^d}$ or with independent entries from $\cN(0,1)$.
  Then, with probability $1 - O(n^{-100})$,
  \begin{displaymath}
    \Norm{\sum_{i \in [n^c]} A_i \otimes A_i - \E A_i \tensor A_i} \precsim \sqrt{d} n^{(2d+c)/2}\cdot (\log n)^{1/2}\mper
  \end{displaymath}
  and
  \begin{displaymath}
	\Norm{\sum_{i \in [n^c]} A_i \otimes A_i^T - \E A_i \tensor A_i^T} \precsim \sqrt{d} n^{(2d+c)/2}\cdot (\log n)^{1/2}\mper
  \end{displaymath}
\end{theorem}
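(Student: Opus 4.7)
The plan is to prove both bounds by a direct application of the matrix Bernstein inequality (stated earlier in the appendix), applied to the centered sum $Z = \sum_{i \in [n^c]} X_i$ with $X_i = A_i \otimes A_i - \E[A_i \otimes A_i]$ (and analogously with $A_i^T$ in the second slot). Since each $X_i$ is an $n^{2d} \times n^{2d}$ matrix, matrix Bernstein will give a failure probability of the form $2 n^{2d} \exp(-t^2/(2\sigma^2 + Rt))$, and to get probability $1 - O(n^{-100})$ it suffices to take $t = \Theta(\sqrt{\sigma^2 \cdot d \log n} + R \cdot d \log n)$; the remaining task is therefore to pin down the variance parameter $\sigma^2$ and the per-term bound $R$.

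For the variance, I would use the independence of the $A_i$ to write $\E[ZZ^T] = \sum_i \E[X_i X_i^T]$, and compute $\E[X_iX_i^T]$ by Wick's formula (Isserlis) for Gaussian fourth moments, or the analogous pairing computation for $\pm 1$ entries. For the case $A_i \otimes A_i$, the entry of $\E[(A_i A_i^T) \otimes (A_i A_i^T)]$ at position $((j_1,j_2),(k_1,k_2))$ expands into three pairings, giving $n^{2d} \delta_{j_1 k_1}\delta_{j_2 k_2} + n^d \delta_{j_1 j_2}\delta_{k_1 k_2} + n^d \delta_{j_1 k_2}\delta_{k_1 j_2}$, corresponding to $n^{2d}\,\Id$, $n^d v v^T$ (where $v = \sum_j e_j \otimes e_j$ is the vectorization of the identity), and $n^d \Pi_{\mathrm{swap}}$ respectively. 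The mean term $(\E[A_i \otimes A_i])(\E[A_i \otimes A_i])^T = n^d v v^T$ exactly cancels the middle piece, leaving $\E[X_i X_i^T] = n^{2d}\,\Id + n^d \Pi_{\mathrm{swap}}$, which has operator norm $O(n^{2d})$. Summing over $n^c$ terms gives $\sigma^2 = O(n^{2d+c})$. The case $A_i \otimes A_i^T$ is essentially identical (a different pairing ends up being cancelled by the mean-square term).

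For the per-term bound, note $\|X_i\| \leq \|A_i\|^2 + \|\E[A_i \otimes A_i]\| \leq \|A_i\|^2 + n^d$. By standard bounds on the spectrum of iid random square matrices (e.g.\ \prettyref{lem:square-concentration} for Gaussians, and the subgaussian version for $\pm 1$), with probability $1 - n^{-\omega(1)}$ every $A_i$ has $\|A_i\| \leq O(\sqrt{n^d \log n})$, so we can take $R = O(n^d \log n)$. Strictly speaking, to apply matrix Bernstein I need an almost-sure bound; I will handle this by truncating each $A_i$ at the threshold $M = C\sqrt{n^d \log n}$, using that the probability of truncation over $n^c$ trials is at most $n^{-100}$ by Gaussian (or Hoeffding) concentration, and verifying that the truncated random matrix has mean that differs from the original by at most $n^{-100}$ in operator norm so the variance computation is essentially unchanged.

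Plugging into matrix Bernstein, the Gaussian regime $t \sim \sqrt{\sigma^2 \, d \log n} = \sqrt{d} \cdot n^{(2d+c)/2} \sqrt{\log n}$ dominates the Poisson regime $R \cdot d \log n = n^d \cdot d \log^2 n$ whenever $n^c \succsim d \log n$, which holds for all $c \geq 1$ and $n$ sufficiently large; this yields the claimed bound. The main obstacle I anticipate is keeping the Wick-expansion bookkeeping clean enough to identify and cancel the leading mean-square term, since a naive bound on $\|\E[(A_i A_i^T) \otimes (A_i A_i^T)]\|$ would give $\Theta(n^{2d})$ already from the piece $n^d v v^T$ and, when multiplied by $n^c$ terms, fail to beat the target by a factor of $n^{c/2}$; it is precisely the cancellation with $(\E[A_i \otimes A_i])(\E[A_i \otimes A_i])^T$ that makes the bound tight, so this step must be carried out carefully.
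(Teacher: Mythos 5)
Your proposal is correct and follows the paper's proof closely: truncate each $A_i$ at operator norm $O(n^{d/2})$ and apply the matrix Bernstein inequality to the centered truncated sum, handling the mean shift from truncation via the exponential tail of $\|A_i\|$. One small correction to your last paragraph: the per-term cancellation you flag as essential --- subtracting $n^d vv^T$ from $\E[(A_iA_i^T)\otimes(A_iA_i^T)]$, whose operator norm is already $\Theta(n^{2d})$ (in fact $\approx 2n^{2d}$, with the top eigenvector being $v$ itself) --- only saves a constant factor rather than a power of $n$, and the paper accordingly makes do with the crude estimate $\|\E B_i'(B_i')^T\| \le R^2 = O(n^{2d})$ with no Wick computation at all.
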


We can prove \pref{thm:concentration-3} as a corollary of the above.
\begin{proof}[Proof of \pref{thm:concentration-3}]
  Let $A$ have iid Gaussian entries.
  We claim that $\E A \tensor A$ is a matrix representation of $\|x\|^4$.
  To see this, we compute
  \begin{align*}
    \iprod{x^{\tensor 2}, \E (A \tensor A) x^{\tensor 2}} & = \E \iprod{x, Ax}^2\\
     & = \sum_{i,j,k,l} \E A_{ij}A_{kl} x_i x_j x_k x_l\\
     & = \sum_{ij}  x_i^2 x_j^2 \\
     & = \|x\|^4\mper
  \end{align*}
  Now by \pref{thm:concentration}, we know that for $A_i$ the slices of the tensor $\bA$ from the statement of \pref{thm:concentration-3},
  \[
    \sum_i A_i \tensor A_i \preceq n \E A \tensor A + \lambda^2 \cdot \Id
  \]
  for $\lambda = O(n^{3/4} \log(n)^{1/4})$.
  Since $n = O(\lambda)$ and both $\Id$ and $\E A \tensor A$ are matrix representations of $\|x\|^4$, we are done.
\end{proof}

Now we prove \pref{thm:concentration}.
We will prove only the statement about $\sum_i  A_i \tensor A_i$, as the case of $\sum_i A_i \tensor A_i^T$ is similar.

Let $A_1,\ldots,A_{n^c}$ be as in \pref{thm:concentration}.
We first need to get a handle on their norms individually, for which we need the following lemma.
\begin{lemma}
  \label{lem:random-matrix-concentration}
  Let $A$ be a random matrix in $\{\pm 1\}^{n^d \times n^d}$ or with independent entries from $\cN(0,1)$.
  For all $t\ge 1$, the probability of the event $\{\lVert A \rVert> tn^{d/2}\}$ is at most $2^{-t^2{n^d}/K}$ for some absolute constant $K$.
\end{lemma}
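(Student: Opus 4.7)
The plan is to prove this via the standard $\epsilon$-net plus subgaussian tail argument for the operator norm of a random matrix, which is exactly the setting covered by (the proof of) Theorem 5.39 in Vershynin's notes cited earlier in the paper. Since the result is standard, the job is mainly to bookkeep constants so that the stated bound $2^{-t^2 n^d/K}$ comes out.

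First I would recall that for any $N \times N$ matrix $M$ and any $(1/4)$-net $\scrN$ of the unit sphere $S^{N-1}$ (with respect to $\ell_2$), one has $\|M\| \le 2 \sup_{u,v \in \scrN} \iprod{u, Mv}$, and that by a volume argument $\scrN$ may be chosen with $|\scrN| \le 9^{N}$. Apply this with $N = n^d$ and $M = A$.

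Next I would fix $u,v \in \scrN$ and study the scalar random variable $Z_{u,v} \seteq \iprod{u, Av} = \sum_{i,j} A_{ij} u_i v_j$. Since the entries $A_{ij}$ are independent and each has subgaussian norm $\|A_{ij}\|_{\psi_2} \le K_0$ for some absolute constant $K_0$ (true both for $\cN(0,1)$ and for $\pm 1$ entries), $Z_{u,v}$ is a sum of independent mean-zero subgaussian random variables with subgaussian norm bounded by $C K_0 \bigl(\sum_{i,j} u_i^2 v_j^2\bigr)^{1/2} = C K_0$. The general subgaussian Hoeffding inequality then gives
\[
  \Pr\bigl[\,|Z_{u,v}| > s\,\bigr] \;\le\; 2\exp(-s^2/K_1)
\]
for an absolute constant $K_1$ depending only on $K_0$.

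Finally I would union-bound over all $|\scrN|^2 \le 81^{n^d}$ pairs $(u,v) \in \scrN \times \scrN$ and take $s = t n^{d/2}/2$. This yields
\[
  \Pr\bigl[\,\|A\| > t n^{d/2}\,\bigr]
  \;\le\; 2 \cdot 81^{n^d} \cdot \exp\bigl(-t^2 n^d/(4K_1)\bigr)
  \;\le\; 2^{-t^2 n^d/K}
\]
for a suitable absolute $K$, provided $t \ge 1$ so that the quadratic-in-$t$ exponent dominates the linear-in-$n^d$ entropy term $n^d \log_2 81$. There is no real obstacle here; the only care required is to pick $K$ large enough to absorb both $K_1$ and the net entropy $\log 81$ while still producing a bound of the form $2^{-t^2 n^d/K}$ uniformly in $t \ge 1$.
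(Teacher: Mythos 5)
Your proof is essentially the proof of Theorem 5.39 in Vershynin's notes, which is exactly what the paper cites in its one-line proof of this lemma; so in spirit you and the paper take the same route ($\epsilon$-net $+$ subgaussian tail $+$ union bound), just at different levels of detail.

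That said, the last step of your plan --- ``provided $t \ge 1$ so that the quadratic-in-$t$ exponent dominates the linear-in-$n^d$ entropy term'' --- is not correct as stated, and it masks a genuine issue with the lemma itself. At $t = 1$ the union bound gives $2 \cdot 81^{n^d} \exp\bigl(-n^d/(4K_1)\bigr)$, and for the Gaussian tail to overcome the net entropy you would need $K_1 < 1/(4\ln 81) \approx 0.06$, which is far smaller than the constant actually produced by subgaussian Hoeffding. So the quadratic term does \emph{not} dominate the entropy for $t$ near $1$; you need $t$ to exceed an absolute constant on the order of $2\sqrt{K_1 \ln 81}$. This is not a fixable bookkeeping matter: for $A$ an $n^d \times n^d$ matrix with iid $\cN(0,1)$ entries, the Bai--Yin law gives $\|A\|/n^{d/2} \to 2$ in probability, so $\Pr[\|A\| > t n^{d/2}] \to 1$ for any fixed $t < 2$, and no bound of the form $2^{-t^2 n^d/K}$ can hold on $t \in [1,2)$. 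The lemma should read ``for all $t \ge C_0$'' for a suitable absolute constant $C_0$, and your argument is correct once amended that way; the paper's one-line citation of Vershynin's Theorem 5.39 has the same issue, since that theorem likewise controls the operator norm only after $t$ clears the deterministic $(1+C)n^{d/2}$ term.
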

\begin{proof}
  The subgaussian norm of the rows of $A$ is constant and they are identically and isotropically distributed.
  Hence Theorem 5.39 of \cite{vershynin2011random} applies to give the result.
\end{proof}

Since the norms of the matrices $A_1,\ldots,A_{n^c}$ are concentrated around $n^{d/2}$ (by \pref{lem:random-matrix-concentration}),
it will be enough to prove \pref{thm:concentration} after truncating the matrices $A_1,\ldots,A_{n^c}$.
For $t\ge 1$, define iid random matrices $A'_1,\ldots,A'_{n^c}$ such that
\begin{displaymath}
  A_i' \defeq
  \begin{cases}
    A_i & \text{ if $\lVert  A_i\rVert\le t n^{d/2}$},\\
    0 & \text{ otherwise}
  \end{cases}
\end{displaymath}

for some $t$ to be chosen later.
\pref{lem:random-matrix-concentration} allows us to show that the random matrices $A_i\otimes A_i$ and $A'_i\otimes A'_i$ have almost the same expectation.
For the remainder of this section, let $K$ be the absolute constant from \pref{lem:random-matrix-concentration}.

\begin{lemma}
\label{lem:close-expectation}
  For every $i\in [n^c]$ and all $t\ge 1$, the expectations of $A_i \otimes A_i$ and $A'_i\otimes A'_i$ satisfy
  \begin{displaymath}
    \left\lVert \E [A_i\otimes A_i] - \E [A'_i\otimes A'_i] \right\rVert
    \le O(1)\cdot 2^{-t{n^d}/K}\mper
  \end{displaymath}
\end{lemma}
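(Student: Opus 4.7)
The plan is to write the matrix difference as an expectation over the ``bad'' event where truncation kicks in, and then estimate its operator norm by a scalar tail integral. Since $A_i' = A_i$ whenever $\|A_i\| \leq tn^{d/2}$ and $A_i' = 0$ otherwise, we have the identity
\[
  \E[A_i \otimes A_i] - \E[A_i' \otimes A_i'] \;=\; \E\bigl[(A_i \otimes A_i)\cdot \Ind\{\|A_i\| > tn^{d/2}\}\bigr].
\]
By convexity of the operator norm (Jensen's inequality for matrix-valued random variables), the norm of the right-hand side is at most $\E[\|A_i \otimes A_i\|\cdot \Ind\{\|A_i\| > tn^{d/2}\}]$. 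The standard tensor-product identity $\|A \otimes A\| = \|A\|^2$ then reduces the problem to bounding the scalar quantity $\E[\|A_i\|^2 \cdot \Ind\{\|A_i\| > tn^{d/2}\}]$.

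Next, I would apply the layer-cake formula
\[
  \E\bigl[\|A_i\|^2 \Ind\{\|A_i\| > tn^{d/2}\}\bigr] \;=\; t^2 n^d \cdot \Pr[\|A_i\| > tn^{d/2}] + \int_{t^2 n^d}^\infty \Pr[\|A_i\|^2 > y] \, dy,
\]
substitute $y = s^2 n^d$ in the integral, and invoke the subgaussian norm bound from \pref{lem:random-matrix-concentration}, namely $\Pr[\|A_i\| > sn^{d/2}] \leq 2^{-s^2 n^d/K}$ for all $s \geq 1$. Because the integrand decays doubly exponentially in $s$, a routine computation yields both terms bounded by quantities of the form $\poly(t,n)\cdot 2^{-t^2 n^d / K}$ (the integral contributes $O(K/(n^d \ln 2)) \cdot 2^{-t^2 n^d/K}$ after a change of variables, while the boundary term contributes $t^2 n^d \cdot 2^{-t^2 n^d/K}$).

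Finally, to match the claimed bound $O(1)\cdot 2^{-tn^d/K}$, I would absorb the polynomial prefactor into the exponential: for $t \geq 1$ and $n \geq 2$ one has $t^2 \geq t$, so $\poly(t,n)\cdot 2^{-t^2 n^d/K} \leq O(1)\cdot 2^{-tn^d/K'}$ for some $K' > K$, since exponentials beat polynomials; one can then simply rename $K'$ as $K$, since $K$ in the statement is just ``some absolute constant'' inherited from \pref{lem:random-matrix-concentration}. The only subtlety, and the main ``obstacle'' in an otherwise routine argument, is verifying this absorption at the boundary $t = 1$ and small $n$, which is handled by choosing the new constant appropriately. No deeper ingredient is needed beyond the tail bound of \pref{lem:random-matrix-concentration} and elementary calculus.
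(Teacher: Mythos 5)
Your argument is correct and follows essentially the same route as the paper: write the difference as $\E[(A_i\otimes A_i)\Ind\{\|A_i\| > tn^{d/2}\}]$, pass the norm inside the expectation by Jensen, use $\|A\otimes A\| = \|A\|^2$ to reduce to a scalar tail integral, and apply the subgaussian tail bound from \pref{lem:random-matrix-concentration}. If anything, you are a bit more careful than the paper's own write-up: you keep the boundary term $t^2 n^d \cdot \Pr[\|A_i\|>tn^{d/2}]$ in the layer-cake formula (the paper silently drops it and also writes the lower integration limit as $tn^{d/2}$ rather than $t^2n^d$), and you explicitly justify absorbing the resulting polynomial prefactor into the exponential by enlarging the constant $K$, which is the correct way to reconcile the computed quantity with the stated bound. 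Your remark that the renaming of $K$ is harmless is accurate: all downstream uses (in the proof of \pref{thm:concentration}, with $t=1$) only need a bound of the form $O(1)\cdot 2^{-\Omega(n^d)}$.
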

\begin{proof}
  Using Jensen's inequality and that $A_i = A_i'$ unless $\|A_i\| > tn^{d/2}$, we have
  \begin{align*}
    \|\E A_i \tensor A_i - A_i' \tensor A_i' \| & \leq \E \|A_i \tensor A_i - A_i' \tensor A_i'\| \quad \text{Jensen's inequality}\\
    &  = \int^\infty_{tn^{d/2}} \Pr(\|A_i\| \geq \sqrt s) \, ds \quad \text{since $A_i = A_i'$ unless $\|A_i\| \geq tn^{d/2}$}\\
&  \leq \int_{tn^{d/2}}^\infty 2^{-s/K} \, ds \quad \text{by \pref{lem:random-matrix-concentration}}\\
& \leq \sum_{i=0}^\infty 2^{-tn^{d/2}/K} \cdot 2^{-i/K} \quad \text{discretizing the integral}\\
& = O(2^{-tn^{d/2}/K}) \quad \text{as desired.}\qedhere
  \end{align*}
\end{proof}

\begin{lemma}
\label{lem:apply-bernstein}
  Let $B'_1,\ldots,B'_{n^c}$ be i.i.d. matrices such that $B'_i=A_i'\otimes A'_i-\E [A'_i\otimes A'_i]$.
  Then for every $C\ge 1$ with $C \le 3 t^2 n^{c/2}$,
  \begin{displaymath}
    \Pr \left \{ \Norm{\sum_{i \in [n^c]} B_i'} > C\cdot n^{(2d + c)/2}  \right\}
    \le  2 n^{2d} \cdot \exp\Paren{\frac{-C^2}{6 t^4}}\mper
  \end{displaymath}
\end{lemma}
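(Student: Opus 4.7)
The plan is to apply the matrix Bernstein inequality (stated in \pref{sec:concentration-bounds}) to $Z = \sum_{i \in [n^c]} B_i'$. Since the $B_i'$ are independent, mean-zero, and of dimension $n^{2d} \times n^{2d}$, this reduces to computing (i) an almost-sure uniform bound $R$ on $\|B_i'\|$ and (ii) a variance parameter $\sigma^2 \geq \max\{\|\E Z Z^T\|, \|\E Z^T Z\|\}$. Once these are in hand, the statement follows by specializing the Bernstein tail to $s = C \cdot n^{(2d+c)/2}$.

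For the uniform bound, the truncation in the definition of $A_i'$ enforces $\|A_i'\| \leq t n^{d/2}$, and multiplicativity of operator norm under tensor products then gives $\|A_i' \otimes A_i'\| \leq t^2 n^d$; combining with Jensen's inequality for the centering term yields $R = 2 t^2 n^d$. For the variance, I will use the identity $(A_i' \otimes A_i')(A_i' \otimes A_i')^T = A_i'(A_i')^T \otimes A_i'(A_i')^T$ together with the fact that $\E B_i'(B_i')^T$ is always PSD-dominated by the raw second moment $\E (A_i'\otimes A_i')(A_i' \otimes A_i')^T$. This dominance yields $\|\E B_i'(B_i')^T\| \leq \E \|A_i'\|^4 \leq t^4 n^{2d}$, and the independence and mean-zero hypotheses then let me add up the second moments across $i$ to obtain $\sigma^2 \leq n^c \cdot t^4 n^{2d} = t^4 n^{2d+c}$; the bound for $\|\E Z^T Z\|$ is identical by symmetry.

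Plugging $s = C n^{(2d+c)/2}$ into the Bernstein exponent $-\tfrac{s^2/2}{\sigma^2 + Rs/3}$, the hypothesis $C \leq 3 t^2 n^{c/2}$ is precisely what is needed to guarantee $Rs/3 \leq 2 \sigma^2$, so that $\sigma^2 + Rs/3 \leq 3 t^4 n^{2d+c}$, and the exponent becomes at most $-C^2/(6 t^4)$, matching the claim. There is no real obstacle beyond correctly calibrating this comparison between the sub-Gaussian regime (governed by $\sigma^2$) and the heavy-tailed regime (governed by $Rs$) of Bernstein; the assumed upper bound on $C$ is designed exactly so that we remain in the former regime, where the Gaussian-like tail takes over.
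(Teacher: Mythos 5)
Your proof is correct and follows the same strategy as the paper: apply the rectangular matrix Bernstein inequality to $Z=\sum_i B_i'$, taking $R=2t^2 n^d$ from the truncation and bounding the variance parameter, then plug in $s = C n^{(2d+c)/2}$ and use the hypothesis $C\le 3t^2 n^{c/2}$ to collapse the denominator. The one minor difference is in the variance bound: the paper uses the cruder estimate $\sigma^2\le n^c R^2 = 4t^4 n^{2d+c}$, whereas you exploit the PSD domination $\E B_i'(B_i')^T\preceq \E(A_i'\otimes A_i')(A_i'\otimes A_i')^T$ to get $\sigma^2\le t^4 n^{2d+c}$; your sharper constant is in fact exactly what recovers the stated $\exp(-C^2/6t^4)$, while the paper's derivation as written only yields $\exp(-C^2/12t^4)$ — a harmless slip, since the constant is absorbed into $K$ downstream.
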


\begin{proof}
  For $R=2 t^2 n^d$, the random matrices $B'_1,\ldots,B'_{n^c}$ satisfy $\{\lVert B'_i\rVert \le R \}$ with probability $1$.
  Therefore, by the Bernstein bound for non-symmetric matrices \cite[Theorem 1.6]{DBLP:journals/focm/Tropp12},
  \begin{displaymath}
    \Pr \left\{ \left\lVert  \sum\nolimits_{i=1}^{n^c} B'_i\right \rVert\ge s\right\}
    \le 2n^{2d} \cdot\exp\Paren{\frac {-s^2/2}{\sigma^2+Rs/3}}\mcom
  \end{displaymath}
  where $\sigma^2=\max\{\lVert  \sum_i \E B'_i(B'_i)^\top \rVert, \lVert  \sum_i \E (B'_i)^\top B'_i \rVert\}\le n^c \cdot R^2$.
  For $s=C\cdot n^{(2d + c)/2}$, the probability is bounded by
  \begin{displaymath}
    \Pr \left\{ \left\lVert  \sum\nolimits_{i=1}^n B'_i\right \rVert\ge s\right\}
    \le 2n^{2d} \cdot\exp\Paren{\frac {-C^2 \cdot n^{(2d+c)}/2}{4t^4\cdot n^{2d+c} + 2t^2C\cdot n^{(4d+c)/2}/3}}\mper
  \end{displaymath}
  Since our parameters satisfy $t^2 C \cdot n^{(4d+c)/2}/3 \le t^4 n^{(2d+c)}$, this probability is bounded by
  \begin{displaymath}
    \Pr \left\{ \left\lVert  \sum\nolimits_{i=1}^n B'_i\right \rVert\ge s\right\}
    \le 2 n^{2d} \cdot \exp\Paren{\frac{-C^2}{6 t^4}}\mper
    \qedhere
  \end{displaymath}
\end{proof}

At this point, we have all components of the proof of \pref{thm:concentration}.

\begin{proof}[Proof of \pref{thm:concentration} for $\sum_i A_i \tensor A_i$ (other case is similar)]
  By \pref{lem:apply-bernstein},
  \begin{displaymath}
    \Pr \Biggl\{ \left\lVert  \sum\nolimits_i A'_i\otimes A'_i - \sum\nolimits_i \E [A'_i\otimes A'_i] \right\rVert > C\cdot n^{(2d+c)/2}  \Biggr\}
    \le  2 n^{2d} \cdot \exp\Paren{\frac{-C^2}{K t^4}}\mper
  \end{displaymath}

  At the same time, by \pref{lem:random-matrix-concentration} and a union bound,
  \begin{displaymath}
    \Pr\left\{\vbig A_1=A'_1,\ldots,A_n=A'_{n^c}\right\} \ge 1-n^c\cdot 2^{-t^2{n^d}/K}\mper
  \end{displaymath}
  By \pref{lem:close-expectation} and triangle inequality,
  \begin{displaymath}
    \left\lVert  \sum\nolimits_i \E[A_i\otimes A_i] - \sum\nolimits_i \E[ A'_i\otimes A_i'] \right\rVert \le n^{c} \cdot 2^{-tn^d/K}\mper
  \end{displaymath}
  Together, these bounds imply
  \begin{multline*}
    \Pr \Biggl\{ \left\lVert  \sum\nolimits_i A_i\otimes A_i - \sum\nolimits_i \E [A_i\otimes A_i] \right\rVert > C\cdot n^{(2d+c)/2} + n^{c}\cdot 2^{-t{n^d}/K} \Biggr\}\\
    \le  2 n^{2d} \cdot \exp\Paren{\frac{-C^2}{K t^4}} + n^c \cdot 2^{-t^2{n^d}/K}\mper
  \end{multline*}
  We choose $t=1$ and $C=100 \sqrt {2K d \log n}$ and assume that $n$ is large enough so that $C\cdot n^{(2d + c)/2}\ge n^{c}\cdot 2^{-t{n^d}/K}$ and $2 n^{2d} \cdot \exp\Paren{\frac{-C^2}{K t^4}}\ge n^c \cdot 2^{-t^2{n^d}/K}$.
  Then the probability satisfies
  \begin{displaymath}
    \Pr \Biggl\{ \left\lVert  \sum\nolimits_i A_i\otimes A_i - \sum\nolimits_i \E [A_i\otimes A_i] \right\rVert > 20 n^{(2d + c)/2}{\textstyle \sqrt{ 2Kd \log n }} \Biggr\}
    \le  4 n^{-100}\mper
    \qedhere
  \end{displaymath}
\end{proof}

\subsection{Concentration for Spectral SoS Analyses}
\begin{lemma}[Restatement of \pref{lem:nonlinear-unfolding-noise}]
  Let $\bT = \tau \cdot v_0^{\tensor 3} + \bA$.
  Suppose $\bA$ has independent entries from $\cN(0,1)$.
  Then with probability $1 - O(n^{-100})$ we have $\| \sum_i A_i \tensor A_i - \E \sum_i A_i \tensor A_i \| \leq O(n^{3/2} \log(n)^{1/2})$ and $\| \sum_i v_0(i) A_i \| \leq O(\sqrt{n})$.
\end{lemma}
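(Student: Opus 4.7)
The plan is to handle the two bounds separately, invoking concentration results already established earlier in the paper.

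For the first bound, I would observe that this is exactly the specialization of \pref{thm:concentration} to the case $c=1$, $d=1$. The $n$ matrix slices $A_1,\ldots,A_n$ of $\bA$ are independent matrices in $\R^{n\times n}$ with iid Gaussian entries, so \pref{thm:concentration} directly yields
\[
  \Norm{\sum_i A_i \tensor A_i - \E\sum_i A_i \tensor A_i} \precsim \sqrt{1}\cdot n^{(2\cdot 1 + 1)/2}\cdot (\log n)^{1/2} = n^{3/2}(\log n)^{1/2}
\]
with probability $1 - O(n^{-100})$. No new work is required here; the matrix Bernstein argument in the proof of \pref{thm:concentration} already does all the heavy lifting.

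For the second bound, I would use the rotation invariance of the Gaussian distribution. Since the entries of $\bA$ are iid $\cN(0,1)$, the entries $(\sum_i v_0(i) A_i)_{jk} = \sum_i v_0(i) \bA_{ijk}$ are linear combinations of independent standard Gaussians with coefficients forming a unit vector $v_0$, and for distinct index pairs $(j,k)$ these linear combinations involve disjoint sets of underlying Gaussians. Hence $B \defeq \sum_i v_0(i) A_i$ is itself a random $n\times n$ matrix with iid $\cN(0,1)$ entries. Applying \pref{lem:square-concentration} to $B$ gives $\|B\| \leq O(\sqrt n)$ with probability $1 - n^{-\omega(1)}$, which is certainly $1 - O(n^{-100})$.

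A union bound over the two events completes the proof. The main (very mild) subtlety is just the observation in the second step that $\sum_i v_0(i) A_i$ is distributed as a standard Gaussian matrix because $\|v_0\|=1$; once this is noted, both bounds are immediate consequences of cited results, so there is no real obstacle.
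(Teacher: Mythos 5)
Your proposal is correct and follows the same route as the paper: the first bound is the specialization of \pref{thm:concentration} with $c=d=1$, and the second bound follows from observing that $\sum_i v_0(i) A_i$ has iid standard Gaussian entries (since $\|v_0\|=1$) and then invoking \pref{lem:square-concentration}. The only difference is that you spell out the independence-across-entries argument a bit more explicitly than the paper does; otherwise the two proofs are the same.
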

\begin{proof}
  The first claim is immediate from \pref{thm:concentration}.
  For the second, we note that since $v_0$ is a unit vector, the matrix $\sum_i v_0(i) A_i$ has independent entries from $\cN(0,1)$.
  Thus, by \pref{lem:square-concentration}, $\|\sum_i v_0(i) A_i \| \leq O(\sqrt n)$ with probability $1 - O(n^{-100})$, as desired.
\end{proof}

\begin{lemma}[Restatement of \pref{lem:mr-concentration} for General Odd $k$]
\label{lem:mr-concentration-general}
  Let $\bA$ be a $k$-tensor with $k$ an odd integer, with independent entries from $\cN(0,1)$.
  Let $v_0 \in \R^n$ be a unit vector, and let $V$ be the $n^{(k+1)/2} \times n^{(k-1)/2}$ unfolding of $v_0^{\tensor k}$.
  Let $A$ be the $n^{(k+1)/2} \times n^{(k-1)/2}$ unfolding of $\bA$.
  Then with probability $1 - O(n^{-100})$, the matrix $A$ satisfies $A^T A = n^{(k+1)/2} I + E$ for some $E$ with $\| E \| \leq O(n^{k/2}\log(n))$ and $\| A^T V \| \leq O(n^{(k-1)/4} \log(n)^{1/2})$.
\end{lemma}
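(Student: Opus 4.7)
The plan is to handle the two claimed bounds separately, each via a standard Gaussian matrix concentration argument, after observing the right factorization of the rank-one tensor $V$.

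For the bound on $\|E\|$ where $E = A^\top A - n^{(k+1)/2} I$, I would apply the empirical covariance concentration result of \pref{lem:covariance-concentration} directly to the rows of $A$. Writing $N = n^{(k+1)/2}$ (the number of rows) and $m = n^{(k-1)/2}$ (the dimension), these rows are $N$ i.i.d.\ samples from the isotropic standard Gaussian on $\R^m$, whose covariance is $I_{m \times m}$. The lemma gives
\[
  \Norm{\tfrac{1}{N}\,A^\top A - I} \le \delta
\]
with probability $1 - 2\exp(-t^2 m)$ whenever $N \ge C(t/\delta)^2 m$. Taking $t = \sqrt{\log n}$ (so that the failure probability is $n^{-\omega(1)}$) and solving the constraint with equality gives $\delta = O(\sqrt{(\log n)\cdot m/N}) = O(\sqrt{\log n / n})$. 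Multiplying through by $N$ yields $\|E\| \le O(n^{k/2}\sqrt{\log n})$, which implies the claimed bound $O(n^{k/2}\log n)$.

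For the bound on $\|A^\top V\|$, the key observation is that the $n^{(k+1)/2}\times n^{(k-1)/2}$ unfolding of $v_0^{\otimes k}$ is a rank-one outer product. Splitting the $k$ modes of $v_0^{\otimes k}$ into the first $(k+1)/2$ and the last $(k-1)/2$, we have
\[
  V \;=\; v_0^{\otimes (k+1)/2}\bigl(v_0^{\otimes (k-1)/2}\bigr)^\top,
\]
so $\|A^\top V\| = \|A^\top v_0^{\otimes (k+1)/2}\|\cdot\|v_0^{\otimes (k-1)/2}\| = \|A^\top v_0^{\otimes (k+1)/2}\|$. Because $v_0^{\otimes (k+1)/2}$ is a unit vector in $\R^{n^{(k+1)/2}}$ and $A$ has i.i.d.\ standard Gaussian entries, the vector $A^\top v_0^{\otimes (k+1)/2} \in \R^{n^{(k-1)/2}}$ has i.i.d.\ $\cN(0,1)$ coordinates (by rotational invariance). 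Standard $\chi^2$ tail bounds (or a Bernstein inequality) then give $\|A^\top v_0^{\otimes (k+1)/2}\| \le \sqrt{n^{(k-1)/2}} + O(\sqrt{\log n}) = O(n^{(k-1)/4}\sqrt{\log n})$ with probability $1 - n^{-\omega(1)}$, matching the claim.

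The main potential obstacle is purely bookkeeping: being careful about which modes get grouped into the row versus column indices of the unfolding (so that the factorization of $V$ is genuinely rank one) and tracking the dimensional ratio $m/N = 1/n$ when invoking \pref{lem:covariance-concentration}. Once these are set up correctly, both bounds follow from off-the-shelf Gaussian concentration; nothing new relative to the degree-$3$ proof of \pref{lem:mr-concentration} is required.
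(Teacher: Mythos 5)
Your proposal is correct and follows essentially the same approach as the paper: apply \pref{lem:covariance-concentration} to the rows of $A$ (the paper uses $t=1$, $\delta = O(1/\sqrt{n})$, yielding the even tighter bound $\|E\| = O(n^{k/2})$, but your choice of $t=\sqrt{\log n}$ also lands within the claimed bound), and factor $V$ as a rank-one outer product of unit vectors so that $\|A^T V\| = \|A^T u\|$ for a Gaussian vector in $\R^{n^{(k-1)/2}}$.
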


\begin{proof}
  With $\delta = O(1/\sqrt{n})$ and $t = 1$, our parameters will satisfy $n^{(k+1)/2} \geq (t/\delta)^2 n^{(k-1)/2}$.
  Hence, by \pref{lem:covariance-concentration},
  \begin{displaymath}
    \| E \| = \| A^T A^T - n^{(k+1)/2} I \| = \left \lVert \sum_{|\alpha| = (k+1)/2} a_\alpha a_\alpha^T - n^{(k+1)/2} \cdot \Id \right \rVert
      \leq n^{(k+1)/2} \cdot O\Paren{\frac{1}{\sqrt{n}}} = O(n^{k/2})
  \end{displaymath}
  with probability at least $1 - 2\exp(-n^{(k+1)/2}) \geq 1 - O(n^{-100})$.

  It remains to bound $\| A^T V \|$.
  Note that $V = uw^T$ for fixed unit vectors $u \in \R^{(k-1)/2}$ and $w \in \R^{(k+1)/2}$.
  So $\|A^T V \| \leq \|A^T u\|$. But $A^T u$ is distributed according to $\cN(0,1)^n$ and so $\|A^Tu\| \leq O(\sqrt{n \log n})$ with probability $1 -n^{-100}$ by standard arguments.
\end{proof}

\subsection{Concentration for Lower Bounds}
The next theorems collects the concentration results necessary to apply our lower bounds \pref{thm:lower-bound-4} and \pref{thm:lower-bound} to random polynomials.
\begin{lemma}
  \label{lem:concentration-lb}
  Let $\bA$ be a random $3$-tensor with unit Gaussian entries.
  For a real parameter $\lambda,$
  let $\cL : \R[x]_4 \rightarrow \R$ be the linear operator whose matrix
  representation $M_{\cL}$ is given by
  $M_{\cL} \seteq \frac 1 {n^2 \lambda^2} \sum_{\pi \in \cS_4} \pi \cdot AA^T$.
  There is $\lambda = O(n^{3/2} / \log(n)^{1/2})$ so that
  with probability $1 - O(n^{-50})$ the following events all occur for every $\pi \in \cS_3$.
\begin{align}
  \label{eq:1} -2\lambda^2 & \cdot \Pi\Id\Pi \notag \\
                           & \preceq \frac 1 2 \Pi\left[\sigma \cdot A^\pi (A^\pi)^T + \sigma^2 \cdot A^\pi(A^\pi)^T + (\sigma \cdot A^\pi (A^\pi)^T)^T + ( \sigma^2 \cdot A^\pi(A^\pi)^T )^T \right]\Pi
\end{align}
\begin{align}
  \label{eq:3}                                                       \iprod {\bA, \sum_{\pi \in \cS_3} \bA^\pi } = \Omega(n^3) \\
  \label{eq:4}                                \iprod{\symId, A^\pi (A^\pi)^T } = O(n^3) \\
  \label{eq:5} n \Paren{\max_i \left | \frac 1 {\lambda n^{3/2} } \iprod{\Id_{n \times n}, A^\pi_i} \right |} = O(1)\\
  \label{eq:6} n^2 \Paren{\max_{i \neq j} \left | \cL \|x\|^2x_ix_j \right |} = O(1) \\
  \label{eq:7} n^{3/2} \Paren{\max_i \left | \cL \|x\|^2 x_i^2 - \tfrac{1}{n} \cL \|x\|^4 \vphantom{\pE^0} \right |}  = O(1/n)
\end{align}
\end{lemma}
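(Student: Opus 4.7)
\textbf{Proof plan for \pref{lem:concentration-lb}.} The plan is to handle the six events separately, using standard Gaussian concentration in each case, and then take a union bound (losing at most a factor of $|\cS_4|$ and $n^{O(1)}$ on the failure probability, which is absorbed in the $n^{-50}$ guarantee). Each event corresponds to one quantity built out of $\bA$ that we must control, so I will just describe which tool bounds each one.

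Event \pref{eq:1} is the heart of the lemma and already falls out of \pref{thm:concentration} applied to $\bA$. By \pref{lem:deg-4-sym-AAT}, for each nontrivial $\pi \in \cS_3$ the quantity $\tfrac12\Pi(\sigma\cdot A^\pi(A^\pi)^T+\sigma^2\cdot A^\pi(A^\pi)^T) + \mathrm{(transpose)}$ is, up to a fixed permutation $P$ of coordinates, a symmetrization of $\sum_i A^\pi_i\tensor A^\pi_i$ and $\sum_i A^\pi_i\tensor (A^\pi_i)^T$. A short computation shows that $\tfrac1n\, \E\sum_i A_i\tensor A_i$ and $\tfrac1n\, \E\sum_i A_i\tensor A_i^T$ are both matrix representations of $\|x\|^4$, so after projecting by $\Pi$ their expectations equal $\Pi\Id\Pi$ up to a bounded constant. \pref{thm:concentration} (with $c=1,d=1$) then says the fluctuation around this expectation has operator norm at most $O(n^{3/2}\sqrt{\log n})$, which is $O(\lambda^2)\cdot \Pi\Id\Pi$ on the range of $\Pi$ for the advertised $\lambda$. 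This gives the desired PSD lower bound.

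Events \pref{eq:3} and \pref{eq:4} are both single scalar concentrations. The diagonal term $\iprod{\bA,\bA}=\|\bA\|_F^2$ is a chi-squared random variable with $n^3$ degrees of freedom, so it concentrates around $n^3\pm O(n^{3/2})$; each off-diagonal term $\iprod{\bA,\bA^\pi}$ for $\pi\neq e$ is a Gaussian chaos of order two whose mean is $O(n^2)$ (from tuples of indices fixed by $\pi$) and whose fluctuations are $O(n^{3/2})$ by Hanson--Wright. Summing, $\iprod{\bA,\sum_\pi\bA^\pi}=\Omega(n^3)$. The same chi-square bound on $\|A\|_F^2 = \|\bA\|_F^2$ gives $\iprod{\symId,A^\pi(A^\pi)^T} = \iprod{\symId,AA^T} = O(n^3)$ once one notes that $\symId$ acts on $AA^T$ by a permutation average, each summand of which contributes at most $\|A\|_F^2 = O(n^3)$.

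Event \pref{eq:5} is an even simpler Gaussian tail bound: $\iprod{\Id_{n\times n},A^\pi_i}$ is a sum of $n$ independent standard Gaussians for each $i$, hence $O(\sqrt{n\log n})$ with high probability simultaneously for all $i$ by a union bound, which is well within the budget $\lambda n^{3/2}/n = \lambda\sqrt{n}$.

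The main technical obstacle is events \pref{eq:6} and \pref{eq:7}, which require controlling degree-four quantities of the form $\cL(\|x\|^2 x_i x_j)$ and $\cL(\|x\|^2 x_i^2) - \tfrac1n\cL(\|x\|^4)$. The plan is to expand $\cL p(x) = \tfrac{1}{n^2\lambda^2}\iprod{M_p,\sum_\pi \pi\cdot AA^T}$ for each appropriate matrix representation $M_p$ of the degree-$4$ polynomial $p$, and then, using the index structure of $\|x\|^2 x_i x_j$ and $\|x\|^2 x_i^2$, write the resulting expression as an explicit polynomial form in the entries $A_{ijk}$. After subtracting its expectation (which cancels by construction in \pref{eq:7} thanks to the $-\tfrac1n\cL\|x\|^4$ correction, and which vanishes by parity in \pref{eq:6} when $i\neq j$), the fluctuations can be bounded by a combination of Hanson--Wright for the quadratic pieces and Bernstein for the linear pieces, followed by a union bound over the at most $n^2$ choices of $(i,j)$. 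The careful bookkeeping -- identifying which index contractions produce the leading-order mean term and verifying that they are exactly the ones removed by the explicit centering -- is the one place where the proof is not immediate from the black-box concentration results; the bounds for the remaining events all reduce to off-the-shelf inequalities.
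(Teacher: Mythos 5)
Your plan follows essentially the same route as the paper's proof. Event \pref{eq:1} is handled exactly as you say via \pref{thm:concentration} and \pref{lem:deg-4-sym-AAT} (the paper passes to the centered sums $\sum_i A^\pi_i\tensor A^\pi_i - \E[\cdot]$ and $\sum_i A^\pi_i\tensor (A^\pi_i)^T - \E[\cdot]$, uses that right-multiplication by the orthogonal $P$ preserves operator norm, and then drops the expectation terms because $\Pi(\sigma\cdot\Id+\sigma^2\cdot\Id)\Pi\succeq 0$). Events \pref{eq:3} and \pref{eq:5} are likewise direct Gaussian concentration plus union bounds; for \pref{eq:3} the paper organizes the off-diagonal inner products $\iprod{\bA,\bA^\pi}$ by orbits of $\langle\pi\rangle$ so that the orbit sums are symmetric mean-zero variables, which is the Hanson--Wright-style control you describe. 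Events \pref{eq:6}--\pref{eq:7} are handled by expanding $\cL\|x\|^2 x_ix_j$ via \pref{lem:deg-4-sym-AAT} into explicit terms of the form $w^T(A_k e_i\tensor A_k e_j)$, $w^TAA^Te_{ij}$, etc., applying Bernstein/Chernoff to each, and union-bounding over the $O(n^2)$ index pairs; your observation that the diagonal case requires the explicit $-\tfrac1n\cL\|x\|^4$ centering is exactly the cancellation the paper exploits (there both $\cL\|x\|^2x_i^2$ and $\tfrac1n\cL\|x\|^4$ are shown to concentrate around the same value $\Theta(1/\lambda^2)$).

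One small imprecision: for \pref{eq:4} you assert that each permutation summand of $\iprod{\symId,AA^T}$ is at most $\|A\|_F^2$. That holds for the contraction with the true identity and with the permutation matrix $\sigma^2\cdot\Id$ (by $|\Tr(QAA^T)|\le\|Q\|\Tr(AA^T)$), but the rank-one term $\iprod{ww^T,AA^T}=\|A^Tw\|^2$ is not \emph{a priori} bounded by $\|A\|_F^2$ and needs its own (easy) computation -- it turns out to be $O(n^2)$, hence still within budget. The paper instead uses the crude observation that $\symId$ has only $O(n^2)$ nonzero $O(1)$-sized entries while each entry of $AA^T$ is $\tilde O(n)$, giving $O(n^3)$ directly. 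Either way the conclusion holds.
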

\begin{proof}
  For \pref{eq:1}, from \pref{thm:concentration}, \pref{lem:deg-4-sym-AAT},
  the observation that multiplication by an orthogonal operator cannot increase
  the operator norm, a union bound over all $\pi$, and the triangle inequality,
  it follows that:
  \[ \left\| \sigma \cdot A^\pi(A^{\pi})^T - \E[\sigma \cdot A^\pi(A^\pi)^T]
    + \sigma^2 \cdot A^\pi(A^\pi)^T - \E[\sigma^2 \cdot A^\pi(A^\pi)^T] \right\| \le 2\lambda^2. \]
  with probability $1 - n^{-100}$.
  By the definition of the operator norm and another application of triangle inequality, this implies
  \begin{align*}
    -4\lambda^2\Id & \preceq \sigma \cdot A^\pi(A^{\pi})^T + \sigma^2 \cdot A^\pi(A^\pi)^T
    + (\sigma \cdot A^\pi (A^\pi)^T)^T + (\sigma^2 \cdot A^\pi (A^\pi)^T)^T\\
  & - \E[\sigma \cdot A^\pi(A^\pi)^T] - \E[\sigma^2 \cdot A^\pi(A^\pi)^T]
    - \E[(\sigma \cdot A^\pi (A^\pi)^T)^T] - \E[(\sigma^2 \cdot A^\pi (A^\pi)^T)^T]\mper
  \end{align*}

  We note that $\E[\sigma \cdot A^\pi(A^\pi)^T] = \sigma \cdot \Id$ and
  $\E[\sigma^2 \cdot A^\pi(A^\pi)^T] = \sigma^2 \cdot \Id$, and the same for their transposes,
  and that
  $\Pi(\sigma\cdot\Id + \sigma^2\cdot\Id)\Pi \succeq 0$.
  So, dividing by $2$ and projecting onto the $\Pi$ subspace:
  \begin{align*}
    -2\lambda^2 & \cdot \Pi\Id\Pi \\
                             & \preceq \frac 1 2 \Pi\left(\sigma \cdot A^\pi (A^\pi)^T + \sigma^2 \cdot A^\pi(A^\pi)^T + (\sigma \cdot A^\pi (A^\pi)^T)^T + ( \sigma^2 \cdot A^\pi(A^\pi)^T )^T \right)\Pi\mper
  \end{align*}

  We turn to \pref{eq:3}.
  By a Chernoff bound, $\iprod{ \bA, \bA } = \Omega(n^3)$ with probability $1 - n^{-100}$.
  Let $\pi \in \cS_3$ be a nontrivial permutation.
  To each multi-index $\alpha$ with $|\alpha| = 3$ we associate its orbit $\cO_\alpha$ under $\langle \pi \rangle$.
  If $\alpha$ has three distinct indices, then $|\cO_\alpha| > 1$ and $\sum_{\beta \in \cO_\alpha} A_\beta A^\pi_\beta$ is a random variable $X_\alpha$ with the following properties:
  \begin{itemize}
    \item $|X_\alpha| < O(\log n)$ with probability $1 - n^{-\omega(1)}$.
    \item $X_\alpha$ and $-X_\alpha$ are identically distributed.
  \end{itemize}
  Next, we observe that we can decompose
  \begin{displaymath}
    \iprod{ \bA, \bA^\pi } = \sum_{|\alpha| = 3} \bA_\alpha \bA^\pi_\alpha = R + \sum_{\cO_\alpha} X_\alpha\mcom
  \end{displaymath}
  where $R$ is the sum over multi-indices $\alpha$ with repeated indices, and therefore has $|R| = \tilde O(n^2)$ with probability $1 - n^{-100}$.
  By a standard Chernoff bound, $|\sum_{\cO_\alpha} X_\alpha| = O(n^2)$ with probability $1 - O(n^{-100})$.
  By a union bound over all $\pi$, we get that with probability $1 - O(n^{-100})$,
  \begin{displaymath}
    \iprod{ \bA, \sum_{\pi \in \cS_3} \bA^\pi } = n^3 - O(n^2) = \Omega(n^3)\mcom
  \end{displaymath}
  establishing \pref{eq:3}.

  Next up is \pref{eq:4}.
  Because $A^\pi$ are identically distributed for all $\pi \in \cS_3$ we assume without loss of generality that $A^\pi = A$.
  The matrix $\symId$ has $O(n^2)$ nonzero entries. Any individual entry of $AA^T$ is with probability $1 - n^{-\omega(1)}$ at most $O(n)$.
  So $\iprod{\symId, AA^T} = O(n^3)$ with probability $1 - O(n^{-100})$.

  Next, \pref{eq:5}.
  As before, we assume without loss of generality that $\pi$ is the trivial permutation.
  For fixed $1 \leq i \leq n$, we have $\iprod{\Id_{n \times n},A_i} =  \sum_j A_{ijj}$,
  which is a sum of $n$ independent unit Gaussians, so $|\iprod{\Id_{n \times n},A_i}| \leq O(\sqrt n \log n)$ with probability $1 - n^{-\omega(1)}$.
  By a union bound over $i$ this also holds for $\max_i |\iprod{\Id_{n\times n}, A_i} |$.
  Thus with probability $1 - O(n^{-100})$,
  \begin{displaymath}
    n \Paren{\max_i \left | \frac 1 {n^{3/2} \lambda } \iprod{\Id_{n \times n}, A_i} \right |} \leq
    \frac {\tilde O(1)} {\lambda}\mper
  \end{displaymath}

  Last up are \pref{eq:6} and \pref{eq:7}.
  Since we will do a union bound later, we fix $i,j \leq n$. Let $w \in \R^{n^2}$ be the matrix flattening of $\Id_{n \times n}$.
  We expand $\cL \|x\|^2 x_i x_j$ as
  \begin{align*}
    \cL \|x \|^2 x_i x_j & = \frac 1 {n^2 O(\lambda^2)}( w^T \Pi (AA^T + \sigma \cdot AA^T + \sigma^2 \cdot AA^T)\Pi(e_i \tensor e_j)  \\
                           & + w^T \Pi (AA^T + \sigma \cdot AA^T + \sigma^2 \cdot AA^T)^T\Pi (e_i \tensor e_j) )
    \mper
  \end{align*}
  We have $\Pi w = w$ and we let $e_{ij} \seteq \Pi(e_i \tensor e_j) = \frac 1 2 (e_i \tensor e_j + e_j \tensor e_i)$.
  So using \pref{lem:deg-4-sym-AAT},
  \begin{align*}
    n^2 O(\lambda^2) \cL \|x\|^2 x_i x_j & = w^T (AA^T + \sigma \cdot AA^T + \sigma^2 \cdot AA^T)e_{ij}\\
                                           & + w^T (AA^T + \sigma \cdot AA^T + \sigma^2 \cdot AA^T)^Te_{ij}\\
                                           & = w^T \Big ( AA^T e_{ij}\\
                                           & + \frac 1 2 \sum_k A_k e_j \tensor A_k e_i + \frac 1 2 \sum_k A_k e_i \tensor A_k e_j\\
                                           & + \frac 1 2 \sum_k A_k^T e_j \tensor A_k^T e_i + \frac 1 2 \sum_k A_k^T e_i \tensor A_k^T e_j\\
                                           & + \frac 1 2 \sum_k A_k e_i \tensor A_k^T e_j + \frac 1 2 \sum_k A_k e_j \tensor A_k^T e_i\\
                                           & + \frac 1 2 \sum_k A_k^T e_i \tensor A_k e_j + \frac 1 2 \sum_k A_k^T e_j \tensor A_k e_i \Big )\mper
  \end{align*}
  For $i \neq j$, each term $w^T (A_k e_j \tensor A_k e_i)$ (or similar, with various transposes) is the sum of $n$ independent products of pairs of independent unit Gaussians,
  so by a Chernoff bound followed by a union bound, with probability $1 - n^{-\omega(1)}$ all of them are $O(\sqrt n \log n)$.
  There are $O(n)$ such terms, for an upper bound of $O(n^{3/2}(\log n))$ on the contribution from the tensored parts.

  At the same time, $w^T A$ is a sum $\sum_k a_{kk}$ of $n$ rows of $A$ and $Ae_{ij}$ is the average of two rows of $A$;
  since $i \neq j$ these rows are independent from $w^T A$. Writing this out, $w^T AA^T e_{ij} = \frac 1 2 \sum_k \iprod{a_{kk}, a_{ij} + a_{ji}}$.
  Again by a standard Chernoff and union bound argument this is in absolute value at most $O(n^{3/2}( \log n))$ with probability $1 - n^{-\omega(1)}$.
  In sum, when $i \neq j$, with probability at least $1 - n^{-\omega(1)}$, we get $|\cL \|x\|^2 x_i x_j | = O(1/n^2 \log n)$.
  After a union bound, the maximum over all $i,j$ is $O(1/n^2)$. This concludes \pref{eq:6}.

  In the $i = j$ case, since $\sum_{k} \iprod{w,A_k e_i \tensor A_k e_i}
  = \sum_{j,k} \iprod{e_j, A_k e_i}^2$ is a sum of $n^2$ independent square Gaussians,
  by a Bernstein inequality, $|\sum_k \iprod{w,A_k e_i \tensor A_k e_i} - n^2| \leq O(n \log^{1/2} n)$
  with probability $1 - n^{-\omega(1)}$.
  The same holds for the other tensored terms, and for $w^T AA^T e_{ii}$,
  so when $i = j$ we get that $|O(\lambda^2)\cL \|x\|^2 x_i^2 - 5| \leq O((\log^{1/2} n)/n)$
  with probability $1 - n^{-\omega(1)}$.
  Summing over all $i$, we find that $|O(\lambda^2)\cL \|x\|^4 - 5n| \leq O(\log^{1/2} n)$,
  so that $O(\lambda^2)|\cL \|x\|^2 x_i^2 - \tfrac{1}{n}\cL \|x\|^4| \leq O((\log^{1/2} n)/n)$
  with probability $1 - n^{-\omega(1)}$.
  A union bound over $i$ completes the argument.
\end{proof}

\begin{lemma}
  \label{lem:concentration-lb-4}
  Let $\bA$ be a random $4$-tensor with unit Gaussian entries.
  There is $\lambda^2 = O(n)$ so that
  when $\cL : \R[x]_4 \rightarrow \R$ is the linear operator whose matrix
  representation $M_{\cL}$ is given by
  $M_{\cL} \seteq \frac 1 {n^2 \lambda^2} \sum_{\pi \in \cS_4} A^\pi$,
  with probability $1 - O(n^{-50})$ the following events all occur for every $\pi \in \cS_4$.
\begin{align}
  \label{eq:4-1} -\lambda^2 \preceq \frac 1 2 (A^\pi + (A^\pi)^T)\\
  \label{eq:4-2}   \iprod {\bA, \sum_{\pi \in \cS_4} \bA^\pi } = \Omega(n^4) \\
  \label{eq:4-3}   \iprod{\symId, A^\pi } = O(\lambda^2 \sqrt{n}) \\
  \label{eq:4-4} n^2 \max_{i \neq j} |\cL \|x\|^2 x_i x_j | = O(1)\\
  \label{eq:4-5} n^{3/2} \max_i |\cL \|x\|^2 x_i^2| = O(1) \mper
\end{align}
\end{lemma}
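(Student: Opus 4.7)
The plan is to handle each of the five items in turn using standard Gaussian and random-matrix concentration arguments, mirroring the structure of the proof of \pref{lem:concentration-lb}. Since $|\cS_4|$ is a constant, in every case I will take a union bound over $\pi \in \cS_4$ at the end and absorb the resulting constant into the failure probability $O(n^{-50})$.

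For \pref{eq:4-1}, observe that $A^\pi$ is merely a rearrangement of the $n^4$ iid Gaussian entries of $\bA$ into an $n^2 \times n^2$ matrix, so $A^\pi$ itself has iid $\cN(0,1)$ entries. A standard Gaussian operator-norm bound (i.e.\ \pref{lem:square-concentration} applied to dimension $n^2$) yields $\|A^\pi\| = O(n)$ with probability $1 - n^{-\omega(1)}$, whence $\|\tfrac{1}{2}(A^\pi + (A^\pi)^T)\| = O(n)$ by triangle inequality, so we may take $\lambda^2 = \Theta(n)$. For \pref{eq:4-2}, the identity-permutation term $\iprod{\bA,\bA}$ is chi-squared with $n^4$ degrees of freedom, concentrating at $\Omega(n^4)$. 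For each nontrivial $\pi$, group the multi-indices $\alpha$ into orbits under $\langle \pi \rangle$: indices on orbits of size greater than one yield symmetric random variables $\bA_\alpha \bA^\pi_\alpha$ with $O(1)$ subexponential tails whose sum is $o(n^4)$ by Bernstein, while indices on size-one orbits (requiring repeated values) contribute lower-order noise. Thus the identity term dominates.

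For \pref{eq:4-3}, $\symId$ is a fixed matrix with $O(n^2)$ nonzero entries of bounded magnitude, so $\iprod{\symId, A^\pi}$ is a Gaussian linear combination of $O(n^2)$ iid entries of $\bA$ with bounded coefficients, hence Gaussian of standard deviation $O(n)$ and thus $O(n\sqrt{\log n}) = o(n^{3/2}) = o(\lambda^2\sqrt n)$ with high probability. For \pref{eq:4-4} and \pref{eq:4-5}, I use that $\cL p(x) = \iprod{M_\cL, M_p}$ is invariant under the choice of $M_p$ because $M_\cL$ is maximally symmetric. Choosing a convenient matrix representation of $\|x\|^2 x_i x_j$ whose nonzero entries correspond to monomials $x_k^2 x_i x_j$ for $k \in [n]$, the quantity $\cL \|x\|^2 x_i x_j$ becomes a linear combination of $O(n)$ entries of the symmetrization $\sum_{\pi \in \cS_4} A^\pi$, each a sum of at most $4!$ iid Gaussians, with overall coefficient $\Theta(1/(n^2\lambda^2)) = \Theta(1/n^3)$. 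Standard Gaussian tails then yield $|\cL \|x\|^2 x_i x_j| = O(n^{-5/2}\sqrt{\log n})$ with probability $1 - n^{-\omega(1)}$, and the $i = j$ case is handled identically. A union bound over the $O(n^2)$ choices of $(i,j)$ produces the desired $O(1)$ bounds on $n^2 \max_{i \ne j}|\cL \|x\|^2 x_i x_j|$ and $n^{3/2} \max_i |\cL \|x\|^2 x_i^2|$.

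The main obstacle is purely bookkeeping: tracking combinatorial multiplicities in $\sum_{\pi \in \cS_4} A^\pi$ (since different index-multiplicity patterns of $\alpha$ yield different numbers of contributing permutations) and verifying that the chosen matrix representation of $\|x\|^2 x_i x_j$ is consistent with the normalization used in $M_\cL$. Conceptually nothing new beyond the argument for \pref{lem:concentration-lb} is required. If anything, the degree-$4$ setting here is \emph{easier}: we need only bound the symmetric part of the \emph{linear} object $A^\pi$ in item~\pref{eq:4-1}, rather than the quadratic objects $A^\pi(A^\pi)^T$ appearing in the degree-$3$ analog, eliminating the need for the matrix Bernstein argument of \pref{thm:concentration}.
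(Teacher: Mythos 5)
Your proof is correct and follows essentially the same route as the paper's: for \pref{eq:4-1} a direct Gaussian operator-norm bound on the $n^2\times n^2$ unfolding, for \pref{eq:4-2} the chi-squared diagonal term plus an orbit decomposition for nontrivial $\pi$, for \pref{eq:4-3} the observation that $\symId$ has $O(n^2)$ nonzero entries, and for \pref{eq:4-4}--\pref{eq:4-5} the expansion $\cL\,\|x\|^2 x_i x_j = \tfrac{1}{n^2\lambda^2}\sum_\pi w^T A^\pi(e_i\otimes e_j)$ as a sum of $O(n)$ Gaussians. Your closing observation — that the degree-$4$ case is genuinely easier because $A^\pi$ is a linear rather than quadratic function of the input, dispensing with matrix Bernstein — matches the implicit structure of the paper's argument.
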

\begin{proof}

  For \pref{eq:4-1}, we note that $\frac 1 2 A^\pi + (A^\pi)^T$ is an $n^2 \times n^2$ matrix with unit Gaussian entries.
  Thus, by \pref{lem:square-concentration}, we have $\frac 1 2 \|A^\pi + (A^\pi)^T \| \leq O(n) = O(\lambda)$.
  For \pref{eq:4-2} only syntactic changes are needed from the proof of \pref{eq:4}.
  For \pref{eq:4-3}, we observe that $\iprod{\symId, A^\pi}$ is a sum of $O(n^2)$ independent Gaussians, so is $O(n \log n) \le O(\lambda^2 \sqrt{n})$ with probability $1 - O(n^{-100})$.
  We turn finally to \pref{eq:4-4} and \pref{eq:4-5}.
  Unlike in the degree $3$ case, there is nothing special here about the diagonal so we will able to bound these cases together.
  Fix $i,j \leq n$.
  We expand $\cL \|x\|^2 x_i x_j$ as $\frac 1 {n^2 \lambda^2} \sum_{\pi \in \cS_4} w^TA^\pi (e_i \tensor e_j)$.
  The vector $A^\pi (e_i \tensor e_j)$ is a vector of unit Gaussians, so $w^T A^\pi (e_i \tensor e_j) = O(\sqrt n \log n)$
  with probability $1 - n^{-\omega(1)}$.
  Thus, also with probability $1 - n^{-\omega(1)}$, we get $n^2 \max_{i,j} |\cL \|x\|^2 x_i x_j | = O(1)$,
  which proves both \pref{eq:4-4} and \pref{eq:4-5}.
\end{proof}

\end{document}